%% file: main.tex
\documentclass[10pt,twocolumn]{article}
\input{preamble.tex}
\title{\bfseries Rubix: Global Correspondence-Free Point Set Alignment\\ through Assignment Geometry\\{\large\mdseries A Preprint}}
\author{Subhransu S. Bhattacharjee, Dylan Campbell, Rahul Shome\\
\normalsize School of Computing, Australian National University\\
\normalsize Corresponding author: \href{mailto:Subhransu.Bhattacharjee@anu.edu.au}{\nolinkurl{Subhransu.Bhattacharjee@anu.edu.au}}}
\date{October 8, 2026}

\begin{document}
\raggedbottom
\onecolumn
\maketitle
\begin{center}
\begin{minipage}{\textwidth}
\input{sections/abstract.tex}
\input{sections/teaser.tex}
\end{minipage}
\end{center}
\twocolumn
\input{sections/introduction.tex}
\input{sections/related_work.tex}

\input{sections/notation.tex}
\input{sections/problem.tex}
\input{sections/theory.tex}
\input{sections/rubix.tex}
\input{sections/spatial.tex}
\input{sections/experiments.tex}

\clearpage
\twocolumn
\input{sections/discussion.tex}

\bibliographystyle{abbrv}
\bibliography{refs}

\input{appendix/content.tex}

\end{document}

%% file: preamble.tex
\usepackage[letterpaper,margin=0.875in]{geometry}
\usepackage[utf8]{inputenc}
\usepackage[T1]{fontenc}
\usepackage{etoolbox}
\patchcmd{\subsection}{\large}{\normalsize}{}
  {\PackageError{rubix}{Subsection font adjustment failed}{Check the article class subsection definition.}}
\usepackage{microtype}
\usepackage{amsmath,amssymb,amsthm}
\renewenvironment{abstract}
  {\small\begin{center}\bfseries\abstractname\end{center}\begin{quotation}\noindent\ignorespaces}
  {\end{quotation}}
\usepackage{graphicx}
\usepackage{placeins}

\usepackage{booktabs}
\usepackage{array}
\usepackage{xurl}
\usepackage{algorithm}
\usepackage[noend]{algorithmic}
\usepackage[font=small]{caption}
\usepackage{cite}
\AtBeginEnvironment{thebibliography}{\raggedright}
\usepackage{xcolor}
\definecolor{linkblue}{RGB}{0,72,186}
\usepackage{hyperref}
\usepackage[capitalise,noabbrev]{cleveref}
\crefname{algorithm}{Algorithm}{Algorithms}
\crefname{remark}{Remark}{Remarks}
\hypersetup{colorlinks=true,linkcolor=linkblue,citecolor=linkblue,urlcolor=linkblue,filecolor=linkblue}
\DeclareRobustCommand{\doi}[1]{\href{https://doi.org/#1}{doi:\,\nolinkurl{#1}}}
\hypersetup{pdftitle={Rubix: Global Correspondence-Free Point Set Alignment through Assignment Geometry},pdfauthor={Subhransu S. Bhattacharjee, Dylan Campbell, Rahul Shome},pdfcreator={},pdfproducer={}}

\newtheorem{theorem}{Theorem}
\newtheorem{proposition}[theorem]{Proposition}
\newtheorem{lemma}[theorem]{Lemma}
\newtheorem{corollary}[theorem]{Corollary}
\theoremstyle{definition}
\newtheorem{definition}[theorem]{Definition}
\newtheorem{remark}[theorem]{Remark}
\AddToHook{env/lemma/begin}{\crefalias{theorem}{lemma}}
\AddToHook{env/proposition/begin}{\crefalias{theorem}{proposition}}
\AddToHook{env/corollary/begin}{\crefalias{theorem}{corollary}}
\AddToHook{env/definition/begin}{\crefalias{theorem}{definition}}
\AddToHook{env/remark/begin}{\crefalias{theorem}{remark}}

\DeclareMathOperator*{\argmax}{arg\,max}

\DeclareMathOperator{\conv}{conv}
\DeclareMathOperator{\Arg}{Arg}
\renewcommand{\Re}{\operatorname{Re}}
\renewcommand{\Im}{\operatorname{Im}}
\newcommand{\PW}{\mathrm{PW}}

\input{numbers.tex}

\usepackage{enumitem}

%% file: numbers.tex
\newcommand{\FracTransposition}{26\%}

\newcommand{\FracEnclosing}{36\%}

\newcommand{\FixedGauss}{84}

\newcommand{\BnBCapped}{21\%}

%% file: sections/abstract.tex
\begin{abstract}
Procrustes--Wasserstein alignment jointly estimates a matching and rotation without supplied correspondences, but alternating minimization can stop at suboptimal solutions. Rubix solves the equally weighted planar problem globally under squared Euclidean loss. Each matching $\sigma$ of two centered $n$-point sets defines a complex correlation $z_\sigma=\sum_i\bar x_i y_{\sigma(i)}$. Their convex hull is the permutation polygon: supporting vertices give optimal matchings at fixed rotations, and the farthest vertex gives the global alignment. We prove the sharp bound of $n(n-1)$ vertices for $n\ge2$, answering Rote's rotation--assignment open problem. In exact arithmetic, assignment queries recover the polygon in $\mathcal O(n^5)$ operations. Assignment-based bounds extend the approach to three-dimensional rotations and partial matching at a supplied translation through branch-and-bound. On timed MPEG-7 shape pairs, Rubix attains every numerical reference value in 12\,ms on average, 50 times faster than a rotation grid at the same accuracy. Its distances improve shape retrieval and noisy crystal classification over alternating minimization.
\end{abstract}

%% file: sections/teaser.tex
\centering
\captionsetup{type=figure}
\includegraphics[width=\linewidth]{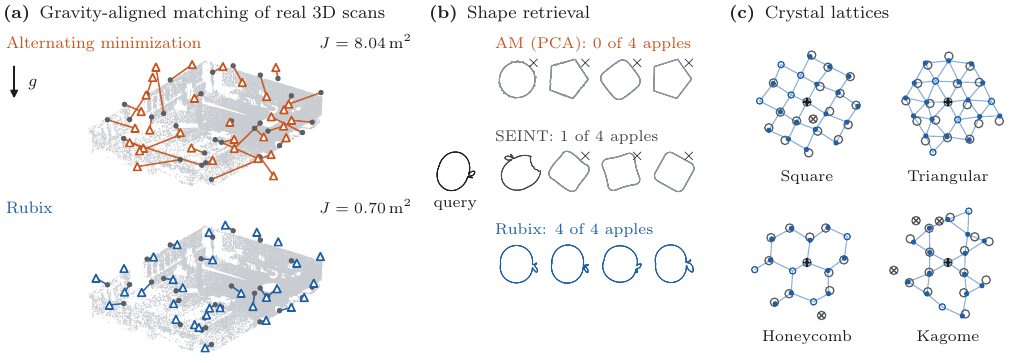}
\caption{\textbf{Global matching in scans, shapes and crystals.} (a)~An ETH Office scan 1--2 gravity subproblem: 32 target points (dark) and source fits from the same $45^\circ$ initial yaw (triangles), with AM and Rubix matchings; $g$ denotes gravity and $J$ mean squared matching error. (b)~Four nearest MPEG-7 silhouettes to an apple query under AM, SEINT~\cite{lin2026seint} (best setting) and Rubix. PW neighbors use fitted orientations; SEINT uses input orientations. (c)~Four defective lattice neighborhoods aligned by penalized partial Rubix to their nearest defect-free references, all correctly classified. Observations are rings, central sites cross-hairs, and reference atoms bonded. Matched reference atoms are solid, vacancies hollow and interstitials crossed (\Cref{sec:exp-unbalanced}).}\label{fig:gravity-application}

%% file: sections/introduction.tex
\section{Introduction}

\begin{figure*}[t]
\centering
\includegraphics[width=\textwidth]{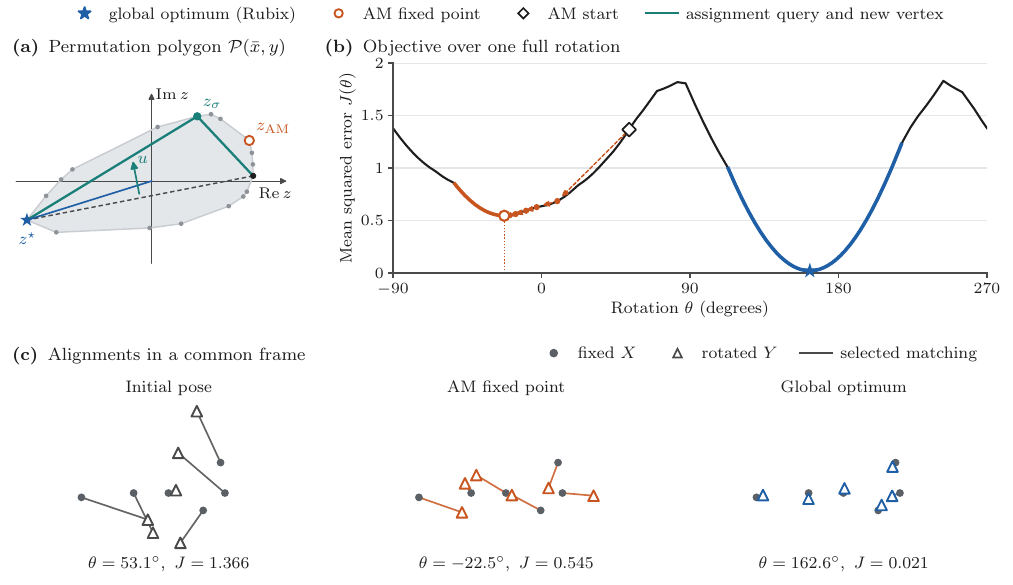}
\caption{\textbf{Rubix on an exact six-point example.} (a)~The permutation polygon is the convex hull of matching correlations $z_\sigma=\sum_i\bar x_iy_{\sigma(i)}$. An assignment query in direction $u$ finds an extreme correlation beyond an unexplored chord (dashed); 36 queries recover all 18 vertices. The farthest vertex $z^\star$ gives the optimal matching and rotation $\theta^\star=-\Arg z^\star$; AM stops at $z_{\mathrm{AM}}$. (b)~The objective $J(\theta)=\min_\sigma\{S-2\Re(e^{\mathrm i\theta}z_\sigma)\}/n$ over one turn, where $S=\sum_i|x_i|^2+\sum_j|y_j|^2$. Each smooth piece belongs to one polygon vertex. AM (orange) reaches a local minimum; dashed arrows join its reassigned iterates. The global minimum is blue. Rubix never samples this illustrative landscape. (c)~Fixed $X$ and rotated $Y$ at the initial pose, AM fixed point and global optimum, each with its minimizing matching, in a common frame and scale.}\label{fig:overview}
\end{figure*}

Shape retrieval~\cite{latecki2000}, landmark registration~\cite{zikan1991}, laser-scan mapping~\cite{glores2020,kubelka2022} and crystal identification~\cite{larsen2016ptm} require comparing point sets with unknown correspondences. Procrustes--Wasserstein (PW) alignment jointly optimizes a transformation and a transport coupling~\cite{alvarezmelis2019,adamo2025}. Here, correspondence-free means that no correspondences are supplied: Rubix optimizes them internally. We begin with equally sized, uniformly weighted planar sets and squared Euclidean loss, so an optimal coupling can be chosen as a bijection. Centering removes translation, and nonnegative isotropic scale has a closed-form fit.

The matching and rotation are coupled unknowns. Fixing the rotation gives a linear assignment problem; fixing the matching gives a closed-form orthogonal fit~\cite{schonemann1966}. Alternating minimization (AM)~\cite{adamo2025} can stop at a suboptimal fixed point and change retrieval rankings. Branch-and-bound~\cite{li2007registration,hartley2009rotation} gives global guarantees, but bounds that allow each residual its own rotation can require extensive subdivision. Zikan's exact rotation sweep~\cite{zikan1991} showed roughly quadratic growth in network pivots experimentally, without a polynomial worst-case bound. \emph{Rote's rotation--assignment problem}~\cite{mitchell2009} asks whether polynomially many matchings contain an optimum for every rotation. General affine parametric assignment can require superpolynomially many~\cite{carstensen1983thesis,hrubes2023}; a polynomial bound must exploit rotation structure.

Rubix represents each matching by its complex \emph{correlation} (\Cref{fig:overview}). Rotation changes only the projection direction of these fixed correlations. Their convex hull, the \emph{permutation polygon}, therefore suffices for alignment: a supporting vertex gives an optimal matching at each rotation, and the farthest vertex gives the global alignment. Lexicographic assignment queries recover supporting vertices without enumerating matchings. Our sharp quadratic vertex bound makes full reconstruction polynomial. The same geometry explains AM: under our assignment initialization and tie rules, a nonzero initial vertex leads to a stationary vertex, which need not be farthest from the origin.

For partial and three-dimensional alignment, assignment queries instead bound regions of pose space for branch-and-bound. Queries at the vertices of an enclosure evaluate all matched pairs at a common pose parameter, retaining information lost by independent residual bounds; each returned matching also gives a feasible alignment. Known gravity reduces rotation to yaw; general rotations use unit-quaternion regions. Partial matching requires either a supplied translation or joint pose search. We also adapt the common-translation bound to GLORES's trimmed registration objective~\cite{glores2020}, preserving its objective and stopping rule. The guarantees concern matching cost; pose accuracy is evaluated separately.

\paragraph{Contributions.} (i)~We prove the sharp $n(n-1)$ vertex bound, including ties and repeated points, answering Rote's question~\cite{mitchell2009} (\Cref{sec:theory}). (ii)~Rubix solves planar alignment exactly in $\mathcal O(n^5)$ arithmetic operations, with at most $2n(n-1)$ assignment queries for rotations or $4n(n-1)$ with reflections, and characterizes AM fixed points (\Cref{sec:algorithms}). (iii)~Returned matchings and assignment duals verify global optimality exactly for rational inputs (\Cref{prop:optimality}). (iv)~For gravity-aligned, partial and full 3D alignment, we derive pose-region bounds, accuracy-dependent query counts and reusable certificates; the coupled yaw bound is second order, versus first order for per-correspondence and independent-edge bounds (\Cref{sec:spatial}). (v)~Experiments test the distances in retrieval, crystal classification and clustering, partial matching under incomplete overlap, and the bounds inside planar and spatial registration solvers (\Cref{sec:experiments}). Full proofs, search algorithms and experimental details appear in the appendices.

%% file: sections/related_work.tex
\section{Related Work}\label{sec:related}

\paragraph{PW alignment.} Alvarez-Melis et~al.~\cite{alvarezmelis2019} formulate transport under global invariances; Grave et~al.~\cite{grave2019} combine convex initialization with stochastic updates; Even et~al.~\cite{even2024} analyze Ping-Pong in a planted Gaussian model; and Adamo et~al.~\cite{adamo2025} study PW metrics and barycenters. Semidefinite relaxations recover noiseless isometric shapes that are asymmetric or bilaterally symmetric~\cite{maron2016} and certify planar landmark localization when tight, usually at moderate noise~\cite{korotkine2025}. SEINT~\cite{lin2026seint} compares SE($d$)-invariant transport features without alignment. Rubix gives an exact polynomial algorithm for equal-weight planar PW with squared loss.

\paragraph{Exact planar registration.} Zikan~\cite{zikan1991} sweeps rotations with unknown bijections, translation and optional scale, without a polynomial worst-case bound. Utriainen~\cite{utriainen2010} studies vector-weighted matching, approximation and global optimization. Polynomial enumeration for supplied pairs covers consensus with pseudoconvex residuals~\cite{olsson2008} and planar similarity with $\ell_1$ or truncated losses~\cite{ask2014}; our bound concerns squared-error bijections with unknown correspondences.

\paragraph{Assignment geometry.} The permutation polygon is the classical $c$-eigenpolygon of a diagonal matrix~\cite{marcus1979,li1994}. We extend the real-weight permutohedron bound of $n(n-1)$ vertices~\cite{ziegler1995,hrubes2023} to complex weights, ties and repeated points. This makes Eisner--Severance enumeration~\cite{eisner1976} polynomial for planar rotation scores, despite superpolynomial general parametric assignments~\cite{carstensen1983thesis,hrubes2023}. Polynomial enumeration of optimal partial matchings under translation remains open~\cite{benavraham2018}. Soubrier et~al.~\cite{soubrier2026} also project transport plans into covariance space, obtaining concave multilinear approximations of Gromov--Wasserstein (GW) distances in fixed dimension with bounded support and uniformly positive definite second moments; our PW reconstruction is exact.

\paragraph{Global assignment bounds.} Li--Hartley~\cite{li2007registration} bound centered equal-size registration by assignments and per-point residual changes. Related searches cover rotation space~\cite{hartley2009rotation}, geometric matching~\cite{breuel2003}, and camera pose with unknown correspondences~\cite{campbell2020gopac}. Lian--Zhang~\cite{lian2014concave} eliminate transformations and bound a concave correspondence objective by cardinality-constrained assignment. APM~\cite{lian2017} uses a low-rank concave quadratic when every source point has a counterpart; RPM-PA~\cite{lian2021polyhedral} uses polyhedral annexation. Later bounds use trilinear and bilinear relaxations~\cite{lian2023}, assignment with convex quadratics~\cite{lian2026rpmbp}, or transformation-box vertices~\cite{lian2026dcreg}. Rubix combines covariance geometry, quaternion fitting~\cite{horn1987}, rotation orbitopes~\cite{sanyal2011orbitopes,saunderson2015rotations} and positive-weight rational B\'ezier enclosures~\cite{piegl1997nurbs}. Its spatial searches terminate at positive tolerance without inheriting the planar polynomial bound.

\paragraph{Registration with gravity.} GLORES~\cite{glores2020} searches trimmed SE(2) registration globally; PLICP~\cite{censi2008} alternates exact point-to-line updates for fixed pairs. Both allow repeated target use. Known gravity enables constrained point-to-plane ICP~\cite{kubelka2022}, pair pruning followed by translation search with optimal yaw~\cite{cai2019}, and cylindrical-norm registration~\cite{abe2022}. Li et~al.~\cite{li2023gravity,li2024screw} develop deterministic decompositions and consensus searches, including correspondence-free translation from angles to gravity~\cite{li2023gravity} and subsequent gravity-based outlier rejection~\cite{li2026gravity}. Quatro~\cite{lim2022quatro} reduces TEASER++~\cite{yang2021teaser} to yaw; Quatro++~\cite{lim2024quatro} adds ground segmentation and tilt compensation. OptiPose~\cite{ivanov2025} minimizes truncated least squares over supplied pairs about a fixed axis. Our comparisons distinguish these objectives and input requirements.

\paragraph{Unmatched observations.} Trimmed ICP~\cite{chetverikov2002} selects nearest-neighbor residuals; partial transport~\cite{qin2022} controls matched mass. GW compares internal relations, with global methods for low-dimensional Euclidean instances~\cite{ryner2023}. Unbalanced GW~\cite{sejourne2021} relaxes marginal conservation, partial GW~\cite{chapel2020partial} fixes transported mass, and fused unbalanced GW~\cite{thual2022fugw} adds feature costs. These nonconvex coupling objectives differ from our rigid fit with a total-variation penalty on unmatched mass (\Cref{rem:unbalanced}). Cost-regularized unbalanced transport~\cite{pardini2026} jointly optimizes couplings and bounded-norm inner-product costs, guaranteeing stationary points.

\paragraph{Registration systems.} Generalized-CVO~\cite{zhang2026gcvo} locally optimizes kernel correlation on SE(3); Register Any Point (RAP)~\cite{pan2026rap} learns registration by flow matching. PREDATOR~\cite{huang2021predator} learns overlap attention, and GeoTransformer~\cite{qin2022geotransformer} matches superpoints through learned geometry. Coordinate-only Rubix can refine these systems. We report complete systems for context and isolate matching through a common stage after Quatro, RAP and Li et~al., with fixed subsets, initialization and outer processing (\Cref{sec:published-gravity-insertion}).

%% file: sections/notation.tex
\section{Notation and Preliminaries}\label{sec:notation}\label{sec:preliminaries}

\paragraph{Point sets and poses.} Write $X=(x_1,\ldots,x_n)$ and $Y=(y_1,\ldots,y_m)$ in $\mathbb R^d$, also viewed as matrices with point columns. Their means are $m_x$ and $m_y$; $x_i^c=x_i-m_x$ and $y_j^c=y_j-m_y$ are centered coordinates. A pose maps $Y$ toward $X$ by $x_i\approx sRy_j+t$, with $R\in\mathrm{SO}(d)$ or $R\in\mathrm O(d)$, translation $t\in\mathbb R^d$ and isotropic scale $s\ge0$. The orthogonal group $\mathrm O(d)$ satisfies $R^\top R=I$; its rotation subgroup $\mathrm{SO}(d)$ has determinant $+1$. We state when a cited method reverses the point-set roles.

\paragraph{Planar coordinates.} Identify $\mathbb R^2$ with $\mathbb C$ and represent point sets by $x,y\in\mathbb C^n$, with $\|x\|^2=\sum_i|x_i|^2$. For $z\in\mathbb C$, $\bar z$, $\Re z$ and $\Im z$ denote its conjugate, real and imaginary parts; $\Arg z\in(-\pi,\pi]$ is the principal argument for $z\ne0$, and $\langle z,w\rangle=\Re(\bar zw)$ is the real inner product. Rotations act by $z\mapsto e^{\mathrm i\theta}z$ and orientation-reversing orthogonal maps by $z\mapsto e^{\mathrm i\theta}\bar z$.

\paragraph{Matchings and assignment.} For $n=m$, a complete matching is a permutation $\sigma$ in the symmetric group $S_n$, with matrix $P_\sigma$. Given real scores $(c_{ij})$, linear assignment chooses $\sigma\in S_n$ maximizing $\sum_ic_{i,\sigma(i)}$. The Birkhoff polytope $B_n$ contains the nonnegative matrices with unit row and column sums~\cite{brualdi1977}. These are the convex combinations of permutation matrices, so linear scores have the same maximum over $B_n$ and its permutation matrices. A partial matching $M$ contains $k=|M|$ pairs with no repeated row or column. Trimmed nearest-neighbor objectives instead fix the number of pairs and may repeat targets (\Cref{sec:joint-gravity,sec:native-planar}). We use $P$ for a fractional plan between unit point masses and $\pi\in\Pi(\alpha,\beta)$ for a transport plan with marginals $\alpha$ and $\beta$.

\paragraph{Convex geometry.} The convex hull $\conv A$ is the set of convex combinations of points in $A$. A polygon vertex is not a convex combination of its other points; a nontrivial segment has two vertices and a singleton one. For a convex polygon $\mathcal Q\subset\mathbb C$ and nonzero direction $u\in\mathbb C$, the \emph{support function} $h_{\mathcal Q}(u)=\max_{z\in\mathcal Q}\Re(\bar uz)$ is its largest projection onto $u$. The line $\Re(\bar uz)=h_{\mathcal Q}(u)$ bounds $\mathcal Q$ on the side of smaller projections. Its intersection with $\mathcal Q$ is the supporting face: a vertex or edge when $\mathcal Q$ is two-dimensional. For an edge, $u$ is its outward normal. Throughout the planar theory, $u$ is complex, with $u=e^{-\mathrm i\theta}$ for rotation $\theta$; in \Cref{sec:gravity}, it denotes the real yaw coordinates $u(\theta)=(\cos\theta,\sin\theta)$.

\paragraph{Costs, bounds and tolerances.} We use $E$ for summed squared residual, $\mathcal L_\sigma$ for mean planar matching cost, and $J$ for a plotted objective normalized as its caption specifies. The \emph{incumbent} is the best feasible solution found, with cost $U$; $L$ or $L_\Omega$ bounds the residual from below, enclosing the optimum in $[L,U]$. For total centered energy $S$ and correlation upper bound $b_\Omega$, the residual lower bound is $L_\Omega=\max\{0,S-2b_\Omega\}$: residual gaps are twice score gaps. A mean-error tolerance $\varepsilon_{\mathrm{mean}}$ gives a summed tolerance $n$ times larger for complete matching, or $k$ times larger for fixed positive cardinality $k$. Certificates require exact arithmetic; floating-point reference values and gaps are numerical.

\paragraph{Further conventions.} The symbols $\|\cdot\|$, $\|\cdot\|_F$ and $\langle\cdot,\cdot\rangle_F$ denote the Euclidean norm, Frobenius norm and Frobenius inner product; $\operatorname{tr}$, $(\cdot)^\dagger$ and $\lambda_{\max}$ denote trace, Moore--Penrose inverse and largest eigenvalue. A star marks an optimum or optimizer and a hat a reconstructed quantity; neither implies uniqueness. Arithmetic complexity $\mathcal O(\cdot)$ is distinct from the orthogonal group $\mathrm O(d)$. Local symbols are defined where used.

%% file: sections/problem.tex
\section{Problem Formulation}\label{sec:setup}

At a fixed rotation, the best matching is the solution of a linear assignment problem. This section shows that the rotation enters this problem only as a direction of projection. We encode each matching by one complex number, its \emph{correlation}; a rotation then selects a projection of the correlations, and the joint optimum over rotations and matchings is a farthest point of their convex hull.

\subsection{Complete Planar PW Alignment}
We consider complete matching of two planar point sets $X=(x_1,\ldots,x_n)$ and $Y=(y_1,\ldots,y_n)$ in $\mathbb R^2$, with $n\ge2$ points each, equal weights $1/n$ and possibly repeated points. The transformations are the rotations, $G=\mathrm{SO}(2)$, or the rotations and reflections, $G=\mathrm O(2)$. The squared PW objective is
\begin{equation}\label{eq:pw}
\PW_G^2(X,Y)=\min_{R\in G,\,\sigma\in S_n}
\frac1n\sum_{i=1}^n\|x_i-Ry_{\sigma(i)}\|^2.
\end{equation}
We seek a global minimizer $(R^\star,\sigma^\star)\in G\times S_n$ and the optimal value; if several minimizers tie, any of them is a valid answer, and the case $n=1$ is immediate. The square root of~\eqref{eq:pw} is a distance between unlabeled point sets modulo $G$. This extends the $\mathrm O(d)$ formulation of Adamo et~al.~\cite{adamo2025} to $\mathrm{SO}(2)$: composing transformations and matchings gives the triangle inequality, and compactness shows that the distance is zero only for sets that agree up to $G$ and relabeling. If translation is also allowed, the optimal translation for a fixed $R$ is $m_x-Rm_y$, and the problem reduces to the centered sets.

\subsection{Matching Correlations}\label{sec:correlations}
In complex coordinates, the correlation of a matching $\sigma$ is $z_\sigma(\bar x,y)=\sum_i\bar x_iy_{\sigma(i)}$. It carries the entire dependence of the matching's cost on the rotation:
\begin{equation}\label{eq:rotation-cost}
\begin{aligned}
\mathcal{L}_\sigma(\theta)&=\frac1n\sum_i|x_i-e^{\mathrm i\theta}y_{\sigma(i)}|^2\\
&=\frac{\|x\|^2+\|y\|^2-2\Re(e^{\mathrm i\theta}z_\sigma(\bar x,y))}{n}.
\end{aligned}
\end{equation}
The norm terms depend neither on the matching nor on the rotation. Minimizing the cost at rotation $\theta$ is therefore the same as maximizing the projection $\Re(e^{\mathrm i\theta}z_\sigma)$ over the correlations, and the maximum of a projection over a finite set is attained at a vertex of its convex hull. It is convenient to work with general coefficient vectors $a,b\in\mathbb C^n$. We define $z_\sigma(a,b)=\sum_{i=1}^na_ib_{\sigma(i)}$ and the hull
\begin{equation}\label{eq:polygon}
\begin{aligned}
\mathcal{P}(a,b)&=\conv\{z_\sigma(a,b):\sigma\in S_n\}\\
&=\{a^\top Pb:P\in B_n\},
\end{aligned}
\end{equation}
which we call the \emph{permutation polygon}. The second equality holds because the linear map $P\mapsto a^\top Pb$ sends the Birkhoff polytope, whose vertices are the permutation matrices, onto this hull. The expression $a^\top Pb$ is bilinear and involves no conjugation; alignment uses $a=\bar x$ and $b=y$.

To evaluate the polygon at rotation $\theta$, we set $(a,b)=(\bar x,y)$ and $u=e^{-\mathrm i\theta}$. A linear assignment with scores $c_{ij}=\Re(\bar ua_ib_j)$ maximizes $\Re(\bar uz_\sigma)$. It returns the support value $h_{\mathcal P}(u)$ and an optimal matching at this rotation without forming the $n!$ correlations. Keeping one matching for each vertex of the polygon suffices to have an optimal matching at every rotation, including the directions in which an edge ties two vertices and the inputs for which several matchings share a correlation. Optimizing over the rotation as well replaces the projection of each correlation by its modulus, which gives the following reduction.

\begin{lemma}[reduction]\label{lem:reduction}
For $x,y\in\mathbb C^n$,
\begin{equation}\label{eq:pw-polygon}
\PW_{\mathrm{SO}(2)}^2(X,Y)=\frac{\|x\|^2+\|y\|^2}{n}-\frac2n\max_{z\in\mathcal{P}(\bar x,y)}|z|.
\end{equation}
A vertex $z_{\sigma^\star}$ attains the maximum. Its matching $\sigma^\star$ is optimal, with angle $-\Arg z_{\sigma^\star}$ when the correlation is nonzero. If the maximum modulus is zero, every angle is optimal. For $\mathrm{O}(2)$, take the larger maximum over $\mathcal{P}(\bar x,y)$ and $\mathcal{P}(\bar x,\bar y)$.
\end{lemma}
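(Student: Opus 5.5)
The plan is to start from the cost identity~\eqref{eq:rotation-cost} and swap the two minimizations. For each $\theta$ and $\sigma$ we have $\mathcal L_\sigma(\theta)=(\|x\|^2+\|y\|^2-2\Re(e^{\mathrm i\theta}z_\sigma))/n$ with $z_\sigma=z_\sigma(\bar x,y)$. Minimizing over $(\theta,\sigma)$ is therefore the same as maximizing $\Re(e^{\mathrm i\theta}z_\sigma)$ jointly. I will take the maximum over $\theta$ first, for each fixed $\sigma$. Since $\Re(e^{\mathrm i\theta}z)\le|z|$, with equality exactly when $e^{\mathrm i\theta}z=|z|$, we get $\max_\theta\Re(e^{\mathrm i\theta}z_\sigma)=|z_\sigma|$. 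When $z_\sigma\ne0$ this maximum is attained at $\theta=-\Arg z_\sigma$, and when $z_\sigma=0$ it is attained at every $\theta$. This gives
\[
\PW^2_{\mathrm{SO}(2)}=\frac{\|x\|^2+\|y\|^2}{n}-\frac2n\max_{\sigma\in S_n}|z_\sigma|,
\]
where the maximum exists because $S_n$ is finite.

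Next I replace the maximum over the finite set of correlations by the maximum over its hull $\mathcal P(\bar x,y)$, defined in~\eqref{eq:polygon}. The modulus $|\cdot|$ is a convex continuous function. Its maximum over the compact convex set $\mathcal P$ is therefore attained at an extreme point, because every point of $\mathcal P$ is a convex combination of vertices and $|\sum\lambda_kv_k|\le\sum\lambda_k|v_k|\le\max_k|v_k|$. The extreme points of $\mathcal P$ form a subset of $\{z_\sigma\}$. Hence $\max_{\mathcal P}|z|=\max_\sigma|z_\sigma|$, and a vertex $z_{\sigma^\star}$ attains it. The pair $(\sigma^\star,-\Arg z_{\sigma^\star})$ then achieves the value, so it is a global minimizer. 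If the maximum modulus is zero, then every $z_\sigma=0$, every cost equals $(\|x\|^2+\|y\|^2)/n$, and every angle is optimal.

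For $\mathrm O(2)$, I split $G$ into its rotations and its reflections $z\mapsto e^{\mathrm i\theta}\bar z$. The reflection branch replaces $y$ by $\bar y$, since $|x_i-e^{\mathrm i\theta}\bar y_{\sigma(i)}|^2$ expands in the same way with $z_\sigma(\bar x,\bar y)$ and $\|\bar y\|=\|y\|$. The optimum over $\mathrm O(2)$ is then given by the larger of the two hull maxima.

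I do not expect a real obstacle here. The only step needing care is the vertex attainment claim: the maximum of a convex function over a polytope is achieved at a vertex, and in particular at some $z_\sigma$. I will also handle degenerate hulls, such as a segment or a point, using the paper's vertex convention, and ties, where any maximizer is accepted.
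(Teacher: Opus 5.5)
Your proposal is correct and takes essentially the same route as the paper's proof. You optimize the angle for each fixed matching, which replaces $\Re(e^{\mathrm i\theta}z_\sigma)$ by $|z_\sigma|$, and use convexity of the modulus to place the maximum over $\mathcal P(\bar x,y)$ at a vertex that is a matching correlation. You then treat reflections by replacing $y$ with $\bar y$; your explicit triangle-inequality argument for vertex attainment and your expansion of the reflection branch spell out details the paper states in one line.
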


\begin{proof}[Proof sketch]
Maximizing $\Re(e^{\mathrm i\theta}z_\sigma)$ over $\theta$ in~\eqref{eq:rotation-cost} gives $|z_\sigma|$. The modulus is convex, so its maximum over the polygon is attained at a vertex, and every vertex is the correlation of a matching. \Cref{app:reduction} treats reflections and zero correlations.
\end{proof}

\subsection{Translation and Isotropic Scale}\label{sec:similarity}
For complete similarity alignment $x_i\approx sRy_{\sigma(i)}+t$ with $s\ge0$, the optimal translation is $t=m_x-sRm_y$, which leaves the centered coordinates $x_i^c=x_i-m_x$ and $y_j^c=y_j-m_y$. Let $S_X=\sum_i\|x_i^c\|^2$ and $S_Y=\sum_j\|y_j^c\|^2$, and let $r_\sigma=|z_\sigma|$ be the modulus of a correlation in either of the centered polygons of \Cref{lem:reduction}. Once the rotation is optimized, the residual of the matching $\sigma$ at scale $s$ is $S_X+s^2S_Y-2sr_\sigma$. For $S_Y>0$, the minimizing scale and the residual are
\begin{equation}\label{eq:similarity}
s_\sigma^\star=\frac{r_\sigma}{S_Y},\qquad
E_\sigma^\star=S_X-\frac{r_\sigma^2}{S_Y}.
\end{equation}
The residual decreases as $r_\sigma$ grows, so a farthest vertex also solves similarity alignment~\cite{zikan1991}. Three degenerate cases remain. If $S_Y=0$, the translation places every source point at $m_x$, the residual is $S_X$, and rotation and scale are not identifiable. If $S_Y>0$ and the largest correlation modulus is zero, the scale $s=0$ attains the minimum when it is allowed, whereas requiring $s>0$ leaves an infimum that is not attained. If $S_Y>0$ and the scale is restricted by fixed bounds $0<s_-\le s\le s_+<\infty$, we clip $r_\sigma/S_Y$ to $[s_-,s_+]$; the minimized residual remains nonincreasing in $r_\sigma$. Dividing by $n$ gives the normalization of~\eqref{eq:pw}. Partial registration does not reduce in this way, because it requires the means of the selected subsets.

\subsection{The Structural Restriction}
Solving each assignment problem in polynomial time does not bound how often the optimal matching changes as a parameter varies. Consider scores $D_0+\eta_{\mathrm{sh}}D_1$ that depend affinely on a real parameter $\eta_{\mathrm{sh}}$, with arbitrary $D_0,D_1\in\mathbb R^{n\times n}$, and define
\begin{equation}\label{eq:general-shadow}
\operatorname{sh}(P)=\big(\langle D_0,P\rangle_F,\langle D_1,P\rangle_F\big),
\end{equation}
where $\langle D,P\rangle_F=\sum_{ij}D_{ij}P_{ij}$. The image $\operatorname{sh}(B_n)$ is a planar projection, or \emph{shadow}, of the Birkhoff polytope. Maximizing the score at the parameter $\eta_{\mathrm{sh}}$ exposes a face of the shadow in the direction $(1,\eta_{\mathrm{sh}})$, so varying the parameter traverses a chain of boundary vertices. Arbitrary shadows can have $n^{\Omega(\log n)}$ vertices~\cite{hrubes2023}. Rotation scores are more restricted. The two coordinates of their shadow are the real and imaginary parts of $a^\top Pb$, whose complex coefficient matrix $ab^\top$ has rank at most one, although the two real score matrices can have rank two. \Cref{thm:main} bounds the number of vertices for this restricted class.

%% file: sections/theory.tex
\section{\texorpdfstring{A Sharp $n(n-1)$ Bound}{A Sharp n(n-1) Bound}}\label{sec:theory}

\begin{figure*}[t]
\centering
\includegraphics[width=\textwidth]{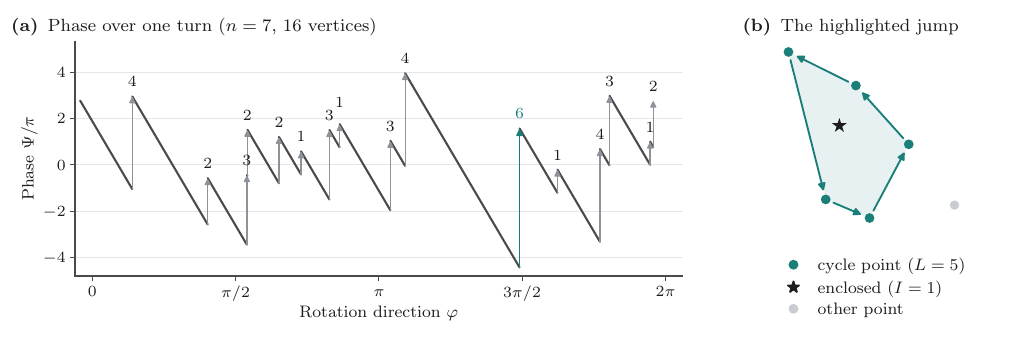}
\caption{\textbf{A fixed phase budget bounds the number of matching changes} ($n=7$). (a)~While one matching stays optimal, the phase $\Psi$ decreases with slope $-\binom n2$; when the optimal matching changes, it jumps by $\Delta\Psi=\pi(L-1+2I)$. The number above each jump is $\Delta\Psi/\pi$. Over one turn, the continuous decrease and the jumps both total $42\pi$. Each jump is at least $\pi$, so the 16 polygon vertices of this example satisfy the bound $n(n-1)=42$. (b)~The highlighted jump of $6\pi$: its reassignment cycle visits $L=5$ points counterclockwise and encloses $I=1$ further point (star).}\label{fig:phase}
\end{figure*}

The permutation polygon is defined by $n!$ correlations, yet exact alignment needs at most $n(n-1)$ of its vertices. We count the vertices by turning a supporting line through one full turn: the vertex that it touches changes each time its direction passes the outward normal of an edge. \Cref{sec:algorithms} then recovers these vertices by assignment queries.

\begin{theorem}\label{thm:main}
For every $n\ge2$ and all $a,b\in\mathbb C^n$, the polygon $\mathcal{P}(a,b)$ has at most $n(n-1)$ vertices. Consequently, for any two equally weighted planar sets of $n$ points there are at most $n(n-1)$ matchings such that, for every rotation, one of them is optimal. For generic point sets, that is, when $(\bar x,y)$ is generic in the sense of \Cref{def:generic}, the optimal matching is unique for all but finitely many angles and changes at most $n(n-1)$ times over a full turn. The bound is attained for every $n$ (\Cref{prop:tight}).
\end{theorem}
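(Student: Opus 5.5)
We plan to count vertices of $\mathcal P(a,b)$ by turning a supporting direction $u=e^{\mathrm i\phi}$ through one full turn and attaching to the current optimal matching a bounded \emph{phase} $\Psi$. While the matching stays fixed, $\Psi$ decreases at a known rate. At each change of matching it jumps up by at least $\pi$. Comparing the total decrease with the total increase bounds the number of changes, and hence the number of vertices.

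\emph{Reduction to the generic case.} First we show it suffices to treat $(a,b)$ generic in the sense of \Cref{def:generic}: no ties among the finitely many relevant pair quantities, distinct coordinates, and at most two matchings optimal in any direction. The vertex count of a polygon is lower semicontinuous in its generating points. A vertex $z_\sigma$ of $\mathcal P(a,b)$ is strictly exposed by some direction $u$, and it stays strictly exposed, as a nearby vertex, under small perturbations of $(a,b)$. Hence a bound for a dense set of inputs transfers to all inputs, including ties and repeated points. For generic inputs the optimal matching is unique except at the finitely many edge-normal directions. The number of changes over a full turn then equals the number of vertices, which gives the second and third sentences of the theorem. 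Every vertex is some $z_\sigma$, so one matching per vertex covers every rotation by \Cref{lem:reduction} and the discussion preceding it.

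\emph{The phase.} Fix a generic direction $u$ and let $\sigma$ be the unique optimal matching. Optimality against every transposition gives
\[
\Re\bigl(\bar u\,w_{ij}\bigr)\ge0,\qquad w_{ij}=(a_i-a_j)\bigl(b_{\sigma(i)}-b_{\sigma(j)}\bigr),
\]
so each $\bar u w_{ij}$ has argument in $[-\pi/2,\pi/2]$. We define $\Psi(u)=\sum_{i<j}\Arg(\bar u w_{ij})$. This sum is bounded, lying in $[-\binom n2\pi/2,\binom n2\pi/2]$, and it is a continuous function of $u$ once a branch is chosen. While $\sigma$ is constant, each $w_{ij}$ is fixed. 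Each term therefore decreases at unit rate in $\phi$, so $\Psi$ has slope $-\binom n2$. Over a full turn the continuous decrease totals $2\pi\binom n2=n(n-1)\pi$. Since $\Psi$ returns to its starting value, the upward jumps must also sum to exactly $n(n-1)\pi$.

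\emph{The jump, and the main obstacle.} At an edge normal the matching changes from $\sigma$ to $\tau=\sigma\circ c$ for a single cycle $c$; genericity ensures the edge contains only these two correlations. We must show that $\Psi$ jumps by $\pi(L-1+2I)\ge\pi$. Here $L$ is the cycle length and $I$ is the number of further points enclosed by the reassignment cycle, drawn as a polygon through the relevant $b$-points (equivalently after normalising by the $a$-differences). Both the edge-normal condition and optimality on both sides pin the arguments of the changed $w_{ij}$ to $\pm\pi/2$ limits. Pairs with both indices off the cycle are unchanged. Pairs with both indices on the cycle contribute, summed, the exterior-angle total of the cycle polygon. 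This gives $(L-1)\pi$ once it is traversed counterclockwise, which optimality forces. Each off-cycle point contributes its winding number times $2\pi$, which is $2\pi$ if it is enclosed and $0$ otherwise. This winding-number bookkeeping is the step I expect to be hardest. It requires checking, via the optimality of $\sigma$ just before and $\tau$ just after, that the cycle is positively oriented and that no term jumps by $-\pi$. I would first verify it for transpositions ($L=2$, jump $\ge\pi$) and then induct on $L$ by splitting the cycle polygon into triangles.

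Since each jump is at least $\pi$ and the jumps total $n(n-1)\pi$, there are at most $n(n-1)$ changes, hence at most $n(n-1)$ vertices. Sharpness is deferred to \Cref{prop:tight}.
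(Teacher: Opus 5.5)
Your outline matches the paper's: the swap-gain phase, slope $-\binom n2$ between transitions, total jump $\pi n(n-1)$, a jump of $\pi(L-1+2I)$ per edge, and semicontinuity to remove genericity. The gap is that the jump formula, which carries the whole bound, is left open, and the one concrete step you give for it is false. With $\gamma_r=b_{j_r}$, the in-cycle pair $(r,s)$ contributes the principal angle from $\gamma_r-\gamma_s$ to $\gamma_{r+1}-\gamma_{s+1}$. The $L$ cyclically adjacent pairs give the turning angles of $\Gamma$, which total $2\pi$. The nonadjacent pairs supply the remaining $\pi(L-3)$; for a square, each diagonal pair turns by $\pi/2$. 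So ``exterior-angle total $=(L-1)\pi$'' holds only for $L=3$. A local patch does not fix this. The identity $\prod_{i<k}(b_{\tau(i)}-b_{\tau(k)})=\operatorname{sgn}(\sigma^{-1}\tau)\prod_{i<k}(b_{\sigma(i)}-b_{\sigma(k)})$ determines the jump only modulo $2\pi$, and for odd $L$ that still allows zero or negative jumps. The paper's \Cref{lem:braid} fixes the integer globally in four steps:
\begin{enumerate}
\item $\Theta$ lies in $\pi(L-1)+2\pi\mathbb Z$;
\item $\Theta$ is continuous on the space of labeled simple counterclockwise $L$-gons;
\item that space is connected, by Connelly et al.'s convexification followed by a star-shaped interpolation;
\item the regular polygon gives $\binom L2\cdot 2\pi/L$.
\end{enumerate}
Your proposed triangle induction has no evident additivity, because the sliding motion of $\Gamma$ is not assembled from those of the two pieces of a diagonal cut.

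Second, you never show that the cycle polygon is simple with no off-cycle point on its edges. Without that, ``enclosed'', $I\ge0$ and non-collision of the sliding points are all unavailable. Non-collision is what lets principal-argument differences equal continuous angle changes. A self-intersecting cycle can wind $-1$ times about a point, and for the pentagram $\Theta=8\pi\ne4\pi$. Your claim ``positively oriented, which optimality forces'' is likewise only asserted. The missing idea is the paper's strictly complementary optimal dual at the edge normal. Its tight pairs are exactly those used by $\sigma$ or $\tau$, so it defines a power diagram on the $b_j$ whose dual regular subdivision contains every edge of the cycle. This makes $\Gamma$ simple with no other site on its boundary. Reciprocity between subdivision and diagram edges then turns the counterclockwise traversal of $\mathcal P$, namely $\sum_r\xi^{\mathrm{row}}_{i_r}\times e_r>0$, into counterclockwise orientation of $\Gamma$ (\Cref{lem:orient}). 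Finally, optimality does not pin the changed $\bar uw_{ij}$ at arguments $\pm\pi/2$; that happens only for the swapped pair when $L=2$. In general they lie anywhere in the closed right half-plane. The bookkeeping is valid because the affine interpolation from $\sigma$ to $\tau$ keeps every swap gain in that half-plane.
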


The proof follows one real quantity, the \emph{phase} of the optimal matching, as the supporting direction turns. The phase is the sum, over all pairs of rows, of the angle of the swap gain of the pair, the amount by which the correlation of the matching exceeds that of the matching with the two partners swapped (\Cref{sec:phase-argument}). While one matching stays optimal the phase decreases at a fixed rate, and each change of the optimal matching increases it by at least $\pi$. Over a full turn the phase returns to its initial value, so the jumps must restore a total decrease of $\pi n(n-1)$, which allows at most $n(n-1)$ changes. We first work with \emph{generic} $(a,b)$, for which the correlations of distinct matchings are distinct and no three of them are collinear, so that each transition involves exactly two matchings. Generic pairs are dense and have full measure (\Cref{lem:generic,app:genericity}), and a perturbation argument then extends the bound to all inputs. \Cref{app:counting-bound} gives the complete proof of \Cref{thm:main}.

\subsection{Optimal-Matching Transitions}\label{sec:conservation}
Two permutations $\sigma$ and $\tau$ differ by the permutation $\sigma^{-1}\tau$, which decomposes into disjoint cycles; each cycle passes partners around a set of rows. A generic transition consists of a single cycle, and a swap of two partners is the case of length two.
\begin{lemma}[cycle transitions]\label{lem:cycle}
For generic $(a,b)$ and $u\in\mathbb C$ with $|u|=1$, maximizing $\Re(\bar uz_\sigma)$ gives either a unique permutation or two permutations $\sigma,\tau$ for which $\sigma^{-1}\tau$ is a single nontrivial cycle.
\end{lemma}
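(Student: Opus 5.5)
The plan has two parts: show that ties can only occur between two matchings, and then show that two tied matchings differ by a single cycle. Genericity, as described before the statement (and formalized in \Cref{def:generic}), says that distinct matchings have distinct correlations and that no three correlations are collinear.

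\textbf{Step 1: at most two maximizers.} Fix $u$ with $|u|=1$. The set of maximizers of $\Re(\bar u z_\sigma)$ consists of the permutations whose correlations lie on the supporting line $\Re(\bar u z)=h_{\mathcal P}(u)$. Since distinct matchings have distinct correlations, three maximizers would give three distinct collinear correlations. This contradicts genericity. So either the maximizer is unique, or there are exactly two, $\sigma\neq\tau$, with $z_\sigma\ne z_\tau$.

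\textbf{Step 2: the tied pair differs by one cycle.} Suppose $\sigma^{-1}\tau$ has disjoint nontrivial cycles $C_1,\dots,C_k$ with $k\ge2$. For each subset $T\subseteq\{1,\dots,k\}$, let $\rho_T$ agree with $\tau$ on the rows of the cycles in $T$ and with $\sigma$ elsewhere; this is a valid permutation because the cycles are disjoint. Let $\delta_j$ be the change in correlation produced by applying cycle $C_j$ to $\sigma$. Then the correlations add:
\[
z_{\rho_T}=z_\sigma+\sum_{j\in T}\delta_j,
\]
because each row's term depends only on which of $\sigma$ or $\tau$ it follows. Now project onto $u$. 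Write $g_j=\Re(\bar u\delta_j)$, so that $\sum_j g_j=0$ because $\sigma$ and $\tau$ tie. Each $g_j\le 0$, since otherwise $\rho_{\{j\}}$ would beat $\sigma$. Hence every $g_j=0$, and every $\rho_T$ is also a maximizer.

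For $k\ge2$, the permutation $\rho_{\{1\}}$ differs from both $\sigma$ and $\tau$. This gives a third maximizer, which Step 1 rules out. Hence $k=1$.

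\textbf{Expected obstacle.} The only delicate point is making sure Step 1 rests on exactly the properties that \Cref{def:generic} supplies: injectivity of $\sigma\mapsto z_\sigma$ and the absence of three collinear correlations. If \Cref{def:generic} were phrased more weakly, for instance as a non-vanishing condition on polynomial expressions in $(a,b)$, I would first derive these two properties from it. The relevant polynomials are the differences $z_\sigma-z_\tau$ and the cross products $\Im\bigl((z_\sigma-z_\rho)\overline{(z_\tau-z_\rho)}\bigr)$. I would check that neither vanishes identically by exhibiting explicit $(a,b)$, using \Cref{lem:generic} for density. Once these two properties are in hand, Step 2 is pure combinatorics.
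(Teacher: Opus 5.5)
Your proof is correct and follows the paper's argument. Conditions (G1) and (G2) allow at most two maximizers on the supporting line, and switching any single cycle of $\sigma^{-1}\tau$ changes the projection by a nonpositive amount, with these changes summing to zero, so with two or more cycles, switching one of them gives a forbidden third maximizer; your intermediate permutations $\rho_T$ spell out what the paper states in one line.
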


\begin{proof}[Proof sketch]
Under genericity, a supporting line contains at most two correlations, so at most two matchings are optimal in any direction. If two are optimal, switching any single cycle of their difference leaves the projection unchanged: the cycles act on disjoint rows, so their changes of the projection add up to zero, and none of them is positive. Two nontrivial cycles would therefore produce a third optimal matching, a contradiction. The full proof is in \Cref{app:cycle-transitions}.
\end{proof}

For generic inputs with $n\ge3$, we traverse the boundary of $\mathcal P$ counterclockwise. By \Cref{lem:cycle}, the matchings at the two ends of an edge $e$ from $z_\sigma$ to $z_\tau$ differ by one cycle $\sigma^{-1}\tau=(i_1\,\dots\,i_{L_e})$ of length $L_e$. We set $j_r=\sigma(i_r)$, with indices taken modulo $L_e$, so that row $i_r$ gives up the point $b_{j_r}$ and receives $b_{j_{r+1}}$. Joining the points $b_{j_1},\ldots,b_{j_{L_e}}$ in cycle order gives a closed polygon $\Gamma_e$. This polygon lies in the plane of the input points, whereas the edge $e$ that defines it lies in the plane of correlations.

Two geometric properties of $\Gamma_e$ determine the phase jump at $e$. First, an optimal dual solution of the assignment problem at the transition defines a power diagram~\cite{aurenhammer1987} on the points $b_j$, and two points can exchange partners at the transition only if their cells are neighbors. The segments that join neighboring cells do not cross, so a cycle of length at least three forms a simple polygon. Second, traversing $\mathcal P$ counterclockwise fixes the orientation of the cycle, which is also counterclockwise. Both properties are proved in \Cref{sec:transition-geometry,app:jump}.

\subsection{The Phase Argument}\label{sec:phase-argument}

For rows $i<k$, we define the \emph{swap gain}
\begin{equation}\label{eq:swap-gain}
\Delta z_{ik}(\sigma)=(a_i-a_k)(b_{\sigma(i)}-b_{\sigma(k)}),
\end{equation}
which is the difference between the correlation of $\sigma$ and that of the matching obtained by swapping the partners of rows $i$ and $k$. Genericity makes every swap gain nonzero. If $\sigma$ is optimal in direction $u$, no swap can improve it, so $\Re(\bar u \Delta z_{ik}(\sigma))\ge0$ for every pair, and every term of the phase
\begin{equation}\label{eq:phase}
\Psi(\sigma,u)=\sum_{i<k}\Arg\big(\bar u\,\Delta z_{ik}(\sigma)\big)
\end{equation}
lies in $[-\pi/2,\pi/2]$. The principal argument is continuous on the closed right half-plane without the origin, so no term crosses a branch cut while $\sigma$ stays optimal. Between transitions, each term therefore decreases at unit rate as $u$ turns counterclockwise, and the $\binom n2$ terms together decrease by $2\pi\binom n2=\pi n(n-1)$ over a full turn. A full turn that starts away from an edge normal ends with the same unique optimal matching and the same phase, so the jumps at the transitions must balance this continuous decrease exactly.

\begin{lemma}[jump]\label{lem:jump}
Let $(a,b)$ be generic with $n\ge3$, and let the edge $e$ of $\mathcal{P}$ lead counterclockwise from $z_\sigma$ to $z_\tau$, with outward unit normal $u$ and cycle length $L_e$. For $L_e\ge3$, the polygon $\Gamma_e$ is simple and counterclockwise, and $I_e$ counts the off-cycle points $b_{j_0}$ strictly inside it. Set $I_e=0$ for $L_e=2$. Then
\begin{equation}\label{eq:jump}
\Psi(\tau,u)-\Psi(\sigma,u)=\pi(L_e-1+2I_e).
\end{equation}
\end{lemma}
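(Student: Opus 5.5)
The plan is to compute the jump term by term. At the edge normal $u$, both $\sigma$ and $\tau$ are optimal and agree on every off-cycle row. I will split the $\binom n2$ pairs $i<k$ into three classes:
\begin{enumerate}[label=(\roman*)]
\item both rows off the cycle;
\item exactly one row on the cycle;
\item both rows on the cycle.
\end{enumerate}
Class (i) contributes nothing, since $\Delta z_{ik}(\sigma)=\Delta z_{ik}(\tau)$.

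For every other pair, both $\Arg(\bar u\Delta z_{ik}(\sigma))$ and $\Arg(\bar u\Delta z_{ik}(\tau))$ lie in $[-\pi/2,\pi/2]$, because both matchings are optimal at $u$. The factor $(a_i-a_k)$ is common to the two swap gains. Hence each difference of terms is a lift into $[-\pi,\pi]$ of the angle from $b_{\sigma(i)}-b_{\sigma(k)}$ to $b_{\tau(i)}-b_{\tau(k)}$. Genericity rules out the endpoint $\pm\pi$ except for the reversal of a single chord. That case is the swap case $L_e=2$, and also the chord between the two rows of a pair in class (iii) when the cycle has length two. I will settle its sign by perturbation. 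Just before the transition, $\sigma$ is the unique optimum at a direction slightly clockwise of $u$, and just after, $\tau$ is the unique optimum slightly counterclockwise. Continuity of the terms on the open half-plane then forces the reversed chord's term to move from $-\pi/2$ to $+\pi/2$. This gives $+\pi$, which is \eqref{eq:jump} for $L_e=2$.

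For class (ii), fix an off-cycle row $k$ with partner $b_{j_0}$ and sum over the cycle rows $i_r$. The increments are the lifted angles from $b_{j_r}-b_{j_0}$ to $b_{j_{r+1}}-b_{j_0}$. Each increment lies strictly inside $(-\pi,\pi)$, since $b_{j_0}$ is generically not on a chord. Their sum over $r$ is therefore exactly the total turning of $\Gamma_e$ about $b_{j_0}$, namely $2\pi$ times the winding number. By the simplicity and counterclockwise orientation of $\Gamma_e$ from \Cref{sec:transition-geometry}, this is $2\pi$ when $b_{j_0}$ lies inside and $0$ when it lies outside. Summing over $k$ gives $2\pi I_e$. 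The sign conventions for $i<k$ against $k<i$ only multiply both swap gains by $-1$, which I will check leaves the increment unchanged.

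Class (iii) should give $\pi(L_e-1)$, and I expect it to be the main obstacle. Here the contribution is
\[
\sum_{r<s}\angle\big(b_{j_r}-b_{j_s}\to b_{j_{r+1}}-b_{j_{s+1}}\big),
\]
with every increment lifted into $(-\pi,\pi)$. Chords of distinct vertex pairs are generically not antiparallel, so the lift is well defined; for $L_e=2$ the single term is the swap case already settled above. For $L_e\ge3$ I will use a deformation argument.

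\textbf{Step 1 (integrality).} The sum is an integer multiple of $\pi$. Mod $2\pi$, it equals the total rotation induced by the cyclic shift of vertex labels, which is $\pi\binom{L_e}{2}$ modulo the reversal parity of chords. I would first check this parity carefully.

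\textbf{Step 2 (invariance).} The sum is continuous, hence constant, on the space of simple counterclockwise $L_e$-gons, which is connected. A chord increment would reach $\pm\pi$ only if two chords became antiparallel. I would try to show that along a suitable path this does not change the total, for example by deforming to a convex polygon through simple polygons and tracking the crossings in pairs.

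\textbf{Step 3 (evaluation).} For a regular counterclockwise $L_e$-gon, shifting labels by one is rotation by $2\pi/L_e$. Every one of the $\binom{L_e}2$ chords then turns by $2\pi/L_e$, so the total is $\pi(L_e-1)$.

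If Step 2 proves delicate, the fallback is induction on $L_e$. Cut $\Gamma_e$ along an ear diagonal, and show that the chord sum of an $L_e$-cycle exceeds that of the shortened cycle by exactly $\pi$.

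Adding the three classes yields \eqref{eq:jump}.
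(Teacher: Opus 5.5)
Your three-class decomposition follows the paper's route. The mixed pairs give $2\pi\operatorname{wind}(\Gamma_e,b_{j_0})$, and the cycle pairs give $\Theta(\Gamma_e)=\sum_{r<s}\angle(b_{j_r}-b_{j_s}\to b_{j_{r+1}}-b_{j_{s+1}})$, evaluated on a regular polygon after a deformation. Your perturbation argument for the sign of the swap is also correct.

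\textbf{The braid step.} In Step~2 you assert that $\Theta$ is continuous, but then expect antiparallel chords that would have to be ``tracked in pairs.'' The missing idea is that on a simple polygon this never happens. Put $\gamma_r(\zeta)=(1-\zeta)b_{j_r}+\zeta b_{j_{r+1}}$. Then $(1-\zeta)(b_{j_r}-b_{j_s})+\zeta(b_{j_{r+1}}-b_{j_{s+1}})=\gamma_r(\zeta)-\gamma_s(\zeta)$ is the difference of two points sliding at equal speed along two distinct edges. These points never coincide: nonadjacent edges are disjoint, and a shared vertex is reached at $\zeta=1$ by one point and left at $\zeta=0$ by the other. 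So the segment joining the two chords avoids the origin, and every summand is a principal angle. Hence $\Theta$ is continuous on the whole space of simple counterclockwise $L$-gons, with no crossing events at all. Without this observation, ``continuous, hence constant'' is unsupported, because a sum of lifted angles jumps by $2\pi$ at an antiparallel configuration. Your ear-cutting fallback is not worked out either.

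Two smaller points in the same step:
\begin{itemize}
\item Your Step~1 residue is wrong as stated. The cyclic shift reverses exactly the $L-1$ chords that contain the last label, so $e^{\mathrm i\Theta}=(-1)^{L-1}$. Only $\Theta\in\pi\mathbb Z$ is needed for local constancy, so this does not break the argument.
\item Connectedness needs an actual argument. The paper convexifies through simple polygons and then interpolates star-shaped polygons to the regular one.
\end{itemize}

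\textbf{Simplicity and orientation of $\Gamma_e$.} You import the simplicity and counterclockwise orientation of $\Gamma_e$, and the absence of off-cycle sites on its edges, from \Cref{sec:transition-geometry}. That subsection only collects background. These facts are part of what the lemma asserts, and the paper proves them inside its proof.
\begin{itemize}
\item Simplicity comes from a strictly complementary optimal dual, whose tight entries are exactly the supports of $\sigma$ and $\tau$. Each cycle row then lies in the relative interior of an edge shared by exactly two power cells. So the cycle runs along noncrossing edges of the dual regular subdivision, and these edges contain no other site.
\item Orientation comes from $\sum_r\Im\big(\bar\xi_{i_r}(b_{j_{r+1}}-b_{j_r})\big)=\Im\big(\bar u(z_\tau-z_\sigma)\big)=|z_\tau-z_\sigma|>0$, where $\xi_i=u\bar a_i$. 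This is combined with a reciprocity argument between edges of the subdivision and of the power diagram, summed over the enclosed faces. That argument shows this quantity has the sign of the orientation of $\Gamma_e$.
\end{itemize}
Your class (ii) and (iii) totals depend on exactly this sign: a clockwise $\Gamma_e$ would give $-\pi(L_e-1)-2\pi I_e$.

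\textbf{Use of genericity.} Your appeals to general position of the $b_j$ are not what \Cref{def:generic} provides, since it constrains only correlations. For example, with $n=3$, $b=(0,1,2)$ and generic $a$ satisfy (G1) and (G2). The nondegeneracies you need at the transition follow from optimality instead. Suppose an off-cycle site lies on a chord, or, when $L_e\ge3$, two chords are antiparallel. Then both swap inequalities for the offending pair of rows $(i,k)$ are tight. So $\sigma\circ(i\,k)$ is a third optimal matching, which contradicts \Cref{lem:cycle}.
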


\begin{proof}[Proof sketch]
For $L_e\ge3$, we move each reassigned point linearly from its old position to its new one along the corresponding edge of $\Gamma_e$. The power-diagram construction makes $\Gamma_e$ simple and counterclockwise, with no off-cycle point on its boundary. The moving points do not collide, because they follow distinct edges of the polygon and reach shared endpoints at different times. Optimality at both ends of the motion keeps every swap gain, multiplied by $\bar u$, in the closed right half-plane, so the contribution of each pair to the phase jump equals the continuous change of its argument. To evaluate these changes, we deform $\Gamma_e$ continuously into a regular polygon, which shows that the pairs of moving points contribute $\pi(L_e-1)$ in total. Each off-cycle point enclosed by $\Gamma_e$ contributes a further $2\pi$, and each off-cycle point outside contributes zero. For $L_e=2$, the swapped pair contributes $\pi$ directly, and its contributions relative to each off-cycle point cancel. Adding these terms gives~\eqref{eq:jump}. The full proof, including the orientation of the cycle and the two-point case, is in \Cref{app:jump}.
\end{proof}

\begin{lemma}[counting identity]\label{lem:counting}
For $n\ge3$ and generic $(a,b)$, every $\Gamma_e$ with $L_e\ge3$ is a simple, counterclockwise polygon, and
\begin{equation}\label{eq:counting}
\sum_{\text{edges }e\text{ of }\mathcal{P}}\big(L_e-1+2I_e\big)=n(n-1),
\end{equation}
with $L_e$ and $I_e$ as in \Cref{lem:jump}.
\end{lemma}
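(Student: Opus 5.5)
The plan is to integrate the phase $\Psi$ of the current optimal matching over one full counterclockwise turn of the supporting direction $u=e^{\mathrm i\phi}$, and to balance its continuous decrease against the jumps supplied by \Cref{lem:jump}. The simplicity and counterclockwise orientation of every $\Gamma_e$ with $L_e\ge3$ come directly from \Cref{lem:jump}, whose hypotheses (generic $(a,b)$, $n\ge3$) hold here, so only the identity~\eqref{eq:counting} needs work.

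\textbf{Setup.} Because $(a,b)$ is generic with $n\ge3$, $\mathcal P$ is a genuine polygon with finitely many edges, and each edge normal is exposed at a single angle. Fix a starting angle $\phi_0$ that is not the outward normal of any edge. By \Cref{lem:cycle}, the optimal matching $\sigma(\phi)$ is then unique on each open arc between consecutive edge normals. On each such arc $\sigma$ is constant, and optimality gives $\Re(\bar u\,\Delta z_{ik}(\sigma))\ge0$ for every pair $i<k$. Genericity makes each $\Delta z_{ik}(\sigma)$ nonzero, so every term $\Arg(\bar u\,\Delta z_{ik}(\sigma))$ stays in $[-\pi/2,\pi/2]$ and never meets the branch cut. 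Each of the $\binom n2$ terms therefore equals its initial value minus $(\phi-\phi_0)$ plus a constant, and $\Psi$ decreases with slope exactly $-\binom n2$ along the arc.

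\textbf{Counting.} Traversing $\phi$ from $\phi_0$ to $\phi_0+2\pi$, the supporting vertex moves counterclockwise around $\partial\mathcal P$ and crosses each edge normal exactly once. At an edge $e$ the matching switches from $\sigma$ to $\tau$. Both are optimal at the normal $u_e$, so both phases are defined there, and they are the one-sided limits of $\Psi$ by continuity. The jump at $e$ is $\pi(L_e-1+2I_e)$ by \Cref{lem:jump}. After the full turn we return to the same unique matching at the same $u$, so $\Psi$ takes its initial value again, and the total change is
\begin{equation*}
-2\pi\binom n2+\pi\sum_e(L_e-1+2I_e)=0.
\end{equation*}
Rearranging gives $\sum_e(L_e-1+2I_e)=n(n-1)$.

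\textbf{Main obstacle.} The step needing care is that $\Psi$ is a sum of principal arguments, which could in principle pick up hidden $2\pi$ wraps. These would make ``returns to its initial value'' differ from ``zero net continuous change.'' I would avoid the issue by treating $\Psi(\sigma,u)$ as a genuine function of the pair, without lifting it. On each arc the right-half-plane condition keeps every term continuous, so the slope is exact. At each edge the jump is defined as a difference of values at the same $u_e$, which is exactly the quantity \Cref{lem:jump} computes. The total change around the loop is then a telescoping sum of these arc changes and jumps. It equals $\Psi(\sigma_0,u_0)-\Psi(\sigma_0,u_0)=0$ because the final pair coincides with the initial one. I would also check that each edge normal is passed exactly once per turn and that no two edges share a normal, since $\mathcal P$ is convex and its outward edge normals are distinct.
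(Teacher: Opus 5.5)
Your proposal is correct and follows essentially the paper's argument. The paper's \Cref{prop:total} also shows that each arc between edge normals contributes exactly $-\binom n2$ times its angular length, because every term stays in the closed right half-plane without the origin. It then telescopes the arc changes and the jumps to zero, substitutes \Cref{lem:jump}, and takes simplicity and counterclockwise orientation from that lemma. The only difference is cosmetic: you start the turn away from an edge normal, whereas the paper starts at one.
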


\begin{proof}[Proof]
The continuous decrease of the phase~\eqref{eq:phase} over a full turn is $\pi n(n-1)$, and the jumps must restore it. Substituting~\eqref{eq:jump} for each jump and dividing by $\pi$ gives the identity. \Cref{fig:phase} illustrates the case $n=7$; \Cref{app:phase-total,app:counting-bound} give the full accounting.
\end{proof}

\subsection{The Sharp Bound}
Every term of the counting identity is at least one, which gives the bound for generic inputs. The next lemma transfers it to inputs with tied or repeated correlations: a sufficiently small generic perturbation cannot reduce the number of vertices.
\begin{lemma}[semicontinuity]\label{lem:semicont}
For every $(a,b)$ and $\delta_{\mathrm{pert}}>0$ there is a generic $(a',b')$ within distance $\delta_{\mathrm{pert}}$ with $\#\mathrm{vert}\,\mathcal{P}(a',b')\ge\#\mathrm{vert}\,\mathcal{P}(a,b)$.
\end{lemma}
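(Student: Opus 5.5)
The approach is lower semicontinuity of the vertex count under perturbation, combined with density of generic pairs (\Cref{lem:generic}). Let $V=\{v_1,\dots,v_m\}$ be the vertices of $\mathcal P(a,b)$, listed counterclockwise. For each $v_r$, fix one permutation $\sigma_r$ with $z_{\sigma_r}(a,b)=v_r$; every vertex is the correlation of at least one matching. If $m=1$ the claim is immediate, because any polygon has at least one vertex, so I would simply take a generic pair within $\delta_{\mathrm{pert}}$. For $m\ge2$, the plan is to show that the perturbed correlations $z_{\sigma_r}(a',b')$ remain pairwise distinct vertices of $\mathcal P(a',b')$ once $(a',b')$ is close enough to $(a,b)$.

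The key step is a uniform margin. For each $r$, choose a direction $u_r$ with $|u_r|=1$ that exposes $v_r$ alone, for instance the bisector of the outward normals of the two edges at $v_r$, or any direction strictly separating $v_r$ when $m=2$. Set
\[
\gamma_r=\Re(\bar u_r v_r)-\max_{\sigma:\,z_\sigma(a,b)\ne v_r}\Re\big(\bar u_r z_\sigma(a,b)\big)>0 .
\]
This is positive because only finitely many correlations exist and $v_r$ is the unique maximizer of the projection in direction $u_r$. Let $\gamma=\min_r\gamma_r$. Each map $(a,b)\mapsto z_\sigma(a,b)$ is a polynomial, so all $n!$ of them are jointly continuous. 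Hence there is $\delta_0\le\delta_{\mathrm{pert}}$ such that $|z_\sigma(a',b')-z_\sigma(a,b)|<\gamma/3$ for every $\sigma$ whenever $\|(a',b')-(a,b)\|<\delta_0$. Genericity is dense by \Cref{lem:generic}, so I can choose a generic $(a',b')$ in this ball.

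Next I check exposure after perturbation. Fix $r$ and take any $\sigma$ with $z_\sigma(a,b)\ne v_r$. Then $\Re(\bar u_r z_{\sigma_r}(a',b'))>\Re(\bar u_r v_r)-\gamma/3$, while $\Re(\bar u_r z_\sigma(a',b'))<\Re(\bar u_r v_r)-\gamma+\gamma/3$. So $z_{\sigma_r}(a',b')$ beats every such correlation strictly in direction $u_r$.

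The remaining matchings $\sigma$ with $z_\sigma(a,b)=v_r$ can tie with $\sigma_r$ or overtake it after perturbation. This is the subtle step. I avoid it by not insisting on the specific $\sigma_r$. The maximizer of $\Re(\bar u_r\,\cdot)$ over all perturbed correlations lies among the correlations $\{z_\sigma(a',b'):z_\sigma(a,b)=v_r\}$. All of these sit within $\gamma/3$ of $v_r$. Some point of this cluster therefore lies on the supporting line of $\mathcal P(a',b')$ in direction $u_r$, and that supporting face contains a vertex $w_r$ of $\mathcal P(a',b')$ with $|w_r-v_r|<\gamma/3$. Because $w_r$ lies on the supporting face, it maximizes the projection in direction $u_r$, so it cannot be any of the correlations the margin estimate rules out; by the margin computation it must come from a matching with $z_\sigma(a,b)=v_r$.

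Finally I need the $w_r$ to be distinct. Each $w_r$ comes from a matching whose unperturbed correlation is $v_r$, and distinct vertices $v_r\ne v_s$ have disjoint sets of such matchings. So if $w_r=w_s$ for $r\ne s$, one vertex of $\mathcal P(a',b')$ would be the correlation of matchings from two distinct clusters. For a generic pair this is impossible, because distinct matchings have distinct correlations; it is also excluded directly by the margin inequality if I take $\gamma$ smaller than half the minimum distance between distinct $v_r$. Either argument gives $m$ distinct vertices $w_1,\dots,w_m$ of $\mathcal P(a',b')$, which proves the lemma.

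The main obstacle is the tie handling: several matchings may share one vertex correlation, and the chosen representative $\sigma_r$ may fail to stay extreme. The cluster argument above resolves it.
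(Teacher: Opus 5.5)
Your proof is correct and follows essentially the same route as the paper's: a unit direction exposing each vertex with a uniform positive score margin, a generic perturbation (by density, \Cref{lem:generic}) that moves every correlation by less than a third of that margin, the observation that every perturbed maximizer in direction $u_r$ comes from the cluster of matchings whose original correlation was $v_r$, and disjointness of these clusters to obtain distinct vertices. Handling ties through the whole cluster, rather than the fixed representative $\sigma_r$, is exactly the paper's remark that the exposed face may become an edge. Either of your distinctness arguments, via (G1) or via separation of the $v_r$, suffices.
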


\begin{proof}[Proof sketch]
For each vertex of the original polygon we choose a direction that exposes it strictly. The score gaps in these finitely many directions are positive and persist under sufficiently small perturbations, so each of the disjoint neighborhoods of the original vertices contains an extreme point of the perturbed polygon. Generic pairs are dense by \Cref{lem:generic}, so the perturbation can be chosen generic. The full proof is in \Cref{app:semicontinuity}.
\end{proof}

\begin{proof}[Proof of \Cref{thm:main}]
For generic inputs with $n\ge3$, every summand in \Cref{lem:counting} is at least one, so the polygon has at most $n(n-1)$ edges and hence at most $n(n-1)$ vertices. For arbitrary inputs, a polygon that is a single point satisfies the bound directly; otherwise \Cref{lem:semicont} gives a nearby generic polygon with at least as many vertices, which extends the bound to every input. For $n=2$ there are only two permutations. The statement about matchings holds because one representative matching per vertex contains an optimal matching for every direction (\Cref{sec:correlations}), and under genericity only the two adjacent vertices tie at an edge normal. \Cref{prop:tight} proves sharpness. The complete arguments for the bound and for sharpness are in \Cref{app:counting-bound,app:tightness}, respectively.
\end{proof}

The bound is attained by inputs whose row coefficients $a_i$ are real: the optimal matching in a direction is then determined by the order of the projections of the points $b_j$. The proof uses the Minkowski sum $\mathcal Q_1+\mathcal Q_2=\{z+w:z\in\mathcal Q_1,w\in\mathcal Q_2\}$, which satisfies $h_{\mathcal Q_1+\mathcal Q_2}(u)=h_{\mathcal Q_1}(u)+h_{\mathcal Q_2}(u)$ because the two terms are maximized independently.

\begin{proposition}[tightness]\label{prop:tight}
For $n\ge2$, if $a\in\mathbb R^n$ has distinct entries and $b\in\mathbb C^n$ has distinct entries with pairwise non-parallel differences $b_j-b_l$, $j<l$, then $\mathcal{P}(a,b)$ has exactly $n(n-1)$ vertices.
\end{proposition}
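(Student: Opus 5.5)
We plan to compute the support function of $\mathcal P(a,b)$ explicitly and recognize the polygon as a Minkowski sum of segments (a zonotope), whose vertex count equals twice the number of distinct edge directions. The first step is to reorder the rows so that $a_1<a_2<\dots<a_n$; this permutes the correlations and does not change the polygon. For a unit direction $u$, the assignment scores are $c_{ij}=\Re(\bar u a_i b_j)=a_i\,p_j(u)$ with $p_j(u)=\Re(\bar u b_j)$. By the rearrangement inequality, the score $\sum_i a_i p_{\sigma(i)}(u)$ is maximized by matching the $k$-th smallest $a_i$ with the $k$-th smallest projection $p_j(u)$. Thus the supporting vertex in direction $u$ is $\sum_k a_k b_{(k)}$, where $b_{(k)}$ is the point of $b$ with $k$-th smallest projection onto $u$.

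The second step is an Abel summation. We write $a_k=a_1+\sum_{l<k}(a_{l+1}-a_l)$, with positive gaps $\delta_l=a_{l+1}-a_l$. This gives
\[
h_{\mathcal P}(u)=a_1\,\Re\Big(\bar u\sum_j b_j\Big)+\sum_{l=1}^{n-1}\delta_l\,\max_{|T|=n-l}\Re\Big(\bar u\sum_{j\in T}b_j\Big),
\]
because the sum of the top $n-l$ projections is the maximal projection over $(n-l)$-subsets. Each maximum is the support function of the $k$-set polygon $\mathcal K_k=\conv\{\sum_{j\in T}b_j:|T|=k\}$. Hence $\mathcal P(a,b)$ is a translate of $\sum_{l}\delta_l\mathcal K_{n-l}$. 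Since support functions add under Minkowski sums, the edge normals of $\mathcal P$ form the union of the edge normals of the $\mathcal K_k$ for $k=1,\dots,n-1$ ($\mathcal K_n$ is a point).

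The third step counts edge normals. The optimal matching in direction $u$ changes only when the order of the projections changes. This happens exactly when $u\perp(b_j-b_l)$ for some pair, and these are $2\binom n2=n(n-1)$ distinct directions on the circle, since the differences are pairwise non-parallel. At such a direction, only the pair $j,l$ ties, and it occupies two adjacent ranks $k,k+1$. The rank-$k$ tie flips the sorted order, so the vertex changes by $(a_{k+1}-a_k)(b_j-b_l)$ up to sign. This is nonzero because the entries of $a$ are distinct and the entries of $b$ are distinct. So each of the $n(n-1)$ directions is a genuine edge normal of $\mathcal P$. Away from these directions the supporting vertex is unique and locally constant, so there are no other edges. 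A convex polygon with exactly $n(n-1)$ edge normals has $n(n-1)$ vertices, which matches the upper bound of \Cref{thm:main}.

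The main obstacle is the third step: we must check that each critical direction contributes exactly one edge and that distinct directions are not merged. Non-parallelism guarantees a single swapped pair per direction, and distinctness of the $a_i$ guarantees that the swap changes the vertex. The explicit swap computation handles both points. The case $n=2$ is included, since it gives two vertices $a_1b_1+a_2b_2\ne a_1b_2+a_2b_1$.
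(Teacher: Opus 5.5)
Your proof is correct and follows essentially the paper's route: rearrangement identifies the optimal matchings, Abel summation writes $\mathcal P$ as a positive combination of $k$-set polygons, and each of the $n(n-1)$ distinct tie directions gives exactly one edge; your direct swap difference $(a_{k+1}-a_k)(b_j-b_l)\neq0$ does the work of the paper's ``one exposed edge of one $Q_h$'', so your counting step does not actually need the Minkowski decomposition. One correction to your opening sentence: $\mathcal P$ is a positive combination of $k$-set polygons, not a Minkowski sum of segments, and it is generally not a zonotope (for $n=3$ with $a_2-a_1\neq a_3-a_2$ it is a hexagon whose opposite edges have different lengths), but nothing in your argument relies on that description.
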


\begin{proof}[Proof sketch]
We order $a_1<\dots<a_n$, express each coefficient through its consecutive differences, and regroup the terms to obtain Abel's summation identity
\begin{equation}\label{eq:abel-summation}
z_\sigma=a_1\sum_jb_j+\sum_{k=2}^n(a_k-a_{k-1})\sum_{l\ge k}b_{\sigma(l)}.
\end{equation}
In a direction $u$, all terms are maximized together by sending the largest row indices to the points $b_j$ with the largest projections $\Re(\bar ub_j)$. Since support functions add under Minkowski sums, $\mathcal{P}$ is a translate of a Minkowski sum of positive multiples of the polygons $Q_h=\conv\{\sum_{j\in S}b_j:|S|=h\}$, $1\le h\le n-1$. For $n\ge3$, each pair of points exchanges its order of projection twice during a full turn. Because the pairwise differences are non-parallel, only one pair ties at a time, and the two tied points are adjacent in that order. Their exchange changes exactly one top-$h$ subset and thus gives one edge of one $Q_h$. These $n(n-1)$ edges have distinct normals and remain edges of the Minkowski sum, so $\mathcal{P}$ has that many vertices. For $n=2$, the two distinct correlations form a segment. The full construction and the edge count are proved in \Cref{app:tightness}.
\end{proof}

\Cref{thm:main} answers Rote's question~\cite{mitchell2009}. It counts representative matchings and not all optimal ones: when all points coincide, all $n!$ matchings are optimal, but one representative suffices. For generic inputs, the counting identity shows that equality holds exactly when every transition swaps two points. For generic inputs with $n=3$, the number of vertices is six minus the number of transitions that are $3$-cycles.

Rotations apply a common phase $e^{\mathrm i\theta}$ to every product $a_ib_j$. The diagonal weights $\operatorname{diag}(\cos\eta_{\mathrm{diag}},\sin\eta_{\mathrm{diag}})$ can instead give $16>n(n-1)=12$ vertices for $n=4$ (\Cref{rem:special}).

\subsection{Rational Weights and Unequal Cardinalities}
Because \Cref{thm:main} permits repeated points, replacing each weighted point by copies of equal mass extends the vertex bound to transport with rational weights and to uniformly weighted sets of different sizes.

\begin{corollary}[rational weights]\label{cor:weights}
Let $\mu_X=\sum_i\alpha_i\delta_{x_i}$ and $\mu_Y=\sum_j\beta_j\delta_{y_j}$ with nonnegative rational weights summing to one and common denominator $N\ge1$, where $\delta_x$ denotes a unit point mass at $x$. Remove zero-weight atoms, and let $\Pi(\alpha,\beta)$ be the nonnegative matrices with row sums $\alpha$ and column sums $\beta$. Then $\{\sum_{ij}\pi_{ij}\bar x_iy_j:\pi\in\Pi(\alpha,\beta)\}$ has at most $\max\{1,N(N-1)\}$ vertices. In particular, for uniform weights on $n$ and $m$ points, it has at most $\max\{1,N_{\mathrm{uni}}(N_{\mathrm{uni}}-1)\}$ with $N_{\mathrm{uni}}=\operatorname{lcm}(n,m)$.
\end{corollary}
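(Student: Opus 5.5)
Replacing each weighted atom by equal-mass copies reduces the transport polygon to a permutation polygon $\mathcal P(a,b)$ with $N$ rows and $N$ columns; \Cref{thm:main} then bounds its vertices by $N(N-1)$ when $N\ge2$. When $N=1$ every atom has weight one, so both measures are single Dirac masses, $\Pi(\alpha,\beta)$ is a single matrix, and the set is a point with one vertex.

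For $N\ge2$, after removing zero-weight atoms, write $\alpha_i=p_i/N$ and $\beta_j=q_j/N$ with positive integers $p_i,q_j$ satisfying $\sum_ip_i=\sum_jq_j=N$. Define $a\in\mathbb C^N$ by listing $\bar x_i$ repeated $p_i$ times, and $b\in\mathbb C^N$ by listing $y_j$ repeated $q_j$ times. Let $R_i$ and $C_j$ be the resulting blocks of row and column indices, with $|R_i|=p_i$ and $|C_j|=q_j$. The main step is to establish
\[
\Big\{\textstyle\sum_{ij}\pi_{ij}\bar x_iy_j:\pi\in\Pi(\alpha,\beta)\Big\}=\tfrac1N\,\mathcal P(a,b).
\]
For the inclusion of the right side in the left, take $P\in B_N$ and aggregate it blockwise, $\pi_{ij}=\frac1N\sum_{r\in R_i,\,c\in C_j}P_{rc}$. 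Row and column sums of $P$ equal one, so $\pi$ has marginals $p_i/N=\alpha_i$ and $q_j/N=\beta_j$. Since $a_r=\bar x_i$ on $R_i$ and $b_c=y_j$ on $C_j$, we get $\frac1N a^\top Pb=\sum_{ij}\pi_{ij}\bar x_iy_j$.

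For the reverse inclusion, take $\pi\in\Pi(\alpha,\beta)$ and spread each entry uniformly over its block: $P_{rc}=N\pi_{ij}/(p_iq_j)$ for $r\in R_i$ and $c\in C_j$. A row $r\in R_i$ then sums to $\sum_jN\pi_{ij}/p_i=N\alpha_i/p_i=1$, and columns sum to one by the same computation. So $P\in B_N$, and its block aggregate is $\pi$, giving the same correlation. This uses the second description of $\mathcal P$ in~\eqref{eq:polygon}.

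Scaling by $1/N$ preserves the vertex count, so \Cref{thm:main}, which allows repeated entries in $a$ and $b$, gives at most $N(N-1)$ vertices. For uniform weights, the weights $1/n$ and $1/m$ have least common denominator $\operatorname{lcm}(n,m)=N_{\mathrm{uni}}$, and the uniform case follows by substitution.

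The only delicate step is the reverse inclusion: a transport plan need not come from any single permutation, so it must be realized as a doubly stochastic matrix. The uniform spreading above handles this, and it works precisely because the weights $\alpha_i,\beta_j$ are positive after the zero-weight atoms are removed, so the block sizes $p_i,q_j$ are nonzero.
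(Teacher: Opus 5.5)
Your proposal is correct and follows the paper's proof essentially step for step. It uses the same replication into $N\alpha_i$ and $N\beta_j$ copies, blockwise aggregation for one inclusion, and the same uniform spreading $P_{rc}=N\pi_{ij}/(p_iq_j)=\pi_{ij}/(N\alpha_i\beta_j)$ for the other, so the weighted set equals $\frac1N\mathcal P(a,b)$ and \Cref{thm:main} applies, with $N=1$ handled separately as a single point.
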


\begin{proof}[Proof sketch]
Replace each atom by as many copies of mass $1/N$ as the integer numerator of its weight. Aggregating the doubly stochastic matrices over these copies gives exactly the original transport polytope, so the weighted correlation polygon is a scaled equal-weight polygon. \Cref{thm:main} applies to it because repeated points are permitted; $N=1$ gives a single point. The full proof that replication preserves the set of feasible correlations is in \Cref{app:weights}.
\end{proof}

Replication is pseudo-polynomial: the bound is polynomial in the denominator $N$, which can be exponential in its bit length. Uniform weights on $n$ and $m$ points give $N=\operatorname{lcm}(n,m)\le nm$, which is polynomial in the numbers of points. For equally weighted sets of size $n\ge2$, \Cref{lem:reduction,thm:main} show that at most $n(n-1)$ representative matchings per polygon are needed, degenerate inputs included. \Cref{sec:algorithms} finds them by assignment queries.

%% file: sections/rubix.tex
\section{The Rubix Algorithm}\label{sec:algorithms}
Rubix reconstructs the permutation polygon by assignment queries, selects a farthest vertex, and recovers the optimal matching and rotation through \Cref{lem:reduction}. Each query either discovers a new vertex or confirms an edge between two known vertices, so the number of queries is proportional to the number of vertices.
\subsection{Polygon Reconstruction}\label{sec:polygon-reconstruction}
Given $x$ and $y$, we first center them if the translation is unknown (\Cref{sec:similarity}). \Cref{alg:rubix} then reconstructs $\mathcal{P}(\bar x,y)$ and returns its $V$ vertices, each with one matching that attains it. A vertex $z$ of largest modulus gives the optimal angle $-\Arg z$, or any angle if $z=0$, and the optimal value through~\eqref{eq:pw-polygon}. When reflections are allowed, we also reconstruct $\mathcal{P}(\bar x,\bar y)$ and compare the two maxima (\Cref{lem:reduction}). The translation and, where it is permitted, the scale follow from \Cref{sec:similarity}. The pseudocode is written for general $a,b$ so that it serves both polygons.

The algorithm reaches the polygon through an \emph{assignment oracle}. For a nonzero direction $u$, $\mathrm{Oracle}(u)$ returns a correlation that maximizes $\Re(\bar uz)$ and, among these maximizers, one that maximizes the secondary projection $\Re(\overline{\mathrm i u}z)$. The primary maximizers form the supporting face in direction $u$, and the secondary projection selects an endpoint of this face. The oracle therefore always returns a vertex, also when $u$ is normal to an edge and when several matchings share a correlation. A cubic-time assignment solver implements this lexicographic order by working with pairs of costs that are compared lexicographically and added componentwise. Around the oracle, we use the enumeration scheme of Eisner and Severance~\cite{eisner1976}.

The algorithm starts with two queries in opposite directions. If they return the same correlation, the polygon is a single point. Otherwise the two outputs are vertices that split the boundary into two arcs, and the algorithm keeps a stack of chords $(z,z')$, each of which stands for an unexplored boundary arc that runs counterclockwise from $z$ to $z'$. For such a chord, it queries the outward normal direction $u=-\mathrm i(z'-z)$ and obtains a vertex $z''$. The \emph{support gap} $h_{\mathcal P}(u)-\Re(\bar uz)=\Re(\bar u(z''-z))$ is positive precisely when $z''$ lies beyond the chord, in which case the arc is split at $z''$. A zero gap certifies that the chord is an edge of the polygon. When the stack is empty, the polygon is complete.

\begin{algorithm}[t]
\caption{Rubix: polygon reconstruction}\label{alg:rubix}
\begin{algorithmic}[1]
\STATE $z_-\leftarrow\mathrm{Oracle}(-1)$; $z_+\leftarrow\mathrm{Oracle}(1)$.
\STATE If $z_-=z_+$, return the singleton $\{z_+\}$.
\STATE Stack $\leftarrow\{(z_-,z_+),(z_+,z_-)\}$.
\WHILE{Stack is not empty}
\STATE Pop $(z,z')$; $u\leftarrow-\mathrm i(z'-z)$.
\STATE $z''\leftarrow\mathrm{Oracle}(u)$.
\STATE If $\Re(\bar u(z''-z))>0$, push $(z,z''),(z'',z')$; otherwise record the boundary segment $(z,z')$.
\ENDWHILE
\STATE Return the endpoints of the recorded segments.
\end{algorithmic}
\end{algorithm}

\begin{proposition}[polygon reconstruction]\label{prop:es}
For $n\ge2$, \Cref{alg:rubix} returns the vertices of $\mathcal{P}$ using exactly $2V\le2n(n-1)$ calls to this oracle.
\end{proposition}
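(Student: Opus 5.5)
We will prove \Cref{prop:es} in three steps. First we show that the algorithm maintains an invariant and is correct. Then we count oracle calls with a charging argument that uses only the convexity of $\mathcal P$. Finally we invoke \Cref{thm:main} for the bound $V\le n(n-1)$.

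\textbf{Invariant.} We show that every chord $(z,z')$ on the stack joins two \emph{distinct vertices} of $\mathcal P$. We also show that the open boundary arc running counterclockwise from $z$ to $z'$ contains no vertex discovered so far, and that these arcs, together with the recorded segments, partition the boundary. This holds initially. The lexicographic oracle always returns a vertex. When $z_-\ne z_+$, these two vertices are extreme in the directions $-1$ and $1$ and split the boundary into two arcs. If $z_-=z_+$, then the primary faces in directions $\pm1$ both contain that point. Every point of $\mathcal P$ then has the same real part, so $\mathcal P$ lies in a vertical segment. The secondary tie-breaks in directions $\pm1$ select its top and bottom endpoints, so these coincide and $\mathcal P$ is a single point.

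For a popped chord, take $u=-\mathrm i(z'-z)$, which is the outward normal of the chord for a counterclockwise arc. If the gap $\Re(\bar u(z''-z))$ is positive, then $z''$ lies strictly beyond the chord and is therefore on the open arc. It is a new vertex, and splitting the arc there preserves the invariant. If the gap is zero, the chord lies on the supporting line in direction $u$. Then $z$ and $z'$ both lie on the supporting face, so that face is exactly the segment $[z,z']$ and the arc contains no further vertex. The segment is an edge (or, for a two-vertex polygon, one side of the degenerate segment traversed twice), and recording it is correct. When the stack empties, the recorded segments cover the boundary, so their endpoints are exactly the vertex set.

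\textbf{Counting.} Each successful query ($\text{gap}>0$) discovers a new vertex, since $z''$ lies strictly inside a previously unexplored arc. There are therefore $V-2$ such queries. Each unsuccessful query records one boundary segment. The final partition of the boundary into recorded segments consists of exactly $V$ segments, namely the edges of the polygon, or two copies of the segment when $V=2$. Adding the two initial queries gives $2+(V-2)+V=2V$ calls. The singleton case is excluded for $n\ge2$ unless $V=1$. In that case the algorithm uses 2 calls, which still equals $2V$. By \Cref{thm:main}, $V\le n(n-1)$, so the number of calls is at most $2n(n-1)$.

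\textbf{Main obstacle.} The delicate step is the degenerate handling. We must verify that the lexicographic oracle returns the endpoint that makes a split strictly progressive. In particular, when $u$ is normal to a longer edge that contains $z$ but not $z'$, the oracle must return the far endpoint and not $z$ itself. Here the secondary direction $\mathrm i u$ orients along the edge toward $z'$ (counterclockwise), so it picks the endpoint farthest in that direction. We will check this orientation carefully, together with the $V=2$ case in which both chords are the same segment with opposite normals.
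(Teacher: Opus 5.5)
Your proof is correct and follows the paper's argument. Pending arcs have disjoint interiors, and each query either splits an arc at a new vertex or confirms an edge, which gives $2+(V-2)+V=2V$ calls (the paper phrases this count with two binary trees); \Cref{thm:main} then bounds $V\le n(n-1)$.

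The worry in your last paragraph cannot arise. Since $u$ is normal to the chord, a supporting line in direction $u$ that passes through $z$ also passes through $z'$, which is exactly your zero-gap case. With a positive gap, either endpoint of the exposed face is a fresh vertex on the open arc. So the lexicographic rule is needed only to guarantee that a vertex is returned and to handle the initial queries in directions $\pm1$.
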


\begin{proof}[Proof sketch]
The two initial arcs are explored as two binary trees whose internal nodes are the splits and whose leaves are the confirmed edges. Together the trees have $V-2$ internal nodes and $V$ leaves, which with the two initial queries gives $2+(V-2)+V=2V$ calls. A single point needs two calls and a segment four, and the lexicographic tie-breaking keeps the returned points extreme for degenerate inputs. \Cref{app:reconstruction} proves the invariant and the count.
\end{proof}

\begin{corollary}\label{cor:pw}
For $n\ge2$, Rubix computes planar $\PW$ exactly with at most $2n(n-1)$ lexicographic assignment problems (oracle calls) over $\mathrm{SO}(2)$ and $4n(n-1)$ over $\mathrm{O}(2)$, and $\mathcal{O}(n^5)$ arithmetic operations. For $n=1$ the solution is immediate.
\end{corollary}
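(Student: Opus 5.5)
The corollary is assembled from \Cref{lem:reduction}, \Cref{thm:main} and \Cref{prop:es}, plus a cost estimate for one lexicographic assignment.

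\textbf{Correctness.} For $G=\mathrm{SO}(2)$, center the inputs if translation is unknown, as in \Cref{sec:similarity}. Then run \Cref{alg:rubix} on $(a,b)=(\bar x,y)$. By \Cref{prop:es}, it returns every vertex of $\mathcal P(\bar x,y)$, each paired with a matching that attains it. \Cref{lem:reduction} says a farthest vertex $z^\star$ yields the optimal value through~\eqref{eq:pw-polygon}, an optimal matching, and the angle $-\Arg z^\star$ (any angle if $z^\star=0$). Scale and translation, where permitted, then follow from~\eqref{eq:similarity}.

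For $\mathrm O(2)$, the same lemma lets us take the larger maximum over $\mathcal P(\bar x,y)$ and $\mathcal P(\bar x,\bar y)$. We therefore run the algorithm twice.

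\textbf{Query count.} \Cref{prop:es} uses exactly $2V$ oracle calls. \Cref{thm:main} gives $V\le n(n-1)$ for all inputs, ties and repeated points included. Hence there are at most $2n(n-1)$ calls over $\mathrm{SO}(2)$, and at most $4n(n-1)$ over $\mathrm O(2)$.

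\textbf{Arithmetic cost.} This is the step needing the most care. Each oracle call forms the score pairs $(\Re(\bar u a_ib_j),\Re(\overline{\mathrm iu}a_ib_j))$ in $\mathcal O(n^2)$ operations. It then solves a lexicographic linear assignment with a cubic Hungarian method.

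The justification for the lexicographic version: replace real costs by pairs in $\mathbb R^2$, added componentwise and ordered lexicographically. The Hungarian algorithm only adds, subtracts and compares costs. The lexicographic order on pairs is a totally ordered abelian group. So the standard analysis carries over, each primitive costs $\mathcal O(1)$, and one call costs $\mathcal O(n^3)$.

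The bookkeeping in \Cref{alg:rubix} costs $\mathcal O(1)$ per call for chord normals and support gaps, plus $\mathcal O(n)$ to evaluate $z''$ from the returned matching. Selecting the farthest vertex costs $\mathcal O(V)$, and centering and the final fit cost $\mathcal O(n)$. The total is $\mathcal O(n^2)\cdot\mathcal O(n^3)=\mathcal O(n^5)$.

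\textbf{Case $n=1$.} There is a single matching, and the correlation $\bar x_1y_1$ gives the answer directly.
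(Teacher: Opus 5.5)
Your proposal is correct and matches the paper's proof. Like the paper, you apply \Cref{prop:es} to $\mathcal P(\bar x,y)$, and also to $\mathcal P(\bar x,\bar y)$ for $\mathrm O(2)$, bound $V\le n(n-1)$ by \Cref{thm:main}, read off the optimum through \Cref{lem:reduction}, and charge $\mathcal O(n^3)$ per oracle call to a cubic assignment solver on lexicographically compared, componentwise-added cost pairs; your remark that the Hungarian method needs only totally-ordered-abelian-group operations justifies that per-call cost slightly more explicitly than the paper does.
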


\begin{proof}[Proof sketch]
Apply \Cref{prop:es} to $\mathcal{P}(\bar x,y)$ and, over $\mathrm{O}(2)$, to $\mathcal{P}(\bar x,\bar y)$, then use \Cref{lem:reduction}. Each call costs $\mathcal{O}(n^3)$ operations; see \Cref{app:pw-complexity}.
\end{proof}

\input{sections/optimality_guarantees.tex}

\subsection{Comparison with AM}\label{sec:am-comparison}
The polygon also describes the behavior of AM~\cite{adamo2025}. From a matching with nonzero correlation $z$, AM fits the rotation and then solves an assignment problem, which is a support query in the radial direction $z/|z|$. We call a nonzero vertex $z$ \emph{stationary} if $\Re(\bar z(w-z))\le0$ for all $w\in\mathcal P$, that is, if its radial direction supports the polygon at $z$; this is the first-order condition for maximizing $|z|$. A \emph{fixed point} of AM is a state whose matching and rotation are unchanged by a full update. A stationary vertex is a fixed point when the assignment step keeps the current matching whenever it is optimal; other tie-breaking rules can select another maximizer. Stationary vertices need not be farthest from the origin, which is how AM stops at suboptimal alignments (\Cref{fig:overview}).
\begin{proposition}[finite convergence and fixed points of AM]\label{prop:am}
For $\mathrm{SO}(2)$, the correlation moduli produced by AM~\cite{adamo2025} never decrease. Suppose that AM is initialized by an assignment step and that each assignment step returns a vertex and keeps the current matching when it is optimal. Then every matching that AM visits has a vertex correlation, and such a matching with nonzero correlation is a fixed point exactly when its vertex is stationary. Starting from a nonzero vertex correlation, these updates reach a fixed matching after at most $V-1$ matching changes, with $V-1\le n(n-1)-1$ for $n\ge2$. A farthest vertex is globally optimal. At zero correlation, any rotation is a valid update, and if $\mathcal{P}=\{0\}$, all rotations are globally optimal.
\end{proposition}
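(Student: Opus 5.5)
The plan is to work entirely in the correlation plane and treat one AM iteration as a support query. Write $z_k$ for the correlation of the $k$-th matching. Given a matching with nonzero correlation $z_k$, the rotation step picks $\theta_k=-\Arg z_k$, which by \eqref{eq:rotation-cost} maximizes $\Re(e^{\mathrm i\theta}z_k)$ and makes this projection equal to $|z_k|$. The assignment step at $\theta_k$ maximizes $\Re(e^{\mathrm i\theta_k}z_\sigma)$, so it is a support query in direction $u_k=z_k/|z_k|$.

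\emph{Monotonicity.} The returned correlation satisfies
\[|z_{k+1}|\ge\Re(\bar u_kz_{k+1})\ge\Re(\bar u_kz_k)=|z_k|.\]
The first inequality is Cauchy--Schwarz; the second holds because $z_k$ is itself a feasible competitor in the query. So the moduli never decrease. If some $z_k=0$, any rotation is a valid update, and the same chain still gives $|z_{k+1}|\ge0$.

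\emph{Vertices and fixed points.} The first matching comes from an assignment step, and each assignment step returns a vertex by hypothesis, so every visited matching has a vertex correlation. For a nonzero vertex $z$, the matching is a fixed point exactly when the query in direction $z/|z|$ has the current matching among its maximizers. Indeed, under the keep-if-optimal rule the matching is then unchanged, and so is the rotation $-\Arg z$. Having $z$ among the maximizers means $\Re(\bar z w)\le\Re(\bar z z)$ for all $w\in\mathcal P$, which is the stationarity condition $\Re(\bar z(w-z))\le0$. Conversely, if $z$ is not stationary, some matching strictly increases the projection, the current matching is not optimal, and the step changes it.

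\emph{Finite convergence.} This is the step I expect to need the most care, since distinct matchings can share a correlation, and equal moduli need not mean the same vertex. I would show that every matching change strictly increases the projection: after a change, $\Re(\bar u_kz_{k+1})>|z_k|$, because the current matching was not optimal. Then the strict chain gives $|z_{k+1}|>|z_k|$. The moduli along matching changes are therefore strictly increasing, so no vertex is revisited. Since the correlations are vertices and there are $V$ of them, at most $V-1$ changes occur. \Cref{thm:main} then gives $V-1\le n(n-1)-1$. Starting from a nonzero vertex ensures that the radial direction is defined at every step, because the moduli only grow.

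\emph{Remaining claims.} A farthest vertex is globally optimal directly by \Cref{lem:reduction}. If $\mathcal P=\{0\}$, every matching has zero correlation, so $\mathcal L_\sigma(\theta)$ is constant and all rotations are optimal.
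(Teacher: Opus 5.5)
Your proposal is correct and follows essentially the same route as the paper. In both, the AM assignment step is a support query in the radial direction $z/|z|$, and the chain $|z'|\ge\Re(\bar u z')\ge\Re(\bar u z)=|z|$ gives monotonicity; it becomes strict under the retention rule whenever the matching changes, even when matchings share a correlation. Retention is exactly stationarity, and strictly increasing moduli over the $V\le n(n-1)$ vertices bound the number of changes.

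The paper additionally notes that a farthest vertex is itself stationary, since $\Re(\bar z w)\le|z||w|\le|z|^2$, so the global optimum is a fixed point of AM. For the statement as written, your appeal to \Cref{lem:reduction} suffices.
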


\begin{proof}[Proof sketch]
For $z\ne0$, set $u=z/|z|$ and let $z'$ be the next correlation. Then $|z'|\ge\Re(\bar uz')\ge\Re(\bar uz)=|z|$, and under the retention rule the second inequality is strict whenever the matching changes. Each change therefore reaches a vertex that has not been visited before, which allows at most $V-1$ changes. A matching is retained exactly when its correlation is optimal in its own radial direction, which is the stationarity of its vertex. \Cref{app:am} treats ties and zero correlations.
\end{proof}

We next extend this assignment geometry to partial and spatial registration, where the sharp planar vertex bound does not apply.

%% file: sections/optimality_guarantees.tex
\subsection{Verification and Pruning}\label{sec:optimality}
The reconstructed hull can be verified independently of the solver that produced it. For inputs with rational coordinates, the query directions $u=-\mathrm i(z'-z)$ are rational without normalization, so all assignment scores are rational. A dual solution $(f,g)$ of an assignment problem is feasible if $f_i+g_j\ge c_{ij}$ for all $i,j$, and it then bounds the score of every matching by $\sum_i f_i+\sum_j g_j$. If a returned matching attains this bound, its support value is verified. Checking the outward normal direction of every boundary edge in this way rules out correlations beyond the reconstructed hull.

In total, Rubix uses $\mathcal{O}(n^3V)$ arithmetic operations, where $V$ is summed over both polygons when reflections are allowed. The bit complexity depends in addition on the bit lengths of the input.

\begin{proposition}[Exact global verification]\label{prop:optimality}
Let $a,b$ have rational real and imaginary parts, and let $\widehat{\mathcal P}$ be the hull of the correlations recomputed from the returned matchings, with its distinct extreme points and their boundary order verified exactly. Suppose that assignment duals certify the following support values. If $\widehat{\mathcal P}$ is two-dimensional, they certify its support value in the outward normal direction of every edge. If it is a segment, they certify its line through the two opposite normal directions and its endpoints through the two extrema along the line. If it is a single point, they certify that the two opposite primary extrema coincide and that the two opposite secondary extrema coincide. Then $\widehat{\mathcal P}=\mathcal P(a,b)$, and comparing the squared moduli of its vertices gives $\max_{z\in\mathcal P(a,b)}|z|^2$ exactly.
\end{proposition}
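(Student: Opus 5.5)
The plan is to show two inclusions: $\widehat{\mathcal P}\subseteq\mathcal P(a,b)$ and $\mathcal P(a,b)\subseteq\widehat{\mathcal P}$. The first is immediate. Every point of $\widehat{\mathcal P}$ is a convex combination of correlations $z_\sigma(a,b)$ of returned matchings, which are points of $\mathcal P(a,b)$ by~\eqref{eq:polygon}, and $\mathcal P(a,b)$ is convex. For the reverse inclusion, we use the support-function description of a convex polygon. A compact convex set $\mathcal Q$ lies in the intersection of the halfplanes $\{z:\Re(\bar uz)\le h_{\mathcal Q}(u)\}$. Equality with a polygon $\widehat{\mathcal P}$ follows once $h_{\mathcal P}(u)\le h_{\widehat{\mathcal P}}(u)$ holds for the finitely many directions whose halfplanes cut out $\widehat{\mathcal P}$. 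In each such direction $u$ the scores $c_{ij}=\Re(\bar ua_ib_j)$ are rational, because $u$ is a rational combination of rational points. A feasible dual $(f,g)$ then gives $\Re(\bar uz_\sigma)\le\sum_if_i+\sum_jg_j$ for every $\sigma$, and hence $h_{\mathcal P}(u)\le\sum_if_i+\sum_jg_j$, since the maximum of a linear function over the hull equals its maximum over the generating points. The certificate's equality with the projection of a returned matching gives $\sum f+\sum g=h_{\widehat{\mathcal P}}(u)$.

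The argument splits by the dimension of $\widehat{\mathcal P}$. If it is two-dimensional, the verified cyclic order gives its edges. It equals the intersection of the halfplanes defined by its outward edge normals, so the edge certificates yield $\mathcal P\subseteq\widehat{\mathcal P}$. If $\widehat{\mathcal P}$ is a segment $[z_1,z_2]$, let $u$ be a normal to the line. Certified values in $\pm u$ squeeze $\mathcal P$ onto the line $\Re(\bar uz)=\Re(\bar uz_1)$. Here we must read the endpoint certificates correctly. The statement says that the extrema along the line are certified, which I interpret as lexicographic oracle queries with primary direction $\pm u$ and secondary direction along the line. Their duals certify, among matchings attaining the primary optimum, the secondary maximum. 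Since every point of $\mathcal P$ already lies on the line, this bounds the projection of $\mathcal P$ along the line by $z_1$ and $z_2$. Alternatively, one can certify support values in directions $\pm(z_2-z_1)$ directly, and then the four halfplanes suffice. The singleton case works the same way: the two primary directions pin $\mathcal P$ to a line, and the two secondary extrema pin it to a point.

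The main obstacle is making the lexicographic certificates rigorous in the degenerate cases. I would first formalize the lexicographic dual as a dual pair $(f,g)$ over ordered pairs of rationals. Alternatively, I would reduce it to a single perturbed rational direction $u+\epsilon\,\mathrm iu$ for a sufficiently small rational $\epsilon$, which is valid because there are finitely many correlations. This converts each secondary certificate into an ordinary support-value certificate.

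Finally, once $\widehat{\mathcal P}=\mathcal P(a,b)$ is established, $|z|^2$ is convex, so its maximum over the polygon is attained at a vertex of $\widehat{\mathcal P}$. Those vertices are recomputed correlations with rational coordinates, so comparing squared moduli in exact rational arithmetic gives $\max_{z\in\mathcal P(a,b)}|z|^2$ exactly.
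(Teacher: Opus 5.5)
Your proposal is correct and follows essentially the paper's argument. Like the paper, you get $\widehat{\mathcal P}\subseteq\mathcal P(a,b)$ from the recomputed correlations, the reverse inclusion from dual-certified supporting half-planes (edge normals in two dimensions, opposite normals plus extrema along the line for a segment, coinciding primary and secondary extrema for a point), and the exact value from convexity of $|z|^2$ at the vertices. The differences are minor: the paper bounds a segment's endpoints with the initial queries in directions $\pm1$, using their secondary extrema when the line is vertical, rather than with normal-direction lexicographic queries, and it makes lexicographic certificates rigorous with the integer encoding $N_{\mathrm{lex}}D^{\mathrm{pri}}+D^{\mathrm{sec}}$, which is in effect your perturbation $u+\epsilon\,\mathrm iu$ with an explicit $\epsilon$.
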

\begin{proof}[Proof sketch]
Every point used to form $\widehat{\mathcal P}$ is the correlation of a matching, so $\widehat{\mathcal P}\subseteq\mathcal P$. The verified supporting half-planes contain $\mathcal P$ and intersect in $\widehat{\mathcal P}$, which gives the reverse inclusion. The checks for a segment and for a single point establish the same inclusion in lower dimension. By convexity, a farthest point is a vertex. The full proof is in \Cref{app:verification}.
\end{proof}

Comparing squared moduli avoids square roots during the maximization; the squared PW value is $c-2r_\star/n$, where $c=(\|x\|^2+\|y\|^2)/n$ and $r_\star\ge0$ is the maximum modulus. In floating point, comparisons can merge nearby vertices, so the double-precision values that serve as references in our experiments are numerical and are not certificates.

\paragraph{Pruning.} When only the alignment is needed, most of the polygon need not be reconstructed. We skip an unexplored arc if an upper bound on the modulus over the arc shows that it cannot improve on the best vertex found so far. The bound uses the directions in which the two endpoints of the arc were found. When these directions span a counterclockwise angle in $(0,\pi)$, the arc lies in the triangle formed by its endpoints and the intersection of their supporting lines; if one endpoint lies on the supporting line of the other, the arc lies on the chord. Since the modulus is convex, it is bounded on the arc by its largest value at the corners. The two initial queries use opposite directions, for which this bound is not available. We refine an arc whenever its bound is unavailable or permits an improvement. With exact comparisons, pruning preserves the optimum and the worst-case query bound, although it may return only part of the polygon.

%% file: sections/spatial.tex
\section{Spatial Registration Bounds}\label{sec:spatial}

\input{sections/gravity_framework.tex}

The planar bound of $n(n-1)$ vertices has no known analogue for three-dimensional or partial alignment, so there we bound regions of pose space directly and search them by branch-and-bound. In each region, or \emph{cell}, a few assignment queries at shared directions bound the residual of every matching from below, and fitting the returned matchings gives feasible poses. We discard the cells that cannot improve on the incumbent and tighten or subdivide the rest. With a known gravity direction, the bound comes from the concavity of the smallest residual in suitable yaw coordinates. For unrestricted rotations, it comes from the convexity of the largest correlation score in a lifted quaternion variable, which bounds the score from above and hence the residual from below.

The constructions differ in the pose variables that remain to be searched (\Cref{tab:spatial-domains}). A known gravity direction leaves only the yaw (\Cref{fig:gravity-framework}), whereas unrestricted rotation requires cells of unit quaternions. Complete matching removes the translation by centering, whereas partial matching keeps it as a supplied quantity or as a further search variable. Throughout, we minimize the sum of squared distances over one-to-one correspondences under the pose convention $x_i\approx Ry_j+t$; for complete matching, a tolerance on the mean residual corresponds to $n$ times that tolerance on the sum. The lemmas, propositions and theorems of this section assume exact arithmetic and exact assignment oracles, and \Cref{app:spatial-algorithms} specifies the search procedures.

\begin{table*}[t]
\centering
\caption{\textbf{Scope of the spatial constructions.} ``Global'' refers to the stated objective and to a search in exact arithmetic that is run to a positive tolerance. Partial matching means either a prescribed cardinality or a penalty on unmatched points.}\label{tab:spatial-domains}
\begin{tabular}{@{}>{\raggedright\arraybackslash}p{0.22\textwidth}>{\raggedright\arraybackslash}p{0.20\textwidth}>{\raggedright\arraybackslash}p{0.23\textwidth}>{\raggedright\arraybackslash}p{0.26\textwidth}@{}}
\toprule
Input and correspondences & Pose variables & Assignment oracle & Scope of the construction \\
\midrule
Known gravity; complete bijection & Yaw; translation by centering & Square linear assignment & Global yaw, translation, and matching \\
Known gravity; partial matching & Yaw at supplied translation & Prescribed-cardinality or penalized assignment & Global conditional on that translation \\
Known gravity; partial matching & Yaw and translation & Partial assignment with a forced anchor & Global joint search over a finite anchor cover \\
Unrestricted rotation; complete bijection & Proper rotation; translation by centering & Square linear assignment & Global complete SE(3) registration \\
Unrestricted rotation; partial matching & Proper rotation at supplied translation & Prescribed-cardinality or penalized assignment & Global conditional on that translation \\
\bottomrule
\end{tabular}
\end{table*}

\subsection{Gravity Alignment and Partial Matching}\label{sec:gravity}

Suppose that each point set comes with a unit gravity direction, $g_x$ and $g_y$, given with the same sign convention. We first \emph{level} both sets to a common vertical: we choose proper rotations $A_x,A_y$ with $A_xg_x=A_yg_y=e_3$ and form $\widetilde x_i=A_xx_i$ and $\widetilde y_j=A_yy_j$. Let $R_z(\theta)$ denote the rotation by $\theta$ about $e_3$. Every rotation that satisfies $Rg_y=g_x$ has the form
\begin{equation}\label{eq:leveled-pose-recovery}
 R=A_x^\top R_z(\theta)A_y,\qquad t=A_x^\top\widetilde t,
\end{equation}
and a supplied translation $t$ becomes $\widetilde t=A_xt$. We work with the leveled quantities and drop the tildes below. Leveling does not discard the vertical coordinate: height differences remain in the matching objective.

\paragraph{Partial matching.} To allow unequal cardinalities and points without a counterpart, we use partial matchings $M\subseteq\{1,\ldots,n\}\times\{1,\ldots,m\}$ in which no row and no column is repeated. Let $\mathcal M_k$ contain the matchings with $k$ pairs, where $0\le k\le\min(n,m)$. We consider two models. The \emph{prescribed-cardinality} model fixes $k$ and minimizes the summed residual over $\mathcal M_k$. The \emph{penalized} model lets the cardinality vary and adds a penalty $\lambda>0$ for every unmatched point:
\begin{equation}\label{eq:partial-spatial}
 E(M,R,t)=\sum_{(i,j)\in M}\|x_i-Ry_j-t\|^2
       +\lambda(n+m-2|M|).
\end{equation}
Penalized matching admits every $M\in\bigcup_k\mathcal M_k$, including the empty matching; prescribed cardinality restricts $M$ to $\mathcal M_k$ and omits the penalty. We choose neither $k$ nor $\lambda$ from reference correspondences. In both models the optimal translation depends on the centroids of the selected subsets, so centering the full sets cannot eliminate it.

\paragraph{The yaw objective.} We first fix the translation $t$ and put $x_i^t=x_i-t$; \Cref{sec:joint-gravity} treats unknown translation. We call a candidate pair $(i,j)$, which matches $x_i$ to $y_j$, an \emph{edge}. Expanding the squared distance, the cost of an edge at yaw $\theta$ is
\begin{equation}\label{eq:gravity-edge}
 c_{ij}^0-2\big(a_{ij}\cos\theta+b_{ij}\sin\theta\big),
\end{equation}
where $a_{ij}=y_{jx}x_{ix}^t+y_{jy}x_{iy}^t$, $b_{ij}=y_{jx}x_{iy}^t-y_{jy}x_{ix}^t$, and $c_{ij}^0=\|y_{j,xy}\|^2+\|x_{i,xy}^t\|^2+(y_{jz}-x_{iz}^t)^2$ is a constant that contains the height difference. For a matching $M$, let $\alpha_M=\sum_Ma_{ij}$ and $\beta_M=\sum_Mb_{ij}$, and let $c_M$ be the sum of the constants plus any penalty for unmatched points. In the \emph{yaw coordinates} $u(\theta)=(\cos\theta,\sin\theta)$, the cost of $M$ is the affine function $g_M(u)=c_M-2(\alpha_Mu_1+\beta_Mu_2)$, which we extend to all $u\in\mathbb R^2$. We also define $\boldsymbol w_M=(\alpha_M,\beta_M)$ and $\kappa_\theta=2\max_M\|\boldsymbol w_M\|$, the largest slope of these affine costs. Their lower envelope over the chosen family $\mathcal M$ of matchings is
\begin{equation}\label{eq:gravity-affine-envelope}
 F(u)=\min_{M\in\mathcal M}\{c_M-2(\alpha_Mu_1+\beta_Mu_2)\}.
\end{equation}
On the circle $u(\theta)$, $F$ is the smallest physical residual over all matchings; off the circle it has no physical meaning and may be negative. As a minimum of finitely many affine functions, $F$ is concave. One assignment problem, ordinary or augmented with dummy points for the partial models, evaluates $F(u)$ and returns a matching that attains it, with the heights retained in $c_M$. Such a query uses one yaw-coordinate vector for every edge. Minimizing the edges separately over a yaw interval, as Lipschitz rotation search does point by point~\cite{li2007registration}, instead allows different edges to use different yaws.

\paragraph{The arc bound.} We enclose a yaw arc $\theta\in I=[\mu-h_\theta,\mu+h_\theta]$ of half-width $0<h_\theta<\pi/2$ in the triangle $\mathcal T_I$ whose vertices are the two endpoints $u(\mu-h_\theta)$ and $u(\mu+h_\theta)$ and the point $\sec(h_\theta)u(\mu)$ where the tangents at the endpoints intersect (\Cref{fig:gravity-framework}b). This third vertex lies outside the circle; the exterior query is necessary because the chord between the endpoints does not enclose the arc. Querying $F$ at the three vertices gives an affine interpolant $\ell$ of the three values, and the bound for the arc is the minimum of $\ell$ over the arc. Because $\ell(u(\theta))$ is a sinusoid in $\theta$, this minimum has a closed form.

\begin{lemma}[Yaw interpolation and common support]\label{lem:arc-interpolation}
Let $F=\min_{M\in\mathcal M}g_M$ be the affine extension over a nonempty finite matching family. For $0<h_\theta<\pi/2$ and $I=[\mu-h_\theta,\mu+h_\theta]$, let $\mathcal T_I$ be the triangle enclosing $\mathcal U_I=\{u(\theta):\theta\in I\}$, and let $\ell$ interpolate its three oracle values. Then
\begin{equation}\label{eq:arc-bound-chain}
 \min_{v\in\operatorname{vert}\mathcal T_I}F(v)
 \le\min_{u\in\mathcal U_I}\ell(u)
 \le\min_{u\in\mathcal U_I}F(u).
\end{equation}
If one matching attains all three oracle values, then $F=\ell$ on $\mathcal T_I$,
including when other matchings tie.
\end{lemma}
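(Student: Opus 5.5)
The plan is to prove the two inequalities of \eqref{eq:arc-bound-chain} separately, both from the concavity of $F$ and the fact that $\mathcal U_I\subseteq\mathcal T_I$.

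\textbf{Containment.} The first step checks $\mathcal U_I\subseteq\mathcal T_I$. Since $0<h_\theta<\pi/2$, the tangent lines at $u(\mu\pm h_\theta)$ meet at $\sec(h_\theta)u(\mu)$. The arc lies on the origin side of both tangents, because the unit disk does. It also lies on the far side of the chord, because every point of the arc has projection $\cos(\theta-\mu)\ge\cos h_\theta$ onto $u(\mu)$. The triangle is exactly the intersection of these three half-planes, so it contains the arc.

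\textbf{Right inequality.} Write each $v\in\mathcal T_I$ in barycentric coordinates $v=\sum_k\lambda_kv_k$ over the vertices $v_k$ of $\mathcal T_I$. Because $F$ is a minimum of affine functions, it is concave, so
\[
F(v)\ge\sum_k\lambda_kF(v_k)=\ell(v).
\]
Thus $\ell\le F$ on $\mathcal T_I$, hence on $\mathcal U_I$, and minimizing over the arc gives $\min_{\mathcal U_I}\ell\le\min_{\mathcal U_I}F$.

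\textbf{Left inequality.} The function $\ell$ is affine, so on $\mathcal T_I$ its value is a convex combination of its vertex values. Hence $\ell(u)\ge\min_kF(v_k)$ for every $u\in\mathcal T_I$, in particular on $\mathcal U_I$. This relies again on the containment established first.

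\textbf{Common support.} Suppose a single matching $M$ satisfies $g_M(v_k)=F(v_k)$ for all three vertices. Then $g_M$ and $\ell$ are affine functions agreeing at three affinely independent points, so $g_M=\ell$ on the whole plane. Combining this with $\ell\le F\le g_M$ on $\mathcal T_I$ gives $F=\ell$ there. This argument only uses that $M$ attains the minimum at each vertex, so it is unaffected by other matchings that tie.

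\textbf{Main obstacle.} The only step needing care is the containment, specifically that the arc stays on the correct side of the chord. This is exactly why the exterior vertex is required: the chord alone does not enclose the arc. The restriction $h_\theta<\pi/2$ keeps the tangent intersection finite and the triangle nondegenerate. I would carry out this step explicitly in the rotated frame $\mu=0$.
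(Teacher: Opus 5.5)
Your proposal is correct and follows the paper's proof essentially step for step: concavity of $F$ gives $F\ge\ell$ on $\mathcal T_I$, the nonnegative barycentric weights give $\ell\ge\min_kF(v_k)$, and a common supporting matching $M$ yields $\ell=g_M\ge F\ge\ell$ on $\mathcal T_I$. Your half-plane check that $\mathcal U_I\subseteq\mathcal T_I$ fills in a step the paper only asserts; the paper instead also derives closed-form coefficients for $\ell$, which the lemma itself does not need.
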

The second inequality holds because $F$ is concave and therefore lies above its interpolant throughout the triangle. \Cref{app:arc-interpolation} gives the interpolation formula and proves the first inequality and the statement on common support.

\paragraph{The yaw search.} \Cref{alg:gravity-yaw} applies the arc bound in a best-first search that starts from three arcs of width $2\pi/3$, which cover the circle, and bisects arcs as needed. Every matching $M$ returned by a query also gives a feasible update: its cost at yaw $\theta$ is the sinusoid $g_M(u(\theta))$, and the operation $\textsc{YawMin}(g_M,I)$ minimizes it over the arc $I$ by testing the endpoints and any stationary minimum in $I$; for a constant sinusoid, either endpoint suffices. We also fit the matchings returned by exterior queries in this way, because the value of $F$ at an exterior vertex is not a physical residual.

The algorithm assumes nonempty point sets and, for prescribed cardinality, $k>0$. The remaining cases need no search: for $k=0$, and for empty point sets under complete matching, the value is zero, and if either set is empty, penalized matching returns the empty matching with value $\lambda(n+m)$. Ties in the assignment, in the sinusoid minimization and in the queue may be resolved arbitrarily. Every arc starts with the trivial lower bound $L_0$, which is zero for complete or prescribed-cardinality matching and $\lambda|n-m|$ for penalized matching.

\begin{algorithm*}[t]
\caption{Gravity yaw by three assignment queries per arc}\label{alg:gravity-yaw}
\begin{algorithmic}[1]
\REQUIRE Leveled point sets; complete matching ($n=m$), or a partial model and supplied $t$; tolerance $\varepsilon>0$; optional budget.
\ENSURE Feasible $(M_*,R_z(\theta_*),t_*)$ and residual interval $[L,U]$.
\STATE For complete matching, save the means and center both point sets; set working $t=0$.
\STATE Form the coefficients in \eqref{eq:gravity-edge} and the assignment oracle $F$.
\STATE Query $F(u(0))$; store its matching at $\theta_*=0$ and its physical cost $U$.
\STATE Put three covering arcs in a min-heap, each with lower bound $L_0$ and marked unevaluated.
\WHILE{the heap is nonempty}
\STATE $L\leftarrow\min\{U,\min_{I\text{ in heap}}L_I\}$.
\STATE Stop if $U-L\le\varepsilon$ or the budget is exhausted.
\STATE Select a least-bound arc $I=[\mu-h_\theta,\mu+h_\theta]$; keep it in the cover during evaluation.
\IF{$I$ is already evaluated}
\STATE Replace $I$ atomically by its two half-arcs, unevaluated and inheriting $L_I$; continue.
\ENDIF
\STATE $u_-\leftarrow u(\mu-h_\theta)$, $u_+\leftarrow u(\mu+h_\theta)$, $u_A\leftarrow\sec(h_\theta)u(\mu)$.
\FOR{$v\in\{u_-,u_+,u_A\}$}
\STATE Query $(f_v,M_v)\leftarrow F(v)$.
\STATE $(\theta_v,U_v)\leftarrow\textsc{YawMin}(g_{M_v},I)$.
\STATE If $U_v<U$, store this feasible matching and yaw, and set $U\leftarrow U_v$.
\ENDFOR
\STATE Let $\ell$ interpolate $(v,f_v)$ at the three vertices.
\STATE $L_I\leftarrow\max\{L_I,\min_{\theta\in I}\ell(u(\theta))\}$.
\STATE Remove $I$ if $L_I\ge U$; otherwise update its heap key and mark it evaluated.
\ENDWHILE
\STATE Recompute $L=\min\{U,\min_{I\text{ in heap}}L_I\}$, taking the empty-heap minimum as $+\infty$.
\STATE Restore $t_*=m_x-R_z(\theta_*)m_y$ for complete matching; retain the supplied $t$ for partial matching.
\STATE Return the stored pose and matching with $[L,U]$.
\end{algorithmic}
\end{algorithm*}

The search can be interrupted at any time: an interrupted evaluation leaves the previous bound of its arc in place, and the feasible updates that were completed are kept (\Cref{thm:spatial-search}). Undoing the leveling and restoring the centroids recovers the pose in the original frame. For partial matching the result remains conditional on the supplied translation, also when the search is alternated with local updates of the translation. The number of queries depends on the requested gap as follows.

\begin{proposition}[Conditional-yaw accuracy]\label{prop:gravity-accuracy}
For complete matching after centering, or partial matching at a supplied
translation, put $\boldsymbol w_M=(\alpha_M,\beta_M)$ and $\kappa_\theta=2\max_M\|\boldsymbol w_M\|$.
Without an external budget, \Cref{alg:gravity-yaw}, initialized by three equal
arcs, reaches summed-objective gap $\varepsilon>0$ with
$\mathcal O(1+\sqrt{\kappa_\theta/\varepsilon})$ assignment queries.
\end{proposition}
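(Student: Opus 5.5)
The plan is to bound the gap on each arc of half-width $h_\theta$ by a quantity of order $\kappa_\theta h_\theta^2$. Then count the arcs the best-first search can refine before every surviving arc is certified within $\varepsilon$.

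\textbf{Step 1: second-order arc gap.} Fix an arc $I$ with triangle $\mathcal T_I$ and affine interpolant $\ell$. Write $D=\min_{u\in\mathcal U_I}F(u)-\min_{u\in\mathcal U_I}\ell(u)$, which is nonnegative by \Cref{lem:arc-interpolation}. At a vertex $v$ of $\mathcal T_I$, let $M_v$ be the returned matching, so $f_v=g_{M_v}(v)$. For any $u\in\mathcal T_I$, write $u=\sum_v\lambda_vv$ in barycentric coordinates; then $\ell(u)=\sum_v\lambda_vg_{M_v}(v)$.

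Because $g_{M_v}$ is affine with gradient of norm at most $\kappa_\theta$, we have $g_{M_v}(u)\le g_{M_v}(v)+\kappa_\theta\|u-v\|$. This yields only first order, so I refine it using concavity and the fact that all candidate minimizers lie on a common support. Compare $F(u)$ with $\min_vg_{M_v}(u)$, which is at least $F(u)$; the other direction is the one that matters. Instead I bound $\min_{\mathcal U_I}\ell$ from below, using the physical feasible value $U$. For the minimizing $u^\ast\in\mathcal U_I$ of $\ell$, I use $g_{M_{v}}(u^\ast)\ge U$ for the feasible matchings (via \textsc{YawMin}). The target is
\[
\sum_v\lambda_v\bigl(g_{M_v}(u^\ast)-g_{M_v}(v)\bigr)=\sum_v\lambda_v\nabla g_{M_v}\cdot(u^\ast-v),
\]
and I bound this by $C\kappa_\theta h_\theta^2$. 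The key geometric fact is that $\mathcal T_I$ has diameter $O(h_\theta)$ and width $O(h_\theta^2)$ normal to $u(\mu)$: the apex $\sec(h_\theta)u(\mu)$ sits at radial distance $\sec h_\theta-\cos h_\theta=O(h_\theta^2)$ from the chord.

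Tangential displacements must be handled by optimality of each $M_v$ at its own vertex. Take the two matchings at the endpoints. Concavity of $F$ along the chord and the supporting property give a bound on the tangential terms of order $\kappa_\theta h_\theta^2$. This is the standard estimate that linear interpolation of a concave, $\kappa_\theta$-Lipschitz piecewise-affine function along a curve of curvature one loses $O(\kappa h^2)$. I expect this step to be the \textbf{main obstacle}: naive Lipschitz bounds give $\kappa_\theta h_\theta$. The fix I would try first is to compare $\ell$ with the two endpoint affine pieces, which are exact at the endpoints and feasible on the circle. The difference between $\ell$ and these pieces is then controlled by the apex value, with the apex displacement itself of order $h_\theta^2$.

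\textbf{Step 2: counting.} Every arc with $C\kappa_\theta h_\theta^2\le\varepsilon$ is, once evaluated, either pruned or has a bound within $\varepsilon$ of $U$. Best-first selection therefore never bisects an evaluated arc of half-width below $h_\ast=\sqrt{\varepsilon/(C\kappa_\theta)}$ while the gap exceeds $\varepsilon$. Bisected arcs form three binary trees whose internal nodes all have half-width at least $h_\ast$. Their number is at most $O(\pi/h_\ast)$, since arcs at each depth are disjoint and the depths sum geometrically. Each evaluation costs three queries and each bisection creates two children, so the query count is $O(1+1/h_\ast)=O(1+\sqrt{\kappa_\theta/\varepsilon})$. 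When $\kappa_\theta=0$, $F$ is constant on the circle, and the initial arcs already close the gap.
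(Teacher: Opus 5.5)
Your Step~2 is sound and is essentially the paper's counting. Once an evaluated arc has $U-L_I\le\varepsilon$ with respect to its own feasible fits, best-first search stops before bisecting it, so the three bisection trees have $O(1+\sqrt{\kappa_\theta/\varepsilon})$ nodes. The gap is in Step~1. The inequality you set as your target,
\[
\sum_v\lambda_v\bigl(g_{M_v}(u^\ast)-g_{M_v}(v)\bigr)\le C\kappa_\theta h_\theta^2,
\]
is false in general, so no treatment of the tangential terms can prove it.

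Take $\mu=0$, $g_1(u)=c-2Wu_1$ and $g_2(u)=g_1(u)-k(u_2-\sin h_\theta+\delta)$, with $W,k>0$ and small $\delta>0$. The queries return $g_1$ at $u_-$ and at the apex, and $g_2$ at $u_+$. As $\delta\to0$, $\ell\to g_1$ and $u^\ast\to u(0)$. Because $g_1$ is affine and $\sum_v\lambda_v(u^\ast-v)=0$, your sum tends to $\lambda_+k\sin h_\theta$ with $\lambda_+=1/(2(1+\cos h_\theta))$, that is, to $\tfrac k2\tan(h_\theta/2)$. This is first order, while $\kappa_\theta=2(W^2+k^2/4)^{1/2}$ stays fixed. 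Averaging the three returned matchings at $u^\ast$ charges you for $g_2$ far from the tiny cap near $u_+$ where it is optimal.

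The ``standard estimate'' you invoke also fails pointwise. For $F=c-k|u_2|$ and $\mu=0$, one gets $F(u(0))-\ell(u(0))=k\tan(h_\theta/2)$ with $\kappa_\theta=k$. Only the difference of \emph{minima} is second order, and your suggested repair through the endpoint pieces does not say how to reach it.

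The missing idea is that you need not estimate any interpolation error. By \Cref{lem:arc-interpolation}, $L_I\ge\min_v f_v$. Let $v$ attain this minimum with returned matching $M$.
\begin{itemize}
\item If $v$ is an endpoint, it is a physical yaw in $I$, so \textsc{YawMin} gives $U\le f_v\le L_I$.
\item If $v$ is the apex $\sec(h_\theta)u(\mu)$, it lies at distance $\sec h_\theta-1$ from the physical point $u(\mu)\in\mathcal U_I$. Hence
\[
U\le g_M(u(\mu))\le f_v+\kappa_\theta(\sec h_\theta-1)\le L_I+\kappa_\theta h_\theta^2
\]
for $h_\theta\le\pi/3$, since $\cos h_\theta\ge1/2$.
\end{itemize}
This one-matching, one-point argument is the paper's proof. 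Combined with your Step~2 at $C=1$, it yields the proposition.
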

\begin{proof}[Proof sketch]
Moving the tangent apex to the midpoint of the arc on the circle changes the cost of a matching by at most $\kappa_\theta(\sec h_\theta-1)\le\kappa_\theta h_\theta^2$ for $h_\theta\le\pi/3$. Bisection therefore reaches the required local gap within $\mathcal O(1+\sqrt{\kappa_\theta/\varepsilon})$ cells. If $\kappa_\theta=0$, the objective is constant in the yaw. \Cref{app:gravity-accuracy} proves the gap and the query count.
\end{proof}

Yaw coordinates and angles in radians are dimensionless, so $\kappa_\theta$ and the tolerance $\varepsilon$ on the summed gap both have units of squared distance. The local gap of the arc bound is quadratic in the half-width of the arc. The next proposition contrasts this with bounds that treat the edges independently.

\begin{proposition}[Second-order coupled bounds]\label{prop:bound-order}
Fix the translation, or eliminate it by centering. Let $A$ be a yaw arc of
half-width $h_\theta\in(0,\pi/3]$, let $F^*_A=\min_{\theta\in A}F(u(\theta))$, and let $L_A$ denote the lower bound on $A$ under consideration.
\begin{enumerate}[label=(\roman*),leftmargin=*,nosep]
\item The Rubix interpolant bound satisfies $F^*_A-L_A\le \kappa_\theta h_\theta^2$, with $\kappa_\theta$ as in
\Cref{prop:gravity-accuracy}, and $F^*_A=L_A$ whenever one matching attains all
three triangle values.
\item The per-correspondence radius bound of Yang et al.~\cite{yang2016goicp}, specialized to yaw,
\[
\min_M\sum_{(i,j)\in M}(\|R_\mu x_i-y_j\|-2\sin(h_\theta/2)\|x_{i,xy}\|)_+^2
\]
and the independent-edge bound
\[
\min_M\sum_{(i,j)\in M}\min_{\theta\in A}\|R_\theta x_i-y_j\|^2
\]
are first order: there are instances with $F^*_A-L_A\ge c h_\theta$ on the arcs
centered at the minimizer, for a constant $c>0$ independent of $h_\theta$.
\item On the radius-bound instance of~(ii), best-first bisection using that bound,
a zero floor, and inherited maxima expands $\Omega(\varepsilon^{-1/2})$ arcs to
certify an absolute gap $\varepsilon<1$, while the Rubix bound certifies the
optimum after evaluating the three root arcs. This comparison excludes
additional global bounds.
\end{enumerate}
\end{proposition}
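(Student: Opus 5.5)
The proof has three parts, matching (i), (ii) and (iii) of \Cref{prop:bound-order}.

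\emph{Part (i).} This is a pointwise version of the argument behind \Cref{prop:gravity-accuracy}. Let $\ell$ interpolate $F$ at the triangle vertices $u_\pm=u(\mu\pm h_\theta)$ and $u_A=\sec(h_\theta)u(\mu)$. By \Cref{lem:arc-interpolation}, $L_A=\min_{\mathcal U_A}\ell\le F^*_A$. For the reverse estimate, let $M$ be any matching and $g_M$ its affine cost. Since $F\le g_M$ at each vertex, $\ell\le\ell_M$, where $\ell_M$ interpolates the values $F(v)$, which are themselves at most $g_M(v)$. I would instead compare $F$ and $\ell$ directly on the arc.

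Write a point $u(\theta)$ of the arc in barycentric coordinates of $\mathcal T_A$ with weights $\lambda_-,\lambda_+,\lambda_A\ge0$. Then $\ell(u(\theta))=\sum\lambda_vF(v)$. Choose $M_v$ attaining $F(v)$. Then
\[
F(u(\theta))-\ell(u(\theta))\le\sum_v\lambda_v\bigl(g_{M_v}(u(\theta))-g_{M_v}(v)\bigr)\le\kappa_\theta\sum_v\lambda_v\|u(\theta)-v\|.
\]
This is still first order, so I would sharpen it using concavity of $F$ along the arc. The key step is to take $\theta^\star$ minimizing $F$ on $A$ and exploit that $F$ is concave. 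The sup of $F-\ell$ over the triangle is attained where the triangle's chord structure is farthest from the circle, namely at $u_A$ versus $u(\mu)$. Concretely, replace the apex $u_A$ by $u(\mu)$: the triangle with vertices $u_\pm,u(\mu)$ lies inside the circle's convex hull region, and the concave $F$ is minimized over it at a vertex, which is a physical point of the arc. So $\min\ell'\ge F^*_A$ for the modified interpolant $\ell'$. Changing the apex value changes $\ell$ by at most $|F(u_A)-F(u(\mu))|\le\kappa_\theta(\sec h_\theta-1)$, weighted by $\lambda_A\le1$. Using $\sec h-1\le h^2$ for $h\le\pi/3$ gives $F^*_A-L_A\le\kappa_\theta h_\theta^2$. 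There is one subtlety: arc points lie outside the modified triangle, so I would argue via the linear extension of the barycentric weights, where $\lambda_A$ stays bounded on the arc. The equality case under common support is the final statement of \Cref{lem:arc-interpolation}.

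\emph{Part (ii).} I would use an explicit small instance with $n=1$ or $n=2$: a single edge with $x$ at radius $r$ and $y=R_{\theta^*}x$, so that $F^*_A=0$ for an arc centered at $\theta^*$. Any matching then has zero residual at the optimum.
\begin{itemize}
\item For the independent-edge bound, choose two edges whose individual minimizers lie at $\theta^*\pm\delta$ with $\delta$ fixed. On an arc of half-width $h_\theta<\delta$ centered at $\theta^*$, each edge's minimum occurs at the nearer endpoint. The true joint cost at $\theta^*$ minus the sum of the separate minima is then $\ge c h_\theta$, by first-order Taylor expansion of the sinusoids $c-2r_e\cos(\theta-\theta_e)$, whose slope at $\theta^*$ is nonzero.
\item For the radius bound, choose an instance with positive optimal residual, for example a single edge with a height difference or with $|x|\ne|y|$. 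Then $\|R_\mu x-y\|=d>0$, and subtracting $2\sin(h_\theta/2)\|x\|\approx h_\theta\|x\|$ before squaring lowers the bound by about $2dh_\theta\|x\|$, which is first order.
\end{itemize}

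\emph{Part (iii).} I expect this to be the main obstacle, because it requires a careful accounting of best-first bisection. On the radius instance, every arc within distance $\rho$ of the minimizer whose bound is $<U-\varepsilon$ must be expanded. The bound is at most $F^*-c'h$ plus a quadratic term $\kappa\phi^2$ in the offset $\phi$. Counting arcs of width $h\gtrsim\varepsilon$ at offsets $\phi\lesssim\sqrt{\varepsilon}$ gives $\Omega(\varepsilon^{-1/2})$ expansions. The plan is to choose the instance so that $F$ has a unique matching, with $\mathcal M$ effectively one matching. Then on every root arc the Rubix queries return that same matching at all three vertices. By (i), $L=F^*_A$ exactly, and the feasible update $\textsc{YawMin}$ attains it, closing the gap after the three root evaluations.
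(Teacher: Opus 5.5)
Your decomposition matches the paper's: the apex-to-midpoint estimate in (i), explicit one- and two-edge instances in (ii), and in (iii) a covering count near the minimizer together with exactness of the interpolant for a single matching. Two steps do not work as written, and (ii) needs some details filled in.

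\emph{Part (i).} The relocated triangle with vertices $u_-,u_+,u(\mu)$ does not contain the arc. At $u(\mu\pm h_\theta/2)$ the barycentric weight of the opposite endpoint is about $-1/8$. So on the arc, $\ell'$ is not a convex combination of the three physical values. The claim $\min_{\mathcal U_A}\ell'\ge F^*_A$ then fails, for instance when $F(u_-)=F(u(\mu))=F^*_A<F(u_+)$, which two matchings can realize.

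Concavity of $F$ says nothing about $\ell'$ outside that triangle. Moreover, $\ell$ and $\ell'$ interpolate on different triangles, so $\ell-\ell'$ is not $\lambda_A(F(u_A)-F(u(\mu)))$.

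The version that works is the one your apex-value sentence describes: keep $\mathcal T_A$ and replace only the value $F(u_A)$ by $F(u(\mu))$. On the arc $\lambda_A\in[0,1]$, so the modified interpolant is a convex combination of three values that are each at least $F^*_A$. Also $|F(u_A)-F(u(\mu))|\le\kappa_\theta(\sec h_\theta-1)$, because $F$ is $\kappa_\theta$-Lipschitz.

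The paper is more direct still. $L_A\ge\min_vF(v)$ by \Cref{lem:arc-interpolation}, and the endpoint values are physical. The matching returned at the apex costs at most $F(u_A)+\kappa_\theta(\sec h_\theta-1)$ at $u(\mu)$. So even the vertex bound is second order.

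\emph{Part (iii).} The search uses inherited maxima, so an arc can be certified by the radius bound of an ancestor. Your estimate of an arc's own bound does not exclude this. Your count of arcs of width $\gtrsim\varepsilon$ also still has to become a covering argument.

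The paper contracts each certified subtree at its certifying ancestor. That ancestor's direct bound must be at least $1-\varepsilon$, since $U\ge F^*=1$ and the zero floor cannot reach that level. Using $r_{\rm res}(\mu)\le1+\mu^2$ and $\sin(h_\theta/2)\ge h_\theta/\pi$, it then shows the following for small $\varepsilon$. Every descendant of the central root that meets $[-\sqrt\varepsilon,\sqrt\varepsilon]$ and has $h_\theta\ge100\varepsilon$ has direct bound below $1-\varepsilon$.

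Hence the terminal arcs covering this interval have length at most $200\varepsilon$. This gives at least $1/(100\sqrt\varepsilon)$ leaves and $1/(200\sqrt\varepsilon)$ expansions, whatever the tie-breaking. Equivalently, all ancestors of such an arc also meet the interval and are wider, so inheritance cannot certify it. The argument needs a concrete instance with $F^*>\varepsilon$, such as the paper's $x=(1,0,0)$, $y=(2,0,0)$ with $F^*=1$.

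\emph{Part (ii).} Your two-edge instance needs the following additions:
\begin{itemize}
\item $\delta<\pi/2$. With equal radii $r_e$, the two edge costs sum to a constant minus $4r_e\cos\delta\cos(\theta-\theta^*)$, so $\theta^*$ is the minimizer only in that range.
\item A check that the crossed matching costs more at $\theta^*$.
\item Treatment of arcs with $h_\theta\ge\delta$; the paper avoids this by taking $\delta\in[\pi/3,\pi/2)$.
\item A constant uniform in $h_\theta$. This comes from the identity $\cos(\delta-h_\theta)-\cos\delta=2\sin(\delta-h_\theta/2)\sin(h_\theta/2)$, not from a Taylor expansion.
\end{itemize}
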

\begin{proof}[Proof sketch]
(i) is the local gap in the proof of \Cref{prop:gravity-accuracy}; a common
supporting matching makes the interpolant exact by \Cref{lem:arc-interpolation}.
For~(ii), note that the two displayed comparison bounds rotate the source set. One source point $(1,0,0)$ and one target $(2,0,0)$ give
$F=5-4\cos\theta$ and a radius-bound gap of $4\sin(h_\theta/2)-4\sin^2(h_\theta/2)\ge2h_\theta/\pi$; two
points whose edges prefer the yaws $\pm a$, with $\pi/3\le a<\pi/2$, give an independent-edge gap of
$8\sin(a-h_\theta/2)\sin(h_\theta/2)\ge(8/\pi)\sin(a/2)\,h_\theta$. For~(iii), we credit each inherited bound to the ancestor
at which it was evaluated and contract its descendant subtree there. For sufficiently small $\varepsilon$, every terminal
descendant of the central root that meets
$[-\sqrt\varepsilon,\sqrt\varepsilon]$ has half-width below $100\varepsilon$.
Covering this interval therefore requires at least $1/(100\sqrt\varepsilon)$ leaves
and $1/(200\sqrt\varepsilon)$ expansions. The objective for a single correspondence is
affine in the lifted yaw coordinates, so the Rubix bound is exact on all three
root arcs. The full proof and the first-order constructions are in \Cref{app:bound-order}.
\end{proof}
The comparison illustrates the cluster problem studied by Du and Kearfott~\cite{du1994cluster}: near a minimizer with positive
curvature, bounds of order below two can leave a number of unresolved cells that
grows without bound as the tolerance shrinks, whereas bounds of order two leave a
bounded number. Sch\"obel and Scholz~\cite{schobel2010rate} define this rate of
convergence for geometric branch-and-bound.

\begin{remark}[Unbalanced transport]\label{rem:unbalanced}
The penalty on unmatched points has a transport interpretation. Give every point unit mass. At a fixed pose, unbalanced transport with a total-variation penalty on the marginals minimizes $\sum_{ij}P_{ij}c_{ij}+\lambda(\|\mathbf 1-P\mathbf 1\|_1+\|\mathbf 1-P^\top\mathbf 1\|_1)$ over nonnegative $P$, where $c_{ij}=\|x_i-Ry_j-t\|^2$. Reducing the excess mass in a row decreases the penalty of that row by at least as much as it increases the penalties of the columns, and it does not increase the transport cost. The same argument applies to columns, so some optimal $P$ satisfies $P\mathbf 1\le\mathbf 1$ and $P^\top\mathbf 1\le\mathbf 1$. The objective is then $\sum_{ij}P_{ij}(c_{ij}-2\lambda)+\lambda(n+m)$, which is linear on the bipartite matching polytope, whose vertices are the partial matchings. The penalized objective~\eqref{eq:partial-spatial} is therefore rigid unbalanced transport with this penalty. With zero heights, \Cref{alg:gravity-yaw} computes it over planar rotations at a supplied translation and reaches a gap of $\varepsilon$ with $\mathcal O(1+\sqrt{\kappa_\theta/\varepsilon})$ assignment queries, because its arc bounds are second order (\Cref{prop:gravity-accuracy,prop:bound-order}). Unbalanced GW~\cite{sejourne2021} instead compares the internal distances of the two sets under marginals relaxed by a Kullback--Leibler penalty and is solved by local alternation; \Cref{sec:exp-unbalanced} compares the two.
\end{remark}

\subsection{Joint Gravity Alignment with Unmatched Points}\label{sec:joint-gravity}

When the translation of a partial matching is unknown, it must be searched together with the yaw, and the search needs a bounded translation domain before the matched subsets are known. \emph{Anchors} provide one. Suppose that a feasible solution of cost $U$ is known and that the matching has cardinality $k>0$. Every matching that improves on $U$ then has an edge whose residual has norm at most $\sqrt{U/k}$. For each candidate edge $(i,j)$, the anchor, we translate the observations by $x_i$ and $y_j$, force that edge into the matching, and write $t=x_i-R_z(\theta)y_j+e$, so that $-e$ is the residual of the anchor. This gives $nm$ forced-anchor domains $e\in[-\sqrt{U/k},\sqrt{U/k}]^3$, one for each anchor, which may overlap.

A cell of this search is the product of a yaw arc and a translation box. The cost $E_M(\theta,e)$ of a matching is no longer affine, because it is quadratic in the translation; but at a fixed cardinality every matching has the same quadratic term. For a box centered at $c_B$, we therefore subtract $k\|e-c_B\|^2$ from each cost. The resulting function $h_M$ is affine in the yaw coordinates and affine in each translation coordinate, so it equals its interpolant from the vertices of the product of the yaw triangle and the translation box. This product has $3\times8=24$ vertices $v=(u,e)\in\mathbb R^2\times\mathbb R^3$. We solve one assignment problem at each of them and retain a feasible matching $W$ as a \emph{witness}, whose slack is its largest excess over the minimum at the vertices,
\begin{equation}\label{eq:gravity-joint-slack}
 \Delta_W=\max_v\big\{h_W(v)-\min_M h_M(v)\big\}.
\end{equation}

\begin{lemma}[Joint-gravity cover and witness bound]\label{lem:joint-cover}
For prescribed cardinality $k>0$ and a feasible cost $U$, the $nm$ forced-anchor
domains above, with radius $\sqrt{U/k}$, cover every
improving pose and matching. For penalties, the same holds after partitioning
by $k$, subtracting $p_k=\lambda(n+m-2k)$ from $U$, retaining all branches
with $p_k\le U$, and testing the empty matching separately. Within one branch,
let $W$ be any feasible matching of size $k$. On every yaw-triangle/translation-box
cell, the slack in \eqref{eq:gravity-joint-slack} satisfies
\begin{equation}\label{eq:joint-witness-guarantee}
 \min_M E_M(\theta,e)\ge E_W(\theta,e)-\Delta_W.
\end{equation}
If $W$ supports all 24 product vertices, equality holds throughout the cell.
\end{lemma}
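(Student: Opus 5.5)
The lemma has two parts: a covering claim for the anchor domains, and a witness bound on each cell. I will prove them in that order.

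\emph{Cover, prescribed cardinality.} Take any improving triple $(M,\theta,t)$ with $|M|=k$ and $E_M(\theta,t)<U$. The residual is a sum of $k$ squared norms, so by averaging some edge $(i,j)\in M$ satisfies $\|x_i-R_z(\theta)y_j-t\|^2<U/k$. Use that edge as the anchor and define $e=t-x_i+R_z(\theta)y_j$. Then $\|e\|_\infty\le\|e\|\le\sqrt{U/k}$, so $e$ lies in the box of anchor $(i,j)$. The matching $M$ contains the anchor, so $M$ is admissible in the forced-anchor search of that domain. Hence the $nm$ domains cover every improving pose and matching.

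\emph{Cover, penalized model.} Partition the matchings by cardinality $k$. A matching of size $k$ has objective equal to its residual plus $p_k$. Improvement therefore requires residual $<U-p_k$, which is impossible when $p_k>U$; those branches may be dropped. For $k>0$, repeat the averaging argument with $U-p_k$ in place of $U$ to get radius $\sqrt{(U-p_k)/k}$. This radius is at most $\sqrt{U/k}$, so the stated domains still cover. The empty matching has no anchor and is tested separately at cost $\lambda(n+m)$.

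\emph{Witness bound.} Fix a cell $C=\mathcal T_I\times B$, where $B$ is a box with center $c_B$, inside one branch. In anchored coordinates, every matching of size $k$ has cost
\[
E_M(\theta,e)=k\|e-c_B\|^2+h_M(u(\theta),e).
\]
This is the key structural step, and the main thing to verify. Expand $\sum_{(i,j)\in M}\|x_i-x_{i_0}-R_z(\theta)(y_j-y_{j_0})-e\|^2$. The quadratic part in $e$ is exactly $k\|e\|^2$. Subtracting $k\|e-c_B\|^2$ leaves a function that is affine in $e$. Its cross term $\langle R_z(\theta)w,e\rangle$ is linear in $u$ for fixed $e$ and linear in $e$ for fixed $u$, so $h_M$ is affine in $u$ and affine in each $e$-coordinate separately. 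Replacing $u(\theta)$ by a free $u\in\mathbb R^2$ gives the extension.

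Since every $E_M$ shares the term $k\|e-c_B\|^2$, it suffices to prove
\[
\min_M h_M\ge h_W-\Delta_W \quad\text{on } C.
\]
The cross term is bilinear, so $h_M$ is not convex or concave on $C$. The argument therefore uses multilinear interpolation instead of concavity. A function that is affine in $u$ and multiaffine in the coordinates of $e$ equals its own interpolant from the 24 vertices of $C$. That interpolant takes the form $\sum_v\lambda_v(u,e)\,h_M(v)$, with nonnegative weights summing to one. Concretely, $\lambda_v$ is the product of the barycentric weight of the triangle vertex and the tensor-product weights of the box vertex. These weights do not depend on $M$.

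For each $M$ and each vertex $v$, the definition of slack gives $h_M(v)\ge\min_{M'}h_{M'}(v)\ge h_W(v)-\Delta_W$. Averaging with the shared weights yields $h_M(p)\ge h_W(p)-\Delta_W$ at every point $p\in C$. Taking the minimum over $M$ and adding back the quadratic gives \eqref{eq:joint-witness-guarantee}.

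\emph{Equality case.} If $W$ attains the minimum at all 24 vertices, then $\Delta_W=0$. Also $h_W\ge\min_M h_M$ holds trivially, so $\min_M h_M=h_W$ everywhere on $C$.

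The anchor-edge constraint is part of the branch's matching family. It therefore affects only which matchings are admissible, not the argument above.
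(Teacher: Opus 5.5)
Your proof is correct and follows essentially the same route as the paper's. Averaging over the $k$ edges gives an anchor whose residual is at most $\sqrt{U/k}$, or $\sqrt{(U-p_k)/k}$ in each cardinality branch. Subtracting the common term $k\|e-c_B\|^2$ then makes each $h_M$ equal to its product barycentric interpolant over the 24 vertices, whose nonnegative, matching-independent weights carry the vertex inequalities $h_M(v)\ge h_W(v)-\Delta_W$ throughout the cell. One point deserves explicit mention: the extension off the circle must be the affine expansion with the squared norms of the points held constant, as the paper does, not the literal squared residual under $\widehat R(u)$, which is quadratic in $u$.
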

The witness bound holds because the nonnegative interpolation weights extend the excess at the vertices throughout the cell, after which the common quadratic term is restored; \Cref{app:joint-cover} proves both parts of the lemma. Minimizing the right-hand side of~\eqref{eq:joint-witness-guarantee} over the cell gives a lower bound for the cell and, at the same time, a feasible pose for the witness; when $\Delta_W=0$, this fit solves the cell. At a fixed yaw, the optimal $e$ is the difference between the centroids of the matched points after rotation, clipped to the translation box. Between changes of the clipping pattern, the cost is a trigonometric polynomial of degree two in the yaw, whose stationary points are the roots of a quartic. \Cref{app:joint-witness-fit} derives this fit, which \Cref{alg:joint-gravity-search} uses, including clipping, degenerate quartics and ties, and \Cref{app:geometry-identities} reduces the cost of its construction.

\paragraph{A common-translation bound.} A simpler bound needs no witness. It evaluates the yaw interpolant of \Cref{sec:gravity} at each corner of the translation box and subtracts $k$ times the squared half-diagonal of the box, which again keeps one translation common to all edges. It does not require one-to-one matchings: it also holds when a fixed number of nearest-neighbor pairs is selected and targets may repeat, which is the trimmed objective of GLORES and allows the bound to be inserted there (\Cref{sec:native-planar}). To match the convention of that solver, the next two statements rotate $X$ toward $Y$.
For $u=(u_1,u_2)$, define $\widehat R(u)$ to have horizontal block
$\left(\begin{smallmatrix}u_1&-u_2\\u_2&u_1\end{smallmatrix}\right)$ and,
in three dimensions, to leave height unchanged.

\begin{lemma}[Common translation]\label{lem:common-translation}
Use the pose convention $R_z(\theta)x_i+t-y_j$, reversing the point-set names in \eqref{eq:pw}.
Let $\mathcal A_{\rm sel}$ be a nonempty finite, pose-independent family of selections of
exactly $k\ge1$ pairs from planar or three-dimensional point sets. Define
\begin{equation}\label{eq:corrected-gravity-objective}
\begin{split}
 F(u,t)=\min_{A_{\rm sel}\in\mathcal A_{\rm sel}}\sum_{(i,j)\in A_{\rm sel}}\big[&
 \|\widehat R(u)x_i+t-y_j\|^2\\
 &+(1-\|u\|^2)\|x_{i,xy}\|^2\big].
\end{split}
\end{equation}
For planar inputs, $x_{i,xy}=x_i$. Let $B$ be an axis-aligned translation box
with center $c_B$, nonnegative halfwidths $\boldsymbol h_t$, and corners $t_\nu$, retaining duplicate
corners when necessary. Let a nondegenerate triangle enclose the physical yaw arc $\mathcal U_I$
for an interval $I$, and let $\ell_\nu$ interpolate $F(\cdot,t_\nu)$ at its three vertices.
Then
\begin{equation}\label{eq:common-translation-guarantee}
 \min_\nu\min_{u\in\mathcal U_I}\ell_\nu(u)-k\|\boldsymbol h_t\|^2
 \le\min_{u\in\mathcal U_I,\,t\in B}F(u,t).
\end{equation}
Selections may repeat targets, as in fixed-count trimmed nearest neighbors.
\end{lemma}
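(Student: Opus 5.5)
The approach is to separate the translation dependence from the yaw dependence inside each selection's cost, bound each on its own piece of the cell, and then reuse the concave-interpolation argument behind \Cref{lem:arc-interpolation} at each box corner.

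\emph{Step 1: expand the cost of one selection.} Fix a selection $A_{\rm sel}$ of exactly $k$ pairs. Expanding $\|\widehat R(u)x_i+t-y_j\|^2$ gives $\|\widehat R(u)x_i\|^2=\|u\|^2\|x_{i,xy}\|^2+x_{iz}^2$. The correction term $(1-\|u\|^2)\|x_{i,xy}\|^2$ is designed to cancel the quadratic part in $u$. Hence each summand equals
\[
\|x_i\|^2+\|t-y_j\|^2+2\langle \widehat R(u)x_i,\,t-y_j\rangle .
\]
For fixed $t$ this is affine in $u$, and for fixed $u$ it is a quadratic in $t$ whose Hessian is $2I$. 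Summing over the pair set, the cost $G_A(u,t)$ has the form $k\|t\|^2+(\text{bilinear in }(u,t))+(\text{affine in }u\text{ and in }t)$.

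\emph{Step 2: remove the quadratic in $t$.} Write $t=c_B+s$ with $s$ in the box $[-\boldsymbol h_t,\boldsymbol h_t]$. Then $G_A(u,t)-k\|s\|^2$ is affine in each coordinate of $s$ when $u$ is fixed. This is where exactly $k$ pairs per selection matters: every selection carries the same term $k\|s\|^2$, whether or not targets repeat.

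\emph{Step 3: reduce to the box corners at a fixed yaw.} Fix $u\in\mathcal U_I$ and a selection $A$. A function affine in each coordinate separately is multilinear on the box, so its minimum over the box is attained at a corner $t_\nu$. Since $\|s\|^2\le\|\boldsymbol h_t\|^2$, this gives
\[
G_A(u,t)\ge G_A(u,c_B+s)-k\|s\|^2\ge\min_\nu\big[G_A(u,t_\nu)-k\|t_\nu-c_B\|^2\big]\ge\min_\nu G_A(u,t_\nu)-k\|\boldsymbol h_t\|^2 .
\]
The last inequality uses $\|t_\nu-c_B\|=\|\boldsymbol h_t\|$. Minimizing over $A$ and exchanging the two minima yields
\[
F(u,t)\ge\min_\nu F(u,t_\nu)-k\|\boldsymbol h_t\|^2 .
\]
If some halfwidths are zero, corners repeat, which is harmless.

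\emph{Step 4: reduce to the triangle vertices at a fixed corner.} For each $\nu$, $F(\cdot,t_\nu)$ is a minimum of finitely many functions affine in $u$, so it is concave on $\mathbb R^2$. A concave function dominates its affine interpolant on the triangle, and the triangle contains $\mathcal U_I$, so $F(u,t_\nu)\ge\ell_\nu(u)$ there. This is the second inequality of \Cref{lem:arc-interpolation}. Taking the minimum over $u\in\mathcal U_I$ and $\nu$ completes \eqref{eq:common-translation-guarantee}.

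\emph{Expected obstacle.} The delicate point is Step 1. One must check that the correction term makes each selection's cost exactly affine in $u$ off the circle, including in three dimensions, where the height is unaffected by $\widehat R$. One must also check that the minimum over selections is taken after the common $k\|s\|^2$ is subtracted. The exchange $\min_A\min_\nu=\min_\nu\min_A$ is legitimate precisely because the subtracted term does not depend on $A$.
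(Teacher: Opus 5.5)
Your argument is correct and is essentially the paper's proof. Subtracting the common term $k\|t-c_B\|^2$ makes every selection affine in the translation, the corner identity $\|t_\nu-c_B\|^2=\|\boldsymbol h_t\|^2$ then gives $F(u,t)\ge\min_\nu F(u,t_\nu)-k\|\boldsymbol h_t\|^2$, and concavity of $F(\cdot,t_\nu)$ on the enclosing triangle finishes. The paper packages the box step as separate concavity of the reduced envelope $F-k\|t-c_B\|^2$ with convex corner weights, rather than as per-selection corner minimization, and your Step~3 chain actually relies on $k\|s\|^2\ge0$ (not on $\|s\|^2\le\|\boldsymbol h_t\|^2$), which is harmless.
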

\begin{proof}[Proof sketch]
The correction term for points off the circle keeps the cost of each selection affine in $u$. Every selection has $k$ pairs and hence the same quadratic term in the translation, so $F_{\rm red}=F-k\|t-c_B\|^2$ is concave in $u$ and concave in $t$ separately. Interpolation over the box and over the triangle then gives the bound, since $\|t_\nu-c_B\|^2=\|\boldsymbol h_t\|^2$ at every corner of the box and $F\ge F_{\rm red}$ (\Cref{app:common-translation-proof}).
\end{proof}

On a rational grid, the accuracy of this bound can be quantified.

\begin{proposition}[Fixed-cardinality gravity approximation]\label{prop:joint-gravity-accuracy}
Let $\mathcal E$ be a finite set of pairs of rational three-dimensional points and let
$\mathcal A_{\rm sel}$ be a nonempty, globally fixed family of selections of exactly
$k\ge1$ pairs from $\mathcal E$. Assume an exact polynomial-time rational oracle for
affine minimization over $\mathcal A_{\rm sel}$. For unit scale, yaw, and translation in
a rational box $B$ of side lengths $\ell_j$, choose a rational bound
\begin{equation}\label{eq:gravity-grid-slope}
 \kappa_B\ge2k\max_{(i,j)\in \mathcal E}\sup_{t\in B}
 \|x_{i,xy}\|\,\|t_{xy}-y_{j,xy}\|.
\end{equation}
For rational $\varepsilon>0$, a feasible pose and an enclosing summed-objective
interval of width at most $\varepsilon$ can be obtained with at most
$128N_\theta\prod_{j=1}^3N_{t,j}$ oracle calls, where $N_\theta=\max\{1,\lceil2\sqrt{\kappa_B/\varepsilon}\rceil\}$ and
\begin{equation}\label{eq:gravity-grid-counts}
 N_{t,j}=\max\{1,\lceil\sqrt{3k\ell_j^2/(2\varepsilon)}\rceil\}.
\end{equation}
The count is $\mathcal O(\varepsilon^{-2})$ at fixed numerical scales.
\end{proposition}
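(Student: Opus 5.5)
We will prove \Cref{prop:joint-gravity-accuracy} with a uniform grid over the yaw and the translation box, not an adaptive search. Each grid cell gets the common-translation bound of \Cref{lem:common-translation}, and the query count follows by multiplying cell counts.

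\emph{Grid.} We cover the yaw circle by $N_\theta'$ arcs. We start from three arcs of width $2\pi/3$ and bisect uniformly until the half-width satisfies $h_\theta\le\min\{\pi/3,\sqrt{\varepsilon/(2\kappa_B)}\}$. This gives at most $8N_\theta$ arcs, absorbing the factor from rounding to powers of two and from the three roots. We split side $j$ of $B$ into $N_{t,j}$ equal boxes of half-width $h_j=\ell_j/(2N_{t,j})$, so that $k\|\boldsymbol h_t\|^2=k\sum_j h_j^2\le\varepsilon/2$ by the definition of $N_{t,j}$. On each cell we query $F(\cdot,t_\nu)$ at the three triangle vertices and the eight box corners, which is 24 calls. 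We also fit one feasible pose per cell. Together this gives at most $3\cdot8=24\le16$ per arc-box pair after exploiting shared vertices, and with the factor $8$ on $N_\theta$ the total is $128N_\theta\prod_jN_{t,j}$. I will fix the exact constant bookkeeping last; the count is generous, so no sharing argument should be needed. The global lower bound is the minimum of the cell bounds \eqref{eq:common-translation-guarantee}. The upper bound $U$ is the smallest physical cost among the matchings returned at the vertices, each evaluated at the arc midpoint and box center.

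\emph{Width.} Fix the cell that contains the optimum. Let $A^\ast$ be a selection that is optimal at the triangle vertex $v$ and corner $t_\nu$ where the interpolant is smallest. I plan to compare $F_{\rm red}=F-k\|t-c_B\|^2$ at the apex and corners with the physical cost at the arc midpoint and the box center. For a fixed selection, $F_{\rm red}$ is affine in $u$, with $u$-gradient $2\sum\widehat{\,\cdot\,}$ whose norm is bounded by $2k\max\|x_{i,xy}\|\|t_{xy}-y_{j,xy}\|\le\kappa_B$; the off-circle correction term is what makes this affine. As in the proof of \Cref{prop:gravity-accuracy}, moving from a triangle vertex to the circle costs at most $\kappa_B(\sec h_\theta-1)\le\kappa_Bh_\theta^2\le\varepsilon/2$. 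In $t$, the reduced cost is affine in each coordinate, and restoring the quadratic term costs exactly $k\|\boldsymbol h_t\|^2\le\varepsilon/2$. Summing the two, the minimum interpolant bound lies within $\varepsilon$ of a feasible physical cost, so $U-L\le\varepsilon$.

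\emph{Main obstacle.} The hardest step is the yaw error term. The interpolant is minimized over the arc, and the selection that attains the minimum differs between vertices. So I cannot use a single selection directly. I will follow the proof of \Cref{prop:bound-order}(i): take the concave envelope over the triangle and bound it below by the minimum at the three vertices. Each vertex value is $\ge$ the physical value of the same selection at a nearby circle point minus $\kappa_Bh_\theta^2$, and the sup over $t\in B$ in \eqref{eq:gravity-grid-slope} controls this uniformly over the box. Finally, all grid points and interpolation weights are rational, apart from $\sec h_\theta$. I will use rational tangent-triangle apexes, for example via the rational parametrization $u=((1-s^2)/(1+s^2),2s/(1+s^2))$ with rational $s$. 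With these, every call is an exact rational oracle call of polynomial bit length. The $\mathcal O(\varepsilon^{-2})$ count then follows because $N_\theta=O(\varepsilon^{-1/2})$ and each $N_{t,j}=O(\varepsilon^{-1/2})$.
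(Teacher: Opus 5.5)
Your grid and your per-cell lower bound from \Cref{lem:common-translation} are the right ones. The gap is in the feasible value: you compare the bound with a physical cost at the wrong translation, and the width claim fails there. You evaluate every returned selection at the box center $c_B$, and then assert that in $t$ ``restoring the quadratic term costs exactly $k\|\boldsymbol h_t\|^2$''. That is true only at the corner $t_\nu$ where the smallest value $f_{\min}=E_{A^\ast}(v,t_\nu)$ was queried.

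The reduced cost $h_{A^\ast}=E_{A^\ast}-k\|t-c_B\|^2$ is affine in $t$, with gradient $g=2\sum_{(i,j)\in A^\ast}(\widehat R(v)x_i+c_B-y_j)$. Since $t_\nu$ minimizes $h_{A^\ast}(v,\cdot)$ over the corners, moving to the center costs $h_{A^\ast}(v,c_B)-h_{A^\ast}(v,t_\nu)=\sum_\ell h_{t,\ell}|g_\ell|\ge0$. This is first order in the halfwidths. For a concrete failure, take $k=1$ and a single pair with $x=0$, so that $\kappa_B=0$ and $F=\|t-y\|^2$. Place $y$ beyond the corner $b$ of $B$, with $y_\ell<b_\ell=\min_{t\in B}t_\ell$ and $d_\ell=b_\ell-y_\ell>0$. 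The lemma gives $L=\|b-y\|^2-\|\boldsymbol h_t\|^2$, while the best box-center value is $\sum_\ell(d_\ell+h_{t,\ell})^2$. Hence $U-L=2\sum_\ell h_{t,\ell}d_\ell+2\|\boldsymbol h_t\|^2$, which is of order $\varepsilon^{1/2}$ under your $N_{t,j}$.

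The repair, which is the paper's argument, is to compare at the same corner. For a physical yaw $u_{\rm phys}$ near $v$, the $\kappa_B$-Lipschitz property of $F(\cdot,t_\nu)$ gives $F(u_{\rm phys},t_\nu)-L\le\kappa_B\|u_{\rm phys}-v\|+k\|\boldsymbol h_t\|^2$. The yaw step needs the same care. Your $\kappa_B(\sec h_\theta-1)$ estimate covers only the apex; when $v$ is an arc endpoint, the arc midpoint lies at first-order distance. So the per-cell argument should use the endpoint query itself, which is already physical. Also, the interval width needs this per-cell gap on the cell that attains the global lower bound, not on the cell containing the optimum.

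Two further points need repair. First, uniform angular bisection from arcs of width $2\pi/3$ produces irrational endpoints such as $(\tfrac12,\tfrac{\sqrt3}{2})$, so those queries are not calls to the rational oracle. Switching to $p(\xi)=(1-\xi^2,2\xi)/(1+\xi^2)$ makes the endpoints and the apex $(u_-+u_+)/(1+u_-^\top u_+)$ rational. The angular midpoint is then generally irrational, however, so you also need a rational physical point near the apex. The paper takes the parameter midpoint of $[j/N_\theta,(j+1)/N_\theta]$ and bounds its distance to the apex by $\delta^2/2+(\sec\alpha-1)\le2/N_\theta^2$. This is exactly why $N_\theta=\lceil2\sqrt{\kappa_B/\varepsilon}\rceil$ yields $2\kappa_B/N_\theta^2\le\varepsilon/2$.

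Second, ``$3\cdot8=24\le16$'' is false. With $8N_\theta$ arcs and $24$ calls each, the total is $192N_\theta\prod_jN_{t,j}$, so the count does not close as written. It does close with vertex sharing: each arc adds two new yaw points, and the distinct box corners number $\prod_j(N_{t,j}+1)\le8\prod_jN_{t,j}$. It also closes with the paper's grid of $4N_\theta$ arcs and four evaluations at each of the eight corners, three for the bound and one physical sample.
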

\begin{proof}[Proof sketch]
A rational parameterization of the yaw gives rational arc endpoints, tangent intersections and nearby physical samples. Combining the Lipschitz error of these samples with the bound on the translation term gives $U-L\le k\|\boldsymbol h_t\|^2+2\kappa_B/N_\theta^2\le\varepsilon$. \Cref{app:joint-gravity-accuracy} constructs the grid and proves the count.
\end{proof}

A rational $\kappa_B$ is obtained from horizontal $\ell_1$ bounds. An unrestricted optimum
lies in $[-r_t,r_t]^3$, where $r_t=\max_i\|x_i\|_1+\max_j\|y_j\|_1$, by the
centroid formula for the selected pairs. This is a guarantee of additive accuracy: its count depends on the numerical scale and on the inverse
tolerance, not only on their bit lengths. It does not apply to pose-dependent
screening, to variable cardinality, or to the number of queries made by the adaptive search of GLORES.
For mean residuals, we use the summed tolerance $k\varepsilon_{\rm mean}$.

\subsection{From the Polygon to a Spatial Polytope}\label{sec:se3}

We now drop the gravity constraint and consider complete matching of equally weighted sets in $\mathbb R^3$, for which centering removes the translation. We write $a_i=x_i^c=x_i-m_x$, $b_j=y_j^c=y_j-m_y$, and $S=\sum_i\|a_i\|^2+\sum_j\|b_j\|^2$. Each planar correlation is replaced by a matrix, and the polygon by a polytope:
\begin{equation}\label{eq:covariance-polytope}
 C_\sigma=\sum_i a_i b_{\sigma(i)}^\top,\qquad
 \mathcal C=\conv\{C_\sigma:\sigma\in S_n\}.
\end{equation}
As in the plane, one assignment problem with scores $a_i^\top D b_j$ maximizes $\langle D,C\rangle_F$ over $\mathcal C$ for any matrix $D$, and returns a permutation, its correlation matrix and the support value.

For a fixed matching, the best rotation is found by Horn's quaternion method~\cite{horn1987}. A unit quaternion $q\in\mathbb R^4$, written with its scalar coordinate first, represents a proper rotation $R(q)$. Let $\mathbb S^4$ denote the symmetric $4\times4$ matrices. Horn's linear map $K:\mathbb R^{3\times3}\to\mathbb S^4$ gives a symmetric matrix $K(C)$ of trace zero that satisfies
\begin{equation}\label{eq:quaternion-score}
 \langle R(q),C\rangle_F=q^\top K(C)q,
 \qquad \|q\|=1.
\end{equation}
Antipodal quaternions describe the same rotation, so the parameter space is $\mathbb{RP}^3$. The best rotation score of a fixed matching is thus the largest eigenvalue of $K(C_\sigma)$. Combining this quaternion lift and convex representations of the rotations~\cite{sanyal2011orbitopes,saunderson2015rotations} with the polytope gives
\begin{equation}\label{eq:se3-polytope}
 E^*_{\mathrm{SE}(3)}=S-2\max_{C\in\mathcal C}\lambda_{\max}(K(C)).
\end{equation}

\begin{theorem}[Spatial covariance reduction]\label{thm:spatial-covariance}
For equally weighted complete point sets of the same positive size in
$\mathbb R^3$, the optimal proper rigid residual is \eqref{eq:se3-polytope}.
Some vertex of $\mathcal C$ attains its maximum, and any unit top eigenvector
of its quaternion matrix gives an optimal rotation; translation is
$m_x-Rm_y$. If the centered point sets have ranks $r_X,r_Y$, then
$\dim\mathcal C=r_Xr_Y\le9$. Repeated points, rank deficiency, and eigenvalue
ties are allowed. Rational inputs admit a finite exact polytope reduction,
without a claimed polynomial vertex or facet count.
\end{theorem}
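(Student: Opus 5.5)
We follow the planar reduction of \Cref{lem:reduction}, replacing the complex modulus by Horn's largest eigenvalue. After the optimal translation $t=m_x-Rm_y$ is substituted, the residual of a matching $\sigma$ at rotation $R$ is $S-2\langle R,C_\sigma\rangle_F$, because $\sum_i\|a_i-Rb_{\sigma(i)}\|^2=\sum_i\|a_i\|^2+\sum_j\|b_j\|^2-2\sum_i a_i^\top Rb_{\sigma(i)}$ and $\sum_ia_i^\top Rb_{\sigma(i)}=\langle R,C_\sigma\rangle_F$. The optimal residual is therefore $S-2\max_{\sigma}\max_{R\in\mathrm{SO}(3)}\langle R,C_\sigma\rangle_F$.

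\textbf{Inner maximization.} By \eqref{eq:quaternion-score}, $\max_R\langle R,C\rangle_F=\max_{\|q\|=1}q^\top K(C)q=\lambda_{\max}(K(C))$, attained at any unit top eigenvector. This holds for every $C$, ties in the top eigenvalue included, since every unit eigenvector attains the Rayleigh maximum.

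\textbf{Outer maximization.} Because $K$ is linear and $\lambda_{\max}$ is convex (a pointwise maximum of the linear maps $C\mapsto q^\top K(C)q$), the function $C\mapsto\lambda_{\max}(K(C))$ is convex on $\mathcal C$. Its maximum over the polytope is therefore attained at a vertex, and every vertex is some $C_\sigma$. This gives \eqref{eq:se3-polytope}. Taking the maximum over $\mathcal C$ rather than over the finite set $\{C_\sigma\}$ changes nothing, for the same reason.

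\textbf{Dimension.} I write $\mathcal C=\{\sum_{ij}P_{ij}a_ib_j^\top:P\in B_n\}$, the linear image of the Birkhoff polytope under $P\mapsto A P B^\top$, where $A,B$ are the $3\times n$ centered matrices. The affine hull of $B_n$ is $\frac1n\mathbf 1\mathbf 1^\top$ plus the matrices with zero row and column sums. Centering gives $A\mathbf 1=0$ and $B\mathbf 1=0$, so the image of $\frac1n\mathbf 1\mathbf 1^\top$ is $0$. The space of zero-sum matrices is spanned by $(e_i-e_k)(e_j-e_l)^\top$, whose images are $(a_i-a_k)(b_j-b_l)^\top$. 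These span $\operatorname{col}(A)\otimes\operatorname{col}(B)$, because differences of centered points span the centered column space. That space has dimension $r_Xr_Y$, and the full affine hull is hit because $B_n$ has full dimension in its affine hull. The main obstacle is this spanning step, specifically that the differences $a_i-a_k$ span $\operatorname{col}(A)$. This holds because $a_i=a_i-\frac1n\sum_k a_k=\frac1n\sum_k(a_i-a_k)$.

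\textbf{Exact reduction.} For rational inputs, $\mathcal C$ is a rational polytope with finitely many vertices $C_\sigma$. The maximum of the convex function is therefore a maximum over a finite vertex list of the algebraic numbers $\lambda_{\max}(K(C_\sigma))$. These can be compared exactly through characteristic polynomials, and no polynomial count of vertices or facets is asserted. Repeated points and rank deficiency need no separate treatment, since nothing above assumed distinctness or full rank; when $r_X r_Y=0$ the polytope is $\{0\}$ and every rotation is optimal.
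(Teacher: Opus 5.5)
Your proof is correct and follows the paper's argument essentially step for step. Both eliminate translation to get the centered residual $S-2\langle R,C_\sigma\rangle_F$, use Horn's Rayleigh quotient (ties included) for the inner maximum and convexity of $\lambda_{\max}\circ K$ for vertex attainment, and obtain the dimension from the Birkhoff direction space spanned by the zero-sum matrices $(e_i-e_k)(e_j-e_l)^\top$, whose images are outer products of centered point differences. The only divergence is the exact reduction: the paper reconstructs the facets from rational support queries, whereas you compare $\lambda_{\max}(K(C_\sigma))$ over all $n!$ permutations, and both are finite, which is all the statement claims.
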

A vertex attains the maximum because the largest eigenvalue is convex, just as the planar optimum is attained at a farthest vertex of the polygon, and the reduction yields proper rotations, in agreement with the determinant correction of the Procrustes solution. \Cref{app:covariance} proves the theorem, together with the rank formula and the finite exact construction.

The exact reduction is finite but comes without a polynomial bound. For every fixed dimension $d\ge2$, and in particular for $d=3$, we conjecture that the number of vertices of the matrix-correlation polytope grows superpolynomially in the worst case, with no uniform polynomial bound in $n$ (\Cref{app:matrix-conjecture}), and no reconstruction guarantee like the $2V$ queries of \Cref{prop:es} is known. We therefore use the reduction only to fit the matchings that are returned during a search over quaternion regions, in which each assignment query again shares one direction across all edges.

\subsection{Quaternion Region Bounds}\label{sec:quaternion-bound}

\begin{figure}[t]
\centering
\includegraphics[width=\linewidth]{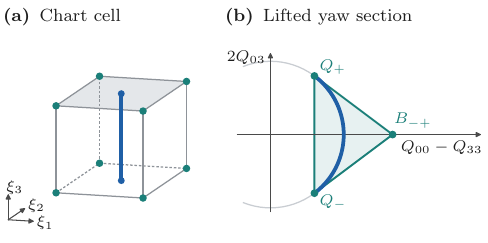}
\caption{\textbf{Bounding rotations through quaternion lifts.} (a)~An illustrative chart cell $\tilde q=(1,\xi_1,\xi_2,\xi_3)$ with $\xi_1,\xi_2,\xi_3\in[-\tfrac12,\tfrac12]$. Its eight raw corners (teal) define the cell, and its central segment $\xi_1=\xi_2=0$ (blue) contains the pure yaw rotations. (b)~On this yaw section, in the coordinates $(Q_{00}-Q_{33},2Q_{03})$ of $Q=qq^\top$ and not of the raw quaternion, the endpoint lifts $Q_\pm$ and the exterior cross lift $B_{-+}$ enclose the arc of physical rotations (blue). Assignment values at these query matrices bound the correlation score throughout the region. The full three-dimensional construction uses at most 36 queries per box before reuse; the illustrated section uses three.}\label{fig:quaternion-lift}
\end{figure}

The score~\eqref{eq:quaternion-score} is quadratic in $q$ but linear in the matrix $Q=qq^\top\in\mathbb S^4$, which we call the \emph{lift} of $q$. This linearity lets assignment queries bound whole regions of rotations. We cover the rotations by four charts. In each chart, we choose a coordinate of the quaternion that has the largest magnitude, make it positive, and scale it to one, so that the other three coordinates lie in $[-1,1]^3$. Splitting each of them at zero gives eight boxes per chart, hence 32 boxes that cover every proper rotation. The boxes overlap on their boundaries, which is harmless. Within each box, any two corners have a dot product of at least one.

Let $\tilde q$ range over one such box, or over a box obtained from it by subdivision, with raw corners $\tilde q_1,\ldots,\tilde q_{N_{\rm raw}}$, where $N_{\rm raw}=8$. The raw vector $\tilde q$ is not normalized; its unit representative is $q=\tilde q/\|\tilde q\|$. From the corners we form the 36 symmetric matrices
\begin{equation}\label{eq:outer-quaternion}
 B_{ab}=\frac{\tilde q_a\tilde q_b^\top+\tilde q_b\tilde q_a^\top}{2\tilde q_a^\top \tilde q_b},\qquad a\le b.
\end{equation}
The matrices $B_{aa}$ are the lifts of the normalized corners. The cross terms $B_{ab}$ with $a<b$ play the role of the exterior vertex of the yaw triangle: all $B_{ab}$ have trace one, but the cross terms need not be positive semidefinite or represent physical poses.

To bound all matchings over a box, we define the \emph{assignment support} of a matrix $Q\in\mathbb S^4$,
\begin{equation}\label{eq:quaternion-support}
 H(Q)=\max_{\sigma\in S_n}\langle K(C_\sigma),Q\rangle_F.
\end{equation}
To evaluate $H$, we use the adjoint linear map $K^*:\mathbb S^4\to\mathbb R^{3\times3}$ and write $D(Q)=K^*(Q)$, which is defined by $\langle D(Q),C\rangle_F=\langle Q,K(C)\rangle_F$. Then
\begin{equation}\label{eq:lifted-assignment-support}
 H(Q)=\max_{\sigma\in S_n}\sum_i a_i^\top D(Q)b_{\sigma(i)}.
\end{equation}
Thus one assignment problem evaluates $H(B_{ab})$ and returns a matching that attains this support value. For the lift of a unit quaternion, $D(qq^\top)=R(q)$ is the rotation itself; for the exterior matrices, $D(B_{ab})$ need not be a rotation. A witness matching can tighten the resulting bound, as in the joint gravity search. For any matching $\tau$, we define its largest deficit at the queries,
\begin{equation}\label{eq:face-slack}
 \delta_\tau=\max_{a\le b}
 \big\{H(B_{ab})-\langle K(C_\tau),B_{ab}\rangle_F\big\}.
\end{equation}

\begin{lemma}[Quaternion enclosure and witness bound]\label{lem:quaternion-enclosure}
Let $\tilde q$ lie in the convex hull of nonzero raw quaternion vertices $\tilde q_a$ satisfying
$\tilde q_a^\top \tilde q_b>0$ for every pair, and put $q=\tilde q/\|\tilde q\|$. The matrices in
\eqref{eq:outer-quaternion} enclose $qq^\top$, that is, $qq^\top\in\conv\{B_{ab}:a\le b\}$. Consequently the support
$H$ in \eqref{eq:quaternion-support} satisfies
\begin{equation}\label{eq:face-bound}
 H(qq^\top)\le\min\{\max_{a\le b}H(B_{ab}),\lambda_{\max}(K(C_\tau))+\delta_\tau\}
\end{equation}
for every feasible permutation $\tau$, with $\delta_\tau$ as in
\eqref{eq:face-slack}. A common supporting permutation has $\delta_\tau=0$,
regardless of ties.
\end{lemma}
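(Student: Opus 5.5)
The proof has three parts: the enclosure $qq^\top\in\conv\{B_{ab}\}$, the bound by the largest assignment support, and the witness bound.

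\emph{Enclosure.} Write $\tilde q=\sum_a\lambda_a\tilde q_a$ with $\lambda_a\ge0$ and $\sum_a\lambda_a=1$. Then
\[
\tilde q\tilde q^\top=\sum_{a,b}\lambda_a\lambda_b\tilde q_a\tilde q_b^\top=\sum_{a\le b}\mu_{ab}\,(\tilde q_a^\top\tilde q_b)\,B_{ab},
\]
where $\mu_{aa}=\lambda_a^2$ and $\mu_{ab}=2\lambda_a\lambda_b$ for $a<b$. The factor $\tilde q_a^\top\tilde q_b$ appears because each $B_{ab}$ was normalized by it, and the symmetrization pairs $(a,b)$ with $(b,a)$. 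Set $w_{ab}=\mu_{ab}\tilde q_a^\top\tilde q_b$. By the hypothesis $\tilde q_a^\top\tilde q_b>0$, every $w_{ab}$ is nonnegative. Taking traces gives $\sum_{a\le b}w_{ab}=\operatorname{tr}(\tilde q\tilde q^\top)=\|\tilde q\|^2$, since every $B_{ab}$ has trace one. This total is positive, because $\tilde q^\top\tilde q=\sum\lambda_a\lambda_b\tilde q_a^\top\tilde q_b>0$. Dividing by $\|\tilde q\|^2$ writes $qq^\top$ as a convex combination of the $B_{ab}$ with weights $w_{ab}/\|\tilde q\|^2$.

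\emph{First bound.} $H$ is a maximum of finitely many linear functionals $Q\mapsto\langle K(C_\sigma),Q\rangle_F$, so it is convex on $\mathbb S^4$. Hence
\[
H(qq^\top)\le\sum_{a\le b}\frac{w_{ab}}{\|\tilde q\|^2}H(B_{ab})\le\max_{a\le b}H(B_{ab}).
\]

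\emph{Witness bound.} Fix a matching $\tau$ and put $\phi=H-\langle K(C_\tau),\cdot\rangle_F$. This function is convex, being a convex function minus a linear one. By the definition~\eqref{eq:face-slack}, $\phi(B_{ab})\le\delta_\tau$ for every pair. Using the same convex combination gives $\phi(qq^\top)\le\delta_\tau$, that is,
\[
H(qq^\top)\le\langle K(C_\tau),qq^\top\rangle_F+\delta_\tau=q^\top K(C_\tau)q+\delta_\tau\le\lambda_{\max}(K(C_\tau))+\delta_\tau,
\]
where the last step holds because $q$ is a unit vector. Taking the smaller of the two bounds gives~\eqref{eq:face-bound}.

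\emph{Common support.} If $\tau$ attains $H(B_{ab})$ at every query matrix, then each deficit is zero and $\delta_\tau=0$. This holds however ties are broken, because $\delta_\tau$ depends only on the values at the queries, not on which maximizer the oracle returns.

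The main point to check is the enclosure step. There the positivity of the pairwise dot products is used twice: for nonnegativity of the weights and for a nonzero normalizer. The trace-one normalization is what makes the weights sum to one after dividing by $\|\tilde q\|^2$; everything else is routine convexity.
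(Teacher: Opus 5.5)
Your proof is correct and follows the paper's argument. Your normalized weights $w_{ab}/\|\tilde q\|^2$ are exactly the paper's $\beta_{ab}$, and both proofs then apply convexity of $H$ and of $H-\langle K(C_\tau),\cdot\rangle_F$ before the Rayleigh bound; your trace argument for the weights summing to one and your check that $\|\tilde q\|>0$ spell out two steps the paper only asserts.
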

\begin{proof}[Proof sketch]
Write $\tilde q=\sum_a\alpha_a\tilde q_a$ with convex weights. Expanding the normalized lift gives a convex combination of the corner and cross-corner lifts, because their dot products are positive. Convexity of $H$, which is a maximum of linear functions, gives the first bound. Subtracting the linear score of the witness before interpolating, and then applying the Rayleigh inequality, gives the second. \Cref{app:quaternion-lifts,app:quaternion-cover} prove the enclosure, the coverage by charts and the convergence under refinement.
\end{proof}

The enclosure is an instance of the convex-hull property of rational B\'ezier representations with positive weights~\cite{piegl1997nurbs}, and restricted to pure yaw rotations it recovers the triangle of \Cref{sec:gravity} (\Cref{fig:quaternion-lift,app:quaternion-yaw-section}). If $\tau$ attains the support value at every query, the bound equals the score of the feasible fit of $\tau$, so the cell is closed even when its residual is not zero. We minimize the witness bound over the available witnesses and combine it with two cheaper \emph{rotation-cap bounds}, which depend only on the largest angular distance of the cell from its center (\Cref{app:rotation-cap}).

For a cell $\Omega$, we denote by $b_\Omega$ the smallest available upper bound on the score: the largest exterior support value, the witness bounds, and the rotation-cap bounds. Since the residual decreases as the correlation score increases, the lower bound on the residual of the cell is
\begin{equation}\label{eq:spatial-residual-bound}
 L_\Omega=\max\{0,S-2b_\Omega\}.
\end{equation}
Each returned permutation $\sigma$ gives the feasible residual $S-2\lambda_{\max}(K(C_\sigma))$ after its rotation is fitted; the fitted pose may lie outside the cell and can still improve the incumbent. \Cref{alg:se3-search} uses at most 36 exterior queries per box before reuse, bisects the longest raw coordinate, and inherits bounds through \Cref{alg:spatial-search}. In our experiments, the search with the rotation-cap bounds alone is the baseline, and Rubix adds the exterior queries and witness bounds to it; both share the search and its feasible updates. The stronger of the two rotation-cap bounds already couples the edges, because it bounds the gain of any matching over a retained witness.

At a fixed translation, the construction also permits prescribed-cardinality and penalized matching. We subtract $t$ from the fixed set, replace $C_\sigma$ by the correlation of the selected pairs, and absorb the constant of each matching, which now depends on the selected pairs, into a scalar multiple of the identity in $K$ (\Cref{app:partial-translation}). For unknown translation, an optimum is still attained at a vertex of a finite hull, now of matching moments.

\begin{proposition}[Finite moment reduction for partial rigid alignment]\label{prop:partial-moments}
For finite point sets and either prescribed cardinality
$0\le k\le\min(n,m)$ or a positive unmatched penalty, freely translated partial
proper-rigid alignment attains an optimum at
a vertex of the convex hull of matching statistics
\begin{equation}\label{eq:partial-moment-statistics}
 \Bigl(|M|,\sum_Mx_i,\sum_My_j,\sum_M(\|x_i\|^2+\|y_j\|^2),\sum_Mx_i y_j^\top\Bigr).
\end{equation}
The vector coordinates $\boldsymbol s_{X,M}=\sum_Mx_i$ and $\boldsymbol s_{Y,M}=\sum_My_j$ are coordinate sums, not centroids; write $S_M=\sum_M(\|x_i\|^2+\|y_j\|^2)$ and $C_M=\sum_Mx_i y_j^\top$ for the scalar norm sum and matrix correlation. This hull has dimension at most 17 and a linear support oracle given by partial
assignment. A nonempty vertex matching is evaluated by subset centering and
proper Procrustes alignment~\cite{horn1987}; the empty matching is evaluated separately.
\end{proposition}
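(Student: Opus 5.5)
The plan is to eliminate translation and rotation analytically for a fixed matching. This shows that the optimal residual of a matching is a function of its statistics \eqref{eq:partial-moment-statistics}, and that minimizing it over matchings is equivalent to maximizing a convex function of those statistics. A convex function attains its maximum over a polytope at a vertex, and every vertex of the hull of matching statistics is the statistic vector of some matching. This mirrors \Cref{lem:reduction} and \Cref{thm:spatial-covariance}.

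\textbf{Step 1: reduce to a function of the statistics.} Fix a nonempty matching $M$ with $k=|M|$. Expanding $\sum_M\|x_i-Ry_j-t\|^2$ gives
\[
S_M-2\langle R,C_M\rangle_F+k\|t\|^2-2t^\top(\boldsymbol s_{X,M}-R\boldsymbol s_{Y,M}).
\]
Minimizing over $t$ gives $t=(\boldsymbol s_{X,M}-R\boldsymbol s_{Y,M})/k$. The value becomes
\[
S_M-2\langle R,C_M\rangle_F-\|\boldsymbol s_{X,M}-R\boldsymbol s_{Y,M}\|^2/k.
\]
Since $\|R\boldsymbol s_{Y,M}\|=\|\boldsymbol s_{Y,M}\|$, this equals
\[
S_M-\frac{\|\boldsymbol s_{X,M}\|^2+\|\boldsymbol s_{Y,M}\|^2}{k}-2\Bigl\langle R,\,C_M-\frac{\boldsymbol s_{X,M}\boldsymbol s_{Y,M}^\top}{k}\Bigr\rangle_F.
\]
The penalty term $\lambda(n+m-2k)$ is affine in $k$. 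So the optimal cost over poses is $\Phi(\text{stat}_M)$ with
\[
\Phi(k,s_X,s_Y,S,C)=S+\lambda(n+m-2k)-\max_{R\in\mathrm{SO}(3)}\Bigl[\frac{\|s_X\|^2+\|s_Y\|^2}{k}+2\langle R,C\rangle_F-\frac{2}{k}\langle R,s_Xs_Y^\top\rangle_F\Bigr].
\]
In the prescribed-cardinality case we drop the penalty.

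\textbf{Step 2: concavity of $\Phi$.} For each fixed $R$, combine the two quadratic-over-$k$ terms into the single term $\|s_X-Rs_Y\|^2/k$, the perspective of a convex quadratic. The bracket is then $2\langle R,C\rangle_F+\|s_X-Rs_Y\|^2/k$, which is jointly convex on $k>0$. The maximum over $R$ of convex functions is convex, so $\Phi$ is concave on the region $k>0$, and minimizing it over the hull is attained at a vertex.

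\textbf{Step 3: handle $k=0$ and realize vertices by matchings.} For prescribed cardinality $k$ is constant, so $k=0$ is trivial with value zero. For the penalized model the empty matching (statistics $0$, cost $\lambda(n+m)$) is evaluated separately, as the statement says. The minimum over the nonempty matchings is then taken on the hull of their statistics, which lies in $\{k\ge1\}$. I expect this to be the main obstacle: the vertex claim should refer to the hull of all matchings. To settle it, I would note that any vertex of the hull of the nonempty statistics that is not a vertex of the full hull must be a convex combination involving $0$. I would then argue by the minimum-over-vertices comparison: the optimum is either the empty matching (a vertex) or attained at a vertex $v$ of the nonempty hull. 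Since $\Phi$ is concave and positively structured, the full-hull vertices adjacent to $v$ can do no worse. Alternatively I would simply phrase the conclusion as attainment at a vertex of the hull with the empty matching tested separately, matching the statement. Extreme points are statistic vectors of matchings because the hull is of a finite set.

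\textbf{Step 4: dimension count and oracle.} The coordinates are $1+3+3+1+9=17$, which gives dimension at most $17$. A linear functional applied to the statistics is a sum over pairs of edge weights $w_{ij}$. Maximizing it over partial matchings (of fixed size, or free size for the penalty) is a partial assignment problem. Finally, the vertex matching is evaluated by subset centering plus Horn's proper Procrustes fit \cite{horn1987}, which is exactly the maximization over $R$ in Step 1.
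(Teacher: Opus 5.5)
Your Steps 1, 2 and 4 are correct. Your overall strategy is the paper's: the pose-minimized cost is concave in the statistics, so its minimum over the hull sits at a vertex. The difference lies in how concavity is obtained, and that difference is exactly what produces the hole you flag in Step~3.

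The paper never eliminates $t$. At a fixed pose $(R,t)$ the cost $S_M+k\|t\|^2-2\langle R,C_M\rangle_F-2t^\top\boldsymbol s_{X,M}+2t^\top R\boldsymbol s_{Y,M}+\lambda(n+m-2k)$ is affine in the whole statistic vector. Its infimum over poses is therefore concave on all of $\mathbb R^{17}$, with no case split. In the penalized model it is finite on the hull, and equals $\lambda(n+m)$ at the origin, which is the only hull point with $k=0$.

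Your explicit minimization over $t$ introduces $1/k$, so your perspective argument only covers $\{k>0\}$. Step~3 does not close this gap. The claim that ``the full-hull vertices adjacent to $v$ can do no worse'' is asserted rather than proved. Your fallback (the empty matching or a vertex of the hull of nonempty statistics) does not imply the statement, because a vertex of that smaller hull can lie on a segment from the origin. For instance, take $x_1=x_2$ and $y_1=y_2$, and let $w\ne0$ be the common statistic of the single edges. Each perfect matching then has statistic $2w$, so $w$ is a vertex of the nonempty hull $[w,2w]$ but the midpoint of the full hull $[0,2w]$.

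The repair is short, and is presumably what you meant by ``positively structured''. For each $R$, your bracket $2\langle R,C\rangle_F+\|s_X-Rs_Y\|^2/k$ is linear plus a perspective, so it is positively homogeneous of degree one. Hence so is its maximum $g$ over $R$. Thus $S-2\lambda k-g$ is concave and positively homogeneous on $\{k>0\}$. Setting $\Phi(0)=\lambda(n+m)$ therefore keeps $\Phi$ concave on the convex set $\{k>0\}\cup\{0\}$, which contains the full hull. Write any hull point as $v=\alpha\cdot0+\sum_l\beta_lv_l$ over the nonzero vertices of the full hull. Then $\Phi(v)\ge\alpha\Phi(0)+\sum_l\beta_l\Phi(v_l)$, so some vertex of the full hull is optimal.

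With this added, your route is a valid alternative to the paper's. It costs the extra homogeneity check. In return it displays directly the centered covariance $C_M-\boldsymbol s_{X,M}\boldsymbol s_{Y,M}^\top/k$ that the statement's evaluation of vertex matchings by subset centering and Procrustes alignment uses.
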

\begin{proof}[Proof sketch]
At fixed $R,t$, the objective is affine in these statistics. Its infimum over poses is concave and attains its minimum at a vertex of the hull; subset centering gives each nonempty matching a pose optimum that is attained (\Cref{app:partial-moments}).
\end{proof}
The complexity of reconstructing this hull is open, and we do not use it algorithmically: our partial quaternion search fixes the translation. In floating point, the assignments and eigenvalues of all these searches give numerical gaps, which are not the exact certificates of \Cref{prop:optimality}.

\subsection{Reusing Assignment Certificates}\label{sec:bound-cost}

We apply a cheap valid bound first and reserve the exterior queries for the cells that remain unresolved. Tightening a cell updates its priority in the queue without subdividing it, and neighboring cells reuse geometry, witnesses and support values. Two devices reduce the number of assignment problems.

The first is a \emph{matching-difference bound}, which replaces the separate assignments at the query matrices by one assignment. For the edge scores $A_{ij}(Q)=\langle K(a_i b_j^\top),Q\rangle_F$ and a known permutation $\tau$, we take for each edge its largest advantage over the edge that $\tau$ assigns to the same row,
\begin{equation}\label{eq:lifted-difference}
 d_{ij}^\tau=\max_{a\le b}\{A_{ij}(B_{ab})-A_{i,\tau(i)}(B_{ab})\},
\end{equation}
and form $\eta_\tau=\max_\sigma\sum_i d_{i,\sigma(i)}^\tau$ by one assignment. The second device replaces a support query by a check against dual prices. For any column prices $g=(g_j)_{j=1}^n\in\mathbb R^n$, we set
\begin{equation}\label{eq:reused-dual}
\begin{aligned}
 f_i&=\max_j\{A_{ij}(Q)-g_j\},\\
 H_{\rm dual}(Q;g)&=\sum_i f_i+\sum_jg_j,
\end{aligned}
\end{equation}
so that $f_i+g_j\ge A_{ij}(Q)$. Prices obtained at one query can thus be reused at another, where only the row maxima must be recomputed.

\begin{lemma}[Cheaper bounds from matching differences and duals]\label{lem:spatial-bound-reuse}
For a feasible square-assignment witness $\tau$, the quantities in
\eqref{eq:lifted-difference} satisfy $\eta_\tau\ge0$ and
\begin{equation}\label{eq:difference-bound-guarantee}
 H(Q)\le\langle K(C_\tau),Q\rangle_F+\eta_\tau\quad\text{for }Q\in\conv\{B_{ab}\}.
\end{equation}
For $N_{\rm query}$ vertices, construction and one assignment cost
$\mathcal O(N_{\rm query}n^2+n^3)$ operations and $\mathcal O(n^2)$ streamed storage.
Any prices in \eqref{eq:reused-dual} also give a valid support upper bound, $H(Q)\le H_{\rm dual}(Q;g)$;
a feasible matching attaining it is optimal. Row-complete rectangular matching
requires nonnegative column prices, and equality requires zero prices on unused
columns.
\end{lemma}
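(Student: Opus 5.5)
The plan is to treat the two claims separately: the matching-difference bound, with its cost, and then the dual-price bound.

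\emph{Difference bound.} First, $\eta_\tau\ge0$. Taking $\sigma=\tau$ in the assignment gives $\sum_i d^\tau_{i,\tau(i)}$. Each term is a maximum of zeros, hence zero, so $\eta_\tau\ge\sum_i d^\tau_{i,\tau(i)}=0$.

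For \eqref{eq:difference-bound-guarantee}, fix $Q=\sum_{a\le b}\lambda_{ab}B_{ab}$ with convex weights $\lambda_{ab}$, and fix any $\sigma$. The map $Q\mapsto A_{ij}(Q)$ is linear, because $K$ is linear and the Frobenius product is bilinear. Therefore
\[
A_{i\sigma(i)}(Q)-A_{i\tau(i)}(Q)=\sum_{a\le b}\lambda_{ab}\big(A_{i\sigma(i)}(B_{ab})-A_{i\tau(i)}(B_{ab})\big)\le d^\tau_{i\sigma(i)}.
\]
Summing over $i$ gives
\[
\langle K(C_\sigma),Q\rangle_F-\langle K(C_\tau),Q\rangle_F\le\sum_i d^\tau_{i\sigma(i)}\le\eta_\tau .
\]
Here I use $\langle K(C_\sigma),Q\rangle_F=\sum_i A_{i\sigma(i)}(Q)$, which is linearity of $K$ applied to $C_\sigma=\sum_i a_ib_{\sigma(i)}^\top$. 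Maximizing over $\sigma$ gives the bound. The key point is that the convex weights are shared by all rows, so the per-row maximum over the query vertices dominates the pointwise difference.

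\emph{Cost.} For each query matrix, form $D(B_{ab})=K^*(B_{ab})$ in constant time. Compute the $n\times n$ scores $a_i^\top D b_j$ in $\mathcal O(n^2)$. Then fold them into a running maximum of $A_{ij}-A_{i\tau(i)}$. This costs $\mathcal O(N_{\rm query}n^2)$ in total and keeps only the current score matrix and the running $d^\tau$, which is $\mathcal O(n^2)$ storage. One Hungarian solve then adds $\mathcal O(n^3)$.

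\emph{Dual prices.} By construction $f_i\ge A_{ij}(Q)-g_j$ for all $j$, so $(f,g)$ is dual feasible. For any $\sigma$,
\[
\sum_i A_{i\sigma(i)}(Q)\le\sum_i f_i+\sum_i g_{\sigma(i)}=\sum_i f_i+\sum_j g_j,
\]
because $\sigma$ is a bijection. This proves $H(Q)\le H_{\rm dual}(Q;g)$. A feasible matching attaining this value has score equal to an upper bound on all scores, so it is optimal.

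For a row-complete rectangular matching, with $n$ rows and $m\ge n$ columns, the image of $\sigma$ is an injective subset of columns. We have $\sum_i g_{\sigma(i)}\le\sum_j g_j$ exactly when the unused prices are nonnegative, which gives the stated requirement. Equality in that step requires the unused prices to vanish, in addition to tightness $f_i+g_{\sigma(i)}=A_{i\sigma(i)}$.

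None of these steps is hard. The only care points are the shared convex weights in the difference bound and the rectangular bookkeeping of unused columns.
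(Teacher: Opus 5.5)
Your proof is correct and follows essentially the same route as the paper. Both use linearity of $A_{ij}$ over the convex enclosure together with $d^\tau_{i,\tau(i)}=0$, then edgewise maxima accumulated one query at a time followed by one assignment for the cost and storage, and finally weak duality with nonnegative column prices (vanishing on unused columns for equality). One small wording fix: for a fixed $\sigma$, $\sum_i g_{\sigma(i)}\le\sum_j g_j$ holds iff the \emph{sum} of the unused prices is nonnegative, so nonnegative prices are sufficient rather than ``exactly'' equivalent; likewise equality forces each unused price to vanish only once nonnegativity is assumed.
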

The difference bound follows from the linearity of the scores and the convex enclosure, and the dual bound from weak duality (\Cref{app:spatial-bound-reuse}). By the first, \eqref{eq:face-bound} holds with $\eta_\tau$ in place of $\delta_\tau$, at a cost that compares with $\mathcal O(N_{\rm query}n^3)$ for $N_{\rm query}$ assignment queries. The difference bound can be weaker than the bound from the full set of queries, but when $\eta_\tau=0$ it closes a region on which the witness remains optimal. Substituting dual upper bounds into \eqref{eq:face-slack} likewise preserves \eqref{eq:face-bound}. A dual bound that is attained by a feasible matching recovers the exact support value without another assignment solve; otherwise it remains a valid but conservative upper bound that later queries can tighten. The remaining budget and the measured costs can guide which bound to evaluate. The joint gravity search admits an analogous bound from minimum differences, after the common quadratic term in the translation is subtracted (\Cref{app:gravity-difference}).

\subsection{Guarantees of the Search}\label{sec:spatial-guarantees}

The regional bounds yield a global interval only while the queue covers every solution that can improve the incumbent. The following theorem states this invariant, which also covers initial cells that are still pending and evaluations that are interrupted, and the refinement conditions needed for termination. If a budget is exhausted first, the remaining gap may exceed $\varepsilon$.

\begin{theorem}[Anytime interval and positive-tolerance termination]\label{thm:spatial-search}
Suppose the root domains cover every solution improving the initial feasible
incumbent, and every root is retained until validly pruned. Give each live region
$\Omega$ a valid residual lower bound
$L_\Omega$, replace it only by a complete finite child cover, and discard it
only when $L_\Omega\ge U$. Then every interruption preserves
\begin{equation}\label{eq:anytime-interval}
 \min\{U,\min_{\Omega\text{ live}}L_\Omega\}\le E^*\le U,
\end{equation}
with the empty minimum interpreted as $+\infty$. Without an external budget,
the specified conditional-yaw, joint-gravity, and quaternion searches terminate
at every $\varepsilon>0$, provided tightening eventually yields to subdivision
and every active coordinate shrinks along refined paths.
\end{theorem}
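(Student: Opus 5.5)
The proof has two parts: an invariant argument for the interval \eqref{eq:anytime-interval}, and a finiteness argument for termination.

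\textbf{Invariant.} The upper inequality $E^*\le U$ is immediate, because $U$ is always the cost of a stored feasible solution. The search changes $U$ only by replacing it with a smaller feasible cost, and an interrupted evaluation commits only completed updates.

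For the lower inequality, I would maintain by induction over the search steps the statement: every optimal solution either has cost at least $U$, or lies in some live region. Initially this holds because the root domains cover every improving solution and pending roots count as live; \Cref{lem:joint-cover} supplies this cover for the joint search. The invariant is preserved by each elementary operation.
\begin{itemize}
\item Lowering $U$ only shrinks the set of improving solutions.
\item Replacing a region by a complete finite child cover keeps every point covered.
\item Tightening $L_\Omega$ with a valid bound, or leaving it unchanged after an interrupted evaluation, changes no coverage.
\item Discarding a region with $L_\Omega\ge U$ removes only solutions of cost at least $U$.
\end{itemize}
Now take an optimum. If it is covered by a live $\Omega$, then the validity of $L_\Omega$ gives $L_\Omega\le E^*$. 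Otherwise $E^*\ge U$, and hence $E^*=U$. Either way $\min\{U,\min L_\Omega\}\le E^*$, and the empty minimum equal to $+\infty$ is handled by the second case.

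\textbf{Termination.} I would argue by contradiction. Suppose the search runs forever with gap greater than $\varepsilon$. Tightening eventually yields to subdivision, so infinitely many subdivisions occur. The children per subdivision are finite, and there are finitely many roots: the 3 arcs, the $nm$ anchors times cardinality branches, and the 32 quaternion boxes. König's lemma then gives an infinite nested path of regions, all selected as least-bound regions while the gap exceeds $\varepsilon$. Along this path every active coordinate shrinks, by bisection of yaw half-width, box halfwidths and the longest raw quaternion coordinate.

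The key step is to show that on sufficiently small regions the gap $U-L_\Omega$ falls below $\varepsilon$, since each evaluation also updates $U$ through the fits of the returned matchings. The three cases use different earlier results.
\begin{itemize}
\item \emph{Conditional yaw.} \Cref{prop:gravity-accuracy,prop:bound-order}(i) show that the evaluated bound is within $\kappa_\theta h_\theta^2$ of $\min_A F$. \textsc{YawMin} of the returned matchings attains at least that minimum, up to the same error.
\item \emph{Joint gravity.} I would use the common-translation estimate of \Cref{lem:common-translation}, which gives $k\|\boldsymbol h_t\|^2$ plus the yaw term, as in the proof of \Cref{prop:joint-gravity-accuracy}. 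With the witness bound the gap is at most $\Delta_W$, and I would show that $\Delta_W\to0$ by continuity as the 24 vertices collapse toward a physical pose.
\item \emph{Quaternions.} I would use \Cref{lem:quaternion-enclosure} together with the convergence of the exterior lifts $B_{ab}$ to a common rank-one lift $qq^\top$ as the box shrinks. Since $H$ is Lipschitz (a maximum of finitely many linear functions), $\max_{a\le b}H(B_{ab})-H(q_aq_a^\top)\to0$. The matching returned at a corner lift is feasible with score at least $H(q_aq_a^\top)$ after the Horn fit, so $U$ is within that difference of the bound.
\end{itemize}
Once a path region has $L_\Omega> U-\varepsilon$, it is the least-bound region and also has $L\ge U-\varepsilon$, so the search stops. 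This contradicts the assumption that it runs forever.

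\textbf{Main obstacle.} I expect the hardest step to be the quaternion case: establishing uniform convergence of the enclosure width without a quantitative rate. I would first bound $\|B_{ab}-q_aq_a^\top\|_F$ by a constant times the box diameter, using $\tilde q_a^\top\tilde q_b\ge1$ within each chart box so that the denominators stay bounded below. A second subtlety is the inherited bounds: children carry parent bounds, so tightening needs an evaluation rule under which each child along the path is eventually evaluated. The hypothesis that tightening yields to subdivision is what I would invoke there.
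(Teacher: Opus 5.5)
Your proposal is correct and follows essentially the same route as the paper. The interval comes from the live-cover invariant, and termination comes from finitely many roots with finite branching, which force an infinite refined path on which the local gaps vanish, so that a least-bound cell eventually gives $U-L\le\varepsilon$.

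The differences are only in the local convergence details:
\begin{itemize}
\item For joint gravity, the specified search evaluates the witness bound, not the common-translation bound. Your claim $\Delta_W\to0$ therefore relies on $W$ being a matching returned at one of the 24 vertices. The paper instead shows $E_W^{\min}-\Delta_W\to F(\theta_0,e_0)$ for any witness.
\item For quaternions, you fit the corner lifts $B_{aa}$, whereas the paper uses the assignment at the normalized center. Both work, since the algorithm solves and fits both.
\end{itemize}
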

\begin{proof}[Proof sketch]
Feasible updates only decrease $U$, so pruned regions remain excluded; complete child covers and inherited bounds retain every other candidate. On the finite compact root cover, the bounds converge to the physical values as the cells shrink. An infinite refinement would contain an infinite path along which the gap tends to zero, which contradicts the stopping rule for a least-bound cell. \Cref{app:search-invariants} proves this argument, using the local convergence results of \Cref{app:gravity-accuracy,app:joint-convergence,app:quaternion-cover}; \Cref{app:spatial-queue} handles interruptions.
\end{proof}

%% file: sections/gravity_framework.tex
\begin{figure*}[t]
\centering
\includegraphics[width=\textwidth]{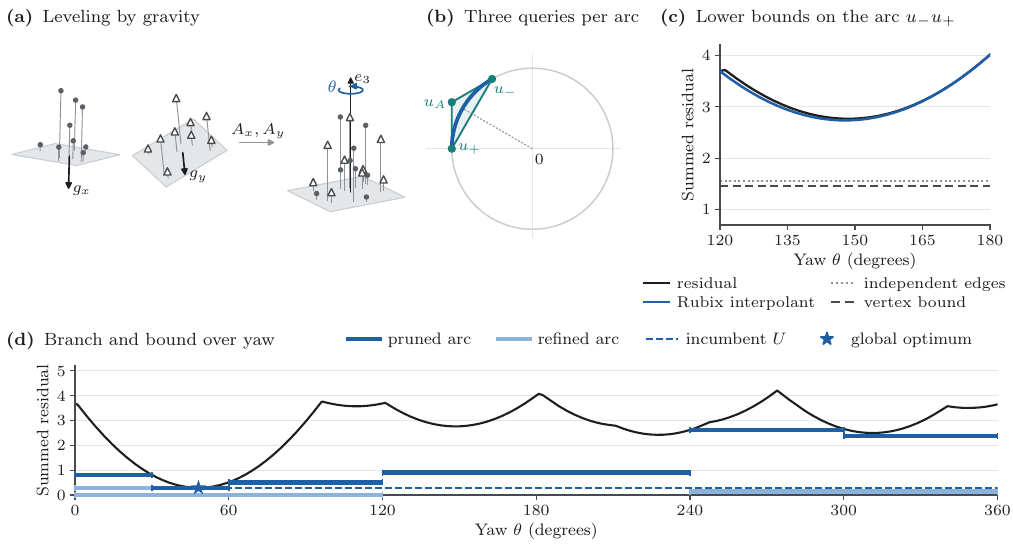}
\caption{\textbf{Gravity reduces 3D alignment to a bounded yaw search.} A nine-point complete-matching example. (a)~Gravity directions $g_x,g_y$ give leveling rotations $A_x,A_y$, leaving yaw $\theta$ about $e_3$; heights (stems) remain in every edge cost. (b)~Rubix bounds the blue arc from $u_-$ to $u_+$ on $u(\theta)=(\cos\theta,\sin\theta)$ by querying its endpoints and tangent intersection $u_A$. (c)~The affine interpolant stays below the residual, nearly reaching its minimum ($2.734$ versus $2.765$); the minimum vertex value ($1.458$) and independent-edge bound ($1.547$) are looser. (d)~\Cref{alg:gravity-yaw} prunes arcs whose bound reaches incumbent $U$ and refines the rest. Nine evaluations close the gap at $48.2^\circ$. Residuals are summed squared errors. Partial matching fixes translation or uses \Cref{sec:joint-gravity}.}\label{fig:gravity-framework}
\end{figure*}

%% file: sections/experiments.tex
\section{Experiments}\label{sec:experiments}

We evaluate planar alignment, shape retrieval and material classification, alignment under partial overlap, and registration search (\Cref{sec:exp-planar,sec:where,sec:exp-unbalanced,sec:exp-registration}).

\paragraph{Experimental design.} Solver comparisons fix the inputs, objective and evaluation; retrieval and classification fix the procedure and change only the distance. Registration comparisons change only the bound or matching subsolver within a fixed search procedure, the \emph{host}. We report adaptations to the baselines' recommended settings and sweep parameters that trade accuracy for cost.

Timings include initialization, bound evaluation and feasible updates; paired comparisons use identical inputs and one processor core. Statistics include every test pair, with paired McNemar and Wilcoxon tests for retrieval and classification. Floating-point gaps are numerical; exact certificates concern rational planar inputs. \Cref{app:experiments,app:unbalanced,app:eth-protocol,app:registration-experiments} give the protocols, \Cref{app:computing} describes the hardware, and \Cref{app:additional} reports contextual results. We will release all code and dependencies.

\input{sections/planar_experiments.tex}
\input{sections/unbalanced_experiments.tex}
\input{sections/native_with_without.tex}
\input{sections/native_planar.tex}

\input{sections/eth_experiments.tex}

\input{sections/registration_experiments.tex}

%% file: sections/planar_experiments.tex
\subsection{Global Planar Alignment}\label{sec:exp-planar}

Double-precision Rubix computes the numerical \emph{reference value} $v_{\rm ref}$ with SciPy's assignment solver~\cite{crouse2016,virtanen2020}. Costs are squared PW distances. A \emph{numerical failure} occurs when a cost exceeds $v_{\rm ref}$ by more than $10^{-9}\max(v_{\rm ref},10^{-12})$; more than $1\%$ \emph{excess} means a cost above $1.01v_{\rm ref}$. Exact-arithmetic certificates are tested separately (\Cref{app:planar-conventions,app:optimality}).

All methods use identical centered inputs, eliminating translation exactly, and optimize over $G=\mathrm{SO}(2)$; additional synthetic tests permit $G=\mathrm{O}(2)$. AM starts from covariance eigenvectors (PCA), uniform random rotations or a GW matching~\cite{alvarezmelis2018gw}. The grid baseline solves assignments at equally spaced angles and optimizes each returned matching's rotation. All minimize the same PW objective (\Cref{app:solvers}).

\paragraph{Polygon size.} Tight instances require exact arithmetic: at $n=20$, Rubix recovers all $380$ vertices with $760=2V$ assignments, against $56$ vertices in double precision (\Cref{app:optimality}). On Gaussian sets, MNIST digits and noisy copies, the median grows roughly as $n^{1.24}$, reaching $1{,}350$ at $n=200$ against the bound of $39{,}800$ (\Cref{fig:scaling}). Full reconstruction uses exactly $2V$ assignments in all 135 tests. Pruned Rubix needs a median of 12--36 assignments for $n=10$--$400$ and agrees with full reconstruction wherever both run. At $n=400$, it takes $1.50$\,s, against $13.6$\,s for a 360-angle grid and $0.51$\,s for AM (PCA), which has more than $1\%$ excess on 43 of the study's 144 pairs (\Cref{app:geometry}).

\begin{figure}[t]
\centering
\includegraphics[width=\linewidth]{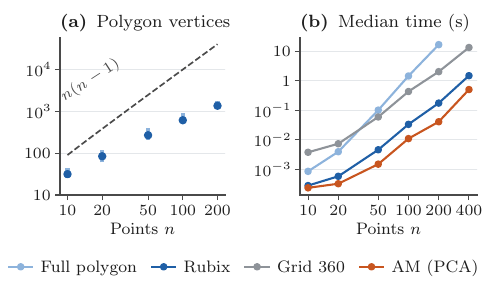}
\caption{\textbf{Observed polygon sizes remain below the worst-case bound.} Gaussian, MNIST-digit and noisy-copy pairs, timed serially on one thread. (a)~Median vertex count (dots) and range across the three families, with the sharp bound $n(n-1)$ (dashed). (b)~Median time per pair for full polygon reconstruction, the pruned Rubix search, a 360-angle grid and AM (PCA). Full reconstruction stops at $n=200$.}\label{fig:scaling}
\end{figure}

\paragraph{Accuracy and cost.}\label{sec:exp-fail} On 1,000 pairs each of MPEG-7 silhouettes ($n=64$), Gaussian sets ($n=50$) and noisy lattice neighborhoods ($n=24$), AM (PCA) has more than $1\%$ excess on 33--77\% of pairs (\Cref{tab:failures-full}). Ten random starts use five to nine times as many assignments yet retain 35--48.5\% excess rates; GW initialization fails more often than PCA. A 360-angle grid stays within $1\%$ throughout using about 12 times Rubix's assignments, but up to $5\%$ of its costs exceed the numerical threshold. Lipschitz branch-and-bound also needs far more assignments (\Cref{app:solvers}).

Among tested grids, the smallest with at most $1\%$ numerical failures needs 72--720 angles and 1.8--25 times pruned Rubix's runtime on the first 200 pairs per family. These grids are selected on the timed pairs themselves. Matching every MPEG-7 reference requires 1,440 angles and 50 times Rubix's runtime; 100-start AM takes 10--24 times as long and still fails on 2.5--11.5\% (\Cref{tab:tta,fig:where}). Single-run AM remains cheaper: mean times on MPEG-7 are 3.3\,ms for AM (PCA) and 11.8\,ms for Rubix (\Cref{app:frontier}).

\input{sections/published_planar.tex}

\paragraph{Globally optimal solvers.} APM~\cite{lian2017} subdivides rectangles of matching statistics; RPM-PA~\cite{lian2021polyhedral} expands their convex hull to a prescribed tolerance. Both eliminate the transformation, but neither bounds the assignment count. For complete planar matching they search the correlation plane, where Rubix reconstructs the polygon in $2V$ assignments with $V\le n(n-1)$ (\Cref{thm:main,prop:es}).

We run the authors' APM code and our paper-based implementation of RPM-PA; the latter shares SciPy's assignment solver with Python Rubix. All also optimize translation and nonnegative scale (\Cref{app:published-planar}). APM matches Rubix's reference with 551--562 median assignments versus 25--31. RPM-PA has 45.5--66.0\% excess rates at its published tolerance $\varepsilon_{\mathrm{RPM}}=0.3$ and 4.5--13.5\% at $0.01$. Zero tolerance removes excess but needs 50--60 median assignments, takes 9--37 times as long as Rubix and exceeds 120\,s on 70 of 600 pairs (\Cref{tab:global-solvers}).

\input{tables/global_solvers.tex}

\subsection{Retrieval and Material Classification}\label{sec:where}

We fix each retrieval or classification procedure and vary only the distance: PW computed by Rubix or AM, or an alignment-free alternative. \Cref{app:where} gives settings and diagnoses AM (PCA) failures.

\paragraph{Shape retrieval.} MPEG-7~\cite{latecki2000} has 1,400 silhouettes in 70 classes of 20. All methods use the same 64 boundary samples, centered and scaled to unit root-mean-square radius, for all 979,300 pairwise $\mathrm{SO}(2)$ distances. Bullseye measures how many of the 20 same-class shapes occur among the nearest 40, including the query; leave-one-out 1-NN and mean average precision (mAP) exclude it.

AM (PCA) has more than $1\%$ excess on $37\%$ of pairs, with a maximum cost ratio of 51. Rubix improves all three retrieval measures; 246 queries gain same-class neighbors and 89 lose some (\Cref{tab:mpeg7}). Across 14 settings, the alignment-free SE(2)-invariant distance SEINT~\cite{lin2026seint} reaches at most $41.6\%$ bullseye and $67.4\%$ 1-NN, versus $67.4\%$ and $92.5\%$ for Rubix (McNemar $p<10^{-79}$). SEINT is faster, at $0.34$ versus $5.35$\,ms per distance, but no setting matches Rubix's accuracy (\Cref{app:seint}).

\begin{table}[t]
\caption{\textbf{Retrieval on all 1,400 MPEG-7 silhouettes} ($n=64$), measured by bullseye score, leave-one-out 1-NN accuracy and mean average precision (\%). The first three rows are PW distances computed by the named solver. GW is the Gromov--Wasserstein distance, SEINT~\cite{lin2026seint} is an SE(2)-invariant transport metric, shown in its original classification setting and in the best of its 14 tested settings, and $W_2$ is the 2-Wasserstein distance without rotation. Bold marks the highest score in each column.}\label{tab:mpeg7}
\begin{center}
\input{tables/mpeg7.tex}
\end{center}
\end{table}

\paragraph{Local structure of 2D materials.} We classify atomic neighborhoods up to rotation~\cite{larsen2016ptm,britton2024ptm} in square, triangular, honeycomb and kagome lattices with unit nearest-neighbor spacing and Gaussian displacements of standard deviation $\sigma_{\mathrm{noise}}$. Each neighborhood contains a site and its 23 nearest neighbors, is centered on the site, normalized and randomly rotated. Five independent draws per noise level each supply ten labels and 50 tests per class for 1-NN classification.

AM (PCA) has more than $1\%$ excess in 74--80\% of comparisons. At $\sigma_{\mathrm{noise}}=0.12$, Rubix correctly classifies 889 of 1,000 tests, versus 770 for AM (PCA) and 835 for ten-start AM (\Cref{tab:materials}). A 36-angle grid ties Rubix at noise $0.08$ and $0.12$ and correctly classifies one more at $0.16$: dense angular search also recovers accuracy lost to initialization. Rubix's advantage over AM persists across neighborhood sizes and label counts (\Cref{tab:robust}).

On a separate 600-test comparison per noise level, alignment-free methods lead near ideal lattices (\Cref{tab:baselines}). At noise $0.08$, a bond-orientational descriptor reaches $99.2\%$ versus Rubix's $92.7\%$; at $0.04$, SEINT reaches $100.0\%$ versus $97.8\%$ (McNemar $p<0.001$). SEINT's $93.7\%$ at $0.08$ does not differ significantly from Rubix ($p=0.54$).

At noise $0.16$, Rubix instead reaches $82.3\%$, versus $57.8\%$ for the descriptor, $53.5\%$ for GW~\cite{alvarezmelis2018gw} and $44.8\%$ for SEINT. No SEINT setting matches Rubix at noise $0.12$, $0.16$ or $0.20$, although it costs $0.16$ versus $1.0$\,ms per distance (\Cref{app:seint}). Rubix also outperforms original Fiedler--W~\cite{adamo2025} and our Ping-Pong reproduction~\cite{even2024} at all five noise levels, with median paired speedups of 5.87--6.00 and 27.0--27.8 (\Cref{tab:published-materials}).

\input{tables/published_materials.tex}

\paragraph{Polycrystals and clustering.} Three synthetic polycrystals each contain eight randomly oriented grains of the four lattices. On their 3,267 shared interior test atoms, Rubix reduces AM's 797 errors to 467 ($41.4\%$); SEINT makes 1,040 errors at its best setting and 1,622 at its original setting (\Cref{fig:poly}).

Clustering 200 neighborhoods into four groups by k-medoids gives Rubix a higher adjusted Rand index (ARI) than AM at every noise level; average-linkage gains depend on the agreement measure (\Cref{tab:clustering,app:polycrystals-clustering}). Best-setting SEINT leads k-medoids at noise $0.08$ (ARI $0.70$ versus $0.55$), but trails Rubix at $0.12$ and $0.16$ and under average linkage throughout.

%% file: sections/published_planar.tex
\paragraph{Published solvers.}
On 200 pairs per family, original Fiedler--W~\cite{adamo2025} has 63.0--87.5\% excess rates and takes 1.54--6.02 times as long as Rubix. Our reproduction is faster on MPEG-7 and Gaussian pairs but has similar errors; both implementations agree on 592/600 objectives and matchings. Our Ping-Pong reproduction~\cite{even2024} and stochastic Wasserstein--Procrustes~\cite{grave2019} have higher error rates and take over 20 times as long (\Cref{tab:published-planar}).

For the common $\mathrm{SO}(2)$ objective, we restrict Fiedler--W to proper rotations, initialize Ping-Pong with Frank--Wolfe optimization on Gram matrices, and use stochastic Wasserstein--Procrustes' assignment variant, original schedule and capped batch size (\Cref{app:published-planar}).

\input{tables/published_planar.tex}

%% file: tables/published_planar.tex
\begin{table*}[t]
\centering
\caption{\textbf{Published planar solvers on 600 fixed pairs.} Each family has 200 pairs. Excess is the percentage of costs more than $1\%$ above Rubix's double-precision reference. Ratios are paired median solver/Rubix times including initialization; bold marks minimum excess and ratios above one. Both Fiedler--W implementations use identical inputs and processor, timed separately against Rubix. Every output is valid; \Cref{app:published-planar} gives adaptations.}\label{tab:published-planar}
\begin{tabular}{@{}lrrrrrr@{}}
\toprule
& \multicolumn{2}{c}{MPEG-7 ($n=64$)} & \multicolumn{2}{c}{Gaussian ($n=50$)} & \multicolumn{2}{c}{Lattice ($n=24$)}\\
\cmidrule(lr){2-3}\cmidrule(lr){4-5}\cmidrule(l){6-7}
Method & Excess (\%) & \shortstack{Time\\ratio} & Excess (\%) & \shortstack{Time\\ratio} & Excess (\%) & \shortstack{Time\\ratio}\\
\midrule
AM (PCA)~\cite{adamo2025} & 29.5 & $0.33\times$ & 58.0 & $0.32\times$ & 81.0 & $0.48\times$\\
Fiedler--W, original~\cite{adamo2025} & 63.0 & $\mathbf{1.54}\times$ & 81.5 & $\mathbf{2.04}\times$ & 87.5 & $\mathbf{6.02}\times$\\
Fiedler--W, reproduced~\cite{adamo2025} & 63.0 & $0.75\times$ & 81.5 & $0.85\times$ & 88.0 & $\mathbf{1.96}\times$\\
Ping-Pong, reproduced~\cite{even2024} & 77.0 & $\mathbf{33.6}\times$ & 89.5 & $\mathbf{27.0}\times$ & 88.5 & $\mathbf{22.3}\times$\\
Stochastic WP, LAP variant~\cite{grave2019} & 79.5 & $\mathbf{289}\times$ & 91.0 & $\mathbf{254}\times$ & 94.0 & $\mathbf{301}\times$\\
Rubix & \textbf{0.0} & $1.00\times$ & \textbf{0.0} & $1.00\times$ & \textbf{0.0} & $1.00\times$\\
\bottomrule
\end{tabular}
\end{table*}

%% file: tables/global_solvers.tex
\begin{table*}[t]
\centering
\caption{\textbf{Global solvers on 600 planar pairs.} Complete similarity matching on \Cref{tab:published-planar}'s inputs. Excess is the percentage of costs more than $1\%$ above the reference; assignments and milliseconds are single-core medians. RPM-PA is our implementation, sharing Python and SciPy assignment with Rubix; original APM runs under Octave with C++ assignment (\Cref{app:published-planar}). RPM-PA's published tolerance is $\varepsilon_{\mathrm{RPM}}=0.3$; zero tolerance exceeds 120\,s on 70 pairs. Bold marks the fewest assignments and lowest time among zero-excess methods.}\label{tab:global-solvers}
\begin{tabular}{@{}lrrrrrrrrr@{}}
\toprule
& \multicolumn{3}{c}{MPEG-7 ($n=64$)} & \multicolumn{3}{c}{Gaussian ($n=50$)} & \multicolumn{3}{c}{Lattice ($n=24$)}\\
\cmidrule(lr){2-4}\cmidrule(lr){5-7}\cmidrule(l){8-10}
Method & Excess & LAPs & ms & Excess & LAPs & ms & Excess & LAPs & ms\\
\midrule
APM~\cite{lian2017} & 0.0 & 551 & 245.1 & 0.0 & 562 & 179.2 & 0.0 & 556 & 133.8\\
RPM-PA~\cite{lian2021polyhedral}, $\varepsilon_{\mathrm{RPM}}=0.3$ & 45.5 & 10 & 10.6 & 66.0 & 10 & 9.3 & 53.0 & 10 & 8.2\\
RPM-PA~\cite{lian2021polyhedral}, $\varepsilon_{\mathrm{RPM}}=0.01$ & 4.5 & 26 & 27.9 & 13.5 & 36 & 33.1 & 5.0 & 38 & 32.1\\
RPM-PA~\cite{lian2021polyhedral}, $\varepsilon_{\mathrm{RPM}}=0$ & 0.0 & 50 & 59.7 & 0.0 & 60 & 57.8 & 0.0 & 54 & 46.5\\
Rubix & 0.0 & \textbf{25} & \textbf{5.8} & 0.0 & \textbf{30} & \textbf{4.2} & 0.0 & \textbf{31} & \textbf{1.2}\\
\bottomrule
\end{tabular}
\end{table*}

%% file: tables/mpeg7.tex
\begin{tabular}{@{}lrrr@{}}
\toprule
Distance & \shortstack{Bullseye\\(\%)} & \shortstack{1-NN\\(\%)} & \shortstack{mAP\\(\%)}\\
\midrule
Rubix & \textbf{67.4} & \textbf{92.5} & \textbf{59.2}\\
AM (PCA)~\cite{adamo2025} & 65.5 & 91.6 & 57.1\\
AM (GW)~\cite{alvarezmelis2018gw} & 56.4 & 88.6 & 48.1\\
GW~\cite{alvarezmelis2018gw,flamary2021} & 56.3 & 88.6 & 47.3\\
SEINT~\cite{lin2026seint} & 36.2 & 63.1 & 24.1\\
SEINT, best setting~\cite{lin2026seint} & 41.6 & 67.4 & 29.4\\
$W_2$ (no rotation) & 21.0 & 52.1 & 11.2\\
\bottomrule
\end{tabular}

%% file: tables/published_materials.tex
\begin{table*}[t]
\centering
\caption{\textbf{Published solvers as distance subroutines for lattice classification.} Accuracy is the mean $\pm$ sample standard deviation (\%) across three draws, which total 600 tests per noise level. Bold marks a higher mean accuracy than both other solvers. These draws are shared with the descriptor and GW comparisons and differ from the five draws of \Cref{tab:materials} (\Cref{app:published-planar}).}\label{tab:published-materials}
\begin{tabular}{@{}lrrrrr@{}}
\toprule
Method & $\sigma_{\mathrm{noise}}=0.04$ & $0.08$ & $0.12$ & $0.16$ & $0.20$\\
\midrule
Fiedler--W, original~\cite{adamo2025} & $85.0\pm2.2$ & $77.7\pm3.6$ & $67.0\pm2.5$ & $55.3\pm5.0$ & $41.5\pm3.5$\\
Ping-Pong, reproduced~\cite{even2024} & $85.0\pm6.4$ & $77.7\pm4.5$ & $66.8\pm5.0$ & $57.3\pm3.5$ & $42.5\pm4.8$\\
Rubix & $\mathbf{97.8}\pm0.8$ & $\mathbf{92.7}\pm2.8$ & $\mathbf{88.7}\pm1.9$ & $\mathbf{82.3}\pm2.5$ & $\mathbf{64.0}\pm6.1$\\
\bottomrule
\end{tabular}
\end{table*}

%% file: sections/unbalanced_experiments.tex
\subsection{Unbalanced Alignment}\label{sec:exp-unbalanced}

A bijection forces unrelated points to match under partial overlap. The penalized model \eqref{eq:partial-spatial} charges $\lambda$ per unmatched point and globally optimizes matching and rotation. Its equivalence to rigid TV-unbalanced transport (\Cref{rem:unbalanced}) motivates comparisons with unbalanced GW (UGW)~\cite{sejourne2021}, fused UGW~\cite{thual2022fugw} and partial GW~\cite{chapel2020partial}. Synthetic and lattice tests supply the rotation center as an anchor; fused UGW receives distances to it as features. Every method uses validation-selected settings. Scans have no validation split, so GW pipelines receive their best test settings (\Cref{app:unbalanced}).

\paragraph{Penalized matching and global search.} On 40-point planar sets, Rubix recovers rotations within $5^\circ$ in at least 99\% of tests at every shared fraction, down to 30\% (\Cref{tab:unbalanced}). Our diagnostic calls a miss a \emph{solver failure} if a true-correspondence restart improves a GW objective and recovers the rotation, or Rubix closes AM's gap at the same penalty and recovers it. Other misses are \emph{objective failures}; for GW this classification does not establish global optimality (\Cref{app:unbalanced-diagnostic}). Balanced Rubix closes every gap but misses 403 of 800 rotations. Sixteen-start AM misses 29, including 26 solver failures; penalized Rubix misses five. GW methods show both categories (\Cref{tab:unbalanced-diagnosis}).

Selecting settings by validation correspondence F1 also makes Rubix the most accurate matcher, at 10--18\,ms per solve versus 32--88\,ms for UGW. At 160 points per set, Rubix recovers every rotation with a closed gap in median time 0.43\,s, versus 46\% recovery in 1.4\,s for UGW (\Cref{app:unbalanced-planar-results}).

\paragraph{Full 3D rotation.} Quaternion search (\Cref{sec:quaternion-bound}) closes all gaps and recovers all rotations in 200 tests with 24-point sets, at a median 0.5\,s versus 0.002--0.05\,s for GW methods. At 30\% overlap, fused UGW recovers 38\%, 60-start AM 34\%, and UGW and partial GW at most 2\% (\Cref{tab:unbalanced-3d,app:unbalanced-3d}).

\paragraph{Defective lattices.} Fixed-size neighborhoods fill vacancies with atoms from the next shell; a fixed radius instead preserves variable cardinality. At thermal noise $0.08$, penalized Rubix on radius-$2.6$ neighborhoods exceeds balanced Rubix on 24-atom neighborhoods by 18.5--30.5 percentage points as defect rates rise from $0.1$ to $0.3$ (McNemar $p\le1.5\times10^{-10}$; \Cref{tab:unbalanced}). It also exceeds UGW and fused UGW ($p\le2\times10^{-3}$), partial GW and SEINT~\cite{lin2026seint}. With no defects, Rubix and UGW reach 100\%; at every nonzero defect rate, 16-start AM ties Rubix. Here the improvement comes from variable-size neighborhoods and penalized matching, without further benefit from global search.

\paragraph{Real scans.} All identified ETH scan pairs of Li et al.~\cite{li2024screw} yield gravity-aligned problems on representative subsets, or \emph{coresets}, of 64 source and 80 target points. With reference translation, Rubix recovers 27 of 31 yaws within $5^\circ$ and closes every gap in median time 36\,ms; offsetting translation by 0.5\,m gives 25 recoveries. Best-setting fused UGW recovers 23 and 19, and 16-start AM 22 and 20 (\Cref{tab:unbalanced-eth,app:unbalanced-eth}). Gains over UGW and partial GW are significant ($p\le1.5\times10^{-4}$), but those over fused UGW and AM are not ($p\ge0.12$). With translations from Li et al.~\cite{li2024screw} or RAP~\cite{pan2026rap}, Rubix recovers 20 and 23 yaws, leading the matchers but trailing each front end's 25 native yaws.

Despite 44\% median voxel overlap, independently sampled coresets often occupy different surfaces: the reference penalized matching contains a median of eight source points. Three of Rubix's four misses have at most three such points (\Cref{fig:eth-overlap}). Among 17 pairs with fewer than ten, Rubix recovers 14 yaws, fused UGW 13, AM 10, partial GW 3 and UGW 1. Controlled synthetic overlap shows a similar decline for GW methods.

When translation is unknown, GLORES~\cite{glores2020} with Rubix bounds registers 190 of 212 laser-scan configurations, versus 169 for UGW plus trimmed ICP, 151 for partial GW plus trimmed ICP and 139 for ICP alone ($p\le6\times10^{-6}$; \Cref{tab:unbalanced-scans}). Native GLORES registers 189, so both global searches share the pose advantage. GW needs neither supplied translation nor a common frame; Rubix needs an anchor or a host that searches translation. \Cref{app:unbalanced-eth} also evaluates our Python port of Li et al.'s MATLAB solver~\cite{li2023gravity}, which receives no translation.

\begin{figure*}[t]
\centering
\includegraphics[width=\textwidth]{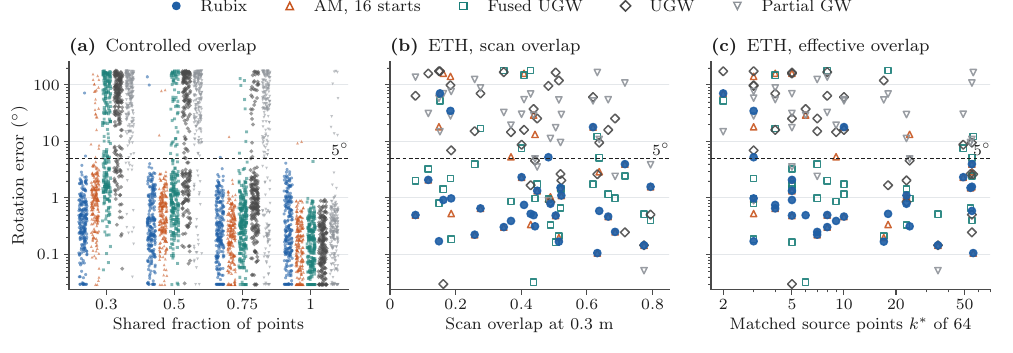}
\caption{\textbf{Rotation error against overlap, controlled and real.} (a)~Every test instance of the anchored planar sets that share the given fraction of points (200 per fraction, validation-selected settings), offset horizontally by method. (b,c)~All identified ETH scan pairs of Li et al. with the reference translation supplied, each GW method at its best setting, against (b)~the fraction of source voxels with a target voxel within $0.3$\,m at the reference pose and (c)~the number of source points that the penalized model matches at the reference pose. The dashed line marks the $5^\circ$ success threshold; errors below $0.03^\circ$ are drawn at $0.03^\circ$.}\label{fig:eth-overlap}
\end{figure*}

\begin{table*}[t]
\centering
\caption{\textbf{Penalized matching with global search.} Left: $5^\circ$ rotation recovery on anchored synthetic sets (200 tests per overlap), and solver/objective failures over all 800. Right: lattice 1-NN accuracy at noise $0.08$, vacancy rate as shown and half as many interstitials, against defect-free references (200 queries per column). Settings are validation-selected; \Cref{tab:unbalanced-full,tab:unbalanced-full-lattice} retain every cell, best test baselines and SEINT. Bold marks the best value.}\label{tab:unbalanced}
\input{tables/unbalanced.tex}
\end{table*}

\begin{figure*}[t]
\centering
\includegraphics[width=\textwidth]{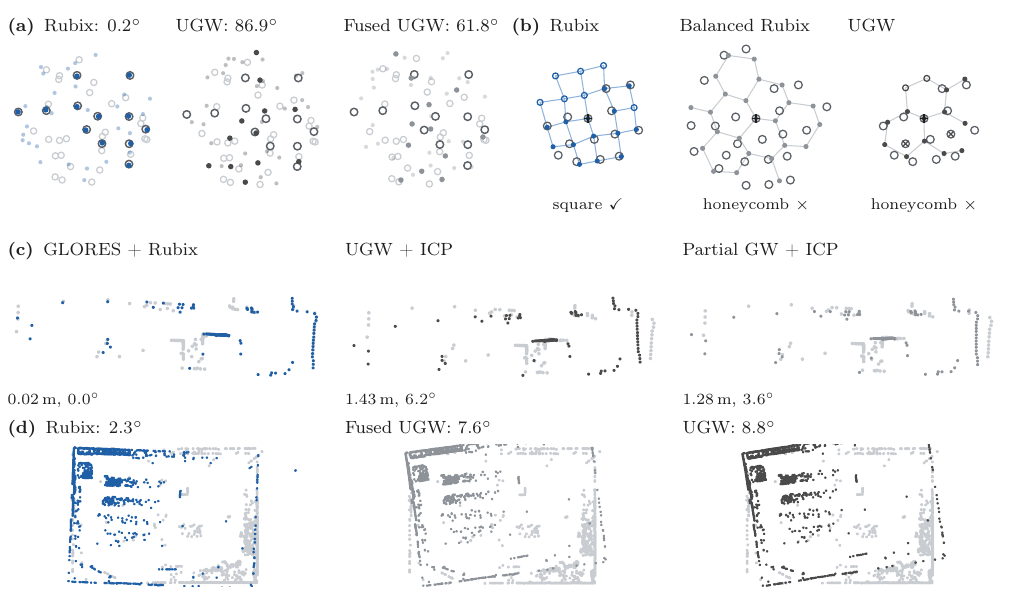}
\caption{\textbf{Partial alignments found by Rubix and missed by unbalanced GW.} (a)~40-point sets with 30\% overlap: targets are rings (shared targets dark); rotated sources are solid when shared. Titles give rotation errors. (b)~A noisy square neighborhood ($\sigma_{\mathrm{noise}}=0.08$): observed atoms are rings and the site a cross-hair. Each nearest reference is aligned and bonded, solid when matched, hollow otherwise; unmatched observations are crossed. Predicted classes appear below. Balanced Rubix uses 24 atoms. (c)~Planar scans, with gray targets and reported pose errors. (d)~ETH Office from above, floor/ceiling removed, with gray targets, supplied reference translation and reported yaw errors. Each panel uses the first dataset-ordered case where Rubix succeeds and displayed baselines fail (\Cref{app:unbalanced}).}\label{fig:unbalanced-examples}
\end{figure*}

%% file: tables/unbalanced.tex
\begin{tabular}{@{}lrrrrrrrr@{}}
\toprule
& \multicolumn{4}{c}{Partial overlap} & \multicolumn{4}{c}{Defective lattices}\\
\cmidrule(lr){2-5}\cmidrule(l){6-9}
& \multicolumn{3}{c}{Rotations (\%), shared} & Failures & \multicolumn{4}{c}{Accuracy (\%), defect rate}\\
Method & 0.75 & 0.5 & 0.3 & solver/obj. & 0 & 0.1 & 0.2 & 0.3\\
\midrule
Rubix, unbalanced & \textbf{99.5} & \textbf{99.0} & \textbf{99.0} & \textbf{0/5} & \textbf{100.0} & \textbf{99.0} & \textbf{94.5} & \textbf{90.5}\\
Rubix, balanced & 59.5 & 28.5 & 10.5 & 0/403 & 98.0 & 80.5 & 66.5 & 60.0\\
AM, 16 starts & \textbf{99.5} & 98.5 & 88.5 & 26/3 & 95.5 & \textbf{99.0} & \textbf{94.5} & \textbf{90.5}\\
UGW~\cite{sejourne2021} & 83.5 & 43.0 & 20.0 & 207/100 & \textbf{100.0} & 94.0 & 78.0 & 69.0\\
Fused UGW~\cite{thual2022fugw} & 96.0 & 74.0 & 44.5 & 63/108 & 99.5 & 90.5 & 79.5 & 70.0\\
Partial GW~\cite{chapel2020partial} & 32.5 & 13.5 & 7.5 & 501/1 & 63.5 & 60.0 & 56.5 & 45.5\\
\bottomrule
\end{tabular}

%% file: sections/native_with_without.tex
\subsection{Registration with Rubix Bounds}\label{sec:exp-registration}\label{sec:native-with-without}

We insert Rubix bounds into native GLORES~\cite{glores2020}, real gravity-aligned matching, and synthetic yaw, joint yaw--translation and full-rotation searches. Each comparison fixes the host objective, search policy, feasible updates and stopping rule. A \emph{native} method uses its authors' implementation and objective. The \emph{independent-edge bound} minimizes each edge separately over a cell before assignment, allowing different poses per edge. The \emph{vertex bound} uses Rubix's assignments at the yaw-triangle vertices but takes their smallest value, isolating the benefit of minimizing the interpolant on the arc.

%% file: sections/native_planar.tex
\subsubsection{A Rubix Bound inside GLORES}\label{sec:native-planar}

GLORES~\cite{glores2020} searches planar rotation and translation to minimize the $k=\lceil\rho_{\mathrm{keep}} n\rceil$ smallest squared nearest-neighbor distances from source to target. This trimmed, directed objective allows unequal sizes, repeated targets and unmatched sources. We replace each yaw-interval--translation-box bound by the larger of its native and Rubix bounds; all added queries are timed.

We test two Rubix bounds. The first applies the arc interpolation of \Cref{sec:gravity} to the yaw interval and relaxes the translation independently for each point. The second, stronger bound keeps one translation common to all selected pairs. Following \Cref{lem:common-translation}, it subtracts $k$ times the squared distance to the center of the box, which makes the objective concave separately in the yaw coordinates and in the translation, and then combines the interpolants at the four corners of the box (\Cref{app:common-translation}).

Sixty Hokuyo scan pairs from three IILABS sequences~\cite{iilabs2025}, acquired at different times, yield 240 configurations with two beam counts and two retained fractions (\Cref{app:native-planar}). All four bounds reach the native $1\%$ tolerance throughout. Arc interpolation gives a median paired speedup of $1.067$; common translation raises it to $1.331$ and reduces nodes by a factor of $1.844$ (\Cref{tab:native-planar}). The common-translation bound is faster throughout, with a scan-pair-stratified bootstrap interval of $[1.296,1.376]$. Pose success remains essentially unchanged, at 190 versus 189 of 212 reference configurations.

\input{tables/native_planar.tex}

%% file: tables/native_planar.tex
\begin{table*}[t]
\centering
\caption{\textbf{Replacing the bound inside native GLORES~\cite{glores2020}.} All four variants complete the 240 configurations at the same numerical tolerance of $1\%$, with the same trimmed objective and numerical safeguards (\Cref{app:native-planar}). Times and nodes are medians; the speedup is the median paired ratio of the native time to the time of the variant, with preprocessing and bound evaluation included. Bold marks improvements over native GLORES. Pose success requires a translation error of at most $0.25$\,m and a yaw error of at most $2^\circ$ on the 212 configurations with motion-capture references; the 28 configurations without references count toward the runtime only. Native PLICP~\cite{censi2008}, which has its own point-to-line objective and no global bound, registers 146 configurations in a median of 0.0118\,s from the zero pose and 185 in 0.2286\,s from 16 initial yaws.}\label{tab:native-planar}
\begin{tabular}{@{}lrrrr@{}}
\toprule
Method & Time (s) & Nodes & Paired speedup & Pose success\\
\midrule
GLORES~\cite{glores2020} & 2.330 & 2,374.5 & $1.000\times$ & 189/212\\
GLORES + vertex bound~\cite{glores2020} & 2.315 & 2,263 & $0.996\times$ & 189/212\\
GLORES + Rubix arc bound~\cite{glores2020} & \textbf{2.103} & \textbf{2,035} & $\mathbf{1.067}\times$ & 189/212\\
GLORES + Rubix, common translation~\cite{glores2020} & \textbf{1.608} & \textbf{1,352} & $\mathbf{1.331}\times$ & \textbf{190/212}\\
\bottomrule
\end{tabular}
\end{table*}

%% file: sections/eth_experiments.tex
\subsubsection{Matching Subproblems from Real 3D Scans}\label{sec:exp-eth}

We test six adjacent Office and Courtyard scan pairs from ETH~\cite{theiler2015}. Pairs share scans and are not independent. Without measured IMU calibration, we use the scanner's nominal $+z$ axis as gravity; reference poses serve only evaluation.

Coresets of 64 source and 80 target points yield fixed 32-point subsets selected by partial assignment at each initial yaw. The solvers optimize yaw and bijection with heights retained and translation eliminated by centering. A common pipeline ranks poses by penalized matching, refines partial matches and applies native gravity-constrained ICP~\cite{kubelka2022}. Global optimality concerns the fixed subsets, not their selection or refinement (\Cref{app:eth-protocol}).

AM, the independent-edge bound, the vertex bound and Rubix use 16 equally spaced initial yaws. Global solvers target a numerical mean-squared gap of $10^{-5}$\,m$^2$ within five seconds; complete-pipeline results appear in \Cref{app:eth-pipelines}.

\input{tables/eth_subsolver_results.tex}

\paragraph{Global matching and search cost.} All global solvers close every gap and reach the same reference on 96/96 subproblems, versus 22/96 for AM (\Cref{tab:eth-blocks}). Rubix takes a median $1.46$\,ms, with paired speedups of $1.63$ over the vertex bound and $85.9$ over the independent-edge bound, reducing assignments from 5,544 and 224,508 to 2,226.

\input{sections/gravity_published_insertion.tex}

%% file: tables/eth_subsolver_results.tex
\begin{table*}[t]
\centering
\caption{\textbf{Gravity matching on ETH subsets.} Six pairs and 16 yaws give 96 fixed 32-point problems per solver. Global solvers share the host and target mean-squared gap $10^{-5}$\,m$^2$ within five seconds on one core; AM supplies no lower bound. Assignments and total time sum all problems; bold marks reductions from the vertex bound (\Cref{app:eth-protocol}).}\label{tab:eth-blocks}
\begin{tabular}{lrrrrr}
\toprule
Subsolver & Reference attained & Gap closed & Assignments & Total (s) & Median (ms)\\
\midrule
AM & 22/96 & 0/96 & 338 & 0.062 & 0.58\\
Independent-edge bound & 96/96 & 96/96 & 224,508 & 13.910 & 138.78\\
Vertex bound & 96/96 & 96/96 & 5,544 & 0.232 & 2.44\\
Rubix interpolant & 96/96 & 96/96 & \textbf{2,226} & \textbf{0.144} & \textbf{1.46}\\
\bottomrule
\end{tabular}
\end{table*}

%% file: sections/gravity_published_insertion.tex
\paragraph{A matching stage after published registration methods.}\label{sec:published-gravity-insertion}
We insert this stage after the original implementations of Quatro~\cite{lim2022quatro}, RAP~\cite{pan2026rap} and Li et~al.~\cite{li2024screw} on all identified ETH pairs, with feature-pipeline adaptations in \Cref{app:published-gravity}. The native yaw and 16 equally spaced yaws select 17 pairs of 32-point subsets. Only the subset solver changes; assignment, selection, ranking and final gravity ICP remain fixed.

All three global solvers close all 2,108 subset gaps and return identical final poses (\Cref{tab:gravity-published-insertion}). Rubix's median stage time is 10.4\,ms, with paired speedups of $45.8$--$46.9$ over the independent-edge bound and $1.46$--$1.48$ over the vertex bound. \Cref{app:native-with-without} includes front-end time and pose success.

%% file: sections/registration_experiments.tex
\subsubsection{Yaw Search across Matching Models}\label{sec:exp-gravity}

The yaw search compares Rubix's interpolant, the independent-edge bound and the vertex bound for complete, prescribed-cardinality and penalized matching (\Cref{alg:gravity-yaw}). The 72 scenes vary $n\in\{32,64,128\}$, cardinality ratio, unmatched fraction and vertical spread. Complete matching removes translation by centering; partial models fix a shared translation. Three models, three summed-squared gap targets ($10^{-2}$, $10^{-4}$, $10^{-6}$) and three bounds give 1,944 otherwise identical solves (\Cref{app:gravity-subsolver-protocol}).

\begin{figure*}[t]
\centering
\includegraphics[width=\textwidth]{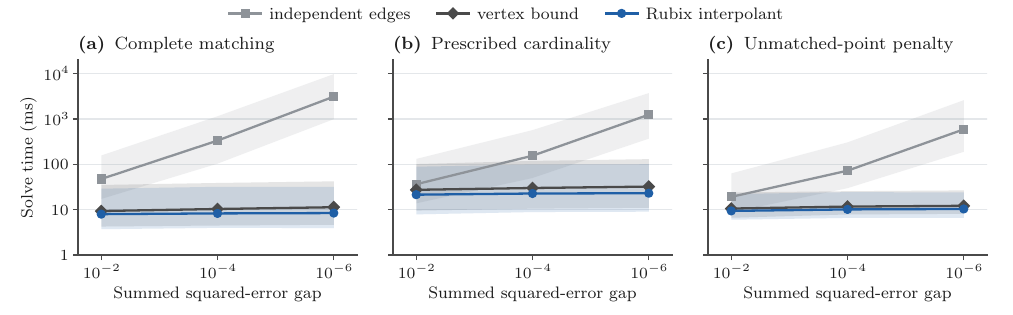}
\caption{\textbf{Faster yaw search under three matching models.} Median times and interquartile ranges over 72 scenes, including unresolved runs: (a)~complete matching after centering; (b)~prescribed-cardinality and (c)~penalized matching at fixed translation. Rubix and the vertex bound close every gap. From coarse to fine targets, the independent-edge bound closes 72, 72, 50 gaps in (a), 72, 72, 67 in (b), and all 72 in (c). At $10^{-6}$, paired median vertex/Rubix time ratios are $1.38$, $1.35$, $1.21$.}\label{fig:gravity-bounds}
\end{figure*}

Rubix has the lowest median time in each model and, like the vertex bound, closes all 648 gaps; the independent-edge bound closes 621 (\Cref{fig:gravity-bounds}). At $10^{-6}$, Rubix closes all 216 gaps in median times of 8.41, 23.05 and 10.36\,ms, while the vertex bound needs $1.68$, $1.50$ and $1.48$ times as many assignments.

For problems both bounds solve, median speedups over the independent-edge bound are $274$, $36.9$ and $52.2$; the figure includes its 27 unresolved searches at the limits. Common-yaw matching provides most of the improvement, with a further benefit from arc interpolation. These searches do not inherit the polynomial guarantee for complete planar matching.

\subsubsection{Joint Search over Yaw and Translation}\label{sec:exp-joint-gravity}

\begin{figure*}[t]
\centering
\includegraphics[width=\textwidth]{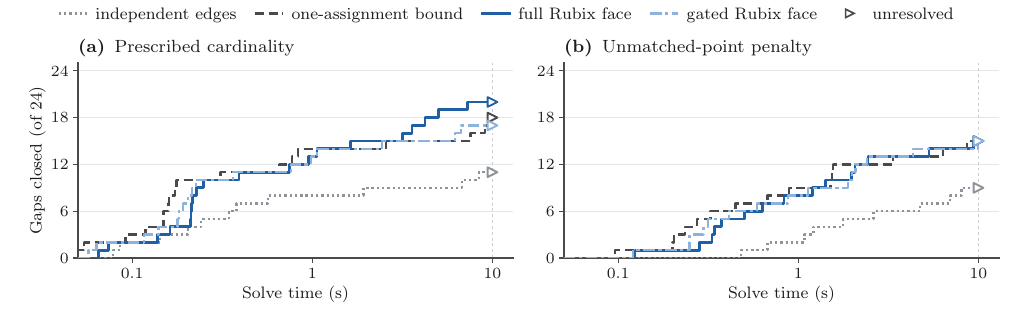}
\caption{\textbf{Witness bounds close more joint yaw--translation gaps.} Each curve counts solved scenes out of 24 under (a)~prescribed-cardinality and (b)~penalized matching, at a summed-squared gap of $10^{-4}$. Triangles mark unresolved runs' last observed times; the dotted line marks 10\,s. Full-face closes 20 and 15 gaps, one-assignment 18 and 15, the independent-edge bound 11 and 9, and gated bounds 17 and 15. All 192 runs are included.}\label{fig:joint-gravity}
\end{figure*}

Using the anchor cover of \Cref{lem:joint-cover}, we jointly search yaw, translation and partial matching on 24 scenes with $n\in\{8,16,32\}$. The independent-edge bound is compared with the \emph{full-face} witness bound~\eqref{eq:joint-witness-guarantee}, using all 24 cell vertices; its \emph{one-assignment} difference variant (\Cref{sec:bound-cost,app:gravity-difference}); and a \emph{gated} combination that spends no more assignments on full-face evaluations than cheaper bounds have used. Initial cells, cardinality partitions and initialization are fixed. The two partial models give 192 solves, targeting a summed-squared gap of $10^{-4}$ within 10\,s (\Cref{app:joint-gravity-protocol}).

Full-face closes 20/24 prescribed-cardinality and 15/24 penalized gaps (\Cref{fig:joint-gravity}). It takes longer than one-assignment on problems both solve, but adds two prescribed-cardinality successes and loses none. On the nine penalized problems solved by the independent-edge bound, one-assignment and full-face are $6.52$ and $5.20$ times faster and close six more gaps. At $n=32$, full-face closes four of eight prescribed-cardinality gaps and one penalized gap; the independent-edge bound closes none. Bound cost therefore matters alongside tightness.

\begin{figure*}[t]
\centering
\includegraphics[width=\textwidth]{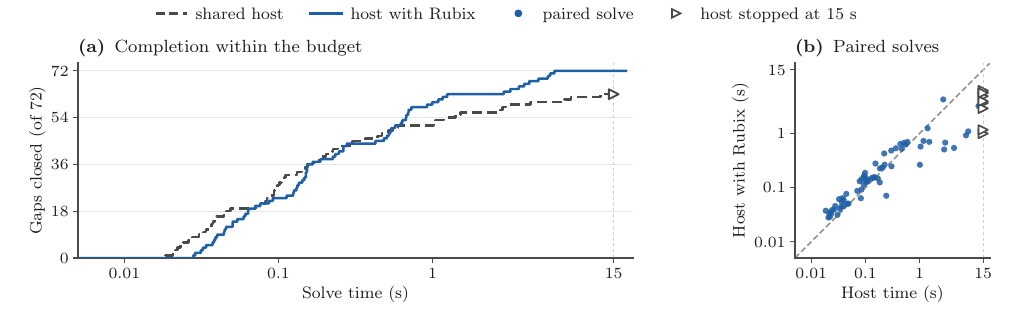}
\caption{\textbf{Rubix closes every spatial-alignment gap.} Complete SE(3) alignment with a 15\,s limit and mean-squared gap target $10^{-5}$. (a)~Solved scenes against time. (b)~Paired times; points below the diagonal favor Rubix, and triangles mark nine host timeouts whose completion times are unknown. Both variants return identical poses and matchings throughout; total CPU time includes initialization, assignments, bounds and feasible updates.}\label{fig:se3-shared}
\end{figure*}

\subsubsection{Full Spatial Rotation}\label{sec:exp-se3}

Complete SE(3) alignment uses 72 synthetic scenes with $n\in\{8,16,32,64\}$, three point distributions, two noise levels and centered inputs. Rubix adds exterior support queries and witness bounds to the quaternion host with rotation-cap bounds, keeping its search policy and assignment solver fixed (\Cref{alg:se3-search,app:se3-protocol}).

At a numerical mean-squared gap of $10^{-5}$ and a 15\,s limit, Rubix closes 72/72 gaps versus 63/72 and reduces total CPU time from 189.2 to 59.1\,s, including every search and bound evaluation (\Cref{fig:se3-shared}). The nine host timeouts close in 1.00--6.18\,s with Rubix, giving speedup lower bounds of $2.43$--$15.06$. \Cref{app:spatial-ablations} compares five bound-selection policies.

\paragraph{Subsets of real scans.} On RAP-selected 32-point subsets of all identified ETH pairs~\cite{li2024screw,theiler2015,pan2026rap}, both variants close every gap with identical poses and matchings. Rubix reduces search but increases runtime; \Cref{app:se3-real-protocol,app:se3-real-results} report this comparison and a control without RAP.

%% file: sections/discussion.tex
\section{Conclusion}\label{sec:discussion}

Rubix solves planar PW alignment through the permutation polygon. Its sharp $n(n-1)$ vertex bound answers Rote's question~\cite{mitchell2009} and yields an exact $\mathcal O(n^5)$ algorithm, with output-sensitive cost $\mathcal O(n^3V)$ for $V$ vertices. Under the stated initialization and tie rules, AM visits matchings with vertex correlations; a visited matching with nonzero correlation is fixed exactly when its vertex is stationary. Rubix selects a farthest vertex. The resulting distances improve retrieval, noisy crystal classification and k-medoids clustering over AM, while penalized partial matching handles incomplete sets. Spatial assignment bounds reduce registration search cost by retaining a common pose across residuals; pose accuracy remains a separate criterion.

\paragraph{Limitations.} The quadratic bound requires equal planar weights. Rational weights with common denominator $N\ge2$ need at most $N(N-1)$ representatives by duplication, but three unequally weighted points can already give $8>6$ vertices (\Cref{cor:weights,app:weighted-counterexample}). Exactness requires exact assignment and support comparisons. Partial matching with free translation needs a registration host or our joint search, which currently reaches tens of points. Spatial guarantees depend on numerical scale and tolerance and do not establish exact polynomial complexity in the input bit length (\Cref{prop:gravity-accuracy,prop:joint-gravity-accuracy}). A closed numerical gap need not recover the pose: gravity bias can leave errors invisible to the objective (\Cref{sec:exp-gravity-bias,app:native-extensions}). Shared scans and reused scan pairs also make the ETH and planar samples dependent.

\paragraph{Open directions.} We conjecture superpolynomial worst-case vertex growth in $n$ for the full matrix-correlation polytope in each fixed dimension $d\ge2$ (\Cref{app:matrix-conjecture}); the planar scalar-correlation polygon proved here is distinct. The polygon's average vertex count, the complexity under general linear maps, and polynomial enumeration for penalized partial rotation matching remain open. In the partial case, matching-dependent constants prevent the $n(n-1)$ argument from applying. Further directions include reusing assignment prices as in Zikan's sweep~\cite{zikan1991}, adaptive-precision certificates~\cite{shewchuk1997}, uncertain gravity and practical moment-polytope representations for partial registration (\Cref{app:partial-moments}).

%% file: appendix/content.tex
\clearpage
\appendix
\setcounter{equation}{0}
\setcounter{figure}{0}
\setcounter{table}{0}
\setcounter{theorem}{0}
\setcounter{algorithm}{0}
\renewcommand{\thetheorem}{A.\arabic{theorem}}
\renewcommand{\thealgorithm}{A.\arabic{algorithm}}
\renewcommand{\theHtheorem}{appendix.\arabic{theorem}}
\providecommand{\theHalgorithm}{}
\renewcommand{\theHalgorithm}{appendix.\arabic{algorithm}}
\renewcommand{\theequation}{A.\arabic{equation}}
\renewcommand{\thefigure}{A.\arabic{figure}}
\renewcommand{\thetable}{A.\arabic{table}}
\renewcommand{\theHequation}{appendix.\arabic{equation}}
\renewcommand{\theHfigure}{appendix.\arabic{figure}}
\renewcommand{\theHtable}{appendix.\arabic{table}}
\crefalias{section}{appendix}\crefalias{subsection}{appendix}\crefalias{subsubsection}{appendix}
\section*{Appendices}
\input{appendix/overview.tex}

\input{appendix/proofs.tex}
\input{appendix/landscape.tex}

\input{appendix/optimality.tex}
\input{appendix/spatial.tex}
\input{appendix/spatial_algorithms.tex}
\input{appendix/experiments.tex}
\input{appendix/unbalanced.tex}
\input{appendix/registration_experiments.tex}
\input{appendix/additional_results.tex}

%% file: appendix/overview.tex
\paragraph{Guide to the appendices.}
\Cref{app:proofs,app:landscape,app:optimality} give the planar proofs, extensions and exact examples; \Cref{app:spatial,app:spatial-algorithms} give the spatial proofs and algorithms. Experimental details, partial matching, registration and contextual results follow in \Cref{app:experiments,app:unbalanced,app:registration-experiments,app:additional}; hardware is listed in \Cref{app:computing}.

%% file: appendix/proofs.tex
\section{Proofs of the Planar Results}\label{app:proofs}

This appendix proves the planar reduction, the vertex bound and the guarantees of the algorithm. The proof of the bound has three ingredients. Assignment duality constrains the cycle of reassigned points at a transition; a count of continuous angle changes determines the phase jump of the cycle; and genericity, with a perturbation argument and an accounting over a full turn, extends the bound to all inputs. The subsections follow the order in which the results depend on each other.

Throughout, let $n\ge2$ and $a,b\in\mathbb C^n$, and let $V$ be the number of vertices of $\mathcal P=\mathcal P(a,b)$ in~\eqref{eq:polygon}. We write $z\times w=\Im(\bar z w)$ and $\mathbb T=\{u\in\mathbb C:|u|=1\}$. The clockwise quarter-turn $R_-(z)=-\mathrm i z$ satisfies $R_-(e)\times e=|e|^2$ and $\langle R_-(e),w\rangle=-(e\times w)$. For $u\in\mathbb T$, we set $\mathcal M_u=\argmax_\sigma\Re(\bar uz_\sigma)$, and $P_\sigma$ denotes the permutation matrix of $\sigma$.

\subsection{The PW Reduction}\label{app:elementary-reductions}\label{app:reduction}
\begin{proof}[Proof of \Cref{lem:reduction}]
For a fixed matching, minimizing \eqref{eq:rotation-cost} over the rotation replaces $\Re(e^{\mathrm i\theta}z_\sigma)$ by $|z_\sigma|$. The maximum of this convex function over the hull of the finitely many correlations is attained at a vertex, and every vertex is the correlation of a matching. For a nonzero maximum, $e^{\mathrm i\theta}z_\sigma=|z_\sigma|$ gives $\theta=-\Arg z_\sigma$; a zero maximum makes every rotation optimal. A reflection followed by a rotation replaces $y$ by $\bar y$, so comparing the two polygons covers $\mathrm O(2)$. The argument permits several optimal matchings and rotations.
\end{proof}

\subsection{Genericity and Perturbation}\label{app:generic}
The counting argument of \Cref{lem:cycle,lem:jump,lem:counting} assumes generic inputs. We define genericity, show that it holds outside a set of measure zero (\Cref{lem:generic}), and prove \Cref{lem:semicont}, which extends the bound of \Cref{thm:main} to arbitrary inputs.

\subsubsection{Generic Pairs}\label{app:genericity}
\begin{definition}\label{def:generic}
The pair $(a,b)$ is \emph{generic} if (G1) the map $\sigma\mapsto z_\sigma$ is injective and (G2) no three distinct points $z_\sigma$ are collinear.
\end{definition}

\begin{lemma}\label{lem:generic}
The non-generic pairs are contained in a proper real-algebraic subset of $\mathbb C^{2n}\cong\mathbb R^{4n}$. In particular, generic pairs are dense and have full Lebesgue measure.
\end{lemma}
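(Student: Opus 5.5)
The plan is to write the non-generic set as a finite union of zero sets of real polynomials in the $4n$ real coordinates of $(a,b)$, and to show that none of these polynomials vanishes identically. Each correlation $z_\sigma=\sum_i a_ib_{\sigma(i)}$ has real and imaginary parts that are real polynomials (quadratic) in the coordinates.

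For (G1), each pair $\sigma\ne\tau$ gives the condition $z_\sigma-z_\tau=0$. It suffices to use the single real equation $\Re(z_\sigma-z_\tau)=0$, since a proper real-algebraic superset is enough.

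For (G2), each triple of distinct permutations gives the real equation
\[
\Im\bigl(\overline{(z_\tau-z_\sigma)}(z_\rho-z_\sigma)\bigr)=0,
\]
which is cubic in the coordinates in general and quartic here because each $z$ is bilinear. The non-generic set is contained in the union of these finitely many hypersurfaces, so it is contained in the zero set of their product. That product is a single polynomial, and it is nonzero provided each factor is.

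The main step is exhibiting, for each factor, one input where it is nonzero. For (G1), take $a$ real with $a_i=N^{i}$ and $b$ real with $b_j=N^{j}$ for large $N$. Then $z_\sigma=\sum_i N^{i+\sigma(i)}$, and distinct $\sigma$ should give distinct values. Exponent multisets $\{i+\sigma(i)\}$ could coincide, though, so I would instead use algebraically independent reals, for example $a_i=1$ and $b_j$ transcendental and independent. Then $z_\sigma-z_\tau=\sum_i(b_{\sigma(i)}-b_{\tau(i)})$, which is identically zero since both sums equal $\sum_j b_j$. That choice fails, so this is where care is needed.

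Better is to view $z_\sigma-z_\tau=\sum_i a_i(b_{\sigma(i)}-b_{\tau(i)})$ as a polynomial. For some row $i$ we have $\sigma(i)\ne\tau(i)$, so the monomial $a_ib_{\sigma(i)}$ appears with coefficient $1$ in $z_\sigma$ and coefficient $0$ in $z_\tau$. Its real part, in terms of real coordinates, contains $\Re a_i\Re b_{\sigma(i)}$ with nonzero coefficient, so the polynomial is nonzero.

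For (G2), I would avoid monomial bookkeeping and instead find one witness configuration for each triple. Fix distinct $\sigma,\tau,\rho$ and choose a row $i$ where the three images are not all equal. Put $a_i=1$ and all other $a_k=0$, so the three correlations become $b_{\sigma(i)},b_{\tau(i)},b_{\rho(i)}$. If these three indices are distinct, place the three points $b_j$ at non-collinear positions; the cross product is then nonzero.

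If only two of the indices are distinct, this witness does not work, and the main obstacle is handling that case. I would instead take $a_i=1$ and $a_k=\mathrm i$ for a second row $k$ chosen so that the pairs $(\sigma(i),\sigma(k))$, $(\tau(i),\tau(k))$, $(\rho(i),\rho(k))$ are pairwise distinct. Such a $k$ should exist because the permutations are distinct. I would then choose the $b_j$ so that the three values $b_{\pi(i)}+\mathrm i\,b_{\pi(k)}$ are affinely independent, for instance with $b$ generic real values, so that these values become points with real part $b_{\pi(i)}$ and imaginary part $b_{\pi(k)}$. If no single row pair separates the three permutations, I would use more rows with independent complex weights. The general fact being used is that $(\sigma\mapsto z_\sigma)$ for generic weights embeds the permutation matrices by a generic linear map to $\mathbb R^2$, and under such a map no three distinct points are collinear. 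This is the cleanest formulation: generic linear images of three distinct affinely independent vectors are non-collinear, and distinct permutation matrices are pairwise distinct with no three collinear, since they are vertices of $B_n$.

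Once every factor is nonzero, the product polynomial is nonzero. Its zero set is therefore a proper algebraic subset, which is closed, nowhere dense, and of Lebesgue measure zero. This gives density and full measure of the generic pairs.
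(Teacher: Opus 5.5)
Your proof is correct, but for (G2) it takes a different route from the paper. For (G1) you match the paper up to presentation: you read off the monomial $\Re a_i\Re b_{\sigma(i)}$, whereas the paper evaluates $a^\top(P_\sigma-P_\tau)b$ at $a=e_i$, $b=e_j$.

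For (G2), the paper never builds a witness. It assumes $\Phi\equiv0$ and restricts to real $b$ with complex $a$, and vice versa. This gives $D_1\beta\parallel D_2\beta$ and $\alpha^\top D_1\parallel\alpha^\top D_2$ for all real $\alpha,\beta$. A rank argument then forces $D_1\parallel D_2$, which contradicts the linear independence of $P_{\sigma_2}-P_{\sigma_1}$ and $P_{\sigma_3}-P_{\sigma_1}$. That proves the more general fact that $\Im(\overline{a^\top D_1b}\,a^\top D_2b)\not\equiv0$ for any two independent real matrices; the permutation structure enters only in checking independence. Your witnesses instead use the one-image-per-row structure directly, and they are shorter and constructive.

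Your case~2 needs the explicit computation rather than an appeal to ``generic real values''. Suppose $\sigma(i)=\tau(i)=p\ne q=\rho(i)$, and let $k$ be a row with $r=\sigma(k)\ne s=\tau(k)$; such a $k$ always exists, so the fallback to more rows is unnecessary. Then $a_i=1$, $a_k=\mathrm i$ and real $b$ give the points $(b_p,b_r)$, $(b_p,b_s)$, $(b_q,b_{\rho(k)})$. Their orientation determinant is $-(b_s-b_r)(b_q-b_p)$, which is nonzero whenever the $b_j$ are distinct. Pairwise-distinct index pairs alone would not suffice, since three pairs with a common first index are always collinear.

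Drop the closing ``generic linear map'' justification. The maps $P\mapsto a^\top Pb$ form a restricted family, with rank-one complex coefficient matrix $ab^\top$, not generic linear maps $\mathbb R^{n\times n}\to\mathbb R^2$. Non-collinearity of vertices of $B_n$ under generic maps therefore says nothing about this family without a witness inside it. Your case analysis supplies exactly that witness.
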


\begin{proof}[Proof of \Cref{lem:generic}]
\emph{Distinct correlations.} For $\sigma\ne\tau$, the polynomial $z_\sigma-z_\tau=a^\top(P_\sigma-P_\tau)b$ is not identically zero: the coordinate vectors $a=e_i$, $b=e_j$ at a nonzero entry $(i,j)$ of $P_\sigma-P_\tau$ give a nonzero value. Thus (G1) fails only on a proper algebraic set.

\emph{Noncollinear triples.} For distinct $\sigma_1,\sigma_2,\sigma_3$, define $D_1=P_{\sigma_2}-P_{\sigma_1}$, $D_2=P_{\sigma_3}-P_{\sigma_1}$ and
$\Phi(a,b)=\Im\big(\overline{a^\top D_1b}\;a^\top D_2b\big)$. Collinearity is equivalent to $\Phi=0$, so we must show that $\Phi$ is not identically zero.
The matrices $D_1,D_2$ are nonzero and linearly independent. Their entries lie in $\{-1,0,1\}$, so any factor of proportionality would be $\pm1$; the factor $+1$ would identify $\sigma_2$ and $\sigma_3$, and the factor $-1$ would make $P_{\sigma_3}=2P_{\sigma_1}-P_{\sigma_2}$ contain an entry $2$.
Suppose that $\Phi\equiv0$. Taking $b=\beta_{\mathrm{test}}\in\mathbb R^n$ and $a=\alpha_{\mathrm{test}}'+\mathrm i\alpha_{\mathrm{test}}''$ with $\alpha_{\mathrm{test}}',\alpha_{\mathrm{test}}''\in\mathbb R^n$ gives
\begin{equation}
\begin{aligned}
\Phi={}&(\alpha_{\mathrm{test}}'^\top D_1\beta_{\mathrm{test}})(\alpha_{\mathrm{test}}''^\top D_2\beta_{\mathrm{test}})\\
&-(\alpha_{\mathrm{test}}''^\top D_1\beta_{\mathrm{test}})(\alpha_{\mathrm{test}}'^\top D_2\beta_{\mathrm{test}}).
\end{aligned}
\end{equation}
Since this vanishes for all $\alpha_{\mathrm{test}}'$ and $\alpha_{\mathrm{test}}''$, the vectors $D_1\beta_{\mathrm{test}}$ and $D_2\beta_{\mathrm{test}}$ are linearly dependent for every real $\beta_{\mathrm{test}}$. Taking $a$ real and $b$ complex shows in the same way that $\alpha_{\mathrm{test}}^\top D_1$ and $\alpha_{\mathrm{test}}^\top D_2$ are dependent for every real $\alpha_{\mathrm{test}}$.

\emph{Rank.} On the dense open set $\mathcal U_{\mathrm{lin}}=\{\beta_{\mathrm{test}}:D_2\beta_{\mathrm{test}}\ne0\}$, write $D_1\beta_{\mathrm{test}}=\eta_{\mathrm{lin}}(\beta_{\mathrm{test}})D_2\beta_{\mathrm{test}}$. If $D_2\beta_{\mathrm{test},1}$ and $D_2\beta_{\mathrm{test},2}$ are independent, comparing coefficients in
\begin{equation}
\begin{aligned}
&\eta_{\mathrm{lin}}(\beta_{\mathrm{test},1}+t\beta_{\mathrm{test},2})(D_2\beta_{\mathrm{test},1}+tD_2\beta_{\mathrm{test},2})\\
&\quad=\eta_{\mathrm{lin}}(\beta_{\mathrm{test},1})D_2\beta_{\mathrm{test},1}\\
&\qquad+t\eta_{\mathrm{lin}}(\beta_{\mathrm{test},2})D_2\beta_{\mathrm{test},2}
\end{aligned}
\end{equation}
gives $\eta_{\mathrm{lin}}(\beta_{\mathrm{test},1})=\eta_{\mathrm{lin}}(\beta_{\mathrm{test},2})$.
If $\operatorname{rank}D_2\ge2$, any two points of $\mathcal U_{\mathrm{lin}}$ can be compared through a third point whose image lies on neither of their image lines. Hence $\eta_{\mathrm{lin}}$ is constant on $\mathcal U_{\mathrm{lin}}$ and, by density, $D_1=\eta_{\mathrm{lin}} D_2$, which contradicts independence. Exchanging the roles of $D_1$ and $D_2$ excludes $\operatorname{rank}D_1\ge2$ in the same way, so the only remaining possibility is that both ranks are one. Write $D_l=c_lr_l^\top$ with nonzero factors. Choosing $\beta_{\mathrm{test}}$ outside both row kernels forces $c_1\parallel c_2$, and choosing $\alpha_{\mathrm{test}}$ nonorthogonal to both columns forces $r_1\parallel r_2$, which again contradicts independence. Thus $\Phi\not\equiv0$. The non-generic pairs lie in the finite union of the zero sets of these polynomials, which is a proper real-algebraic set and has measure zero.
\end{proof}

\subsubsection{Persistence Under Perturbation}\label{app:semicontinuity}
\begin{proof}[Proof of \Cref{lem:semicont}]
Let $v_1,\dots,v_k$ be the vertices of $\mathcal P(a,b)$. For $k=1$ the claim holds because every polygon has at least one vertex. Otherwise, for each $v_l$ we choose a unit direction $u_l$ that exposes $v_l$ alone, and we choose $\delta>0$ below every positive score gap between $v_l$ and another correlation in direction $u_l$, and below half of every distance between two distinct vertices. By \Cref{lem:generic}, there is a generic pair within distance $\delta_{\mathrm{pert}}$ that moves each correlation by less than $\delta/3$. Every maximizer in direction $u_l$ then comes from a correlation that originally equaled $v_l$, so a vertex of the exposed face lies within $\delta/3$ of $v_l$, also when the face is an edge. These neighborhoods are disjoint, which yields at least $k$ distinct vertices.
\end{proof}

\subsection{Cycle Transitions}\label{app:cycle-transitions}
\begin{proof}[Proof of \Cref{lem:cycle}]
By (G2), a supporting line contains at most two distinct correlations, and by (G1) these belong to at most two maximizing permutations. If $\sigma$ and $\tau$ both maximize, we decompose $\sigma^{-1}\tau$ into disjoint nontrivial cycles. Switching one cycle cannot raise the projection $\Re(\bar uz)$, because $\sigma$ is optimal. The cycles move disjoint rows, so these changes sum to the change from $\sigma$ to $\tau$, which is zero, and each of them is zero. If there were two cycles, switching only one of them would give a third maximizer, which contradicts genericity. Thus the relative permutation is a single nontrivial cycle.
\end{proof}

\subsection{The Phase Function}\label{app:phase}
\Cref{lem:exchange} confines the phase~\eqref{eq:phase} to one branch of the argument, and \Cref{prop:total} gives the total of the jumps over a full turn. Combined with \Cref{lem:jump}, this proves \Cref{lem:counting} in \Cref{app:counting-bound}.

\subsubsection{Swap Optimality and the Argument Branch}\label{app:phase-exchange}
\begin{lemma}[exchange]\label{lem:exchange}
For $i<k$, $z_\sigma-z_{\sigma\circ(i\,k)}=\Delta z_{ik}(\sigma):=(a_i-a_k)(b_{\sigma(i)}-b_{\sigma(k)})$. If $\sigma\in \mathcal{M}_u$ then $\Re(\bar u\,\Delta z_{ik}(\sigma))\ge0$ for all $i<k$.
\end{lemma}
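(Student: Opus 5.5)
The plan is a direct expansion for the identity, followed by a one-swap optimality argument for the inequality.

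\emph{The identity.} Set $\tau=\sigma\circ(i\,k)$. Then $\tau(i)=\sigma(k)$, $\tau(k)=\sigma(i)$, and $\tau(l)=\sigma(l)$ for every other row $l$. In the difference $z_\sigma-z_\tau=\sum_l a_l\bigl(b_{\sigma(l)}-b_{\tau(l)}\bigr)$, all rows except $i$ and $k$ cancel. What remains is
\[
a_i\bigl(b_{\sigma(i)}-b_{\sigma(k)}\bigr)+a_k\bigl(b_{\sigma(k)}-b_{\sigma(i)}\bigr)=(a_i-a_k)\bigl(b_{\sigma(i)}-b_{\sigma(k)}\bigr).
\]
This is exactly $\Delta z_{ik}(\sigma)$ as defined in~\eqref{eq:swap-gain}. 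The computation is purely bilinear and uses no conjugation, which is consistent with the definition of $z_\sigma(a,b)$.

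\emph{The inequality.} Suppose $\sigma\in\mathcal M_u$, so $\sigma$ maximizes $\Re(\bar u z_\rho)$ over all $\rho\in S_n$. Since $\tau=\sigma\circ(i\,k)$ is a permutation, we have $\Re(\bar u z_\sigma)\ge\Re(\bar u z_\tau)$. The functional $z\mapsto\Re(\bar u z)$ is real-linear, so this rearranges to $\Re\bigl(\bar u(z_\sigma-z_\tau)\bigr)\ge0$. Substituting the identity gives $\Re(\bar u\,\Delta z_{ik}(\sigma))\ge0$.

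Neither step is a real obstacle. The only point needing care is the composition convention: $\sigma\circ(i\,k)$ must exchange the \emph{partners} of rows $i$ and $k$, not the images of the points $b_i$ and $b_k$. Under the convention $(\sigma\circ\pi)(l)=\sigma(\pi(l))$ this holds. The result is also insensitive to the choice of convention, because for either choice the resulting matching lies in $S_n$ and its correlation differs from $z_\sigma$ only in rows $i$ and $k$. For the phase argument that follows, I would also note two consequences. First, $\bar u\,\Delta z_{ik}(\sigma)$ lies in the closed right half-plane. Second, under (G1) it is nonzero, since $z_\sigma\ne z_\tau$ for distinct matchings. Together these place each term of~\eqref{eq:phase} in $[-\pi/2,\pi/2]$.
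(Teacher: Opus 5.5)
Your proof is correct and follows the paper's argument: the two matchings differ only in rows $i$ and $k$, which gives the identity, and optimality of $\sigma$ against $\sigma\circ(i\,k)$ gives the inequality. One aside is inaccurate. Under the other convention $(i\,k)\circ\sigma$, the changed rows are $\sigma^{-1}(i)$ and $\sigma^{-1}(k)$, so the displayed identity does depend on the right composition you defined. Only the family of inequalities over all pairs is unchanged, up to reindexing.
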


\begin{proof}[Proof of \Cref{lem:exchange}]
The two matchings differ only in rows $i$ and $k$, which gives the identity for the swap gain~\eqref{eq:swap-gain}. Optimality of $\sigma$ against the swapped matching gives the inequality.
\end{proof}

Condition (G1) forces the entries of $a$ to be distinct and the entries of $b$ to be distinct, because a repeated entry makes two swapped correlations equal. Thus $\Delta z_{ik}(\sigma)\ne0$. By \Cref{lem:exchange}, optimality confines each term of the phase to $[-\pi/2,\pi/2]$ and $\Psi(\sigma,u)$ to $[-\tfrac\pi2\tbinom n2,\tfrac\pi2\tbinom n2]$.

\subsubsection{Total Jump Over One Turn}\label{app:phase-total}
For generic inputs with $n\ge3$, we index the vertices $z_{\sigma_m}$ counterclockwise modulo $V$ and let $u_m$ be the outward unit normal of the edge $[z_{\sigma_m},z_{\sigma_{m+1}}]$. By \Cref{lem:cycle}, $\mathcal M_{u_m}=\{\sigma_m,\sigma_{m+1}\}$, while $\sigma_m$ is the unique maximizer inside the arc from $u_{m-1}$ to $u_m$. The relative permutation of $\sigma_m$ and $\sigma_{m+1}$ is one cycle of length $L_m\ge2$. For $n=2$, the polygon is a segment with two vertices. We define the jump at the $m$th edge by $\Delta\Psi_m=\Psi(\sigma_{m+1},u_m)-\Psi(\sigma_m,u_m)$.

\begin{proposition}[total jump]\label{prop:total}
For generic $(a,b)$ with $n\ge3$, the jumps satisfy
\begin{equation}
\sum_{m=1}^{V}\Delta\Psi_m=\pi n(n-1).
\end{equation}
\end{proposition}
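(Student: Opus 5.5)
We will prove the identity by following the phase $\Psi$ around one full counterclockwise turn of the supporting direction, $u=e^{\mathrm i\varphi}$ with $\varphi$ increasing by $2\pi$. We split this turn into the open arcs of directions between consecutive edge normals $u_{m-1}$ and $u_m$, and the $V$ instants at which $u$ equals some $u_m$. On each open arc the optimal matching $\sigma_m$ is unique, so $\Psi(\sigma_m,u)$ is defined there and extends continuously to the closed arc $[u_{m-1},u_m]$. At the right endpoint we then add the jump $\Delta\Psi_m$ and continue with $\sigma_{m+1}$. We start and end at a direction interior to an arc, so the matching and the phase at the end coincide with those at the start. The net change is therefore zero, and it equals the sum of the continuous changes and the jumps.

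\textbf{Continuous part.} Fix $m$ and $u$ in the closed arc from $u_{m-1}$ to $u_m$; at both endpoints $\sigma_m$ is still in $\mathcal M_u$. By \Cref{lem:exchange}, every term $\bar u\Delta z_{ik}(\sigma_m)$ lies in the closed right half-plane, and it is nonzero because (G1) makes the entries of $a$ and of $b$ distinct. The principal argument is continuous and single-valued on the closed right half-plane minus the origin. Hence $\Arg(\bar u\Delta z_{ik})=\Arg(\Delta z_{ik})-\varphi$ holds exactly, with no reduction modulo $2\pi$, throughout the arc. Each term thus decreases with unit rate in $\varphi$, and $\Psi(\sigma_m,\cdot)$ decreases by $\binom n2$ times the angular length of the arc. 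Summing over the $V$ arcs, whose lengths total $2\pi$, gives a continuous decrease of $2\pi\binom n2=\pi n(n-1)$.

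\textbf{Balance.} The continuous contributions total $-\pi n(n-1)$, and the jumps contribute $\sum_m\Delta\Psi_m$. Since the total change is zero, $\sum_m\Delta\Psi_m=\pi n(n-1)$.

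The only delicate points are bookkeeping, and I expect them to be the main obstacle only in stating them cleanly. First, the values $\Psi(\sigma_m,u_m)$ and $\Psi(\sigma_{m+1},u_m)$ used in $\Delta\Psi_m$ must be the one-sided limits of the continuous branches. This holds because both matchings are optimal at $u_m$, so their terms stay in the closed right half-plane, where $\Arg$ is continuous. Second, the arcs must be nondegenerate and their lengths must sum to exactly $2\pi$. This follows from genericity: by (G2) each edge normal is attained once, and the $V\ge3$ normals of a two-dimensional polygon are distinct, occur in counterclockwise order and partition the circle. Two-dimensionality for $n\ge3$ holds because (G1) gives $n!\ge6$ distinct correlations and (G2) forbids them from all lying on one line. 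We never need the value of any individual jump, so \Cref{lem:jump} is not used here.
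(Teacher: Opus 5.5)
Your proposal is correct and follows the paper's proof: on each closed arc between consecutive edge normals, the exchange lemma keeps every term in the closed right half-plane so its argument decreases at unit rate, the arcs contribute $-2\pi\binom n2$ in total, and returning to the starting direction with the same matching forces the jumps to balance this decrease. One small imprecision is that $\Arg(\bar u\Delta z_{ik})=\Arg(\Delta z_{ik})-\varphi$ holds on each arc only up to an additive constant in $2\pi\mathbb Z$ (continuity makes the constant fixed), which does not affect the unit rate or the conclusion.
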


\begin{proof}[Proof of \Cref{prop:total}]
We unwrap the angles of the normals as $\varphi_0<\cdots<\varphi_V=\varphi_0+2\pi$, with $u_m=e^{\mathrm i\varphi_m}$. On $[\varphi_{m-1},\varphi_m]$, optimality keeps each $e^{-\mathrm i\varphi}\Delta z_{ik}(\sigma_m)$ in the closed right half-plane without the origin, where its principal argument decreases at unit speed. Hence $\Psi(\sigma_m,u_m)-\Psi(\sigma_m,u_{m-1})=-\binom n2(\varphi_m-\varphi_{m-1})$. Summing over $m$ and adding the jumps gives
\begin{equation}
\begin{aligned}
&\sum_m\big[\Psi(\sigma_{m+1},u_m)-\Psi(\sigma_m,u_{m-1})\big]\\
&\qquad=\sum_m\Delta\Psi_m-2\pi\binom n2.
\end{aligned}
\end{equation}
The left side telescopes to $0$, because the final normal direction and matching agree with the initial ones. The jumps therefore balance the continuous decrease over the full turn.
\end{proof}

\input{appendix/transition_geometry.tex}

\subsection{The Jump Lemma}\label{app:jump}

We prove \Cref{lem:jump} from the cycle geometry of \Cref{sec:transition-geometry} and the results on orientation and angle change in \Cref{lem:orient,lem:braid}.

In the notation of \Cref{app:phase-total}, we fix $m$ and abbreviate $\sigma=\sigma_m$, $\tau=\sigma_{m+1}$, $u=u_m$. We write the cycle as $i_1\to\cdots\to i_L\to i_1$, with $j_r=\sigma(i_r)$, $\tau(i_r)=j_{r+1}$, and $\tau=\sigma$ off $\mathcal I_{\mathrm{cyc}}=\{i_1,\dots,i_L\}$; indices are taken modulo $L$. Let $\Gamma$ follow $b_{j_1},\dots,b_{j_L}$, and set $I=\sum_{j_0\notin\{j_r\}}\operatorname{wind}(\Gamma,b_{j_0})$ for $L\ge3$ and $I=0$ for $L=2$. We prove that $\Gamma$ is simple and counterclockwise and that $\Delta\Psi_m=\pi(L-1+2I)$; thus $I$ counts the off-cycle sites inside $\Gamma$. The proof converts the phase jump into angle changes along the paths of the reassigned points and then computes those changes from the geometry of the cycle. The case of a swap requires a separate calculation, because its two paths collide.

\paragraph{Step 1: Interpolation.}
We interpolate $B(\zeta)=(1-\zeta)(b_{\sigma(i)})_i+\zeta(b_{\tau(i)})_i$ and set $\omega_{ik}(\zeta)=\bar u(a_i-a_k)(B_i(\zeta)-B_k(\zeta))$. Optimality gives nonnegative real parts at both endpoints (\Cref{lem:exchange}), and the affine interpolation preserves this inequality along the path. If $B_i-B_k$ never vanishes, the path stays in the closed right half-plane without the origin, so its continuous change of argument equals the difference of its principal arguments. Multiplication by the fixed nonzero factor $\bar u(a_i-a_k)$ does not alter the continuous angle change of $B_i-B_k$. Thus, in the absence of collisions,
\begin{equation}\label{eq:J-as-braid}
\Delta\Psi_m=\sum_{i<k}\Delta_{\zeta\in[0,1]}\arg\big(B_i(\zeta)-B_k(\zeta)\big),
\end{equation}
where $\Delta\arg$ denotes the continuous angle change.

\paragraph{Step 2: A Power Diagram.}
We set $c_{ij}=\Re(\bar ua_ib_j)$. The optimal face of the primal assignment problem is the segment $[P_\sigma,P_\tau]$. Strict complementarity~\cite{goldman1956} supplies a feasible optimal dual $(f,g)$ whose tight entries are exactly the union of the supports of these two matchings. We write $\xi_i^{\mathrm{row}}=\overline{\bar ua_i}$ and $y_j=b_j$, so that $c_{ij}=\langle\xi_i^{\mathrm{row}},y_j\rangle$, and define
\begin{equation}
\mathcal{R}_j=\{\xi:\langle \xi,y_j\rangle-g_j\ge\langle \xi,y_k\rangle-g_k\ \forall k\}.
\end{equation}
These are the power cells of~\eqref{eq:power-distance}. At row $i$, the maximizing sites correspond exactly to the tight dual entries, so membership in a cell identifies the possible optimal partners of the row. Unchanged rows lie in the interiors of cells; changed rows lie in the relative interiors of shared edges, where precisely two sites tie. Each tied site is the unique maximizer nearby on one side of the edge. Since every site is used by an optimal matching, every cell has interior.
The diagram is dual to the regular subdivision~\cite{aurenhammer1987,deloera2010} $\mathcal T$ obtained from the lower lifting $(y_j,g_j)$. Every site is a vertex of $\mathcal T$, and shared cell edges give noncrossing edges $[y_j,y_l]$ that contain no other site. The reassignment cycle follows these edges. Thus the distinct sites of the cycle form a simple polygon for $L\ge3$, with no off-cycle site on its boundary. For $L=2$, no other site lies on the segment of the swap. Collinear sites give a one-dimensional subdivision and cannot support a longer cycle.

\paragraph{Step 3: $L=2$.}
The swap gives $\Delta z_{i_1i_2}(\tau)=-\Delta z_{i_1i_2}(\sigma)$. Set $\Delta z_e=z_\tau-z_\sigma=-\Delta z_{i_1i_2}(\sigma)$; counterclockwise traversal gives $u=R_-(\Delta z_e)/|\Delta z_e|$ and $\bar u\Delta z_{i_1i_2}(\sigma)=-\mathrm i|\Delta z_e|$. The argument of this pair therefore changes from $-\pi/2$ to $+\pi/2$, which contributes $\pi$. The two interpolants collide at $\zeta=1/2$, so this direct computation at the endpoints is needed in place of~\eqref{eq:J-as-braid}. Relative to each off-cycle site, the two moving points traverse the same segment in opposite directions without meeting that site (Step~2), so their angle changes cancel. Every remaining pair is constant. Hence $\Delta\Psi_m=\pi$.

\paragraph{Step 4: Orientation ($L\ge3$).}
We traverse the edge $[z_\sigma,z_\tau]$ counterclockwise, so $\bar u(z_\tau-z_\sigma)=\mathrm i|z_\tau-z_\sigma|$. With $e_r=y_{j_{r+1}}-y_{j_r}$, this gives
\begin{align}
0<\Im\big(\bar u(z_\tau-z_\sigma)\big)
&=\sum_{r=1}^L\Im\big(\bar ua_{i_r}(b_{j_{r+1}}-b_{j_r})\big)\\
&=\sum_{r=1}^L\xi_{i_r}^{\mathrm{row}}\times e_r=:\chi_\Gamma.
\end{align}
\begin{lemma}\label{lem:orient}
In the situation of Step~2 with $L\ge3$, $\chi_\Gamma>0$ if $\Gamma$ is counterclockwise and $\chi_\Gamma<0$ if it is clockwise. Hence $\Gamma$ is counterclockwise.
\end{lemma}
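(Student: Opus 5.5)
The plan is to express $\chi_\Gamma$ through the power diagram of Step~2 and to show that it is a sum of nonnegative terms, with at least one term strictly positive, when $\Gamma$ is counterclockwise. Reversing all orientations then gives the opposite sign when $\Gamma$ is clockwise. Since Step~4 has already shown $\chi_\Gamma>0$, it follows that $\Gamma$ is counterclockwise.

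\textbf{Setup.} Write $\xi_r=\xi^{\mathrm{row}}_{i_r}$. Row $i_r$ ties exactly between the sites $y_{j_r}$ and $y_{j_{r+1}}$. Therefore $\xi_r$ lies in the relative interior of the shared cell edge $\mathcal R_{j_r}\cap\mathcal R_{j_{r+1}}$. This edge lies on the line $\langle\xi,e_r\rangle=g_{j_{r+1}}-g_{j_r}$, which is perpendicular to $e_r$.

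\textbf{Counterclockwise case: the faces inside $\Gamma$.} By Step~2, $\Gamma$ is a simple polygon made of edges of the regular subdivision $\mathcal T$, with no off-cycle site on its boundary. Hence its interior is a union of bounded faces $T$ of $\mathcal T$. Each such face has a dual power-diagram vertex $p_T$, the common point of the cells of its sites. Take $\Gamma$ counterclockwise, and let $T_r$ be the interior face adjacent to $e_r$, which lies on the left of $e_r$.

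\textbf{Direction of the dual edges.} Cells maximize $\langle\xi,y_j\rangle-g_j$, so the power diagram behaves like a normal fan. The cell edge dual to $e_r$ therefore leaves $p_{T_r}$ in the direction of the outward normal of $T_r$ at $e_r$, namely $R_-(e_r)$. Consequently $\xi_r=p_{T_r}+s_rR_-(e_r)$ with $s_r>0$, because $\xi_r$ is in the relative interior of that edge, which starts at $p_{T_r}$. Using $R_-(e)\times e=|e|^2$, this gives
\begin{equation*}
\xi_r\times e_r=p_{T_r}\times e_r+s_r|e_r|^2 .
\end{equation*}

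\textbf{The boundary sum is nonnegative.} It remains to show $\sum_r p_{T_r}\times e_r\ge0$, and I will do this by a discrete Stokes argument. Orient the boundary of each face $T$ counterclockwise. For every face, $\sum_{e\in\partial T}p_T\times e=p_T\times\sum_{e\in\partial T} e=0$. Now sum this identity over all faces inside $\Gamma$.
\begin{itemize}
\item Boundary edges of $\Gamma$ each appear once, as $p_{T_r}\times e_r$.
\item An interior edge $e$, oriented counterclockwise for $T$ and shared with $T'$, contributes $(p_T-p_{T'})\times e$.
\item The dual edge runs from $p_T$ to $p_{T'}$ in the outward normal direction of $T$, so $p_{T'}=p_T+t_eR_-(e)$ with $t_e\ge0$. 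Hence this contribution equals $-t_e|e|^2$.
\end{itemize}
Therefore
\begin{equation*}
\sum_rp_{T_r}\times e_r=\sum_{e\text{ interior}}t_e|e|^2\ge0,
\end{equation*}
and so $\chi_\Gamma\ge\sum_rs_r|e_r|^2>0$.

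\textbf{Clockwise case.} If $\Gamma$ is clockwise, the interior faces lie on the right of each $e_r$. The outward normals are then $-R_-(e_r)$, and every sign above flips, giving $\chi_\Gamma<0$.

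\textbf{Main obstacle.} The delicate step is the orientation claim: that the dual edge of a subdivision edge leaves $p_T$ in the outward normal direction of $T$, and that $\xi_r$ lies strictly beyond $p_{T_r}$ on it. I would verify this directly from the lifting $(y_j,g_j)$. The lower facet over $T$ has gradient $p_T$. For $\xi=p_T+sR_-(e)$ with small $s>0$, the two endpoint sites of $e$ remain tied while every other vertex of $T$ loses. Conversely, moving toward the interior of $T$ selects the opposite vertices, so the dual edge lies on the outward side of $p_T$.

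A second point to check is that non-triangular faces and sites of degree two cause no trouble. Both are covered, because the identity $\sum_{e\in\partial T} e=0$ and the tie condition at $p_T$ hold for any convex face of $\mathcal T$.
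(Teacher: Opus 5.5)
Your proposal is correct and follows essentially the same route as the paper's proof. Both argue through the same three steps: the dual edge of $e$ leaves the face's diagram vertex in direction $R_-(e)$; summing $\sum_{e\in\partial Q}v_Q\times e=0$ over the enclosed faces leaves $\sum_{\text{interior }e}\eta_e|e|^2\ge0$; and strict positivity comes from $\xi_{i_r}^{\mathrm{row}}$ lying in the relative interior of the dual edge. The clockwise case then follows by reversing orientations.
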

\begin{proof}[Proof of \Cref{lem:orient}]
The region $D$ bounded by $\Gamma$ is a union of faces $Q$ of the subdivision, each dual to a vertex $v_Q$ of the diagram at which all sites of $Q$ are maximizers. We relate the orientation of each edge of the subdivision to that of its dual edge, and then sum over the enclosed faces to compare the contributions of the interior edges and of the boundary.

(i) \emph{Reciprocity.} Orient $e=y_l-y_j$ with $Q$ on its left. Its dual edge leaves $v_Q$ in the direction $R_-(e)$: the two tied scores keep $\langle\xi,e\rangle$ constant, while every other site of $Q$ loses maximality because $\langle R_-(e),y_k-y_j\rangle=-e\times(y_k-y_j)<0$. The opposite direction fails this inequality. If $Q'$ lies on the right of $e$, then $v_{Q'}=v_Q+\eta_e R_-(e)$ for some $\eta_e\ge0$.

(ii) \emph{Cancellation across faces.} For counterclockwise $\Gamma$, sum $\sum_{e\in\partial Q}v_Q\times e=0$ over the faces in $D$, with each boundary traversed counterclockwise. An interior edge contributes $(v_Q-v_{Q'})\times e=-\eta_e|e|^2$. Thus the outer boundary satisfies $\sum_rv_{Q_r}\times e_r=\sum_{\text{interior }e}\eta_e|e|^2\ge0$, where $Q_r$ lies to the left of $e_r$.

(iii) \emph{Strict positivity.} By (i), the point $\xi_{i_r}^{\mathrm{row}}$, which lies in the relative interior of the diagram edge dual to $e_r$, equals $v_{Q_r}+\eta_rR_-(e_r)$ with $\eta_r>0$. Therefore $\chi_\Gamma=\sum_rv_{Q_r}\times e_r+\sum_r\eta_r|e_r|^2>0$.
For clockwise $\Gamma$, reversing its edges to $-e_r$ gives $-\chi_\Gamma>0$ by the same computation.
\end{proof}

\paragraph{Step 5: Winding ($L\ge3$).}
For each fixed off-cycle site $y_{j_0}$, the moving points together trace $\Gamma$ and contribute $2\pi\operatorname{wind}(\Gamma,y_{j_0})$. Within the cycle, the points $\gamma_r(\zeta)=(1-\zeta)y_{j_r}+\zeta y_{j_{r+1}}$ never collide: nonadjacent edges are disjoint, and a shared endpoint is reached at $\zeta=1$ by one point and left at $\zeta=0$ by the other. Thus~\eqref{eq:J-as-braid} gives
\begin{equation}
\begin{gathered}
\Delta\Psi_m=\Theta(\Gamma)+2\pi I,\qquad\\
\Theta(\Gamma):=\sum_{r<s}\Delta_{\zeta\in[0,1]}\arg\big(\gamma_r(\zeta)-\gamma_s(\zeta)\big).
\end{gathered}
\end{equation}
\Cref{lem:braid} gives $\Theta(\Gamma)=\pi(L-1)$, and since $\Gamma$ is simple and counterclockwise, $I$ is the number of enclosed off-cycle sites. This proves \Cref{lem:jump}.
\qed

\begin{lemma}[braid lemma]\label{lem:braid}
Let $\Gamma=(\gamma_1,\dots,\gamma_L)$, $L\ge3$, be a simple polygon with indices modulo $L$, and let every vertex slide linearly to the next one, $\gamma_r(\zeta)=(1-\zeta)\gamma_r+\zeta\gamma_{r+1}$ for $0\le \zeta\le1$. Then $\Theta(\Gamma)=\pi(L-1)$ if $\Gamma$ is counterclockwise and $-\pi(L-1)$ if clockwise.
\end{lemma}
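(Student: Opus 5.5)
The plan is a deformation argument. It rests on the observation that $\Theta(\Gamma)$ is locally constant on the space of simple counterclockwise $L$-gons, and then evaluates it on a regular polygon.

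The first step is to show that for any simple polygon the sliding points never collide. A collision means $\gamma_r(\zeta)=\gamma_s(\zeta)$ for some $r\ne s$, so the two points lie on the edges $[\gamma_r,\gamma_{r+1}]$ and $[\gamma_s,\gamma_{s+1}]$. Nonadjacent edges of a simple polygon are disjoint. Adjacent edges meet only at their shared vertex, which one point reaches at $\zeta=1$ and the other leaves at $\zeta=0$, so they are never there at the same time. Hence each term $\Delta\arg(\gamma_r-\gamma_s)$ is well defined and depends continuously on the vertices, and because the endpoint configurations are also collision-free, each term is a continuous function of $\Gamma$. The sum $\Theta(\Gamma)$ is therefore continuous on the space of simple polygons.

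The second step makes $\Theta$ constant by separating its integer and fractional parts. At $\zeta=0$ the pair $(r,s)$ has direction $\gamma_r-\gamma_s$, and at $\zeta=1$ it has direction $\gamma_{r+1}-\gamma_{s+1}$, so $\Theta$ equals a fixed combination of endpoint arguments plus $2\pi$ times an integer. Over all pairs, the endpoint directions at $\zeta=1$ form the same set of difference vectors as at $\zeta=0$, because the shift $r\mapsto r+1$ permutes the unordered pairs. The orientation of each pair can flip, however. Writing $\Theta=\sum(\arg_{\text{end}}-\arg_{\text{start}})\bmod 2\pi$, the end directions are a permutation of the start directions, possibly negated. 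This shows $\Theta\in\pi\mathbb Z$. A continuous function taking values in $\pi\mathbb Z$ is constant on connected components.

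The third step uses the fact that the space of simple counterclockwise $L$-gons with labelled vertices is connected. This is the classical result that any simple polygon can be deformed through simple polygons to a convex polygon (for instance via Whitney--Graustein, or by the polygon convexification theorem of Connelly--Demaine--Rote), and convex counterclockwise polygons form a convex-like, connected set. It then suffices to compute $\Theta$ for the regular polygon $\gamma_r=e^{2\pi\mathrm i r/L}$. There the motion is symmetric: it approximates a rotation by $2\pi/L$, and in fact, up to uniform scaling, each configuration is a rotated regular polygon. Each difference vector therefore rotates by exactly $2\pi/L$, and the scaling stays positive since the chord midpoint of a regular polygon is nonzero. Summing over the $\binom L2$ pairs gives $\binom L2\cdot 2\pi/L=\pi(L-1)$. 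For a clockwise polygon, reflecting in the real axis negates every argument change, which gives $-\pi(L-1)$.

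The main obstacle is the connectivity step in the third paragraph. Invoking the convexification theorem is the cleanest route, but it is a heavy external result. If I wanted to avoid it, I would instead try an induction on $L$ using ear removal, which would require checking how $\Theta$ changes when an ear vertex is merged.
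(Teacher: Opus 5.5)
Your proposal follows the paper's proof essentially step for step: absence of collisions gives continuity, the shifted differences being a signed permutation of the original ones makes $\Theta$ discrete-valued (counting the $L-1$ sign flips recovers the paper's sharper $e^{\mathrm i\Theta}=\operatorname{sgn}(c)=(-1)^{L-1}$), the convexification theorem of Connelly et~al.\ gives connectivity, and the regular polygon gives $\binom L2\cdot2\pi/L=\pi(L-1)$, with reflection handling the clockwise case. The one step you assert without argument is that labeled counterclockwise convex polygons form a connected set. They are not a convex subset of $\mathbb C^L$, since averaging a polygon with its half-turn collapses it to a point. The paper closes this step by placing the origin inside the polygon, writing the vertices as $\rho_re^{\mathrm i\alpha_r}$ with angular gaps in $(0,\pi)$, and interpolating the radii to $1$ and the gaps to $2\pi/L$ through star-shaped simple polygons. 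You should also drop Whitney--Graustein, since its regular homotopies pass through non-simple curves.
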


\begin{proof}[Proof of \Cref{lem:braid}]
As in Step~5, points that slide along distinct edges of a simple polygon do not collide. Each summand of $\Theta$ is the principal angle from $\gamma_r-\gamma_s$ to $\gamma_{r+1}-\gamma_{s+1}$, because the segment that connects them avoids $0$. Thus $\Theta$ is continuous on the space $\mathcal S_L^+$ of simple counterclockwise $L$-gons with labeled vertices.
The cyclic shift of the vertices also fixes the value of $\Theta$ modulo a full turn:
\begin{equation}
\begin{aligned}
e^{\mathrm i\Theta}
&=\prod_{r<s}\frac{\gamma_{r+1}-\gamma_{s+1}}{\gamma_r-\gamma_s}
\Big/\Big|\prod_{r<s}\frac{\gamma_{r+1}-\gamma_{s+1}}{\gamma_r-\gamma_s}\Big|
\\
&=\operatorname{sgn}(c)=(-1)^{L-1},
\end{aligned}
\end{equation}
because permuting the points by the cyclic shift $c$ multiplies $\prod_{r<s}(\gamma_r-\gamma_s)$ by $\operatorname{sgn}(c)$. Thus $\Theta\in\pi(L-1)+2\pi\mathbb Z$, and $\Theta$ is locally constant on $\mathcal S_L^+$.
It remains to show that $\mathcal S_L^+$ is path-connected. By the convexification theorem of Connelly et~al.~\cite{connelly2003}, each polygon can be convexified continuously through simple polygons, with its orientation preserved. We translate the resulting convex polygon so that the origin lies in its interior. Its vertices are then $\rho_re^{\mathrm i\alpha_r}$ with increasing angles and consecutive gaps in $(0,\pi)$. Interpolating the radii to $1$ and the gaps to $2\pi/L$ keeps the gaps in $(0,\pi)$, so the polygon remains star-shaped about the origin, simple and counterclockwise. Finally we rotate the resulting regular polygon to a fixed reference. Continuity and local constancy then make the angle sum the same for every polygon in this space.
For the regular polygon $\gamma_r=e^{2\pi\mathrm ir/L}$ we have $\gamma_r(\zeta)-\gamma_s(\zeta)=(\gamma_r-\gamma_s)\big((1-\zeta)+\zeta e^{2\pi\mathrm i/L}\big)$, so each pair turns by $2\pi/L$ and $\Theta=\binom L2\frac{2\pi}L=\pi(L-1)$. Reversing the orientation negates $\Theta$.
\end{proof}

\subsection{Vertex Bound and Tightness}\label{app:main}
We combine \Cref{lem:jump} (\Cref{app:jump}) and \Cref{prop:total} (\Cref{app:phase-total}) to prove \Cref{lem:counting}. \Cref{lem:semicont} (\Cref{app:semicontinuity}) extends the resulting bound to every input, and \Cref{prop:tight} establishes the sharpness claimed in \Cref{thm:main}.

\subsubsection{Counting Identity and Vertex Bound}\label{app:counting-bound}
\begin{proof}[Proof of \Cref{thm:main,lem:counting}]
For generic inputs with $n\ge3$, substituting~\eqref{eq:jump} into \Cref{prop:total} and dividing by $\pi$ gives~\eqref{eq:counting}. Each term is at least one, so $V\le n(n-1)$. \Cref{lem:semicont} extends the bound to all inputs, and for $n=2$ there are at most two correlations. One representative matching per vertex covers all directions. Under genericity, the maximizing matching is unique between edge normals and changes once at each normal.
\end{proof}

\subsubsection{Tightness}\label{app:tightness}
\begin{proof}[Proof of \Cref{prop:tight}]
We relabel so that $a_1<\cdots<a_n$. Abel's identity \eqref{eq:abel-summation} expresses each correlation through the positive increments $a_k-a_{k-1}$ and nested column sums. In a direction $u$, assigning the largest row weights to the largest projections $\Re(\bar u b_j)$ maximizes every sum simultaneously. Equality of the support functions shows that $\mathcal P$ is a translate of the Minkowski sum of positive multiples of $Q_h=\operatorname{conv}\{\sum_{j\in S}b_j:|S|=h\}$, $1\le h<n$.
It remains to count the changes in the order of the projections. For $n\ge3$, each pair exchanges its order twice, at the directions with $\Re(\bar u(b_j-b_l))=0$. Because the differences are non-parallel, all $2\binom n2$ transition directions are distinct. The tied columns are adjacent in the sorted order, so exactly one maximizing top-$h$ subset changes, which exposes one nondegenerate edge of one $Q_h$. Every other summand has a unique supporting vertex. The positive coefficient preserves this edge in the Minkowski sum, and no further transitions occur. Thus the polygon has exactly $n(n-1)$ edges and vertices. For $n=2$, the correlations differ by $(a_1-a_2)(b_1-b_2)\ne0$, which gives a segment with two vertices.
\end{proof}

\subsection{Correctness of the Rubix Algorithm}\label{app:algorithms}
We prove the invariant on boundary arcs and the query count of \Cref{alg:rubix,prop:es}, and combine them with \Cref{lem:reduction,thm:main} to prove \Cref{cor:pw}.

\subsubsection{Polygon Reconstruction}\label{app:reconstruction}
\begin{proof}[Proof of \Cref{prop:es}]
If the two initial outputs have the same correlation, both the real extrema and the opposite secondary imaginary extrema coincide. Thus $\mathcal P$ is a point and two calls suffice. Otherwise, the two initial chords represent complementary boundary arcs. A positive support gap splits its arc at a vertex strictly between the endpoints, and a zero gap confirms an edge. Pending arcs have disjoint interiors and exclude their endpoints, so no vertex is discovered twice: each query adds a vertex or confirms an edge. Each initial arc generates a binary tree with the splits as internal nodes and the confirmed edges as leaves. Together these trees have $V-2$ internal nodes and $V$ leaves, which gives $2+(V-2)+V=2V$ calls, including the initialization. For a segment, the two directed copies of the chord give four calls. \Cref{thm:main} bounds $V$, regardless of how many permutations represent a vertex.
\end{proof}

\subsubsection{Exact Alignment and Complexity}\label{app:pw-complexity}
\begin{proof}[Proof of \Cref{cor:pw}]
\Cref{prop:es} gives exactly $2V$ oracle calls, for points and segments as well. Lexicographic assignment uses ordered pairs of costs and cubic componentwise arithmetic (\Cref{sec:polygon-reconstruction}). We apply this to the polygons of \Cref{lem:reduction} for $\mathrm{SO}(2)$ or $\mathrm O(2)$. By \Cref{thm:main}, the total is $\mathcal O(n^5)$ arithmetic operations; bit costs are separate. For input coordinates with rational real and imaginary parts, the oracle costs and the edge normals are rational, which permits exact verification of the returned polygon (\Cref{prop:optimality,app:verification}).
\end{proof}

\input{appendix/verification.tex}

\subsection{Fixed Points of Alternation}\label{app:am}

\paragraph{Update conventions.}
The claims of \Cref{prop:am} on fixed points and finite convergence use the tie convention of \Cref{sec:am-comparison}. We initialize $\mathrm{SO}(2)$ AM~\cite{adamo2025} with an assignment that returns a vertex. Subsequent assignments retain the current matching if it is optimal and otherwise return a vertex maximizer. For nonzero correlation, $\sigma_{t+1}\in\mathcal M_{u_t}$ with $u_t=z_{\sigma_t}/|z_{\sigma_t}|$.

\begin{proof}[Proof of \Cref{prop:am}]
We write $z=z_{\sigma_t}$, $z'=z_{\sigma_{t+1}}$, and $u=z/|z|$ for $z\ne0$. Optimality of the assignment gives
\begin{equation}
|z'|\ge\Re(\bar u z')\ge\Re(\bar u z)=|z|.
\end{equation}
Retention makes the second inequality strict whenever the matching changes, even if several matchings share a vertex. A matching is retained exactly when $\Re(\bar z(w-z))\le0$ for all $w\in\mathcal P$. A vertex of largest modulus satisfies this condition because $\Re(\bar zw)\le|z||w|\le|z|^2$. A nonzero initial vertex excludes zero thereafter and allows at most $V-1\le n(n-1)-1$ strict increases. An unchanged matching retains its correlation and direction and therefore remains fixed. Including the final check, there are at most $V$ assignment updates after the initialization.
\end{proof}

\paragraph{Ties and zero correlations.}
An arbitrary tie rule need not fix a stationary vertex: on the segment $[1,1+\mathrm i]$, both endpoints maximize the real part but their moduli differ. The example $a=(1,0)$, $b=(1,1+\mathrm i)$ satisfies (G1) and (G2), so genericity does not replace the retention rule. Numerical implementations may use other tie rules.

At zero correlation, every angle solves the Procrustes problem~\cite{schonemann1966}, and a direction must be chosen explicitly. The modulus remains nondecreasing. A different vertex has positive modulus; a retained zero vertex is fixed for the chosen direction, which stationarity alone does not determine. If $\mathcal P=\{0\}$, every rotation has the same cost and retention fixes the matching.

%% file: appendix/transition_geometry.tex
\subsection{Geometry of a Transition}\label{sec:transition-geometry}

To prove \Cref{lem:jump}, we place the reassignment cycle of \Cref{lem:cycle} on a planar subdivision, on which a cycle of length at least three is a simple polygon. This subsection collects the facts on duality and on angles that Steps~2, 4 and~5 of \Cref{app:jump} use.

\paragraph{Assignment duality.} The dual of the assignment problem minimizes $\sum_i f_i+\sum_j g_j$ subject to $f_i+g_j\ge c_{ij}$. These inequalities bound the score of every matching, and a matching that consists of \emph{tight} pairs, for which equality holds, attains the bound. Strict complementarity~\cite{goldman1956} supplies an optimal dual whose tight pairs are exactly the pairs used by at least one optimal matching. A single dual solution therefore identifies all exchanges that can occur at a transition.

\paragraph{Power diagrams and their duals.} For distinct sites $b_j\in\mathbb C$ and real numbers $g_j$, the \emph{power cell}~\cite{aurenhammer1987} of $b_j$ consists of the points at which this site minimizes the squared distance minus its weight $|b_j|^2-2g_j$; a point at which several sites tie belongs to all of their cells. Expanding the square gives
\begin{equation}\label{eq:power-distance}
|\xi-b_j|^2-(|b_j|^2-2g_j)=|\xi|^2-2(\langle\xi,b_j\rangle-g_j).
\end{equation}
Thus a power cell is the region in which its site maximizes an affine score. If every cell has interior, lifting each $b_j$ to height $g_j$ and projecting the lower faces of the convex hull gives a \emph{regular subdivision}~\cite{aurenhammer1987,deloera2010} of the hull of the sites, in which every site is a vertex. The two planar structures are dual: two cells that share an edge give an edge of the subdivision that joins their sites. These edges neither cross nor contain another site in their interiors. Collinear sites give segments.

\paragraph{Angles along curves.} A \emph{simple} polygon has no self-intersections on its boundary, and traversing it counterclockwise keeps its interior on the left. For a path $\gamma(\zeta)$ that avoids the origin, $\Delta\arg\gamma$ is the continuous change of its angle, complete turns included. For a closed curve $\Gamma$ that avoids a point $z$, $\operatorname{wind}(\Gamma,z)=\Delta\arg(\Gamma-z)/(2\pi)$. A simple counterclockwise polygon has winding number one about the points inside it and zero about the points outside.

%% file: appendix/verification.tex
\subsection{Exact Verification}\label{app:verification}

We prove \Cref{prop:optimality} by verifying the output of \Cref{alg:rubix} with integer assignment certificates and exact orientation tests, as outlined in \Cref{sec:optimality}.

\paragraph{Lexicographic assignment.}
The two objectives of the oracle can be encoded in one integer assignment problem without perturbing the primary order. For a nonzero rational direction $u$, we clear denominators in $\Re(\bar u a_i b_j)$ and in $\Re(\overline{\mathrm i u}a_i b_j)$ separately, which gives integer matrices $D^{\mathrm{pri}}$ and $D^{\mathrm{sec}}$; positive scaling preserves both orders. The secondary assignment sums lie in $[-nD_{\max},nD_{\max}]$, where $D_{\max}=\max_{i,j}|D^{\mathrm{sec}}_{ij}|$, and distinct primary sums differ by at least one. Maximizing $D^{\mathrm{lex}}_{ij}=N_{\mathrm{lex}}D^{\mathrm{pri}}_{ij}+D^{\mathrm{sec}}_{ij}$ with $N_{\mathrm{lex}}=2nD_{\max}+1$ therefore gives the lexicographic optimum, because every primary gain outweighs any secondary loss. To certify the result, we check that $f_i+g_j\ge D^{\mathrm{lex}}_{ij}$ for all $i,j$, with equality on the matching $\sigma$; equal primal and dual objectives prove optimality. To obtain $h_{\mathcal P}(u)$ for the checks of supporting lines, we evaluate that matching with the original primary scores. The integer dual thus certifies lexicographic optimality, and the evaluation with the primary scores supplies the support value.

\begin{proof}[Proof of \Cref{prop:optimality}]
Recomputing the correlations of the returned matchings gives a hull $\widehat{\mathcal P}\subseteq\mathcal P(a,b)$. Exact signs of $(z'-z)\times(w-z)$ determine its distinct extreme points and verify their reported order up to a cyclic shift. For each edge of a two-dimensional $\widehat{\mathcal P}$, we check that the assignment optimum in the outward normal direction equals the support value of the edge. The corresponding half-planes contain $\mathcal P(a,b)$ and intersect in $\widehat{\mathcal P}$, which proves the reverse inclusion.
The lower-dimensional cases require checks along the hull as well as across it. For a segment, the two opposite normal directions confine $\mathcal P(a,b)$ to its line, and the initial lexicographic queries bound its endpoints, through the secondary extrema if the line is vertical. If the initial points coincide, equality of the primary extrema gives a vertical line, and equality of the opposite secondary extrema gives a single point. Thus $\widehat{\mathcal P}=\mathcal P(a,b)$ in all cases, and convexity reduces the maximum to an exact comparison of the squared moduli of the vertices.
\end{proof}

\paragraph{Exact PW output.}
When translation is allowed, we center the coordinates in rational arithmetic and construct the polygons of \Cref{lem:reduction}. Their maximum squared modulus $r_\star^2$, with $r_\star\ge0$, gives the exact PW value through~\eqref{eq:pw-polygon}. For a maximizing $z\ne0$, the rotation is multiplication by $\bar z/r_\star$, composed with conjugation if the reflected polygon attains the maximum. A zero maximum permits every rotation. Bit costs depend on the size of the input.

%% file: appendix/landscape.tex
\section{Extensions and Limits}\label{app:landscape}

\input{appendix/weights.tex}

\subsection{Comparison of Transformation Classes}\label{app:transformation-classes}

\Cref{tab:landscape} summarizes what is known for different classes of transformations and weights. It combines the equal-weight bounds of \Cref{thm:main,cor:pw} with the extension to rational weights in \Cref{cor:weights}, in which the replication argument supplies representative transport plans by aggregating matchings of the copied points. The resulting algorithm is \emph{pseudo-polynomial} in the common denominator $N$, whose value can be exponential in its bit length. Uniform weights on $n$ and $m$ points use $N=\operatorname{lcm}(n,m)\le nm$, which is polynomial in the sizes of the point sets. \Cref{app:matrix-correlations} derives the formulations with matrix correlations for rotations in higher dimension and for unrestricted linear maps.

\begin{table*}[tp]
\caption{\textbf{Bounds on representative matchings or transport plans} and arithmetic complexity of global alignment.}
\label{tab:landscape}

\begin{center}
\input{tables/landscape.tex}

\end{center}
\end{table*}

\begin{remark}[diagonal weights]\label{rem:special}
The bound uses a common phase $e^{\mathrm i\theta}$ across all products $a_ib_j$. For diagonal weights, let $x_i=(x_i^{(1)},x_i^{(2)})$, $y_j=(y_j^{(1)},y_j^{(2)})$ and $D_{\eta_{\mathrm{diag}}}=\operatorname{diag}(\cos\eta_{\mathrm{diag}},\sin\eta_{\mathrm{diag}})$. Assignment with scores $x_i^\top D_{\eta_{\mathrm{diag}}} y_j$ is encoded by the convex hull of $\big(\sum_ix_i^{(1)}y_{\sigma(i)}^{(1)},\sum_ix_i^{(2)}y_{\sigma(i)}^{(2)}\big)$, where $x^{(1)},y^{(1)},x^{(2)},y^{(2)}$ are real vectors. The integer example in \Cref{app:diagonal-counterexample} has $n=4$ and $16>n(n-1)=12$ vertices, exceeding the quadratic bound.
\end{remark}

\input{appendix/linear_extensions.tex}

%% file: appendix/weights.tex
\subsection{Rational Weights}\label{app:weights}
We prove \Cref{cor:weights} by showing that replication preserves the set of feasible correlations, so that \Cref{thm:main} applies.

\begin{proof}[Proof of \Cref{cor:weights}]
For $N=1$ the set is a single point. Otherwise we replace each atom $x_i$ of positive weight by $N\alpha_i$ copies and each $y_j$ by $N\beta_j$ copies, which gives $\tilde x,\tilde y\in\mathbb C^N$. Aggregating a doubly stochastic matrix over the blocks of copies gives $N\pi$ for some $\pi\in\Pi(\alpha,\beta)$. Conversely, take any such transport plan and set $P_{IJ}=\pi_{ij}/(N\alpha_i\beta_j)$ for the copies $I$ of $x_i$ and $J$ of $y_j$. Its row sums are
\begin{equation}
\sum_J P_{IJ}=\sum_j\frac{N\beta_j\pi_{ij}}{N\alpha_i\beta_j}
=\frac{\sum_j\pi_{ij}}{\alpha_i}=1,
\end{equation}
and its column sums equal one in the same way. Thus the lifted matrix is doubly stochastic. The block $(i,j)$ sums to $N\pi_{ij}$, so
\begin{equation}
\frac1N\sum_{I,J}P_{IJ}\overline{\tilde x_I}\tilde y_J
=\sum_{i,j}\pi_{ij}\bar x_i y_j.
\end{equation}
The weighted polygon is consequently $\frac1N\mathcal P(\overline{\tilde x},\tilde y)$. Scaling does not change the number of vertices, and \Cref{thm:main} applies despite the repeated points.
\end{proof}

%% file: tables/landscape.tex
\begin{tabular}{@{}p{4cm}p{5.7cm}p{6.4cm}@{}}
\toprule
\raggedright Setting & \raggedright Representative matchings or couplings & \raggedright Algorithm and reference\tabularnewline
\midrule
\raggedright $d=1$, reflections $\{\pm1\}$ & \raggedright Two: sorted and reversed & \raggedright $\mathcal{O}(n\log n)$ by sorting\tabularnewline
\raggedright $d=2$, rotations, uniform weights, $n\ge2$ & \raggedright At most $n(n-1)$; tight & \raggedright $\mathcal{O}(n^5)$; at most $2n(n-1)$ assignments (Theorem~\ref{thm:main})\tabularnewline
\raggedright $d=2$, rotations and reflections, $n\ge2$ & \raggedright At most $2n(n-1)$ & \raggedright $\mathcal{O}(n^5)$ (Corollary~\ref{cor:pw})\tabularnewline
\raggedright $d=2$, rotations, uniform weights, $n$ versus $m$ points & \raggedright At most $\max\{1,N_{\mathrm{uni}}(N_{\mathrm{uni}}-1)\}$, where $N_{\mathrm{uni}}=\operatorname{lcm}(n,m)$ & \raggedright Polynomial by duplication (Corollary~\ref{cor:weights})\tabularnewline
\raggedright $d=2$, rotations, rational weights with denominator $N$ & \raggedright At most $\max\{1,N(N-1)\}$; can exceed $n(n-1)$ & \raggedright Pseudo-polynomial by duplication (Corollary~\ref{cor:weights})\tabularnewline
\raggedright $d=2$, linear maps; planar GW with inner-product costs & \raggedright Vertices of a polytope of dimension at most four; can exceed $n(n-1)$ & \raggedright Conjectured superpolynomial vertex count; no polynomial bound (\Cref{app:matrix-conjecture})\tabularnewline
\raggedright Fixed $d\ge3$, rotations & \raggedright Vertices of the matrix correlation polytope give candidates & \raggedright Conjectured superpolynomial vertex count; no polynomial bound (\Cref{app:matrix-conjecture})\tabularnewline
\raggedright Affine one-parameter assignment costs & \raggedright $n^{\Omega(\log n)}$ & \raggedright Hrube\v{s} and Yehudayoff~\cite{hrubes2023}\tabularnewline
\bottomrule
\end{tabular}

%% file: appendix/linear_extensions.tex
\subsection{Matrix-Correlation Formulations}\label{app:matrix-correlations}
The entries of \Cref{tab:landscape} for higher dimensions and for linear maps use a matrix correlation in place of the complex scalar of \Cref{lem:reduction}. We derive the corresponding convex scores over a hull of correlations, and then the related reduction for GW with inner-product costs.
For equally weighted point sets, write $X=[x_1\ \cdots\ x_n]$, $Y=[y_1\ \cdots\ y_n]\in\mathbb R^{d\times n}$, $\mathcal C=\conv\{C_\sigma:\sigma\in S_n\}$ and
\begin{equation}
C_\sigma=\sum_i x_i y_{\sigma(i)}^\top,\qquad \Sigma_Y=YY^\top.
\end{equation}
For $G=\mathrm O(d)$ or $\mathrm{SO}(d)$,
\begin{equation}
\begin{aligned}
&\min_{R\in G}\sum_i\|x_i-Ry_{\sigma(i)}\|^2\\
&\quad=\sum_i\|x_i\|^2+\sum_i\|y_i\|^2
-2\max_{R\in G}\operatorname{tr}(R^\top C_\sigma).
\end{aligned}
\end{equation}
The maximized correlation score is convex in $C_\sigma$, so a vertex of $\mathcal C$ attains its maximum and supplies a representative matching in \Cref{tab:landscape}. The reduction to vertices therefore applies in higher dimensions as well, but bounding the number of these vertices for $d\ge3$ remains open (\Cref{sec:discussion}).

Eliminating an unrestricted linear map $A\in\mathbb R^{d\times d}$ gives a different convex score on the same hull:
\begin{equation}
\min_A\sum_i\|x_i-Ay_{\sigma(i)}\|^2
=\sum_i\|x_i\|^2-\operatorname{tr}(C_\sigma \Sigma_Y^\dagger C_\sigma^\top).
\end{equation}
The Moore--Penrose inverse $\Sigma_Y^\dagger$ replaces each positive eigenvalue by its reciprocal and leaves the zero eigenvalues unchanged. If $v\in\ker \Sigma_Y$, then $0=\sum_j(y_j^\top v)^2$, hence $C_\sigma v=0$ and $C_\sigma=C_\sigma \Sigma_Y^\dagger \Sigma_Y$. Setting $A_\star=C_\sigma \Sigma_Y^\dagger$ and completing the square gives
\begin{equation}
\begin{aligned}
&\sum_i\|x_i-Ay_{\sigma(i)}\|^2-\sum_i\|x_i-A_\star y_{\sigma(i)}\|^2\\
&\quad=\operatorname{tr}\big((A-A_\star)\Sigma_Y(A-A_\star)^\top\big)\ge0.
\end{aligned}
\end{equation}
Thus the minimum is attained also when $\Sigma_Y$ is singular. The subtracted score $\|C_\sigma(\Sigma_Y^\dagger)^{1/2}\|_F^2$, with the positive semidefinite square root, is convex and is maximized at a vertex.

A related matrix reduction holds for GW~\cite{alvarezmelis2018gw} with inner-product costs and squared loss. A nonnegative coupling $\pi$ with row and column sums $1/n$ gives
\begin{equation}
\sum_{i,k,j,l}\big(\langle x_i,x_k\rangle-\langle y_j,y_l\rangle\big)^2
\pi_{ij}\pi_{kl}
=c_{\mathrm{GW}}-2\|X\pi Y^\top\|_F^2,
\end{equation}
where $c_{\mathrm{GW}}=n^{-2}(\|X^\top X\|_F^2+\|Y^\top Y\|_F^2)$ is independent of $\pi$.
By the Birkhoff decomposition~\cite{brualdi1977}, $X\pi Y^\top$ ranges over $\mathcal C/n$ as $\pi$ varies. Convexity therefore gives an optimal permutation coupling at a vertex, with objective $c_{\mathrm{GW}}-2\|C_\sigma\|_F^2/n^2$.

For the planar example of \Cref{rem:special}, retaining the diagonal entries maps $\mathcal C$ onto a polygon with $16$ vertices. Hence $\mathcal C$ has at least $16>n(n-1)$ vertices. The planar full hull $\mathcal C\subset\mathbb R^{2\times2}$, of dimension at most four, also falls within the conjecture below. This calculation uses inner-product costs, whereas the classification experiment uses squared-distance costs.

\subsubsection{Conjectured Vertex Growth at Fixed Dimension}\label{app:matrix-conjecture}
The scalar polygon of \Cref{thm:main} retains only the two correlation coordinates that planar rotations need. The full matrix hull $\mathcal C$ retains all $d^2$ entries and can have vertices that are lost under that planar projection. Let $V_d(n)$ denote the largest number of vertices of $\mathcal C$ over $X,Y\in\mathbb R^{d\times n}$. We conjecture that, for each fixed $d\ge2$, $V_d(n)$ grows superpolynomially: $V_d(n)$ is not $\mathcal O(n^k)$ for any fixed $k$. In other words, no polynomial in $n$ bounds the number of vertices of the full hull uniformly over the inputs, and this would hold in particular at each fixed $d\ge3$, because padding planar inputs with zero coordinates preserves their matrix-correlation hull and its number of vertices. The conjecture concerns the full hull; the number of vertices that are needed for physical rotation queries alone is a separate question. The $16$-vertex example refutes only the particular bound $n(n-1)$ and does not establish the conjectured asymptotic growth.

%% file: appendix/optimality.tex
\section{Exact Arithmetic Examples}\label{app:optimality}\label{app:exact-examples}

\subsection{Exact Reconstruction and Verification}\label{app:arithmetic-examples}
The following examples apply the construction of \Cref{prop:tight} and the verification procedure of \Cref{prop:optimality}, whose proofs are in \Cref{app:tightness,app:verification}. They show that a hull computed in floating point can miss vertices that exact arithmetic distinguishes. The tight instance uses $a_j=j-1$ and $b_j=2^{j-1}+\mathrm i4^{j-1}$ for $j=1,\ldots,20$. Rubix in exact arithmetic computes all $380$ vertices with $760=2V$ assignments, whereas double precision with relative tolerance $10^{-10}$ gives 56 vertices. For the thin rational instance $a=(0,1,3)$ and $b=(0,1,2+\mathrm i10^{-60})$, exact computation gives six vertices, against two in double precision.

\subsection{Limits of the Planar Bound}\label{app:planar-counterexamples}
The following two instances show that both the structure of rotations and the equal weights are needed in \Cref{thm:main}. We compute their numbers of vertices by exhaustive enumeration and integer-arithmetic hulls.

\subsubsection{Diagonal Weighting}\label{app:diagonal-counterexample}
Set $n=4$. The first coordinates are $x^{(1)}=(38,-28,24,-2)$ and $y^{(1)}=(9,17,-52,77)$, and the second coordinates are $x^{(2)}=(14,-24,-24,-18)$ and $y^{(2)}=(-46,16,-13,-10)$. The $24$ points $(\sum_ix_i^{(1)}y_{\sigma(i)}^{(1)},\sum_ix_i^{(2)}y_{\sigma(i)}^{(2)})$ have 16 hull vertices, more than $n(n-1)=12$. This is the instance used in \Cref{rem:special} to distinguish diagonal scores from rotation correlations.

\subsubsection{Unequal Weights}\label{app:weighted-counterexample}
For $n=m=3$, we take $a=(-1-6\mathrm i,-13+12\mathrm i,16+31\mathrm i)$ and $b=(4-\mathrm i,-1,-\mathrm i)$, with weights $\alpha=(1,2,4)/7$ and $\beta=(2,2,3)/7$. The polygon $\{\sum_{ij}\pi_{ij}a_i b_j:\pi\in\Pi(\alpha,\beta)\}$ has 8 vertices, which exceeds $n(n-1)=6$; it corresponds to the PW point sets $x=\bar a$ and $y=b$. Duplication (\Cref{cor:weights}) turns this example into an equal-weight instance with $7!$ matchings, and the applicable bound uses the common denominator, as proved in \Cref{app:weights}.

%% file: appendix/spatial.tex
\section{Proofs for the Spatial Bounds}\label{app:spatial}

This appendix proves the results of \Cref{sec:spatial} in exact arithmetic; \Cref{app:search-invariants} combines the regional bounds into the guarantees of the search. Computations of assignments, eigenvalues, roots and trigonometric functions in floating point require separate certification. \Cref{app:spatial-algorithms} gives the search procedures and their interruption rule.

\subsection{Conditional Gravity Alignment}\label{app:conditional-gravity}

Leveling changes the coordinate frame and preserves every residual. We normalize each nonzero signed gravity direction $g$, choose a unit vector $b_g\perp g$, and form the matrix $A$ with rows $b_g^\top,(g\mathbin\times b_g)^\top,g^\top$. Then $A\in\mathrm{SO}(3)$ and $Ag=e_3$; a different choice of $b_g$ only shifts the origin of the yaw. Orthogonal invariance gives
\begin{equation}\label{eq:leveling-residual-identity}
 \|x_i-Ry_j-t\|=\|A_xx_i-R_z(\theta)A_yy_j-A_xt\|
\end{equation}
for $R=A_x^\top R_z(\theta)A_y$. Conversely, $Rg_y=g_x$ makes $A_xRA_y^\top$ fix $e_3$, so it is a yaw rotation. Leveling therefore preserves the constrained problem and the physical translation.

At a fixed leveled translation, we put $x_i^t=x_i-t$ and expand the squared distance to obtain \eqref{eq:gravity-edge}. For a feasible matching, define $\boldsymbol w_M=(\sum_Ma_{ij},\sum_Mb_{ij})$ and let $c_M$ be $\sum_Mc_{ij}^0$ plus any penalty for unmatched points. Then $g_M(u)=c_M-2\langle \boldsymbol w_M,u\rangle$, and $F=\min_Mg_M$ is concave on $\mathbb R^2$ for all three matching models, including the constant empty matching. To evaluate exterior queries, we keep the original squared norms of the points in this affine expansion. Squaring the residual of a nonorthogonal map instead would introduce quadratic terms that the expansion does not contain.

The partial models use standard assignment solvers after dummy points are added. For prescribed $k$, we augment the problem to order $n+m-k$, with the real costs in the upper left block, zero costs in the off-diagonal dummy blocks, and forbidden entries in the bottom right block. Every finite perfect assignment then uses exactly $k$ real edges. Penalized matching adds $\lambda(n+m)$, subtracts $2\lambda$ per real edge, and gives each point of the smaller set its own zero-cost dummy, which uses $\min(n,m)$ rows and $n+m$ columns. Both reductions allow the negative costs of exterior queries.

\subsubsection{Yaw interpolation}\label{app:arc-interpolation}

\begin{proof}[Proof of \Cref{lem:arc-interpolation}]
In coordinates centered on the arc, the vertices of the triangle are $(c_h,-s_h)$, $(c_h,s_h)$ and $(1/c_h,0)$, with $c_h=\cos h_\theta>0$ and $s_h=\sin h_\theta>0$. Its chord and the two tangents at the endpoints enclose the arc. For the oracle values $f_-,f_+,f_A$ and $\bar f=(f_-+f_+)/2$, the interpolant is
\begin{equation}\label{eq:arc-interpolant-coefficients}
 \begin{gathered}
 \ell(u(\mu+\eta))=\alpha\cos\eta+\beta\sin\eta+\gamma,\\
 \alpha=\frac{f_A-\bar f}{1/c_h-c_h},\qquad
 \beta=\frac{f_+-f_-}{2s_h},\\
 \gamma=\bar f-\alpha c_h.
 \end{gathered}
\end{equation}
To minimize this sinusoid, we evaluate both endpoints and the angle $\operatorname{atan2}(\beta,\alpha)+\pi$ modulo $2\pi$ when it lies in $[-h_\theta,h_\theta]$; if $\alpha=\beta=0$, the value is the constant $\gamma$. The same minimization fits each returned matching. Concavity gives $F\ge\ell$ throughout the triangle, and the convex interpolation weights give $\ell\ge\min\{f_-,f_+,f_A\}$. These two inequalities prove \eqref{eq:arc-bound-chain}. If one matching $M$ attains all three queries, then $g_M=\ell$ and $F\le g_M$, hence equality. The oracle may return different optimizers under ties even when a common supporting matching exists; testing a retained witness at all three vertices verifies common support without requiring uniqueness.
\end{proof}

For comparison, taking the minimum of each edge sinusoid separately before solving the assignment allows a different yaw for each edge. This bound and the interpolant do not dominate one another uniformly, and their maximum is valid. The smallest of the queried vertex values is never stronger than the interpolant. Coupled assignment queries and concave underestimators also appear in earlier registration methods~\cite{lian2014concave,lian2026dcreg}.

\subsubsection{Conditional-yaw accuracy}\label{app:gravity-accuracy}

\begin{proof}[Proof of \Cref{prop:gravity-accuracy}]
For an arc of half-width $h_\theta\le\pi/3$, let $v$ be a vertex with the smallest queried value and let $M$ attain it. The minimum $L_I$ of the interpolant over the arc is at least $F(v)$. If $v$ is an endpoint, its matching is already feasible there. Otherwise $v=\sec(h_\theta)u_0$, where $u_0$ is the midpoint of the arc on the circle, and $g_M(u_0)\le F(v)+\kappa_\theta(\sec h_\theta-1)$. Fitting the returned matchings therefore gives a feasible value $U_I$ with $U_I-L_I\le \kappa_\theta(\sec h_\theta-1)\le \kappa_\theta h_\theta^2$, since $\cos h_\theta\ge1/2$. For $\kappa_\theta>0$, the depth $d_{\rm tree}=\max\{0,\lceil\tfrac12\log_2(\kappa_\theta(\pi/3)^2/\varepsilon)\rceil\}$ suffices. A fully evaluated cell at that depth has $L_I\ge U-\varepsilon$ for the global incumbent cost $U$, so the best-bound search terminates before it subdivides the cell. The three binary trees up to depth $d_{\rm tree}$ contain $3(2^{d_{\rm tree}+1}-1)$ cells, each of which requires three queries. If $\kappa_\theta=0$, the objective is constant in the yaw and the initial evaluations suffice.
\end{proof}
If the horizontal norms of $x_i^t=x_i-t$ and $y_j$ are at most $r_{xy}$ and the matchings use at most $k_{\max}$ edges, then $\kappa_\theta\le2k_{\max}r_{xy}^2$; complete matching uses the centered points. The query count measures arithmetic work at a specified accuracy and numerical scale. It implies neither polynomial bit complexity nor termination at zero tolerance. The weaker vertex bound has the same bound on its query count. No count follows for the joint search over translation or for unrestricted rotation.

\subsubsection{First- and second-order bounds}\label{app:bound-order}

\begin{proof}[Proof of \Cref{prop:bound-order}]
(i) For an arc of half-width $h_\theta\le\pi/3$, the smallest of the three triangle values is attained at an endpoint or at the apex $\sec(h_\theta)u(\mu)$. Fitting the matching returned there over the arc gives a feasible value that exceeds the minimum of the interpolant by at most $\kappa_\theta(\sec h_\theta-1)\le \kappa_\theta h_\theta^2$, as in the proof of \Cref{prop:gravity-accuracy}; hence $F^*_A-L_A\le \kappa_\theta h_\theta^2$. If one matching $M$ attains all three values, then $F\le g_M=\ell$ on the triangle and $F\ge\ell$ by \Cref{lem:arc-interpolation}, so $L_A=\min_A g_M=F^*_A$.

(ii) We fix the translation at zero in both examples. For the radius bound, take one source point $x=(1,0,0)$ and one target $y=(2,0,0)$. Then $F(\theta)=5-4\cos\theta$, with minimum $1$ at $\theta=0$. On an arc centered at $\mu$, the radius bound is $(r_{\rm res}(\mu)-2\sin(h_\theta/2))_+^2$ with $r_{\rm res}(\mu)=(5-4\cos\mu)^{1/2}\ge1$. For the arc centered at the minimizer, $F^*_A=1$ and the gap is $1-(1-2\sin(h_\theta/2))^2=4\sin(h_\theta/2)-4\sin^2(h_\theta/2)\ge2\sin(h_\theta/2)\ge2h_\theta/\pi$, since $\sin(h_\theta/2)\le1/2$ for $h_\theta\le\pi/3$. For the independent-edge bound, take the sources $(\pm1,0,0)$ and the targets $(\pm\cos a,\sin a,0)$ with $\pi/3\le a<\pi/2$, matched by sign, so that $h_\theta\le a$ on every admissible arc. The two edges prefer the yaws $a$ and $-a$. The sign matching costs $4-4\cos a\cos\theta$ and the crossed matching $4+4\cos a\cos\theta$, so $F=4-4\cos a\,|\cos\theta|$ has its minimum $4-4\cos a$ at $\theta=0$. On $[-h_\theta,h_\theta]$, each edge is minimized at the endpoint nearer its preferred yaw and the crossed matching gives no less, so the bound is $4-4\cos(a-h_\theta)$ and the gap is $4(\cos(a-h_\theta)-\cos a)=8\sin(a-h_\theta/2)\sin(h_\theta/2)\ge(8/\pi)\sin(a/2)\,h_\theta$.

(iii) We now convert the local gap of the radius bound into a lower bound on the work of the search. Put $r=\sqrt\varepsilon$ and assume $0<\varepsilon\le1/40000$.
Since $r_{\rm res}(\mu)^2\le1+2\mu^2$, we have $r_{\rm res}(\mu)\le1+\mu^2$.
For $0\le h_\theta\le\pi/3$, the radius bound is below $1-\varepsilon$ whenever
$\mu^2<2h_\theta/\pi-\varepsilon$; this uses $\sin(h_\theta/2)\ge h_\theta/\pi$ and $\zeta^2\le\zeta$ on $[0,1]$.

For the counting argument, we retain both children of every split, including arcs that were pruned earlier. The three
roots, centered at $-2\pi/3$, $0$ and $2\pi/3$ with half-width $\pi/3$, cover
$[-\pi,\pi]$, and bisection preserves this cover. If a terminal certificate uses an
inherited bound, we contract its subtree at a certifying ancestor. The contracted
forest still covers the roots and has no more expansions. Each terminal leaf
now has a direct radius bound $L\ge1-\varepsilon$: the original feasible
incumbent satisfies $U\ge1$, even when it differs between leaves, and the zero
floor cannot certify this threshold. The central root cannot be terminal by
this criterion, so its descendants have $h_\theta\le\pi/6$.

A descendant that meets $[-r,r]$ satisfies $|\mu|\le h_\theta+r$. If $h_\theta\ge100\varepsilon$,
then $h_\theta>0$, $h_\theta<11/20$ and $r\le1/200$ give
\begin{equation}\label{eq:radius-terminal-width}
 \mu^2+\varepsilon\le\tfrac{57}{100}h_\theta+2\varepsilon
 \le\tfrac{59}{100}h_\theta<\tfrac35h_\theta<\tfrac{2h_\theta}{\pi},
\end{equation}
using $\pi<3.15$. Such an arc cannot be terminal. Thus each terminal leaf
covers at most $200\varepsilon$ of $[-r,r]$. Covered lengths add under bisection,
so the central tree has $N_{\rm leaf}\ge1/(100r)$ leaves and $N_{\rm leaf}-1\ge1/(200r)$ expanded
nodes, since $r\le1/200$. This proves the lower bound independently of
the order in which best-first search breaks ties.

With the Rubix bound, the objective for the single correspondence,
$\|\widehat R(u)x-y\|^2+(1-\|u\|^2)\|x_{xy}\|^2$, is affine in $u$, so by~(i)
each root bound equals the minimum over its arc. Fitting the returned matching gives
$U=1$ at $\theta=0$. The central root has bound $1$, and the other two have
bounds $5-4\cos(\pi/3)=3$, so the search stops after the root evaluations.
The separation concerns the displayed bounds: adding the global norm bound
$(\|y\|-\|x\|)^2=1$ also solves this particular radius-bound example immediately.
\end{proof}

\subsection{Joint Gravity Registration}\label{app:joint-gravity}

\subsubsection{Anchor cover and witness bound}\label{app:joint-cover}

\begin{proof}[Proof of \Cref{lem:joint-cover}]
The anchor construction gives a finite translation domain without knowledge of the matched subsets. For prescribed $k>0$ and a feasible cost $U$, an improving matching has an edge whose residual has norm at most $r=\sqrt{U/k}$. For every anchor $(i,j)$, we put $x'_a=x_a-x_i$, $y'_b=y_b-y_j$, and $e=t-x_i+R_z(\theta)y_j$. Then
\begin{equation}\label{eq:anchor-residual-identity}
 x_a-R_z(\theta)y_b-t=x'_a-R_z(\theta)y'_b-e.
\end{equation}
The forced anchor has residual $-e\in[-r,r]^3$. At least one selected edge of an improving matching can serve as this anchor, so the union over all anchors covers every improving solution. An optimum exists: each of the finitely many nonempty matchings allows the translation to be eliminated through its centroids, which leaves a compact yaw domain. For penalties, we use each $k$ with $U_k=U-\lambda(n+m-2k)\ge0$ and radius $r_k=\sqrt{U_k/k}$; a negative $U_k$ cannot improve $U$. The empty matching costs $\lambda(n+m)$.

For a residual-translation box with center $c_B$ and halfwidth vector $\boldsymbol h_t$, write $r_B^2=\|\boldsymbol h_t\|^2$ and $u=(\cos\theta,\sin\theta)$, and define
\begin{equation}\label{eq:joint-curvature-removal}
 h_M(u,e)=E_M(u,e)-k\|e-c_B\|^2.
\end{equation}
As in the conditional-yaw construction, the horizontal norms of the points are kept constant off the circle. Removing the common quadratic term in the translation makes $h_M$ affine in $u$ and affine in each component of $e$ separately, cross terms included, so it equals its product barycentric interpolant. At all 24 vertices, $\|e-c_B\|^2=r_B^2$, so the oracle value there is $\min_M E_M(v)-kr_B^2$.

For $f_v=\min_Mh_M(v)$ and any witness $W$ of size $k$, \eqref{eq:gravity-joint-slack} gives $h_M(v)\ge f_v\ge h_W(v)-\Delta_W$ for every competitor $M$. The product interpolation uses nonnegative weights, so it extends these vertex inequalities throughout the cell. Restoring the quadratic term gives
\begin{equation}\label{eq:joint-witness-bound-proof}
 \min_M E_M(u,e)\ge E_W(u,e)-\Delta_W.
\end{equation}
A witness that supports all vertices has $\Delta_W=0$, which gives equality and solves the cell by minimizing over its pose. The bound requires neither common support nor unique optimizers.
\end{proof}

\subsubsection{Constrained witness fit}\label{app:joint-witness-fit}

To minimize the cost of a nonempty witness over a cell in \Cref{lem:joint-cover}, we first clip $\overline{x'}-R_z(\theta)\overline{y'}$ coordinatewise to the box, where the bars denote the means of the selected points. Between the yaws at which a coordinate crosses a face of the box, the objective is
\begin{equation}\label{eq:clipped-yaw-polynomial}
 a_0+a_1\cos\theta+b_1\sin\theta+a_2\cos(2\theta)+b_2\sin(2\theta).
\end{equation}
With $\xi=\tan(\theta/2)$, its derivative has a quartic numerator with the ascending coefficients
\begin{equation}\label{eq:clipped-yaw-quartic}
 \begin{aligned}
 &(b_1+2b_2,\ -2a_1-8a_2,\ -12b_2,\\
 &\qquad -2a_1+8a_2,\ -b_1+2b_2).
 \end{aligned}
\end{equation}
The candidate minimizers are the real roots in their clipping intervals, the breakpoints, the endpoints of the arc, and the poles of the tangent half-angle substitution; constant cases and cases of lower degree are included. This covers positive residuals and minima on the boundary. Roots computed in floating point are numerical and do not certify a bound or a pose.

The cheaper independent-edge bound minimizes each edge over the cell before the constrained assignment, so each edge may use a different pose. For each edge it adds the squared horizontal distance between the arc and the rectangle to the squared vertical distance between the intervals. The horizontal distance is attained at an endpoint of the arc, at an intersection with a supporting line of the rectangle, at a coordinate tangency, or in the radial direction of a nonzero corner: a zero residual first occurs at an endpoint or at such an intersection, and elsewhere stationarity in the angle gives a tangency when one coordinate is clipped and a corner direction when both are clipped. Degenerate rectangles and circles of radius zero are included. The subsequent assignment still enforces the anchor and the cardinality, and the matching that it returns is fitted at one common pose for feasible updates.

\subsubsection{Refinement and termination}\label{app:joint-convergence}

The joint-gravity case of \Cref{thm:spatial-search} requires that the regional gap vanishes under refinement, when the yaw and every active translation coordinate shrink along refined paths. For a fixed anchor and cardinality, write $F(\theta,e)=\min_M E_M(\theta,e)$ over the admissible matchings. As nested cells shrink to $(\theta_0,e_0)$, the minimum $E_W^{\min}$ of a fixed witness $W$ over the cell tends to $E_W(\theta_0,e_0)$. The common quadratic term cancels in the differences at the vertices, so $\Delta_W$ tends to $E_W(\theta_0,e_0)-F(\theta_0,e_0)$. Thus $E_W^{\min}-\Delta_W$ tends to $F(\theta_0,e_0)$ also when the limiting slack is not zero. Because there are finitely many matchings, the convergence is uniform over changing witnesses and arbitrary ties. A matching returned by an exterior query and evaluated at a nearby physical pose in the cell tends to the same cost, and global fitting can only improve it. The gap between the regional feasible value and its lower bound therefore tends to zero. The finite-branching argument of \Cref{app:search-invariants} then gives termination at a positive tolerance without a budget.

When anchors are initialized lazily, every pending anchor must remain in the queue: zero is a valid lower bound for residuals with prescribed cardinality, and $\lambda|n-m|$ bounds the penalized costs. A search over selected anchors alone cannot certify the full problem.

\input{appendix/common_translation.tex}

\subsection{Covariance Polytopes and Proper Rotations}\label{app:covariance}

\begin{proof}[Proof of \Cref{thm:spatial-covariance}]
Centering complete point sets uses the same centroids for every permutation. This eliminates the translation as $t=m_x-Rm_y$ and gives the centered residual $S-2\langle R,C_\sigma\rangle_F$. For a unit quaternion $q=(q_0,q_v)$ with $q_v\in\mathbb R^3$, we use the convention that, for any $b_{\rm rot}\in\mathbb R^3$,
\begin{equation}\label{eq:quaternion-rotation-formula}
 R(q)b_{\rm rot}=(q_0^2-\|q_v\|^2)b_{\rm rot}+2q_v(q_v^\top b_{\rm rot})+2q_0(q_v\mathbin\times b_{\rm rot}).
\end{equation}
Horn's quaternion matrix~\cite{horn1987} encodes the covariance score as a quadratic form:
\begin{equation}\label{eq:quaternion-covariance-matrix}
 \begin{gathered}
 K(C)=\begin{bmatrix}\operatorname{tr}C&c_{\rm cross}(C)^\top\\
 c_{\rm cross}(C)&C+C^\top-(\operatorname{tr}C)I_3\end{bmatrix},\\
 c_{\rm cross}(C)=(C_{32}-C_{23},C_{13}-C_{31},C_{21}-C_{12})^\top.
 \end{gathered}
\end{equation}
Expanding the rotation formula proves \eqref{eq:quaternion-score}. Any unit eigenvector for the largest eigenvalue, also under ties, maximizes the Rayleigh quotient and gives a proper rotation. Linearity of $K$ and convexity of $\lambda_{\max}$ prove that a vertex attains the maximum in \eqref{eq:se3-polytope}. The affine dimension is $r_Xr_Y\le9$ for the centered ranks $r_X,r_Y$: the matrices with four nonzero entries and zero row and column sums span the direction space of the Birkhoff polytope~\cite{brualdi1977}, and their images span the outer products of point differences. Repeated points and rank deficiency are included.

The adjoint converts a lifted quaternion query into ordinary assignment scores. For a symmetric query matrix $Q\in\mathbb S^4$, with $b_Q\in\mathbb R^3$ and $Q_{vv}\in\mathbb R^{3\times3}$, we write
\begin{equation}\label{eq:quaternion-adjoint}
 \begin{gathered}
 Q=\begin{bmatrix}Q_{00}&b_Q^\top\\b_Q&Q_{vv}\end{bmatrix},\\
 D(Q)=(Q_{00}-\operatorname{tr}Q_{vv})I_3+2Q_{vv}+2[b_Q]_\times,
 \end{gathered}
\end{equation}
where $[b_Q]_\times b_{\rm rot}=b_Q\mathbin\times b_{\rm rot}$. The identity $\langle D(Q),C\rangle_F=\langle Q,K(C)\rangle_F$ gives the scores $a_i^\top D(Q)b_j$ without forming a $4\times4$ matrix for each edge. Unit rank-one lifts give rotations; exterior queries require neither a positive semidefinite $Q$ nor a rotation $D(Q)$.

For rational inputs, rational support queries, reconstruction of the facets and algebraic comparison of eigenvalues give a finite exact reduction; arbitrary real inputs require ideal arithmetic. No polynomial bound on the number of facets follows.
\end{proof}

For nonnegative isotropic scale, we replace the planar radius $r_\sigma$ in \eqref{eq:similarity} by $r_\sigma^{\mathrm{3D}}=\lambda_{\max}(K(C_\sigma))$, which is nonnegative because $K$ has zero trace. A vertex of the covariance polytope maximizes it. Zero spread, zero correlation and bounded scale are treated as in \Cref{sec:similarity}, with the source spread $S_Y$ as the denominator.

\subsection{Exterior Quaternion Lifts}\label{app:quaternion-lifts}

\begin{proof}[Proof of \Cref{lem:quaternion-enclosure}]
The enclosure follows by expanding the normalized lift of a convex combination. We write $\tilde q=\sum_a\alpha_a\tilde q_a$, where $\alpha_a\ge0$ and $\sum_a\alpha_a=1$, and assume that all corner dot products are strictly positive. For $a\le b$, we define
\begin{equation}\label{eq:quaternion-lift-weights}
 \beta_{aa}=\frac{\alpha_a^2\|\tilde q_a\|^2}{\|\tilde q\|^2},\quad \beta_{ab}=\frac{2\alpha_a\alpha_b\tilde q_a^\top \tilde q_b}{\|\tilde q\|^2}\quad(a<b).
\end{equation}
These coefficients are nonnegative and sum to one. Expanding gives $qq^\top=\sum_{a\le b}\beta_{ab}B_{ab}$, which proves the enclosure. Convexity gives the bound by the largest vertex value and, for any matching $\tau$,
\begin{equation}\label{eq:quaternion-slack-proof}
 \begin{aligned}
 &H(qq^\top)-\langle K(C_\tau),qq^\top\rangle_F\\
 &\quad\le\sum_{a\le b}\beta_{ab}\big(H(B_{ab})-\langle K(C_\tau),B_{ab}\rangle_F\big)\\
 &\quad\le\delta_\tau.
 \end{aligned}
\end{equation}
Bounding the quadratic score of the witness itself by its largest eigenvalue proves \eqref{eq:face-bound}. Feasibility gives $\delta_\tau\ge0$, and common support makes it zero without uniqueness.
\end{proof}

\paragraph{Yaw section of the lift.}\label{app:quaternion-yaw-section}
The yaw triangle reappears when this construction is projected onto the yaw coordinates. For the raw endpoints $\tilde q_\pm=(1,0,0,\pm\tan\alpha)$, with $0<\alpha<\pi/4$, the projection $p(Q)=(Q_{00}-Q_{33},2Q_{03})$ maps the rank-one lifts to $(\cos2\alpha,\pm\sin2\alpha)$ and their cross lift to $(\sec2\alpha,0)$. \Cref{fig:quaternion-lift} illustrates this at $\tan\alpha=1/2$, within a cell whose corner dot products are positive; this illustrative cell differs from the boxes of the initial cover.

\subsubsection{Chart cover and refinement}\label{app:quaternion-cover}

The quaternion case of \Cref{thm:spatial-search} uses this enclosure on a finite cover of the full rotation space. Scaling a coordinate of largest magnitude to positive one gives four charts that cover every quaternion. The eight orthants of each chart and their descendants have corner dot products of at least one. As nested boxes shrink to a physical quaternion $q_\infty$, all exterior lifts converge to $q_\infty q_\infty^\top$. Since $H$ is a finite maximum of linear assignment scores and hence continuous, the largest queried value tends to $H(q_\infty q_\infty^\top)$. The assignment score at the normalized raw center has the same limit, regardless of ties. Global fitting only increases this feasible score, and the retained regional upper bound never exceeds the largest queried value. Their gap is therefore bounded above by a quantity that tends to zero. Under the progress conditions of \Cref{app:search-invariants}, the search in exact arithmetic terminates for every $\varepsilon>0$; a budget may leave a gap. Gaps in the summed residual are twice the gaps in the score.

\subsection{The Rotation-Cap Bounds}\label{app:rotation-cap}

A rotation cap gives a cheaper regional bound that is applied before the exterior queries. We normalize the raw center of the chart box to $q_c$ and define the cap by
\begin{equation}\label{eq:quaternion-cap-geometry}
 \zeta=\min_a\frac{q_c^\top \tilde q_a}{\|\tilde q_a\|},\quad R_c=R(q_c),\quad \omega=2\arccos\zeta.
\end{equation}
Acute charts give $\zeta>0$ and $0\le\omega<\pi$. For $\tilde q=\sum_a\alpha_a\tilde q_a$, we have $q_c^\top \tilde q\ge\zeta\sum_a\alpha_a\|\tilde q_a\|\ge\zeta\|\tilde q\|$. Normalization therefore places the quaternion within $\arccos\zeta$ of $q_c$, and its rotation within $\omega$ of $R_c$.

For vectors $a_{\rm cap},b_{\rm cap}\in\mathbb R^3$, we put $c_{\rm cap}=a_{\rm cap}^\top R_c b_{\rm cap}$ and $\nu_{\rm cap}=\|a_{\rm cap}\|\|b_{\rm cap}\|$. The largest dot product over this rotation cap is $\phi_\omega(c_{\rm cap},\nu_{\rm cap})$, where
\begin{equation}\label{eq:rotation-cap-maximum}
 \phi_\omega(c,\nu)=
 \begin{cases}
 0,&\nu=0,\\
 \nu,&c\ge \nu\cos\omega,\\
 c\cos\omega+\sqrt{\nu^2-c^2}\sin\omega,&\text{otherwise}.
 \end{cases}
\end{equation}
The formula reduces the angle between $a_{\rm cap}$ and $R_c b_{\rm cap}$ by at most $\omega$, stopping at zero. One assignment on these pairwise maxima gives the \emph{ordinary cap bound}, which allows a separate rotation for each edge.

The \emph{stronger cap bound} limits the improvement over a retained witness matching. For a permutation $\tau$ and any rotation $R$,
\begin{equation}\label{eq:rotation-cap-difference}
 \max_\sigma\langle R,C_\sigma\rangle_F-\langle R,C_\tau\rangle_F
 =\max_\sigma\sum_i a_i^\top R\big(b_{\sigma(i)}-b_{\tau(i)}\big).
\end{equation}
We apply $\phi_\omega$ to the differences $b_j-b_{\tau(i)}$ and maximize the resulting bounds by one assignment, which gives $\eta_\tau^{\rm cap}\ge0$ because $\tau$ itself has zero difference. Adding $\lambda_{\max}(K(C_\tau))$ bounds the score over the cell. We take the minimum over the witnesses and the ordinary cap bound. This bound can close regions on which one matching remains optimal.

\subsection{One-Assignment Difference Bounds and Reusable Duals}\label{app:matching-difference}

\subsubsection{Lifted matching differences and dual reuse}\label{app:spatial-bound-reuse}

\begin{proof}[Proof of \Cref{lem:spatial-bound-reuse}]
For the square lifted assignment problem, we fix a permutation $\tau$ and define $d^\tau$ and $\eta_\tau$ by \eqref{eq:lifted-difference}. At any convex combination $Q$ of the exterior matrices, linearity gives
\begin{equation}\label{eq:lifted-edge-difference-bound}
 A_{ij}(Q)-A_{i,\tau(i)}(Q)\le d_{ij}^\tau.
\end{equation}
Summing over a matching and maximizing gives $H(Q)-\langle K(C_\tau),Q\rangle_F\le\eta_\tau$, with $\eta_\tau\ge0$ because $d_{i,\tau(i)}^\tau=0$. If $\eta_\tau=0$, the witness is optimal throughout the enclosure. Its largest rotation score plus $\eta_\tau$ bounds the cell. The edgewise maxima can be attained at different exterior matrices, so even common support need not make this relaxation zero; support queries or feasible duals can tighten it.

Computing the edgewise maxima and then one assignment gives the stated work bound, and accumulating the maxima one matrix at a time gives the stated storage bound. Summing $f_i+g_{\sigma(i)}\ge A_{i,\sigma(i)}(Q)$ over a matching proves the dual upper bound defined in \eqref{eq:reused-dual}, and a feasible matching that attains this bound is optimal. Rectangular assignments in which every row is matched require nonnegative column prices, with zero prices on the unused columns for equality. Recomputing the row maxima restores dual feasibility, which permits the reuse of prices also after the old matching has ceased to be optimal. The prices may also be linear functions of the query that are obtained from paths of difference constraints: evaluating them and repairing the rows remains feasible, even though the paths need not stay optimal.
\end{proof}

\subsubsection{Joint-gravity matching differences}\label{app:gravity-difference}

For the joint gravity search at a fixed cardinality, we apply the corresponding construction with cost differences to the square assignment representation of $h_M$ with dummy nodes, in which real edges are reduced by $\|e-c_B\|^2$ and dummy edges have zero cost. Let its finite edge costs be $B_{ij}(v)$ and let $\tau$ be a finite witness assignment. We define $\eta^-_\tau=\min_\sigma\sum_i d^-_{i,\sigma(i)}$, where
\begin{equation}\label{eq:gravity-difference-construction}
 d^-_{ij}=\min_v\{B_{ij}(v)-B_{i,\tau(i)}(v)\},
\end{equation}
and forbidden edges remain forbidden. Product interpolation gives $\eta^-_\tau\le\min_Mh_M(u,e)-h_\tau(u,e)$, and the feasibility of the witness gives $\eta^-_\tau\le0$. Restoring the common quadratic term yields $\min_ME_M\ge E_\tau+\eta^-_\tau$. The quadratic term must be subtracted before the edge differences are formed: real and dummy rows carry different quadratic terms, although every assignment has $k$ real edges. For the order $N_{\rm aug}=n+m-k$ and $N_{\rm query}=24$ vertices, this bound costs $\mathcal O(N_{\rm query}N_{\rm aug}^2+N_{\rm aug}^3)$ operations and $\mathcal O(N_{\rm aug}^2)$ storage, in addition to the geometry and the minimization for the witness.

\paragraph{Equivalent geometric calculations.}\label{app:geometry-identities}
Two geometric simplifications reduce the work of the joint gravity search. First, on a full yaw circle the independent-edge bound needs only the extreme radii of the rectangle. For a full yaw circle of radius $r$ and a translation rectangle centered at $c_B$ with halfwidth $\boldsymbol h_t$, the norms of the points of the rectangle fill the interval between
\begin{equation}\label{eq:rectangle-radius-extrema}
 r_{\min}=\|\max(|c_B|-\boldsymbol h_t,0)\|,\qquad r_{\max}=\||c_B|+\boldsymbol h_t\|.
\end{equation}
Minimizing $|\|z\|-r|$ over the connected rectangle gives the distance $\max(r_{\min}-r,r-r_{\max},0)$. We square this horizontal distance and add the squared vertical distance between the intervals. This includes zero radius and degenerate rectangles; subarcs still need the angular candidates of \Cref{app:joint-witness-fit}. Second, a reduced matching difference attains its minimum over the box in closed form. At a fixed yaw vertex, write a reduced difference as $\psi(e)=\psi(c_B)+\boldsymbol g_\psi^\top(e-c_B)$, where $\boldsymbol g_\psi\in\mathbb R^3$ is its gradient. Its minimum over the box is $\psi(c_B)-\sum_\ell h_{t,\ell}|g_{\psi,\ell}|$. Thus three evaluations in the yaw replace the 24 product vertices. The gradients of differences between real and dummy edges must retain the shift by the quadratic term.

\subsection{Search Invariants and Cost}\label{app:search-invariants}

\begin{proof}[Proof of \Cref{thm:spatial-search}]
The queue represents the part of the search domain that can still contain an improving solution. For every live region we keep $L_\Omega\le\min_{p\in\Omega,M}E(M,p)$, together with a feasible incumbent of cost $U$. Discarded regions had $L_\Omega\ge U$, and a later decrease of $U$ cannot make them improving. Hence
\begin{equation}\label{eq:live-cover-interval}
 L=\min\{U,\min_{\Omega\text{ live}}L_\Omega\}\le E^*\le U,
\end{equation}
with $L=U$ for an empty queue. Feasible updates preserve this interval, also when the fitted pose lies outside the cell that generated it, and so do tightening, changes of priority and complete child covers. Children inherit the maximum of the parent's bound and their new bounds. Keeping the old record of a region until its evaluation or replacement is complete preserves the cover under interruption.

The initial residual bounds are zero for complete or prescribed-cardinality matching and $\lambda|n-m|$ for penalties. They cover all roots before any evaluation: the three yaw arcs, all anchors of the joint gravity search, or the 32 quaternion boxes. Complete matching can also use the score bound
\begin{equation}\label{eq:initial-covariance-bound}
 b_0=\sum_{i=1}^n r_i^\uparrow s_i^\uparrow,
 \qquad r_i=\|a_i\|,\quad s_j=\|b_j\|,
\end{equation}
with both sequences in increasing order. Edgewise Cauchy--Schwarz and the rearrangement inequality bound every assignment by $b_0$, which gives the residual bound $\max(0,S-2b_0)$.

A valid interval alone does not guarantee that the search finishes. Termination also requires progress: yaw arcs and the longest quaternion coordinates are bisected, and the joint gravity search must shrink the yaw and every active translation coordinate. Tightening must eventually yield to subdivision, and adaptive evaluations must retain a regional bound and feasible updates whose gap vanishes under refinement. The vanishing gaps follow from \Cref{app:gravity-accuracy} for conditional yaw, \Cref{app:joint-convergence} for joint gravity, and \Cref{app:quaternion-cover} for the quaternion search. An infinite tree with finitely many roots and finitely many children per node would contain an infinite path, along which progress makes the gap vanish. Once the gap of a least-bound cell is at most $\varepsilon$, its feasible value puts the global incumbent within $\varepsilon$ of the live lower bound, which prevents further subdivision. This contradiction proves finite termination without a budget. Beyond the count of \Cref{prop:gravity-accuracy} for conditional yaw, no polynomial bound on the number of regions follows.
\end{proof}

\subsubsection{Arithmetic cost and storage}\label{app:spatial-cost}

For one query, the dense scores cost $\mathcal O(n^2)$ operations, the assignment $\mathcal O(n^3)$, a covariance $\mathcal O(n)$ and an eigenvalue problem constant work. Evaluating $N_{\rm query}\le36$ directions and $N_{\rm wit}^{\rm eval}$ witnesses in a cell therefore costs $\mathcal O(N_{\rm query}n^3+N_{\rm wit}^{\rm eval}n+N_{\rm query}N_{\rm wit}^{\rm eval})$ before reuse, and each queue operation costs $\mathcal O(\log N_{\rm cell})$ for $N_{\rm cell}$ cells. Partial matching substitutes the augmented order, which is $n+m-k$ for prescribed cardinality. The storage is $\mathcal O(n^2)$ when queries are processed one at a time and $\mathcal O(N_{\rm query}n^2)$ in batches, plus $\mathcal O(n)$ for each cached assignment or witness.

\subsection{Partial Matching and Translation}\label{app:partial-translation}

For partial matching, a fixed translation permits the quaternion construction although the matched subsets vary. We put $x_i^t=x_i-t$. For a partial matching $M$, we define
\begin{equation}\label{eq:partial-fixed-translation}
 \begin{aligned}
 C_{M,t}&=\sum_{(i,j)\in M}x_i^t y_j^\top,\\
 c_{M,t}&=\sum_{(i,j)\in M}(\|x_i^t\|^2+\|y_j\|^2)\\
 &\quad+\lambda(n+m-2|M|).
 \end{aligned}
\end{equation}
For prescribed cardinality, the penalty is omitted. The negative half-cost is $q^\top\big(K(C_{M,t})-\tfrac12c_{M,t}I_4\big)q$, so the support, lift and witness bounds apply through a partial assignment oracle. When the empty matching is allowed, it gives a constant multiple of the identity. The translation must be subtracted before the covariance is formed; centering the whole sets is not valid for partial matching.

\subsubsection{Moment hull for unknown translation}\label{app:partial-moments}

\begin{proof}[Proof of \Cref{prop:partial-moments}]
With unknown translation, the covariance alone is insufficient, because the matched centroids vary. We instead collect the sufficient statistics of each matching,
\begin{equation}\label{eq:partial-moment-construction}
 \begin{gathered}
 k=|M|,\qquad \boldsymbol s_{X,M}=\sum_Mx_i,\qquad \boldsymbol s_{Y,M}=\sum_My_j,\\
 S_M=\sum_M(\|x_i\|^2+\|y_j\|^2),\qquad
 C_M=\sum_Mx_i y_j^\top
 \end{gathered}
\end{equation}
in a space of dimension at most 17. The vector moments are coordinate sums, not centroids. At fixed $R,t$, the residual is affine in these statistics:
\begin{equation}\label{eq:partial-moment-cost}
 S_M+k\|t\|^2-2\langle R,C_M\rangle_F-2t^\top \boldsymbol s_{X,M}+2t^\top R\boldsymbol s_{Y,M}.
\end{equation}
The penalty for unmatched points is also affine. Taking the infimum over $R,t$ gives a concave function, whose minimum over the finite hull of the statistics is attained at a vertex. Nonempty vertex matchings are evaluated by subset centering and proper rotation, and the empty matching is evaluated separately. Linear support queries are partial assignments. This vertex principle covers unequal cardinalities, unmatched points and full rigid motion.
\end{proof}

%% file: appendix/common_translation.tex
\subsection{Common Translation and Additive Accuracy}\label{app:common-translation}

\Cref{lem:common-translation} gives a simpler regional bound than the witness construction of \Cref{lem:joint-cover}, and \Cref{prop:joint-gravity-accuracy} uses it to control a rational joint grid. Both results keep one translation that is shared by every selected edge. To match the interface of the scan-matching solver, $x_i$ is a source point and $y_j$ a target point, and the fitted map is $R_z(\theta)x_i+t$. This convention reverses the names of the two sets in \eqref{eq:pw}.

\subsubsection{Common-translation bound}\label{app:common-translation-proof}

\begin{proof}[Proof of \Cref{lem:common-translation}]
Set $F_{\rm red}(u,t)=F(u,t)-k\|t-c_B\|^2$. Every selection has the same quadratic term in the translation, because the cardinality is fixed. Removing this term makes the cost of each fixed selection affine in $u$ and affine in $t$ separately, so their minimum is concave in each. Interpolating first over the box and then over the triangle, with box weights $\gamma_\nu$ and barycentric triangle weights $\lambda_r$ at the triangle vertices $v_r$, gives
\begin{equation}\label{eq:product-interpolation-proof}
\begin{split}
F_{\rm red}(u,t)&\ge\sum_\nu\gamma_\nu\sum_r\lambda_rF_{\rm red}(v_r,t_\nu)\\
&=\sum_\nu\gamma_\nu\ell_\nu(u)-k\|\boldsymbol h_t\|^2.
\end{split}
\end{equation}
Every corner satisfies $\|t_\nu-c_B\|^2=\|\boldsymbol h_t\|^2$, also for degenerate boxes. Taking the smallest corner interpolant and using $F\ge F_{\rm red}$ proves the claim; no uniqueness is assumed.
\end{proof}

Evaluating a residual off the circle requires the correction $(1-\|u\|^2)\|x_i\|^2$. In three dimensions with known gravity, the box has eight corners and the correction uses only the horizontal norm of the source point; the proof is unchanged.

\subsubsection{Additive accuracy}\label{app:joint-gravity-accuracy}
Earlier polynomial approximation schemes cover planar RMS matching~\cite{agarwal2006}, Euclidean transport under motion~\cite{cabello2008} and other transport objectives~\cite{soubrier2026}. The result here quantifies the additive accuracy of our particular common-translation construction.

\begin{proof}[Proof of \Cref{prop:joint-gravity-accuracy}]
The grid controls two errors: the one from the quadratic term in the translation, and the one from replacing an exterior yaw query by a feasible sample. We divide each translation coordinate into $N_{t,j}$ equal intervals. We sample each quarter circle by
\begin{equation}\label{eq:rational-yaw-parameterization}
 p(\xi)=\frac{(1-\xi^2,2\xi)}{1+\xi^2},\qquad \xi=j/N_\theta,
\end{equation}
and rotate these samples through right angles. Adjacent endpoints $u_-,u_+$ have the rational tangent apex $(u_-+u_+)/(1+u_-^\top u_+)$. At each of the eight corners of a box, we query the two endpoints, their tangent apex, and the physical point at the midpoint of the parameter interval. The first three queries supply values for the lower bound, and the physical samples supply feasible values. Let $f_{\min}$ be the smallest queried value. \Cref{lem:common-translation} gives the lower bound $L=f_{\min}-k\|\boldsymbol h_t\|^2$. The minimum of the affine costs is $\kappa_B$-Lipschitz in the yaw coordinates.

To bound the second error, write $\delta=\xi_{j+1}-\xi_j\le1/N_\theta$ and $\phi(\xi)=2\arctan \xi$. The half-width $\alpha$ of the arc satisfies $\alpha\le\delta$ and $\alpha\le\pi/4$, since $|\phi'|\le2$. The angular midpoint differs from $\phi((\xi_j+\xi_{j+1})/2)$ by at most $\delta^2/2$, by $|\phi''|\le4$ and the error of midpoint interpolation. The apex has radius $\sec\alpha$, so its Euclidean distance to this physical sample is at most
\begin{equation}\label{eq:rational-apex-distance}
 \delta^2/2+(\sec\alpha-1)
 \le\delta^2/2+\alpha^2\le2/N_\theta^2.
\end{equation}
Here $\sec\alpha-1=(1-\cos\alpha)/\cos\alpha\le\alpha^2$ because $\cos\alpha\ge1/2$. If the smallest queried value occurs at an endpoint, that query is already physical and needs no projection. Thus the best sampled feasible value satisfies
\begin{equation}\label{eq:rational-grid-gap}
 U-L\le k\|\boldsymbol h_t\|^2+2\kappa_B/N_\theta^2\le\varepsilon.
\end{equation}
Taking the minima of the lower bounds and of the feasible values across cells preserves this gap. The number of queries is the number of grid cells times four quarters, eight corners and four evaluations. All query coordinates and costs are rational, and integer comparisons determine the grid counts. Ties and degenerate translation boxes remain valid.
\end{proof}

The bit lengths of the queries grow polynomially in the length of the input and in the logarithms of the grid counts. The total work depends polynomially on the numerical scale and on $1/\varepsilon$, so this is not an exact algorithm for gravity or SE(3) registration that is polynomial in the bit length.

%% file: appendix/spatial_algorithms.tex
\section{Spatial Search Algorithms}\label{app:spatial-algorithms}

The searches of \Cref{sec:spatial} share one queue and differ in their regional bounds and subdivisions. All of them minimize summed residuals under the convention $x_i\approx Ry_j+t$. \Cref{app:search-invariants} proves their guarantees in exact arithmetic. \Cref{prop:gravity-accuracy} counts the queries of the conditional yaw search (\Cref{alg:gravity-yaw}), and \Cref{prop:joint-gravity-accuracy} counts those of the separate rational grid of \Cref{app:joint-gravity-accuracy}.

\subsection{The Shared Queue and Its Interruption Rule}\label{app:spatial-queue}

The queue of \Cref{thm:spatial-search} retains each region until its evaluation or its complete child cover is ready (\Cref{alg:spatial-search}). The operation $\textsc{Evaluate}$ returns a regional bound and the fits of feasible matchings, and $\textsc{Children}$ supplies a complete finite cover of a region. \Cref{alg:joint-gravity-search,alg:se3-search} instantiate these operations. A feasible fit may improve the incumbent also when its pose lies outside the evaluated cell.

\begin{algorithm*}[tp]
\caption{Budgeted spatial search with a complete live cover}\label{alg:spatial-search}
\begin{algorithmic}[1]
\REQUIRE A feasible incumbent of cost $U$; a finite root cover with valid lower bounds; host operations $\textsc{Evaluate}$ and $\textsc{Children}$; $\varepsilon>0$; optional budget.
\ENSURE A feasible matching and pose and an interval $[L,U]$ containing the optimum.
\STATE Represent every root in a min-heap, initially marked unevaluated.
\WHILE{the heap is nonempty}
\STATE $L\leftarrow\min\{U,\min_{\Omega\text{ in heap}}L_\Omega\}$.
\STATE If $U-L\le\varepsilon$ or the budget is exhausted, stop.
\STATE Select a least-bound record $\Omega$ without removing its domain from the live cover.
\IF{$L_\Omega\ge U$}
\STATE Remove $\Omega$; continue.
\ENDIF
\IF{$\Omega$ is unevaluated}
\STATE Run $\textsc{Evaluate}(\Omega,U)$; retain each completed feasible improvement of the incumbent.
\STATE If interrupted, keep $\Omega$ with its old lower bound and stop.
\STATE $L_\Omega\leftarrow\max\{L_\Omega,\text{the returned regional lower bound}\}$.
\STATE Remove $\Omega$ if $L_\Omega\ge U$; otherwise update its heap key and mark it evaluated.
\ELSE
\STATE Construct the complete list $\textsc{Children}(\Omega)$ without removing $\Omega$.
\STATE If interrupted, keep $\Omega$ and stop.
\STATE Replace $\Omega$ atomically by all children, unevaluated and inheriting $L_\Omega$.
\ENDIF
\ENDWHILE
\STATE $L\leftarrow\min\{U,\min_{\Omega\text{ in heap}}L_\Omega\}$, with the empty-heap minimum equal to $+\infty$.
\STATE Return the stored incumbent and $[L,U]$.
\end{algorithmic}
\end{algorithm*}

Roots may be created lazily, provided that every pending anchor and all 32 quaternion roots are represented in the queue by records that carry valid bounds. The previous bound of a region remains in force until all queries that define a new bound are complete. Staged tightening follows \Cref{sec:bound-cost}, under the progress conditions of \Cref{app:search-invariants}. If regions are pruned at the threshold $U-\varepsilon$, the smallest discarded bound must be retained for $L$.

\subsection{Joint Gravity Search}\label{app:joint-gravity-search}

This search implements the anchor cover and the witness bound of \Cref{lem:joint-cover}, proved in \Cref{app:joint-cover}. It first partitions the matching family by cardinality, so that every cost within a branch has the same quadratic term in the translation. Prescribed cardinality has the single branch $k$ with $p_k=0$. The penalized model uses the branches $1\le k\le\min(n,m)$ with $p_k=\lambda(n+m-2k)$, and the empty matching has value $\lambda(n+m)$. For prescribed $k=0$ and for empty inputs under the penalized model, the search returns zero and $\lambda(n+m)$, respectively, without searching.

We define $\textsc{FitYaw}(M)$ by centering the selected points, minimizing their yaw sinusoid, and restoring $t=\bar x_M-R_z(\theta)\bar y_M$, where the bars denote the means of the selected points. This fit optimizes the pose of a retained matching and is feasible also outside the cell that generated the matching; centering the whole sets is not valid for partial matching. Any admissible matching with $k$ pairs initializes a finite incumbent without reference correspondences.

Within each branch, an anchor defines a bounded domain for the residual translation. In the anchor coordinates of \Cref{app:joint-gravity}, we force the anchor edge, with cost $\|e\|^2$, and augment the assignment of the remaining $k-1$ pairs with dummies; this includes $k=1$. The costs of exterior queries use the affine expansion of \eqref{eq:gravity-edge}, as in \Cref{app:conditional-gravity}.

\begin{algorithm*}[tp]
\caption{Joint gravity search through cardinality and anchor covers}\label{alg:joint-gravity-search}
\begin{algorithmic}[1]
\REQUIRE Nonempty leveled point sets; prescribed $k>0$ or penalty $\lambda>0$; $\varepsilon>0$; optional budget.
\ENSURE Feasible matching, yaw, and translation with a residual interval.
\STATE Fit any admissible nonempty matching to obtain an incumbent. For the penalized model also test the empty matching.
\STATE Save its cost $U_0$; form the cardinality branches and constants $p_k$ described above.
\FOR{each branch with $p_k\le U_0$ and every anchor $(i,j)$}
\STATE Set $r_k\leftarrow\sqrt{(U_0-p_k)/k}$ and create the root $\Omega=(k,i,j,[-\pi,\pi],[-r_k,r_k]^3)$ with lower bound $p_k$.
\ENDFOR
\STATE Run \Cref{alg:spatial-search} with the regional evaluation and subdivision rules below.
\STATE Restore the original-frame pose using $A_x,A_y$ from \Cref{sec:gravity}; return its matching and interval.
\STATE \textbf{Regional evaluation} for $\Omega=(k,i,j,I,B)$:
\STATE For every matching returned below, apply $\textsc{FitYaw}$ and test its physical cost plus $p_k$ against the incumbent.
\STATE Compute $c_{ab}^{\rm edge}\leftarrow\min_{\theta\in I,\,e\in B}\|x'_a-R_z(\theta)y'_b-e\|^2$ for every edge.
\STATE Solve the forced-anchor $k$-assignment on $c^{\rm edge}$; set $\ell_{\rm edge}$ to its value plus $p_k$.
\STATE If $\ell_{\rm edge}\ge U$ or $|I|\ge\pi$, return $\ell_{\rm edge}$.
\STATE Let $c_B$ be the center of $B$, and form the three yaw-triangle vertices and eight box corners.
\FOR{each of their 24 product vertices $v$}
\STATE Query the forced-anchor $k$-assignment to obtain $f_v=\min_M h_M(v)$ and a matching, where $h_M=E_M-k\|e-c_B\|^2$ and $E_M$ includes $p_k$.
\ENDFOR
\STATE Select one returned matching $W$; compute $\Delta_W=\max_v\{h_W(v)-f_v\}$.
\STATE Compute $E_W^{\min}=\min_{\theta\in I,\,e\in B}E_W(\theta,e)$ by the clipped-translation minimization in \Cref{app:joint-gravity}.
\STATE Return $\max\{\ell_{\rm edge},E_W^{\min}-\Delta_W\}$.
\STATE \textbf{Subdivision}: split an initial full circle into three arcs of width $2\pi/3$; thereafter bisect yaw and the nonconstant box coordinates in cyclic order along each path.
\end{algorithmic}
\end{algorithm*}

The regional evaluation first tests the inexpensive independent-edge bound and uses the witness bound only on the cells that remain unresolved. \Cref{app:joint-witness-fit} derives $\ell_{\rm edge}$ and the constrained minimum $E_W^{\min}$ of the witness, and \Cref{app:geometry-identities} gives equivalent calculations at lower cost. Other subdivisions may be used if they preserve the cover and shrink every active coordinate (\Cref{app:joint-convergence}). Pending roots may use the current $U$ for tighter radii, provided that all anchors and cardinality bounds are retained.

The difference bound of \Cref{app:gravity-difference} replaces the 24 assignments by one and gives $\max\{\ell_{\rm edge},E_\tau^{\min}+\eta^-_\tau\}$, where $\eta^-_\tau\le0$. The forced anchor contributes equally to the costs of all matchings, so its contribution cancels when the differences are formed. The forbidden edges and the prescribed shift between real and dummy edges are retained. With the closed-form minimum over the box (\Cref{app:geometry-identities}), the 24 vertices reduce to three evaluations in the yaw, and the full witness bound may follow on the cells that remain unresolved.

\subsection{Complete SE(3) Search in Quaternion Charts}\label{app:se3-search}

This search combines the fit for a fixed permutation of \Cref{thm:spatial-covariance} with the regional enclosure of \Cref{lem:quaternion-enclosure}. Complete matching permits centering before the quaternion search. For equally weighted complete sets of common size $n>0$, $\textsc{FitSE3}$ is the quaternion fit for a fixed permutation of \Cref{app:covariance}, including eigenvalue ties. The fitted permutation is feasible without reassignment, and empty inputs have value zero.

The regional evaluation starts with the cheaper rotation-cap bounds and proceeds to the exterior queries only for the boxes that remain unresolved. We take the smaller of the two cap bounds of \Cref{app:rotation-cap}, with the permutations of the incumbent, of the center and of the ordinary cap bound as witnesses, and we fit every returned permutation. The set $\mathcal W$ of witnesses for the bound~\eqref{eq:face-bound} contains the incumbent and all permutations returned at the center, by the cap bounds and by completed exterior queries.

\begin{algorithm*}[tp]
\caption{Complete SE(3) search by exterior quaternion assignments}\label{alg:se3-search}
\begin{algorithmic}[1]
\REQUIRE Equally weighted point sets $X,Y\subset\mathbb R^3$ of common size $n>0$; $\varepsilon>0$; optional budget.
\ENSURE A feasible permutation and proper rigid pose with a summed-residual interval.
\STATE Save $m_x,m_y$; center the point sets to obtain $a_i,b_j$ and form $S$ as in \Cref{sec:se3}.
\STATE Solve the physical assignment at $R=I$; apply $\textsc{FitSE3}$ to initialize the incumbent.
\STATE $b_0\leftarrow\sum_i r_i^\uparrow s_i^\uparrow$, where $r_i=\|a_i\|$, $s_j=\|b_j\|$.
\STATE Create all 32 chart/orthant root records, each with lower bound $\max\{0,S-2b_0\}$.
\STATE Run \Cref{alg:spatial-search} using the regional evaluation below and longest-coordinate bisection of raw chart boxes.
\STATE Return the stored permutation, $R$, $t=m_x-Rm_y$, and residual interval.
\STATE \textbf{Regional evaluation} for one chart box $\Omega$:
\STATE Initialize $\mathcal W$ with the incumbent permutation. Normalize the raw center to $q_c$, solve its physical assignment, and fit and retain that permutation.
\STATE Compute the ordinary and stronger cap bounds described above; let their minimum be $b_{\rm cap}$.
\STATE If $\max\{0,S-2b_{\rm cap}\}\ge U$, return that residual lower bound.
\STATE Form the eight raw corners $\tilde q_a$ and the 36 matrices $B_{ab}$ in \eqref{eq:outer-quaternion}.
\FOR{each $a\le b$}
\STATE Solve $f_{ab}\leftarrow\max_\sigma\sum_i a_i^\top K^*(B_{ab})b_{\sigma(i)}$ and record a maximizing permutation $\sigma_{ab}$.
\STATE Apply $\textsc{FitSE3}(\sigma_{ab})$ and update the feasible incumbent if its physical cost is smaller.
\ENDFOR
\STATE For each $\tau\in\mathcal W$, compute $\delta_\tau\leftarrow\max_{a\le b}\{f_{ab}-\langle K(C_\tau),B_{ab}\rangle_F\}$.
\STATE $b_\Omega\leftarrow\min\{b_{\rm cap},\max_{a\le b}f_{ab},\min_{\tau\in\mathcal W}[\lambda_{\max}(K(C_\tau))+\delta_\tau]\}$.
\STATE Return the residual lower bound $\max\{0,S-2b_\Omega\}$.
\end{algorithmic}
\end{algorithm*}

The order shown, with the cap bounds first, is one valid schedule of the evaluation. An adaptive schedule may replace it by the difference bound~\eqref{eq:lifted-difference}, by repaired duals or by the full set of queries, while retaining the inherited bounds; \Cref{app:spatial-bound-reuse} proves that these alternatives are valid. For termination without a budget, a box that remains unresolved receives the physical assignment at its center and the complete exterior-query bound before it is subdivided, unless the replacement also has a gap that vanishes under refinement (\Cref{app:quaternion-cover,app:search-invariants}). The partial extension of \Cref{app:partial-translation} fixes the translation and does not solve unrestricted partial SE(3) registration.

%% file: appendix/experiments.tex
\section{Experimental Details}\label{app:experiments}

\input{appendix/computing.tex}

\subsection{Planar Conventions}\label{app:planar-conventions}

\paragraph{Tolerance.} Support comparisons and collinearity tests use double-precision relative tolerance $10^{-10}$ scaled by $\sum_i|a_i|\max_j|b_j|$, tightened to $10^{-12}$ when short edges would merge. Costs and failures follow \Cref{sec:exp-planar}.

\paragraph{Point sets.} All inputs are centered. Gaussian and anisotropic samples have zero mean and covariance $I_2$ and $\operatorname{diag}(1,0.09)$, respectively. Square and disk samples are uniform on $[-1,1]^2$ and on the unit disk. Mixtures have five equally weighted components with centers drawn from $\mathcal N(0,9I_2)$ and standard deviation $0.5$, and noisy circles add $\mathcal N(0,0.05^2I_2)$ to uniform samples of the unit circle. A noisy copy is $y_i=Rx_{\sigma(i)}+\sigma_{\mathrm{noise}}\eta_i$ with standard Gaussian $x_i$ and $\eta_i$ and uniformly random $\sigma$ and $R$; the study of polygon geometry uses $\sigma_{\mathrm{noise}}=0.1$ and $R\in\mathrm{SO}(2)$. For MNIST~\cite{lecun1998}, we sample pixels in proportion to their intensity, jitter the points within their pixels and normalize to unit root-mean-square radius.

\subsection{Polygon Geometry}\label{app:geometry}

Across 966 instances, vertex-count log--log exponents for $n\ge20$ are 1.20--1.30; at $n=200$, family means are 929--1,736 against the 39,800 bound. Gaussian sets at $n=100$ average \FixedGauss\ AM fixed points. For $n\le100$, \FracTransposition\ of transitions are swaps and \FracEnclosing\ of cycles enclose other points.

\paragraph{Scaling (\Cref{fig:scaling}).} Each family contributes ten pairs at $n=10,20,50,100$, five at $n=200$ and three at $n=400$: 144 centered $\mathrm{SO}(2)$ problems. Median-time log--log exponents for $n\ge50$ are 3.74 for reconstruction and 2.82 for pruning. The 360-angle grid stays within $1\%$ throughout but exceeds the numerical reference on 15 pairs.

\subsection{Solver Comparison}\label{app:solvers}

AM (PCA) tests every admissible sign of covariance eigenvectors: two starts for $\mathrm{SO}(2)$, four for $\mathrm{O}(2)$, retaining the best fit.

\paragraph{Real and structured pairs.} The 1,000 pairs per family are uniform MPEG-7 pairs~\cite{latecki2000}, unrelated Gaussian sets ($n=50$), and random lattice types at $\sigma_{\mathrm{noise}}=0.12$, with the preprocessing of \Cref{sec:where,sec:exp-planar}.

\begin{table*}[tp]
\caption{\textbf{Failure rates on real and structured pairs.} Each family has 1,000 pairs: MPEG-7 ($n=64$), Gaussian ($n=50$), and lattice neighborhoods ($n=24$). LAPs are mean numbers of assignments; ``fail'' and ``$>1\%$'' are the percentages of pairs with a numerical failure and with more than $1\%$ excess, respectively. Bold marks fewer failures than every AM variant or fewer LAPs than both grids.}\label{tab:failures-full}\label{tab:failures}
\begin{center}
\input{tables/realpairs_full.tex}

\end{center}
\end{table*}

\paragraph{Synthetic families (\Cref{tab:failures-synth}).} Six families each contribute 100, 100, 50, 10 centered instances at $n=20,50,100,200$, totaling 1,560 $\mathrm{O}(2)$ problems. AM stops at a repeated matching, starting from $R=I$, principal axes or ten random poses. Principal-axis starts are selected by final cost; Adamo et al.~\cite{adamo2025} instead use initial cost.

Best-first branch-and-bound (B\&B) searches the angle with the bound
\begin{equation}
h(\theta)\le h(\theta_c)+\Lambda\sin\delta
\end{equation}
on intervals of half-width $\delta\le\pi/2$ about $\theta_c$, where $\Lambda=\sum_i|x_i|\max_j|y_j|$ and
\begin{equation}\label{eq:angular-grid-support}
h(\theta)=\max_\sigma\Re(e^{\mathrm i\theta}z_\sigma(\bar x,y)).
\end{equation}
The bound holds because every correlation has modulus at most $\Lambda$ and, for centered inputs, the polygon contains the origin, so that $h\ge0$. For $n\le50$, the search starts at half-width $\pi/8$ and allows at most $10^5$ assignments, with a requested objective gap of $10^{-6}\max(v_{\rm ref},10^{-12})$. Among its 1,200 runs, \BnBCapped\ reach the limit, on near-copies whose optima are close to zero.

\begin{table*}[tp]
\caption{\textbf{Failure rates (\%)} against the reference value on 1,560 centered $\mathrm{O}(2)$ instances of synthetic families, pooled over $n\in\{20,50,100,200\}$ (branch-and-bound, B\&B: $n\le50$). ``Copy $\sigma_{\mathrm{noise}}$'' aligns a Gaussian point set with a noisy transformed copy of itself at noise level $\sigma_{\mathrm{noise}}$.}\label{tab:failures-synth}
\begin{center}
\input{tables/failures_synthetic.tex}

\end{center}
\end{table*}

\paragraph{Convex initialization.} Grave et~al.~\cite{grave2019} and Even et~al.~\cite{even2024} initialize by minimizing $\|\mathsf G_XP-P\mathsf G_Y\|_F^2$ over doubly stochastic $P$, with $(\mathsf G_X)_{ij}=\langle x_i,x_j\rangle$ and $(\mathsf G_Y)_{ij}=\langle y_i,y_j\rangle$. For centered inputs, $\mathsf G_X\mathbf1=\mathsf G_Y\mathbf1=0$, so $P=\mathbf1\mathbf1^\top/n$ attains the value zero. The relaxed initializer can therefore minimize its objective without identifying a permutation. The Frank--Wolfe step of Even et~al.~\cite{even2024} can leave this tie toward a permutation.

\begin{table}[t]
\caption{\textbf{Mean time (ms) per pair}, 200 pairs per set, one core, including initialization. For grids, the table reports the smallest tested number of angles with at most two numerical failures. Parentheses give the number of angles or, for AM with 100 starts, the percentage of failed pairs. Bold marks the shortest time in each row.}\label{tab:tta}
\begin{center}
\input{tables/time_to_accuracy.tex}

\end{center}
\end{table}

\begin{figure*}[t]
\centering
\includegraphics[width=\textwidth]{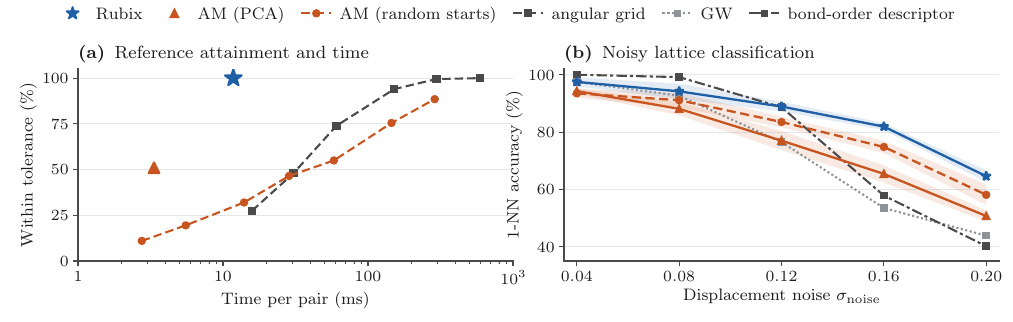}
\caption{\textbf{Accuracy and computation in planar applications.} (a)~Fraction of 200 MPEG-7 pairs that attain the numerical reference against mean time, including initialization: AM with 1--100 random starts, grids with 36--1,440 angles, AM (PCA), and Rubix. (b)~1-NN accuracy on noisy lattice neighborhoods; the solver curves and their bands of $\pm1.96$ standard errors use five draws, while GW and the bond-order descriptor use three separate draws (\Cref{app:frontier,app:where}).}\label{fig:where}
\end{figure*}

\subsection{Runtime and Failure Rates}\label{app:frontier}

Timing uses the first 200 pairs per family, including GW initialization, which contributes no assignment count. Grids use $N_{\mathrm{grid}}\in\{36,72,144,360,720,1440\}$; AM uses nested random starts (\Cref{sec:exp-fail}).

\subsection{When Global Alignment Matters}\label{app:where}

\paragraph{PCA failure modes.} We draw $n=50$ points from $\mathcal N(0,\operatorname{diag}(1,\chi_{\mathrm{cov}}))$, add covariance-scaled Gaussian noise of standard deviation $0.05$, redraw fraction $\rho_{\mathrm{redraw}}$ from the same law, and uniformly permute and rotate. Here $\chi_{\mathrm{cov}}=\lambda_{\min}/\lambda_{\max}$ measures isotropy and $\rho_{\mathrm{redraw}}=0$ retains a noisy copy. A $6\times6$ parameter grid has 150 pairs per cell. AM has more than $1\%$ excess on at most $1\%$ when $\chi_{\mathrm{cov}}=0.05$ or $\rho_{\mathrm{redraw}}=0$, but on 55--71\% when $\chi_{\mathrm{cov}}\ge0.6$ and $\rho_{\mathrm{redraw}}\ge0.75$: nearly isotropic, dissimilar sets defeat PCA initialization. Binning MPEG-7's 979,300 pairs by their larger covariance ratio gives rates $2.1\%$ at $\chi_{\mathrm{cov}}<0.2$ and $62\%$ at $\chi_{\mathrm{cov}}\ge0.8$.

\paragraph{Shape retrieval.} Rubix alone predicts 14 queries correctly and AM (PCA) alone two (\Cref{tab:mpeg7,fig:motivation}). Treating queries as independent gives exact McNemar $p=0.004$ and bullseye Wilcoxon $p<10^{-21}$.

\input{appendix/planar_results_gallery.tex}

\paragraph{Lattice neighborhoods.} Replicates redraw patches and central particles. Bands are normal-approximation means $\pm1.96$ standard errors. Rubix exceeds AM (PCA) in every draw and noise level, with pooled McNemar $p<10^{-17}$ at $\sigma_{\mathrm{noise}}=0.12$ and $0.16$.

\begin{table}[t]
\caption{\textbf{Noisy 2D lattice classification accuracy (\%).} Each column pools the same 1,000 test neighborhoods across five draws, with ten labeled neighborhoods per class and $n=24$. All four rows use 1-NN with PW distances and differ only in the alignment solver. Full noise curves appear in \Cref{fig:where}(b). Bold marks the highest accuracy in each column.}\label{tab:materials}
\centering
\input{tables/materials_results.tex}
\end{table}

\paragraph{Robustness and baselines (\Cref{tab:robust,tab:baselines}).} The bond-orientational descriptor concatenates the sorted distances from the central particle to its 23 neighbors with the magnitudes of
\begin{equation}
\psi^{\mathrm{bond}}_{\ell_{\mathrm{harm}},N_{\mathrm{nbr}}}=\frac1{N_{\mathrm{nbr}}}\sum_{j=1}^{N_{\mathrm{nbr}}}e^{\mathrm i\ell_{\mathrm{harm}}\phi_j},
\end{equation}
for the harmonic orders $\ell_{\mathrm{harm}}\in\{3,4,6\}$ and the neighbor counts $N_{\mathrm{nbr}}\in\{3,4,6,12\}$, where $\phi_j$ is the polar angle of the $j$th neighbor about the central particle. We classify by Euclidean distances between descriptors. \Cref{tab:robust} varies the size of the neighborhoods and the number of labels per class.

\begin{table*}[tp]
\begin{minipage}[t]{0.36\textwidth}
\caption{\textbf{Lattice 1-NN accuracy (\%)} at $\sigma_{\mathrm{noise}}=0.12$ by neighborhood size $n$ and labeled neighborhoods per class, pooled over three draws of 200 test neighborhoods. Rubix is more accurate in every setting (McNemar $p\le0.002$). Bold marks higher accuracy than AM.}\label{tab:robust}
\begin{center}
\input{tables/robust.tex}
\end{center}
\end{minipage}\hfill
\begin{minipage}[t]{0.6\textwidth}
\caption{\textbf{Lattice classification accuracy (\%)} of PW computed by Rubix, AM (PCA), and AM (GW), and of rotation-invariant alternatives: GW distances, SEINT~\cite{lin2026seint} in its original classification setting and in the best of its 14 tested settings, and a bond-orientational descriptor. All use 1-NN with ten labeled neighborhoods per class ($n=24$, three draws of 200 test neighborhoods). Bold marks the highest accuracy in each column.}\label{tab:baselines}
\begin{center}
\input{tables/baselines.tex}
\end{center}
\end{minipage}
\end{table*}

\subsubsection{SEINT Distance Comparisons}\label{app:seint}
Original NumPy SEINT~\cite{lin2026seint} uses the same centered, scaled inputs. Its original setting maximizes discrepancy over 50 random radius-2 references; ISEINT averages it. Fourteen settings cross these variants with 50--2,000 random references and deterministic references. We report the original and best settings, the latter deterministic ISEINT (\Cref{tab:seint-sweep}).

Clustering uses square roots of SEINT values to match $\PW$; raw values give the same conclusions, and nearest-neighbor results are unchanged. Timing uses the same 300 pairs and processor.

Both SEINT settings give fewer MPEG-7 bullseye hits than Rubix (Wilcoxon $p<10^{-179}$). Deterministic ISEINT trails on lattices at $\sigma_{\mathrm{noise}}=0.12$, $0.16$, $0.20$ (McNemar $p<10^{-20}$); \Cref{sec:where} reports its low-noise advantage.

\begin{table*}[t]
\caption{\textbf{SEINT at matched error rates.} Accuracy (\%) and median time in milliseconds per distance for 14 settings of SEINT~\cite{lin2026seint} and for Rubix, on all 1,400 MPEG-7 silhouettes and three lattice draws (\Cref{tab:baselines}). The times use the same 300 pairs and processor core. ISEINT averages the discrepancies over the references, and SEINT takes their maximum.}\label{tab:seint-sweep}
\centering
\input{tables/seint_sweep.tex}
\end{table*}

\subsubsection{Polycrystals and Clustering}\label{app:polycrystals-clustering}

\paragraph{Polycrystals (\Cref{fig:poly}).} Eight uniform sites partition a $50\times50$ square into Voronoi grains. Each lattice fills two grains at uniform random orientations and offsets. We remove cross-grain neighbors closer than $0.55$ and add Gaussian noise $\sigma_{\mathrm{noise}}=0.12$. Ten pure-lattice neighborhoods per type provide references at the same noise. Scored atoms lie at least $3.5$ from the square boundary, with all 24 nearest neighbors in their grain.

\input{appendix/materials_gallery.tex}

\paragraph{Clustering.} At $\sigma_{\mathrm{noise}}=0.08$, $0.12$, $0.16$, three draws of 50 neighborhoods per lattice form four groups by lowest-cost k-medoids over 20 starts or by average linkage. Labels serve only ARI and normalized mutual information (NMI) evaluation (\Cref{tab:clustering}).

\begin{table*}[tp]
\caption{\textbf{Unsupervised clustering of 200 lattice neighborhoods} (50 per lattice), using PW and SEINT~\cite{lin2026seint} distances. ARI and NMI measure the agreement with the true types, averaged over three draws. SEINT uses its original classification setting and its best tested setting (deterministic ISEINT; \Cref{app:seint}). Bold marks the highest score per column within each clustering method.}\label{tab:clustering}
\begin{center}
\input{tables/clustering.tex}
\end{center}
\end{table*}

\input{appendix/published_planar.tex}
\input{appendix/native_planar.tex}

%% file: appendix/computing.tex
\subsection{Hardware}\label{app:setup}\label{app:computing}

The experiments use Apple M2 and AMD Ryzen Threadripper PRO 5995WX processors. Generalized-CVO and RAP also use a 40\,GB NVIDIA A100 GPU on a node with two Xeon Gold 6330 processors. Paired solver comparisons use the same processor and one numerical thread. The SE(3) timings on real scans cover the matching solver only and reuse the native RAP poses computed on the GPU.

%% file: tables/realpairs_full.tex
\begin{tabular}{@{}lrrrrrrrrr@{}}
\toprule
 & \multicolumn{3}{c}{MPEG-7} & \multicolumn{3}{c}{Gauss.} & \multicolumn{3}{c}{lattice}\\
\cmidrule(lr){2-4}\cmidrule(lr){5-7}\cmidrule(lr){8-10}
method & LAPs & fail & $>1\%$ & LAPs & fail & $>1\%$ & LAPs & fail & $>1\%$\\
\midrule
Rubix & \textbf{27} & \textbf{0.0} & \textbf{0.0} & \textbf{30} & \textbf{0.0} & \textbf{0.0} & \textbf{31} & \textbf{0.0} & \textbf{0.0}\\
AM (PCA)~\cite{adamo2025} & 7 & 53.0 & 33.4 & 7 & 72.4 & 62.2 & 5 & 78.5 & 77.2\\
AM (10 random)~\cite{adamo2025} & 62 & 51.6 & 35.0 & 36 & 55.3 & 44.1 & 24 & 51.8 & 48.5\\
AM (GW)~\cite{alvarezmelis2018gw} & 3 & 72.3 & 61.3 & 3 & 85.7 & 80.8 & 2 & 88.7 & 87.7\\
grid 36 & 36 & 72.9 & 12.6 & 36 & 59.0 & 10.7 & 36 & 3.8 & 1.3\\
grid 360 & 360 & 4.7 & 0.0 & 360 & 0.7 & 0.0 & 360 & 0.0 & 0.0\\
\bottomrule
\end{tabular}

%% file: tables/failures_synthetic.tex
\begin{tabular}{lrrrrrr}
\toprule
method & Gauss. & mixt. & circle & copy .05 & copy .2 & copy .5\\
\midrule
AM (identity)~\cite{adamo2025} & 97 & 87 & 98 & 90 & 92 & 93\\
AM (PCA, 4 starts)~\cite{adamo2025} & 69 & 21 & 80 & 1 & 12 & 53\\
AM (10 random)~\cite{adamo2025} & 70 & 24 & 80 & 28 & 40 & 62\\
grid 36 & 48 & 52 & 37 & 2 & 38 & 44\\
grid 360 & 5 & 8 & 1 & 0 & 2 & 3\\
B\&B ($10^{-6}$) & 0 & 0 & 0 & 0 & 0 & 0\\
\bottomrule
\end{tabular}

%% file: tables/time_to_accuracy.tex
\begin{tabular}{@{}lrrr@{}}
\toprule
Set & Rubix & Grid ($N_{\mathrm{grid}}$) & AM (100)~\cite{adamo2025}\\
\midrule
MPEG-7 & \textbf{11.8} & 296 (720) & 288 (11.5\%)\\
Gauss. & \textbf{8.3} & 82.6 (360) & 99.0 (6.5\%)\\
lattice & \textbf{2.2} & 3.9 (72) & 21.2 (2.5\%)\\
\bottomrule
\end{tabular}

%% file: appendix/planar_results_gallery.tex
\begin{figure*}[tp]
\centering
\includegraphics[width=\textwidth]{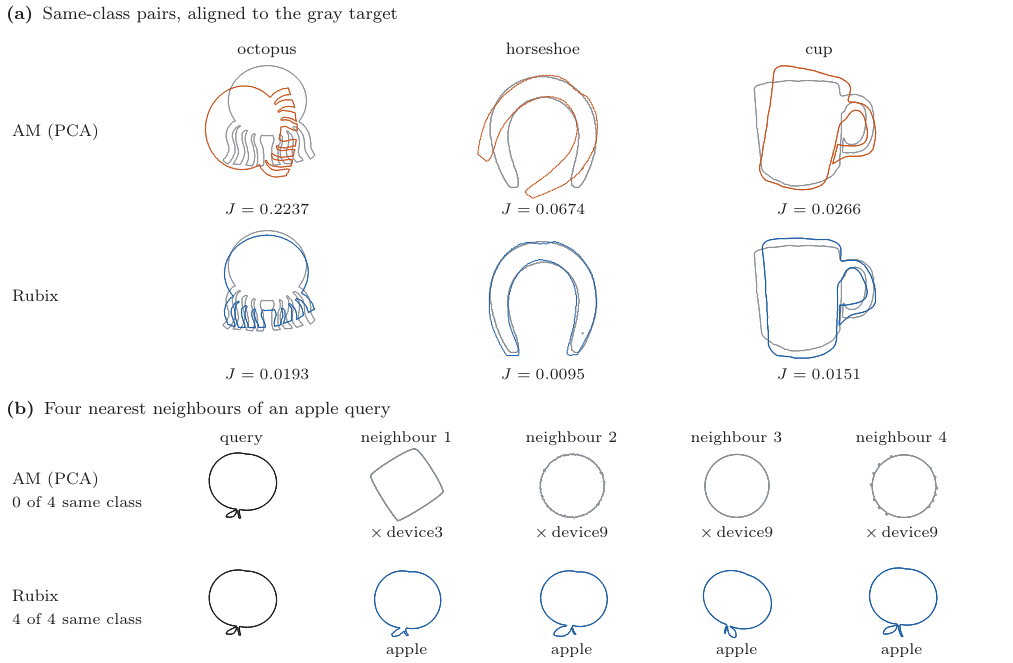}
\caption{\textbf{MPEG-7 alignment and retrieval.} The silhouettes are aligned from 64 boundary points, and the full outlines are shown. (a)~Three pairs of the same class aligned to the gray target by AM (PCA, orange) and by Rubix (blue), with the mean squared PW costs $J$ and one scale per class. (b)~A selected apple query and its four nearest neighbors in their computed orientations; $\times$ marks shapes of other classes. AM and Rubix retrieve 0 of 4 and 4 of 4 same-class neighbors here; \Cref{tab:mpeg7} evaluates all queries.}\label{fig:motivation}
\end{figure*}

%% file: tables/materials_results.tex
\begin{tabular}{@{}lrrr@{}}
\toprule
 & \multicolumn{3}{c}{Noise $\sigma_{\mathrm{noise}}$}\\
PW solver & 0.08 & 0.12 & 0.16\\
\midrule
AM (PCA)~\cite{adamo2025} & 88.1 & 77.0 & 65.4\\
AM (10 starts)~\cite{adamo2025} & 91.1 & 83.5 & 74.8\\
Grid 36 & \textbf{94.2} & \textbf{88.9} & \textbf{82.0}\\
Rubix & \textbf{94.2} & \textbf{88.9} & 81.9\\
\bottomrule
\end{tabular}

%% file: tables/robust.tex
\begin{tabular}{@{}rrrr@{}}
\toprule
$n$ & \shortstack{Labels\\per class} & Rubix & AM (PCA)~\cite{adamo2025}\\
\midrule
12 & 10 & \textbf{83.5} & 80.2\\
18 & 10 & \textbf{90.3} & 83.0\\
24 & 5 & \textbf{83.5} & 72.0\\
24 & 10 & \textbf{91.5} & 77.8\\
24 & 20 & \textbf{90.8} & 83.5\\
36 & 10 & \textbf{94.5} & 79.2\\
\bottomrule
\end{tabular}

%% file: tables/baselines.tex
\begin{tabular}{@{}lrrrrr@{}}
\toprule
$\sigma_{\mathrm{noise}}$ & 0.04 & 0.08 & 0.12 & 0.16 & 0.20\\
\midrule
Rubix & 97.8 & 92.7 & \textbf{88.7} & \textbf{82.3} & \textbf{64.0}\\
AM (PCA)~\cite{adamo2025} & 94.7 & 87.8 & 77.2 & 66.2 & 50.7\\
AM (GW)~\cite{alvarezmelis2018gw} & 93.8 & 87.3 & 72.5 & 60.7 & 45.2\\
GW~\cite{alvarezmelis2018gw,flamary2021} & 97.5 & 92.7 & 76.5 & 53.5 & 43.8\\
SEINT~\cite{lin2026seint} & 97.7 & 81.0 & 50.0 & 36.3 & 26.3\\
SEINT, best setting~\cite{lin2026seint} & \textbf{100.0} & 93.7 & 69.3 & 44.8 & 31.2\\
Descriptor (ours) & \textbf{100.0} & \textbf{99.2} & \textbf{88.7} & 57.8 & 40.2\\
\bottomrule
\end{tabular}

%% file: tables/seint_sweep.tex
\begin{tabular}{@{}lrrrrrrrrr@{}}
\toprule
 & \multicolumn{3}{c}{MPEG-7 retrieval} & \multicolumn{6}{c}{Lattice 1-NN accuracy (\%) at $\sigma_{\mathrm{noise}}$}\\
\cmidrule(lr){2-4}\cmidrule(l){5-10}
Setting & Bullseye & 1-NN & ms & 0.04 & 0.08 & 0.12 & 0.16 & 0.20 & ms\\
\midrule
SEINT, 50 refs & 36.2 & 63.1 & 0.22 & 97.7 & 81.0 & 50.0 & 36.3 & 26.3 & 0.17\\
SEINT, 100 refs & 36.0 & 62.3 & 0.28 & 97.8 & 81.3 & 51.3 & 34.5 & 26.3 & 0.20\\
SEINT, 200 refs & 35.9 & 61.8 & 0.43 & 98.0 & 82.7 & 53.5 & 34.2 & 28.2 & 0.25\\
SEINT, 500 refs & 36.7 & 63.1 & 0.78 & 99.7 & 87.5 & 55.0 & 34.5 & 29.2 & 0.40\\
SEINT, 1000 refs & 36.6 & 63.0 & 1.58 & 99.2 & 87.5 & 51.5 & 34.2 & 31.3 & 0.64\\
SEINT, 2000 refs & 37.2 & 62.0 & 2.51 & 99.0 & 84.7 & 55.3 & 35.2 & 28.3 & 1.13\\
ISEINT, 50 refs & 36.7 & 63.1 & 0.22 & 99.8 & 89.5 & 58.2 & 35.7 & 27.8 & 0.17\\
ISEINT, 100 refs & 36.9 & 63.6 & 0.28 & 99.8 & 90.2 & 57.8 & 36.2 & 26.5 & 0.20\\
ISEINT, 200 refs & 36.6 & 62.9 & 0.43 & 99.8 & 91.2 & 60.0 & 36.3 & 27.0 & 0.25\\
ISEINT, 500 refs & 36.9 & 64.0 & 0.78 & 100.0 & 90.8 & 61.0 & 37.5 & 28.0 & 0.40\\
ISEINT, 1000 refs & 36.8 & 63.1 & 1.58 & 100.0 & 91.2 & 60.8 & 37.5 & 27.8 & 0.64\\
ISEINT, 2000 refs & 36.8 & 63.3 & 2.51 & 99.8 & 91.0 & 60.2 & 37.0 & 27.8 & 1.13\\
SEINT, deterministic & 38.8 & 59.1 & 0.34 & 99.8 & 91.8 & 70.0 & 42.7 & 30.5 & 0.16\\
ISEINT, deterministic & 41.6 & 67.4 & 0.34 & 100.0 & 93.7 & 69.3 & 44.8 & 31.2 & 0.16\\
\midrule
Rubix & 67.4 & 92.5 & 5.35 & 97.8 & 92.7 & 88.7 & 82.3 & 64.0 & 1.00\\
\bottomrule
\end{tabular}

%% file: appendix/materials_gallery.tex
\begin{figure*}[tp]
\centering
\includegraphics[width=\textwidth]{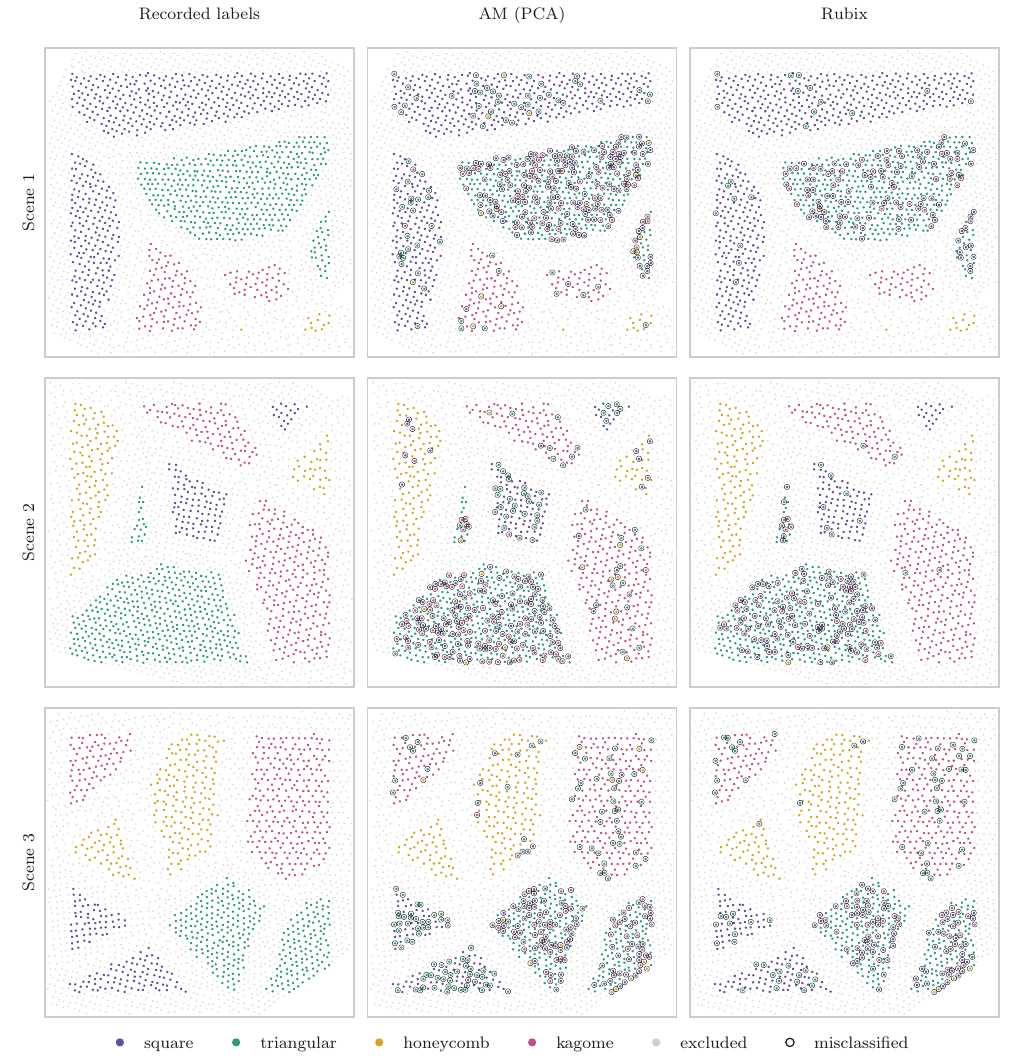}
\caption{\textbf{Fewer classification errors across three polycrystals.} The rows show the three scenes, and the columns show the true lattice labels and the predictions with AM (PCA) and with Rubix. Black rings mark errors. Gray atoms are not scored: their neighborhoods cross a grain boundary, or they lie within 3.5 units of the boundary of the domain. Both methods score the same 1,190, 1,071 and 1,006 atoms, drawn at a common physical scale. The errors fall from 285 to 139, from 257 to 149 and from 255 to 179, which is 330 fewer overall; in the first scene, Rubix corrects 186 errors of AM and introduces 40.}\label{fig:poly}\label{fig:materials-gallery}
\end{figure*}

%% file: tables/clustering.tex
\begin{tabular}{@{}llcccccc@{}}
\toprule
 & & \multicolumn{2}{c}{$\sigma_{\mathrm{noise}}=0.08$} & \multicolumn{2}{c}{$\sigma_{\mathrm{noise}}=0.12$} & \multicolumn{2}{c}{$\sigma_{\mathrm{noise}}=0.16$}\\
\cmidrule(lr){3-4}\cmidrule(lr){5-6}\cmidrule(lr){7-8}
Clustering & Distance & ARI & NMI & ARI & NMI & ARI & NMI\\
\midrule
k-medoids & Rubix & 0.55 & 0.68 & \textbf{0.52} & \textbf{0.60} & \textbf{0.39} & \textbf{0.43}\\
 & AM (PCA)~\cite{adamo2025} & 0.35 & 0.40 & 0.22 & 0.26 & 0.06 & 0.08\\
 & SEINT~\cite{lin2026seint} & 0.21 & 0.27 & 0.06 & 0.09 & 0.01 & 0.03\\
 & SEINT, best setting~\cite{lin2026seint} & \textbf{0.70} & \textbf{0.72} & 0.28 & 0.31 & 0.07 & 0.11\\
\midrule
average linkage & Rubix & \textbf{0.68} & \textbf{0.80} & \textbf{0.66} & \textbf{0.80} & \textbf{0.10} & 0.20\\
 & AM (PCA)~\cite{adamo2025} & 0.43 & 0.59 & 0.29 & 0.45 & 0.07 & \textbf{0.21}\\
 & SEINT~\cite{lin2026seint} & 0.09 & 0.22 & 0.00 & 0.06 & 0.00 & 0.04\\
 & SEINT, best setting~\cite{lin2026seint} & 0.16 & 0.31 & 0.04 & 0.09 & 0.00 & 0.03\\
\bottomrule
\end{tabular}

%% file: appendix/published_planar.tex
\subsection{Published Planar Baselines}\label{app:published-planar}

\paragraph{Fiedler--W.} Original and reproduced Fiedler--W~\cite{adamo2025} retain its graph construction, with determinant corrections restricting it to $\mathrm{SO}(2)$, 100 iterations and tolerance $10^{-10}$. Both are reported because eight of 600 outputs differ, one for an unresolved reason.

\paragraph{Other PW solvers.} Our reproduction of Ping-Pong~\cite{even2024} uses $\gamma^{\mathrm{PP}}_j=1/(j+2)$, at most 1,000 Frank--Wolfe and 100 alternating steps. Its uniform initializer is stationary, so the first assignment depends on tie-breaking (\Cref{app:solvers}). Stochastic Wasserstein--Procrustes~\cite{grave2019} uses learning rate 500, 100 initialization updates and 6,659 stochastic updates over five epochs, then assignment without AM refinement. Single-core timings include initialization and evaluation.

\paragraph{APM (\Cref{tab:published-similarity,tab:global-solvers}).} The authors' code~\cite{lian2017} runs under Octave in similarity mode, with zero regularization, distance tolerance $0.001$, batches of $2^9$ rectangles and LAPJV cost resolution $10^{-4}$. The direct comparison uses C++ Rubix at yaw gap $10^{-10}$. Both transform $Y$ toward $X$, eliminating translation and nonnegative scale on positive-variance sources. Timings include setup and language overhead.

\input{tables/published_similarity.tex}

\paragraph{RPM-PA (\Cref{tab:global-solvers}).} Our implementation follows Lian et al.~\cite{lian2021polyhedral}, eliminating planar similarity and searching the convex hull of matching statistics. Complete matching reduces the affine hull to $d_{\mathrm{RPM}}=2$ dimensions. We denote the paper's $n_u$ and $\varepsilon_0$ by $d_{\mathrm{RPM}}$ and $\varepsilon_{\mathrm{RPM}}$. Search stops when every facet's normalized support satisfies $\mu_{\mathrm{RPM}}\le1+d_{\mathrm{RPM}}\varepsilon_{\mathrm{RPM}}$, or after 120\,s. RPM-PA and Rubix share single-threaded Python, one processor core and SciPy's assignment solver.

\paragraph{Rubix support cuts inside APM.} Retaining assignment-derived support half-spaces tightens most APM rectangles without new assignments, but median assignment counts change by less than $0.7\%$ on all 600 pairs, with identical optima: the initial batch of $2^9$ rectangles dominates before pruning.

%% file: tables/published_similarity.tex
\begin{table}[t]
\centering
\caption{\textbf{Similarity alignment against original APM.} Both optimize a bijection, translation, rotation and nonnegative scale on 200 pairs per family. All objectives agree within APM's mean tolerance $10^{-6}$. Times are median milliseconds including setup; ratios are paired APM/Rubix medians. Bold marks lower Rubix times and ratios above one. Language and solver details appear in \Cref{app:published-planar}.}\label{tab:published-similarity}
\begin{tabular}{@{}lrrr@{}}
\toprule
Family & APM~\cite{lian2017} (ms) & Rubix (ms) & Ratio\\
\midrule
MPEG-7 & 259.0 & \textbf{8.12} & $\mathbf{32.4}\times$\\
Gaussian & 191.6 & \textbf{4.41} & $\mathbf{43.7}\times$\\
Lattice & 144.7 & \textbf{0.74} & $\mathbf{194.6}\times$\\
\bottomrule
\end{tabular}
\end{table}

%% file: appendix/native_planar.tex
\subsection{Planar Laser-Scan Registration}\label{app:native-planar}

\paragraph{Data and settings.} Original GLORES~\cite{glores2020} and PLICP~\cite{censi2008} use IILABS~\cite{iilabs2025} sequences \texttt{slippage}, \texttt{nav\_a\_omni}, \texttt{loop}. Scans at ten acquisition-time quantiles pair with scans one and five seconds later, yielding 60 pairs. Using 128 or 256 beams and retained fractions $\rho_{\mathrm{keep}}$ of $0.7$ or $0.9$ gives 240 configurations. The search spans all yaw and translations in $[-5,5]^2$\,m at fixed scale, with relaxation thresholds of $0.1$, tolerance $1\%$, and soft and hard limits of 120 and 150\,s. PLICP starts at zero or 16 equally spaced yaws, using centroid translation for nonzero starts and native rejection, and minimizes error per retained point. The 28 configurations without reference poses enter timing only.

\paragraph{Bounds.} We use the common-translation construction of \Cref{lem:common-translation,app:common-translation}. Tangent triangles use half-widths in $(0,\pi/3]$; below $10^{-7}$ we use the minimum vertex value. Cardinality stays fixed per cell. All variants split wide arcs into three and share roundoff safeguards, including a coordinate-scaled guard; nonimproving queries stop early. GLORES has no polynomial guarantee. When the search terminates normally, its lower bound is $0.99U$; when it is interrupted, the lower bound is $\min(L_{\rm raw},0.99U)$, which accounts for the cells that were pruned within the tolerance.

\paragraph{Timing and statistics.} Timings include input, all preprocessing and bound computation. Bootstrap resampling stays within sequences and keeps each pair's four configurations together.

%% file: appendix/unbalanced.tex
\subsection{Unbalanced Alignment Details}\label{app:unbalanced}

\subsubsection{Data, Baselines and Selection}\label{app:unbalanced-data}

\paragraph{Anchored synthetic sets.} For $n=m=40$, we draw $k=\mathrm{round}(40\rho_{\mathrm{share}})$ shared and $40-k$ independent points per set uniformly in the unit disk. Shared target points receive Gaussian noise $\sigma_{\mathrm{noise}}$ and a uniform random rotation about the origin, our anchor; both sets are randomly permuted. Each cell of $\rho_{\mathrm{share}}\in\{1,0.75,0.5,0.3\}$ and $\sigma_{\mathrm{noise}}\in\{0.01,0.03\}$ has 30 validation and 100 test instances. Success requires rotation error at most $5^\circ$. Soft plans yield mutual arg-max correspondences of mass at least $0.3$ of the maximum, and rotations from plan-weighted cross-covariance about the anchor. F1 compares returned and true shared pairs.

\paragraph{Defective crystal neighborhoods.} The four lattices have unit nearest-neighbor spacing and Gaussian displacement $\sigma_{\mathrm{noise}}=0.08$. Each noncentral atom is removed with probability $p_{\mathrm{def}}$. Interstitial counts are Poisson with mean $p_{\mathrm{def}}N_{\mathrm{clean}}/2$, where $N_{\mathrm{clean}}$ counts defect-free atoms within radius $2.6$; positions are uniform within that radius and at least $0.5$ from every atom. Neighborhoods contain the central site at the origin and either all atoms within $2.6$ (unbalanced) or its 23 nearest atoms (balanced), in physical units. Each class has 10 defect-free references at the same noise, 15 validation queries and 50 tests per $p_{\mathrm{def}}\in\{0,0.1,0.2,0.3\}$.

\paragraph{Real laser scans.} We reuse the 240 configurations of \Cref{sec:native-planar}; success on the 212 motion-capture references requires errors at most $0.25$\,m and $2^\circ$. GW methods normalize within-scan distances by the target's median pairwise distance and fit plan-weighted rigid Procrustes poses. Optional refinement uses 50 trimmed point-to-point ICP iterations at the prescribed retained fraction; zero-pose ICP controls for refinement alone.

\paragraph{Methods and settings.} Rubix-U applies \Cref{alg:gravity-yaw} to \eqref{eq:partial-spatial} with zero heights, anchor-centered rotation, absolute gap $10^{-6}\lambda(n+m)$, and limits of 20,000 assignments and 10\,s. AM-U alternates penalized assignment and rotation from 16 equally spaced yaws, retaining the best objective. Rubix-B optimizes the complete anchored bijection. These variants separate optimization from partial matching, except that lattice Rubix-B also changes neighborhood extraction.

UGW~\cite{sejourne2021} uses the authors' log-domain solver and two-plan cost, Euclidean internal distances and masses $1/n_0$, with $n_0=40$, $24$, $100$ for synthetic sets, lattices and scans. Fused UGW~\cite{thual2022fugw} uses POT's~\cite{flamary2021} majorization--minimization solver, zero entropy and anchor cost $(\|x_i\|-\|y_j\|)^2$. Partial GW~\cite{chapel2020partial} uses POT's Frank--Wolfe solver; SEINT uses its best setting (\Cref{app:seint}).

For the synthetic sets, the tested penalties are $\lambda\in\{2,5,10,20,50,100\}\times10^{-4}$; for lattices, we use $\{0.01,0.02,0.05,0.1,0.2\}$. The UGW grid uses entropy and marginal penalties $(\varepsilon_{\mathrm{UGW}},\rho_{\mathrm{UGW}})\in\{10^{-3},10^{-2}\}\times\{10^{-2},10^{-1},1,10\}$. For fused UGW, we test $(\rho_{\mathrm{FUGW}},\alpha_{\mathrm{FUGW}})\in\{10^{-2},10^{-1},1\}\times\{0.3,1,3\}$ for the marginal penalty and feature weight; neither of these penalties is the shared fraction $\rho_{\mathrm{share}}$. For partial GW, we obtain matched masses by multiplying $\min(n,m)/n_0$ by $\{0.3,0.5,0.75,1\}$ for the synthetic sets, $\{0.5,0.7,0.85,1\}$ for the lattices and $\{0.5,0.7,0.9\}$ for the laser scans. Validation and test use the same grids.

\paragraph{Selection and tests.} Each synthetic or lattice cell selects the highest validation success or accuracy, breaking ties by grid order. Tables place each baseline's best test value beside its validation-selected result. Without scan validation data, GW uses its best test setting. Paired McNemar tests compare Rubix-U with the best other method.

\subsubsection{Failure Diagnostic and Example Selection}\label{app:unbalanced-diagnostic}

\paragraph{Solver and objective failures.} Each GW method restarts at its selected setting with mass $1/n_0$ per true pair; partial GW rescales to its prescribed mass, never above the true mass at selected settings. A solver failure requires successful rotation recovery with an objective more than a relative $10^{-6}$ below the failed run. Otherwise, the label ``objective failure'' means no witness was found, not that a nonconvex GW optimum was established. AM has a solver failure when Rubix closes its gap at the same penalty and recovers the rotation. Both Rubix variants attain costs no greater than the true matching at the true rotation, so their misses are objective failures.

\paragraph{Qualitative examples.} \Cref{fig:unbalanced-examples} follows the first-case rule in its caption, excluding the two ETH pairs with fused-UGW numerical errors. The teaser uses the first correctly classified query per lattice at noise $0.08$ and defect rate $0.2$. Settings follow the selection above.

\paragraph{Numerical adaptations.} SEINT requires equal sizes and uses 24-point neighborhoods. Partial GW requests $(1-10^{-9})$ times nominal mass to satisfy POT's floating-point feasibility test at fraction one. Numerical failures count as misses; lattice timings are comparable only within a cell.

\subsubsection{Detailed Planar and Scan Results}\label{app:unbalanced-planar-results}

\Cref{fig:unbalanced} summarizes all conditions; \Cref{tab:unbalanced-full,tab:unbalanced-full-lattice} retain every cell.

\begin{table*}[t]
\centering
\caption{\textbf{Synthetic partial-overlap results.} Rotation recovery within $5^\circ$ (\%), with 100 tests per overlap/noise cell. Entries use validation-selected settings; parentheses give each baseline's best test value. The last column compares Rubix-U with the best competitor by McNemar's test.}\label{tab:unbalanced-full}
\input{tables/unbalanced_full.tex}
\end{table*}

\begin{table*}[t]
\centering
\caption{\textbf{Defective-lattice results.} 1-NN accuracy (\%) on 200 queries per defect/noise cell against defect-free references. Entries use validation-selected settings; parentheses give each baseline's best test value. SEINT compares 24-atom neighborhoods. The last column compares Rubix-U with the best competitor by McNemar's test.}\label{tab:unbalanced-full-lattice}
\input{tables/unbalanced_full_lattice.tex}
\end{table*}

\paragraph{Additional planar results.} Rubix closes all 4,800 gaps over the six penalties. AM exceeds the optimal objective found by Rubix on 95--589 of the 800 instances per penalty, and by more than 1\% on 30--522 (\Cref{tab:unbalanced-diagnosis}).

Validation-F1 selection gives test F1 ranges of 0.75--1.00 for Rubix, 0.59--0.99 for AM and 0.05--1.00 for GW. Rotation-selected penalties omit some noisy true pairs, reducing Rubix's F1 to 0.35--0.92. Single-core times are 10--18\,ms (Rubix), 2--3\,ms (AM), 2--4\,ms (partial GW), 32--88\,ms (UGW) and 58--146\,ms (fused UGW).

Without ICP, UGW and partial GW register 168 and 138 scan configurations. Sixteen-start native PLICP registers 185, statistically indistinguishable from GLORES with Rubix ($p=0.55$). Timings retain Python UGW's and C++ GLORES' respective implementation costs (\Cref{tab:unbalanced-scans}).

\paragraph{Scaling and real 3D subproblems.} At overlap $0.5$ and noise $0.01$, using settings selected for 40 points, Rubix recovers all 50 tests at sizes 20, 40, 80, 160 with closed gaps in median times 0.014, 0.025, 0.12, 0.43\,s. AM recovers 94--100\% in 0.002--0.18\,s, fused UGW 74--86\% in 0.024--5.4\,s, UGW 42--46\% in 0.067--1.4\,s and partial GW 10--22\% in 0.002--0.33\,s. Fitting yaw to GW plans and evaluating exact assignment on the six ETH subproblems (\Cref{sec:exp-eth}), best-setting UGW, fused UGW and partial GW reach the reference on three, four and two, respectively. AM refinement raises each to six; several other settings still fail.

\subsubsection{Full 3D Rotation and Penalty Sensitivity}\label{app:unbalanced-3d}

Full-rotation tests use $n=m=24$ points in the unit ball, overlap $\rho_{\mathrm{share}}\in\{1,0.75,0.5,0.3\}$, uniform rotations, Gaussian noise $0.01$ and independent nonshared points, with 20 validation and 50 tests per fraction. Anchored quaternion search receives 15\,s; AM alternates penalized assignment and Kabsch fitting from 60 icosahedral rotations. Both use penalties $\{5\times10^{-4},2\times10^{-3},10^{-2},5\times10^{-2},0.2\}$ and clipped assignment (\Cref{app:published-gravity}); GW retains its planar grids. Success requires geodesic error at most $5^\circ$. At selected penalties, Rubix closes all 200 gaps in median times 0.42--1.12\,s per fraction and at most 2.6\,s (\Cref{tab:unbalanced-3d}).

\begin{table*}[t]
\centering
\caption{\textbf{Partial overlap under full 3D rotation.} Rotations recovered within $5^\circ$ (\%) from 24-point sets that share the given fraction of points (50 test instances per column), with validation-selected settings. The last column is the median, over the four fractions, of the median time in seconds per solve on one core. Bold marks the highest recovery rates.}\label{tab:unbalanced-3d}
\input{tables/unbalanced_3d.tex}
\end{table*}

\paragraph{Penalty sensitivity.} A pair can enter an optimal matching only if its squared distance is at most $2\lambda$. Small penalties therefore omit true pairs, and large penalties admit false ones. Across tested penalties $2\times10^{-3}$--$5\times10^{-3}$, planar Rubix closes every gap and recovers at least 99\% at each overlap; 3D Rubix recovers every rotation from $5\times10^{-4}$ to $10^{-2}$ (\Cref{tab:unbalanced-lambda}). At $\lambda=0.2$, all 3D gaps close and misses are objective failures; at $\lambda=0.05$, nine 30\%-overlap searches remain open after 15\,s. At the smallest penalty, AM recovers only 36.5--52.5\% in 2D and 2--14\% in 3D.

\begin{table*}[t]
\centering
\caption{\textbf{Sensitivity to the unmatched-point penalty.} Rotation recovery within $5^\circ$ (\%) for Rubix and AM at each $\lambda$, using the same penalized objective: 200 planar tests per overlap fraction, pooled over noise, and 50 full-3D tests per fraction.}\label{tab:unbalanced-lambda}
\input{tables/unbalanced_lambda.tex}
\end{table*}

\subsubsection{Real ETH Scan Details and Results}\label{app:unbalanced-eth}

ETH uses one fixed pair of coresets, with 64 source and 80 target points, per scan pair and $0.3$\,m voxelization (\Cref{app:published-gravity}).

Translation is the reference or a $0.5$\,m horizontal offset in a uniform random direction. Rubix and 16-start AM use $\lambda=1$\,m$^2$; fused UGW receives height and horizontal radius about that translation. GW uses its best test setting, counting fused UGW's two numerical failures as misses. Yaw success requires $5^\circ$. Overlap measures are the reference-pose source-voxel fraction within $0.3$\,m of a target voxel and the number of penalized coreset matches within $\sqrt{2\lambda}=1.41$\,m (\Cref{fig:eth-overlap}). Rubix closes all 62 gaps.

\paragraph{Front-end translations.} Gravity-projected translations from Li et al.~\cite{li2024screw}, RAP~\cite{pan2026rap} and Quatro~\cite{lim2022quatro} lie within $0.5$\,m on 23, 19 and 4 pairs. GW retains reference-translation settings. Rubix closes 31/31 gaps per front end and leads other matchers after Li et al. and RAP, but trails their native yaws. Gains over UGW and partial GW are significant ($p\le1.3\times10^{-4}$); those over fused UGW and AM are not ($p\ge0.10$; \Cref{tab:unbalanced-eth}).

\paragraph{Gravity search.} Our Python port of Li et al.'s MATLAB code~\cite{li2023gravity} searches translation by branch-and-bound, maximizing one-to-one matches whose gravity angles agree within $\varepsilon_{\mathrm{Li}}$, then votes for yaw. It receives no translation, making this a contextual comparison.

The translation box spans the ETH scans, with 60\,s per solve and $\varepsilon_{\mathrm{Li}}=0.5^\circ$ (published), $1^\circ$, $2^\circ$, $4^\circ$. Every setting recovers at most two yaws within $5^\circ$ and no pose within $0.5$\,m. At $2^\circ$ and $4^\circ$, 27 and 31 gaps close, and the median optimum matches all 64 sources: similar gravity angles saturate the count even at incorrect translations. At $0.5^\circ$ and $1^\circ$, at most four gaps close.

\begin{table}[t]
\centering
\caption{\textbf{Partial overlap on real 3D scans.} Yaw recoveries within $5^\circ$ on the identified ETH pairs, given reference, offset or front-end translations. GW settings are selected at the reference translation; AM uses 16 starts. Native Li et al., RAP and Quatro recover 25, 25 and 6 yaws. Bold marks the most recoveries.}\label{tab:unbalanced-eth}
\input{tables/unbalanced_eth.tex}
\end{table}

\begin{table*}[t]
\centering
\caption{\textbf{Failure diagnostic by overlap.} Solver/objective failures among 200 tests per fraction (\Cref{app:unbalanced-diagnostic}). A GW objective failure means no better correct-rotation witness was found. Both Rubix variants close every gap.}\label{tab:unbalanced-diagnosis}
\input{tables/unbalanced_diagnosis.tex}
\end{table*}

\begin{table}[t]
\centering
\caption{\textbf{Registration of real laser scans with partial overlap.} Pose successes within $0.25$\,m and $2^\circ$ among the 212 configurations with a reference, and median seconds per configuration. GLORES~\cite{glores2020} and PLICP~\cite{censi2008} use the native C++ implementations of \Cref{sec:native-planar}; UGW~\cite{sejourne2021} and partial GW~\cite{chapel2020partial} run in Python at their best setting on these configurations, with optional refinement by trimmed ICP. Bold marks the most successes.}\label{tab:unbalanced-scans}
\input{tables/unbalanced_scans.tex}
\end{table}

\begin{figure*}[tp]
\centering
\includegraphics[width=\textwidth]{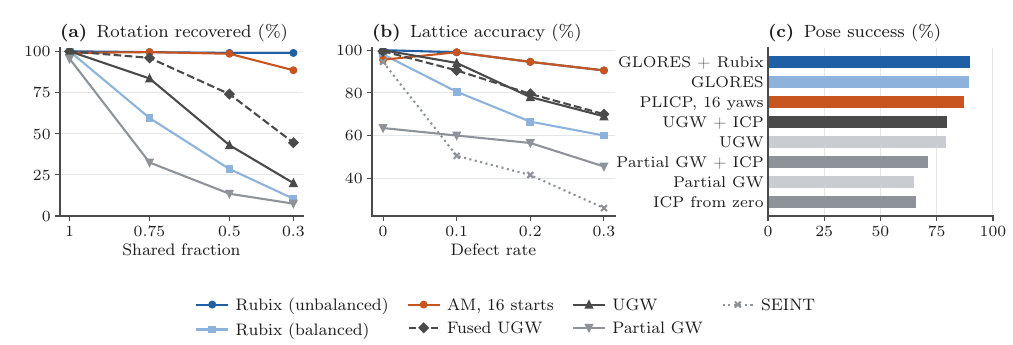}
\caption{\textbf{Unbalanced alignment across all conditions.} (a)~Rotations recovered within $5^\circ$ against the shared fraction, over 200 synthetic tests per fraction. (b)~Lattice 1-NN accuracy against the defect rate at thermal noise $0.08$. (c)~Pose success on the 212 laser-scan configurations with a reference. The settings are validation-selected in (a,b); the GW pipelines use their best test setting in (c).}\label{fig:unbalanced}
\end{figure*}

%% file: tables/unbalanced_full.tex
\begin{tabular}{@{}llrrrrrrr@{}}
\toprule
$\rho_{\mathrm{share}}$ & $\sigma_{\mathrm{noise}}$ & Rubix-U & Rubix-B & AM-U & UGW & FUGW & PGW & $p$\\
\midrule
1 & 0.01 & 100 & 100 & 99 (100) & 100 (100) & 100 (100) & 98 (100) & 1.00\\
1 & 0.03 & 100 & 100 & 99 (100) & 100 (100) & 100 (100) & 93 (93) & 1.00\\
0.75 & 0.01 & 100 & 61 & 100 (100) & 83 (83) & 97 (97) & 38 (38) & 1.00\\
0.75 & 0.03 & 99 & 58 & 99 (99) & 84 (84) & 95 (95) & 27 (29) & 1.00\\
0.5 & 0.01 & 100 & 29 & 98 (98) & 49 (49) & 86 (86) & 13 (15) & 0.50\\
0.5 & 0.03 & 98 & 28 & 99 (99) & 37 (37) & 62 (62) & 14 (14) & 1.00\\
0.3 & 0.01 & 100 & 12 & 95 (95) & 17 (17) & 47 (47) & 9 (9) & 0.06\\
0.3 & 0.03 & 98 & 9 & 82 (82) & 23 (23) & 42 (42) & 6 (8) & $1.4\cdot10^{-4}$\\
\bottomrule
\end{tabular}

%% file: tables/unbalanced_full_lattice.tex
\begin{tabular}{@{}llrrrrrrrr@{}}
\toprule
$p_{\mathrm{def}}$ & $\sigma_{\mathrm{noise}}$ & Rubix-U & Rubix-B & AM-U & UGW~\cite{sejourne2021} & FUGW~\cite{thual2022fugw} & PGW~\cite{chapel2020partial} & SEINT~\cite{lin2026seint} & $p$\\
\midrule
0 & 0.08 & 100 & 98 & 95.5 (100) & 100 (100) & 99.5 (100) & 63.5 (63.5) & 94.5 & 1.00\\
0.1 & 0.08 & 99 & 80.5 & 99 (99) & 94 (94) & 90.5 (91.5) & 60 (60) & 50.5 & 1.00\\
0.2 & 0.08 & 94.5 & 66.5 & 94.5 (94.5) & 78 (78) & 79.5 (79.5) & 56.5 (56.5) & 41.5 & 1.00\\
0.3 & 0.08 & 90.5 & 60 & 90.5 (90.5) & 69 (69) & 70 (70) & 45.5 (57) & 26 & 1.00\\
\bottomrule
\end{tabular}

%% file: tables/unbalanced_3d.tex
\begin{tabular}{@{}lrrrrrr@{}}
\toprule
& \multicolumn{4}{c}{Shared fraction} & & \\
\cmidrule(lr){2-5}
Method & 1 & 0.75 & 0.5 & 0.3 & All & Median s\\
\midrule
Rubix, unbalanced & \textbf{100.0} & \textbf{100.0} & \textbf{100.0} & \textbf{100.0} & \textbf{100.0} & 0.505\\
AM, 60 starts & \textbf{100.0} & 78.0 & 56.0 & 34.0 & 67.0 & 0.018\\
UGW~\cite{sejourne2021} & \textbf{100.0} & 82.0 & 16.0 & 0.0 & 49.5 & 0.053\\
Fused UGW~\cite{thual2022fugw} & \textbf{100.0} & 94.0 & 86.0 & 38.0 & 79.5 & 0.016\\
Partial GW~\cite{chapel2020partial} & \textbf{100.0} & 36.0 & 12.0 & 2.0 & 37.5 & 0.002\\
\bottomrule
\end{tabular}

%% file: tables/unbalanced_lambda.tex
\begin{tabular}{@{}lrrrrrrrr@{}}
\toprule
& \multicolumn{4}{c}{Rubix, unbalanced} & \multicolumn{4}{c}{AM, 16 starts}\\
\cmidrule(lr){2-5}\cmidrule(l){6-9}
Planar, $10^4\lambda$ & 1 & 0.75 & 0.5 & 0.3 & 1 & 0.75 & 0.5 & 0.3\\
\midrule
2 & 100 & 97.5 & 88.5 & 74.5 & 52.5 & 47 & 43 & 36.5\\
5 & 100 & 99.5 & 99 & 88 & 66 & 57.5 & 52.5 & 50\\
10 & 100 & 100 & 100 & 94.5 & 84.5 & 74.5 & 62 & 59.5\\
20 & 100 & 100 & 100 & 99 & 93 & 86 & 79.5 & 72.5\\
50 & 100 & 100 & 100 & 99 & 99 & 98 & 95.5 & 87.5\\
100 & 100 & 100 & 100 & 92.5 & 100 & 99.5 & 98.5 & 87.5\\
\midrule
Full 3D, $10^4\lambda$ & 1 & 0.75 & 0.5 & 0.3 & 1 & 0.75 & 0.5 & 0.3\\
\midrule
5 & 100 & 100 & 100 & 100 & 14 & 2 & 6 & 2\\
20 & 100 & 100 & 100 & 100 & 40 & 32 & 22 & 6\\
100 & 100 & 100 & 100 & 100 & 86 & 78 & 64 & 34\\
500 & 100 & 100 & 84 & 12 & 100 & 84 & 56 & 10\\
2000 & 100 & 54 & 8 & 0 & 100 & 52 & 0 & 0\\
\bottomrule
\end{tabular}

%% file: tables/unbalanced_eth.tex
\begin{tabular}{@{}>{\raggedright\arraybackslash}p{0.75in}rrrrr@{}}
\toprule
& \multicolumn{2}{c}{Reference} & \multicolumn{3}{c}{Front end}\\
\cmidrule(lr){2-3}\cmidrule(l){4-6}
Method & Exact & Offset & \shortstack{Li\\\cite{li2024screw}} & \shortstack{RAP\\\cite{pan2026rap}} & \shortstack{Quatro\\\cite{lim2022quatro}}\\
\midrule
Rubix & \textbf{27} & \textbf{25} & \textbf{20} & \textbf{23} & 8\\
AM & 22 & 20 & 17 & 17 & 5\\
UGW~\cite{sejourne2021} & 10 & 9 & 6 & 7 & 7\\
Fused UGW~\cite{thual2022fugw} & 23 & 19 & 19 & 20 & \textbf{10}\\
Partial GW~\cite{chapel2020partial} & 6 & 7 & 5 & 6 & 6\\
\bottomrule
\end{tabular}

%% file: tables/unbalanced_diagnosis.tex
\begin{tabular}{@{}lrrrrr@{}}
\toprule
& \multicolumn{4}{c}{Shared fraction} & \\
\cmidrule(lr){2-5}
Method & 1 & 0.75 & 0.5 & 0.3 & All\\
\midrule
Rubix & 0/0 & 0/1 & 0/2 & 0/2 & 0/5\\
Balanced Rubix & 0/0 & 0/81 & 0/143 & 0/179 & 0/403\\
AM, 16 starts & 2/0 & 1/0 & 3/0 & 20/3 & 26/3\\
UGW & 0/0 & 33/0 & 101/13 & 73/87 & 207/100\\
Fused UGW & 0/0 & 8/0 & 49/3 & 6/105 & 63/108\\
Partial GW & 9/0 & 135/0 & 173/0 & 184/1 & 501/1\\
\bottomrule
\end{tabular}

%% file: tables/unbalanced_scans.tex
\begin{tabular}{@{}lrr@{}}
\toprule
Method & Success & Median s\\
\midrule
GLORES~\cite{glores2020} + Rubix & \textbf{190} & 1.61\\
GLORES~\cite{glores2020} & 189 & 2.33\\
PLICP~\cite{censi2008}, 16 yaws & 185 & 0.23\\
PLICP~\cite{censi2008} from zero & 146 & 0.01\\
UGW~\cite{sejourne2021} + ICP & 169 & 0.86\\
Partial GW~\cite{chapel2020partial} + ICP & 151 & 0.37\\
ICP from zero & 139 & 0.01\\
UGW~\cite{sejourne2021} & 168 & 0.86\\
Partial GW~\cite{chapel2020partial} & 138 & 0.33\\
\bottomrule
\end{tabular}

%% file: appendix/registration_experiments.tex
\input{appendix/eth_experiments.tex}

\subsection{Spatial Registration Details}\label{app:registration-experiments}

Reference poses serve only evaluation; all gaps are numerical. \Cref{app:computing} describes the hardware.

\subsubsection{Yaw Search across Matching Models}\label{app:gravity-subsolver-protocol}

At each size in \Cref{sec:exp-gravity}, scenes cross target/source ratios $1$ and $1.25$, unmatched fractions $0.1$, $0.3$, $0.5$, vertical standard deviations $0.05$, $0.5$, and two replicates with noise $0.005$. Complete matching centers the source and $n$ targets selected by deterministic farthest-point sampling. Partial matching retains all points at the shared translation, with $\lfloor\min(n,m)/2\rfloor$ pairs or $\lambda=0.02$, independent of true overlap. Bounds share initial yaw and feasible updates. Limits are 20,000 assignments and 9.75\,s, with a 10\,s safety limit. Numerical guards are heuristic, without outward rounding.

\subsubsection{Joint Search over Yaw and Translation}\label{app:joint-gravity-protocol}

Joint scenes use vertical spreads $0.05$, $0.5$, noise $0.005$, and cardinality $\lfloor\min(n,m)/2\rfloor$ or penalty $0.02$. Bounds share eight initial yaws, four assignment--centroid alternations per yaw, cardinality branches and anchor creation. Full-face uses 24 vertices; the gated policy has no proved bound on its overhead. The million-query limit includes forced-cardinality queries requiring no assignment. Every bound recovers all 48 poses within $5^\circ$ and $0.1$\,m.

\subsubsection{Full Spatial Rotation}\label{app:se3-protocol}

The scenes of \Cref{sec:exp-se3} use the noise levels $0.05$ and $0.2$ and point distributions with the coordinate standard deviations $(1,1,1)$, $(1,0.6,0.3)$ and $(1,1,0.05)$. \Cref{fig:se3-qualitative-gallery-1} shows selected alignments.

\paragraph{Adaptive bound selection.} Additional bounds initially receive one quarter of the time spent on the cheaper search, raised to equal time when their average relative contraction reaches one half. The policy starts on an estimated 32-cell cover, then chooses by contraction per second (\Cref{tab:spatial-budget}).

\input{appendix/se3_qualitative.tex}

\subsubsection{SE(3) Matching on Subsets of Real Scans}\label{app:se3-real-protocol}

ETH scans~\cite{li2024screw} use independent 20,000-return samples, $0.3$\,m voxels and farthest-point coresets of 64 and 80 points. Exact-cardinality assignment at RAP's full proper pose~\cite{pan2026rap} selects 32 pairs. Both solvers optimize every bijection and proper rotation after centering, without gravity or an added transformation.

Both variants share all search machinery; Rubix adds exterior quaternion queries and witness bounds at every cell. Limits are one core, 15\,s and 200,000 cells, targeting a mean-squared gap of $10^{-5}$\,m$^2$. Timing covers the solver, excluding subset selection, RAP and dense refinement. Shared scans make comparisons descriptive.

The centroid control instead selects subsets at identity rotation and centroid translation, with all other settings fixed (\Cref{app:se3-real-results}). Neither test uses held-out environments.

%% file: appendix/eth_experiments.tex
\subsection{Matching Subproblems and Pipelines on ETH Scans}\label{app:eth-protocol}

Each scan in \Cref{sec:exp-eth} uses 20,000 uniformly sampled finite returns, shared across initial yaws, in meters. Nominal vertical axes differ by $0.022^\circ$--$0.643^\circ$. Reference rotations are projected to $\mathrm{SO}(3)$ for evaluation only.

\paragraph{Pipeline.} We retain normals whose absolute vertical component is below $0.8$, estimated from 12 neighbors. Penalized ranking uses $\lambda=1$\,m$^2$; partial refinement allows 12 iterations and two seconds. Four-degree-of-freedom gravity ICP~\cite{kubelka2022} uses 20-neighbor target normals, $85\%$ trimming, 100 iterations and ten seconds. The 16-yaw comparison uses centroid-optimal translation and mean tolerance $10^{-5}$\,m$^2$, equivalent to summed tolerance $0.00032$\,m$^2$ over 32 points, with heuristic roundoff guards. AM uses the same attainment tolerance but supplies no lower bound.

\input{tables/eth_pipeline_results.tex}

\paragraph{Complete systems for context.} Gravity ICP~\cite{kubelka2022} selects zero yaw or the best of 16 starts by trimmed point-to-plane residual. Generalized-CVO~\cite{zhang2026gcvo} selects among 16 yaws by geometry-only similarity; RAP~\cite{pan2026rap} uses pretrained mini-SpinNet features, one generation and ten flow steps. Selection never uses reference poses. At $5^\circ$ and $0.5$\,m, one-start ICP, 16-start ICP, Generalized-CVO and RAP recover 2, 4, 0 and 4 of six poses, with median complete times 0.165, 1.32, 11.2 and 13.4\,s. Inputs and hardware differ.

\input{appendix/gravity_matching_results.tex}
\input{appendix/published_gravity.tex}

\input{appendix/native_gravity_comparison.tex}

\input{appendix/native_extensions.tex}

%% file: tables/eth_pipeline_results.tex
\begin{table*}[t]
\centering
\caption{\textbf{Complete pipelines on six ETH pairs.} Single-core times include all stages on identical observations (\Cref{app:eth-protocol}). Solvers return identical poses; success requires errors at most $5^\circ$ and $0.5$\,m. Bold marks Rubix times below the vertex bound's.}\label{tab:eth-pipeline}
\begin{tabular}{lrrrrrr}
\toprule
Pair & AM (ms) & Edge (ms) & Vertex (ms) & Rubix (ms) & Rotation ($^\circ$) & Translation (m)\\
\midrule
Office 1--2 & 175.9 & 266.9 & 176.1 & 176.6 & 0.546 & 0.124\\
Office 2--3 & 205.2 & 300.2 & 225.9 & \textbf{200.0} & 86.237 & 7.254\\
Office 3--4 & 142.9 & 227.7 & 142.4 & 143.1 & 179.164 & 2.894\\
Courtyard 1--2 & 181.5 & 368.1 & 182.0 & 182.4 & 0.046 & 0.037\\
Courtyard 2--3 & 260.6 & 497.1 & 266.7 & 273.4 & 0.909 & 1.365\\
Courtyard 3--4 & 179.3 & 392.6 & 185.2 & \textbf{182.5} & 0.047 & 0.029\\
\bottomrule
\end{tabular}
\end{table*}

%% file: appendix/gravity_matching_results.tex
\begin{figure*}[tp]
\centering
\includegraphics[width=\textwidth]{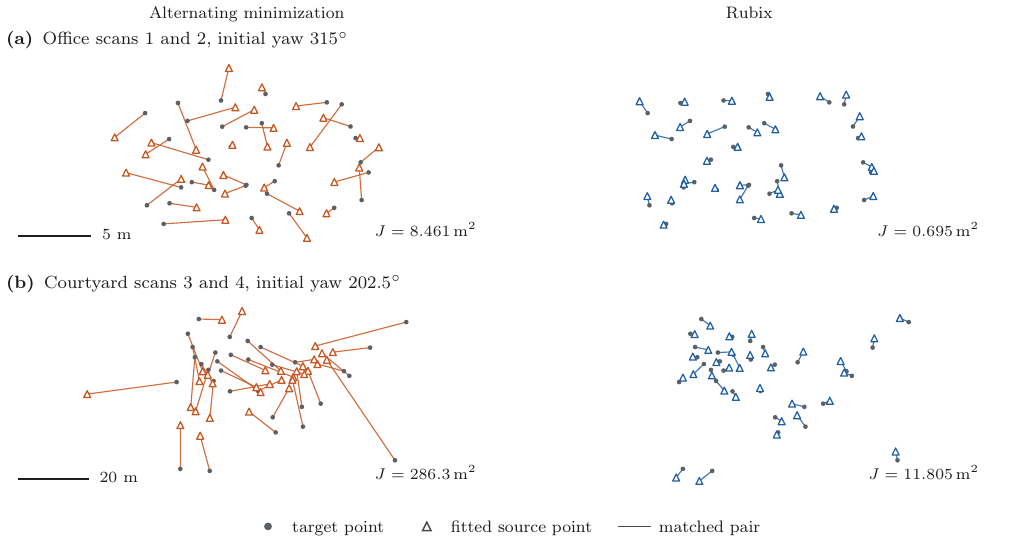}
\caption{\textbf{Gravity-aligned matching on ETH scans.} AM and Rubix start from the same initial yaw on 32-point subsets, with the nominal vertical of the scanner as the gravity direction. From AM to Rubix, the mean squared 3D residual falls from $8.461$ to $0.695$\,m$^2$ in the Office example and from $286.3$ to $11.805$\,m$^2$ in the Courtyard example. Each row shows every point and every matched edge from one viewpoint and at one scale, with the heights retained. The examples are selected for large errors of AM; \Cref{tab:eth-blocks} reports all 96 initializations.}\label{fig:gravity-matching-results}
\end{figure*}

%% file: appendix/published_gravity.tex
\subsection{Published Registration Methods as Front Ends}\label{app:published-gravity}

We use all identified ETH pairs of Li et al.~\cite{li2024screw} from Arch, Courtyard, Facade, Office and Trees. Pairs share scans; nominal scanner vertical supplies gravity.

\paragraph{Native methods.} Quatro~\cite{lim2022quatro} runs from its original reimplementation with $0.3$\,m voxels, normal and FPFH radii of $0.5$ and $0.75$\,m, respectively, and a noise bound of $0.3$\,m. RAP~\cite{pan2026rap} uses pretrained mini-SpinNet in two fixed trials. Li et al.'s original solver~\cite{li2024screw} runs unchanged under Octave; its unavailable front end is reconstructed with $0.1$\,m voxels, ISS radii of $0.6$ and $0.4$\,m, eigenvalue ratios of $0.975$, and normal and FPFH radii of $0.2$ and $0.5$\,m, respectively. It supplies $N_{\mathrm{cand}}=10$ targets per source. Sensitivity tests use $N_{\mathrm{cand}}=1$ and $5$.

\paragraph{Matching stage.} Farthest-point sampling of $0.3$\,m voxels selects 64 sources and 80 targets. Each of 17 subset problems receives one second and 200,000 assignments; multistart adds 16 uniform yaws. All variants share C++ host and assignment code. Penalized assignment of the smaller set uses the costs $\min(c_{ij}-2\lambda,0)$, whose negative entries give an optimal partial matching. Ranking uses $\lambda=1$\,m$^2$, and gravity ICP uses at most 6,000 vertical-surface points and ten seconds.

\paragraph{Native poses.} Rotations must have orthogonality and determinant errors at most $10^{-2}$ before projection to the nearest proper rotation. The incumbent retains native translation and yaw $\theta=\operatorname{atan2}(R_{10}-R_{01},R_{00}+R_{11})$, or zero in the degenerate case. Stopping adds a $2.63\times10^{-10}$--$2.30\times10^{-8}$\,m$^2$ guard to the summed target $0.00032$\,m$^2$.

\paragraph{Timing and success.} Total times include native startup and inference, preprocessing, matching, ranking and ICP; paired stages share native outputs and processor. Success requires $5^\circ$ and $0.5$\,m, whereas Li et al.~\cite{li2024screw} report continuous errors and scene-wise arithmetic means.

\input{tables/gravity_published_insertion.tex}

%% file: tables/gravity_published_insertion.tex
\begin{table*}[t]
\centering
\caption{\textbf{A common matching stage after three front ends.} Every global solver closes all subset gaps within one second each at mean-squared tolerance $10^{-5}$\,m$^2$ plus roundoff allowance. LAPs are total assignments; times are medians, including all stages in ``Total.'' Projected native poses solve no subset problem; AM supplies no lower bound. Bold marks reductions from the vertex bound. Success requires $5^\circ$ and $0.5$\,m; native success counts are 4/31, 37/62 and 23/31 (\Cref{app:published-gravity}).}\label{tab:gravity-published-insertion}
\begin{tabular}{@{}lrrrrr@{}}
\toprule
Refinement after the native method & Closed gaps & LAPs & Stage (ms) & Total (s) & Success\\
\midrule
\multicolumn{6}{@{}l}{Quatro~\cite{lim2022quatro}}\\
Projected native + gravity ICP & 0/527 & 0 & 0 & 0.678 & 6/31\\
AM + gravity ICP & 0/527 & 1,066 & 0.64 & 0.657 & 11/31\\
AM (multistart) + gravity ICP & 0/527 & 32,505 & 24.7 & 0.685 & 11/31\\
Edge bound + gravity ICP & 527/527 & 2,656,883 & 468 & 1.158 & 11/31\\
Vertex bound + gravity ICP & 527/527 & 32,165 & 14.8 & 0.669 & 11/31\\
Rubix + gravity ICP & 527/527 & \textbf{14,534} & \textbf{10.4} & \textbf{0.667} & 11/31\\
\midrule
\multicolumn{6}{@{}l}{RAP~\cite{pan2026rap}: two trials per pair}\\
Projected native + gravity ICP & 0/1054 & 0 & 0 & 11.774 & 32/62\\
AM + gravity ICP & 0/1054 & 2,124 & 0.61 & 11.818 & 32/62\\
AM (multistart) + gravity ICP & 0/1054 & 64,804 & 24.9 & 11.838 & 32/62\\
Edge bound + gravity ICP & 1054/1054 & 5,235,886 & 466 & 12.302 & 32/62\\
Vertex bound + gravity ICP & 1054/1054 & 64,300 & 14.8 & 11.834 & 32/62\\
Rubix + gravity ICP & 1054/1054 & \textbf{28,915} & \textbf{10.4} & \textbf{11.827} & 32/62\\
\midrule
\multicolumn{6}{@{}l}{Li et al.~\cite{li2024screw}}\\
Projected native + gravity ICP & 0/527 & 0 & 0 & 25.410 & 17/31\\
AM + gravity ICP & 0/527 & 1,063 & 0.60 & 25.321 & 20/31\\
AM (multistart) + gravity ICP & 0/527 & 32,365 & 25.1 & 25.347 & 20/31\\
Edge bound + gravity ICP & 527/527 & 2,623,251 & 461 & 25.759 & 20/31\\
Vertex bound + gravity ICP & 527/527 & 32,168 & 14.7 & 25.339 & 20/31\\
Rubix + gravity ICP & 527/527 & \textbf{14,477} & \textbf{10.4} & \textbf{25.332} & 20/31\\
\bottomrule
\end{tabular}
\end{table*}

%% file: appendix/native_gravity_comparison.tex
\subsection{Native Methods and Their Inputs}\label{app:native-gravity-comparison}

Native comparisons (\Cref{tab:native-gravity-comparison}) use \Cref{app:published-gravity}'s acceptance rules and Quatro's original reimplementation, without Quatro++~\cite{lim2024quatro}. Quatro~\cite{lim2022quatro}, RAP~\cite{pan2026rap} and ICP~\cite{kubelka2022} receive independent 20,000-point samples; Li et al.~\cite{li2024screw} and Cai et al.~\cite{cai2019} share full-scan ISS/FPFH candidates with ten targets per source. Different inputs make this contextual.

\paragraph{Gravity ICP and timing.}
Native libpointmatcher gravity ICP~\cite{kubelka2022} uses point-to-plane updates, 85\% trimming, 100 iterations and at most 6,000 independently voxelized vertical-surface points per set. It selects by residual from zero or 16 uniform yaws with centroid translation; the original system's five-run IMU/odometry selection and intervals are unavailable. Table times include startup/input but exclude Quatro/RAP serialization and Li/Cai features. Including features gives median times 25.074\,s (Li), 27.452\,s (FMP+BnB), 35.547\,s (BnB). Complete ICP times are 0.569\,s from one start and 2.731\,s from 16.

\paragraph{The solver of Cai et al.~\cite{cai2019}.}
The common-host comparisons isolate interpolation (\Cref{tab:native-with-without}); consensus is unchanged and there is no meaningful speedup. At the solver's resolution floor, one search stops with interval $[1,3]$, so termination alone need not close the numerical gap.

\input{tables/native_gravity_comparison.tex}

%% file: tables/native_gravity_comparison.tex
\begin{table*}[t]
\centering
\caption{\textbf{Native methods on the identified ETH scan pairs.} The results precede any added stage; the last two rows are ICP alone. Success requires a rotation error (RE) of at most $5^\circ$ and a translation error (TE) of at most $0.5$\,m. The medians include every valid output. The native time is the wall time of the process, summed over the ICP starts. The methods differ in input, objective and hardware, which limits direct comparisons of runtime; \Cref{app:native-gravity-comparison,app:computing} give the hardware and the preprocessing that the times omit.}
\label{tab:native-gravity-comparison}
\begin{tabular}{lrrrr}
\toprule
Native method & Success / 31 & Median RE ($^\circ$) & Median TE (m) & Native time (s) \\
\midrule
Quatro~\cite{lim2022quatro} & 4 & 81.521 & 7.281 & 0.289 \\
RAP, trial 1~\cite{pan2026rap} & 19 & 1.161 & 0.368 & 12.097 \\
RAP, trial 2~\cite{pan2026rap} & 18 & 1.143 & 0.294 & 11.181 \\
Li et al., $N_{\mathrm{cand}}=10$~\cite{li2024screw} & 23 & 0.150 & 0.064 & 2.921 \\
Cai et al., corrected FMP+BnB~\cite{cai2019} & 24 & 0.175 & 0.041 & 4.472 \\
Cai et al., corrected BnB~\cite{cai2019} & 24 & 0.175 & 0.041 & 3.671 \\
Gravity ICP, zero yaw~\cite{kubelka2022} & 5 & 15.472 & 5.765 & 0.111 \\
Gravity ICP, 16 yaws~\cite{kubelka2022} & 12 & 2.817 & 3.200 & 2.214 \\
\bottomrule
\end{tabular}
\end{table*}

%% file: appendix/native_extensions.tex
\subsection{Quatro Yaw Loss}\label{app:native-extensions}

For \Cref{tab:native-with-without}, we replace only the original yaw solver of Quatro~\cite{lim2022quatro}, which uses graduated non-convexity (GNC), and keep the order of the feature pairs, the clique, the scale, the translation estimator and the ICP. The translation-invariant measurements (TIMs) of Quatro give vectors $a_i,b_i\in\mathbb R^2$ and the cutoff $\tau_{\mathrm{clip}}=0.6\,\mathrm m$, with the objective
\begin{equation}\label{eq:quatro-clipped-objective}
 J(\theta)=\sum_i\min\{\|b_i-R_\theta a_i\|^2,\tau_{\mathrm{clip}}^2\}.
\end{equation}
Each residual $C_i-2A_i\cos\theta-2B_i\sin\theta$ is affine in the yaw coordinates, so the clipped sum is a finite minimum of affine functions and the arc bound of \Cref{lem:arc-interpolation} applies; a query costs $\mathcal O(N_{\mathrm{TIM}})$ operations for $N_{\mathrm{TIM}}$ TIMs. As a reference, we sort the yaws at which a residual crosses the threshold and minimize the sinusoid on each interval, which takes $\mathcal O(N_{\mathrm{TIM}}\log N_{\mathrm{TIM}})$ operations in exact arithmetic and covers constants, tangencies, coincident crossings and ties.

Every pair closes at a gap of $10^{-8}$ plus a floating-point allowance that is proportional to the machine precision and to the magnitudes of the coefficients, and the result agrees with the numerical reference from enumeration within $2.85\times10^{-14}$. The improvements in the loss are conditional on the fixed clique and translation. With 4--24 TIMs per pair, a faster yaw solver does not materially change the total time of the pipeline.

%% file: appendix/se3_qualitative.tex
\begin{figure*}[tp]
\centering
\includegraphics[width=\textwidth]{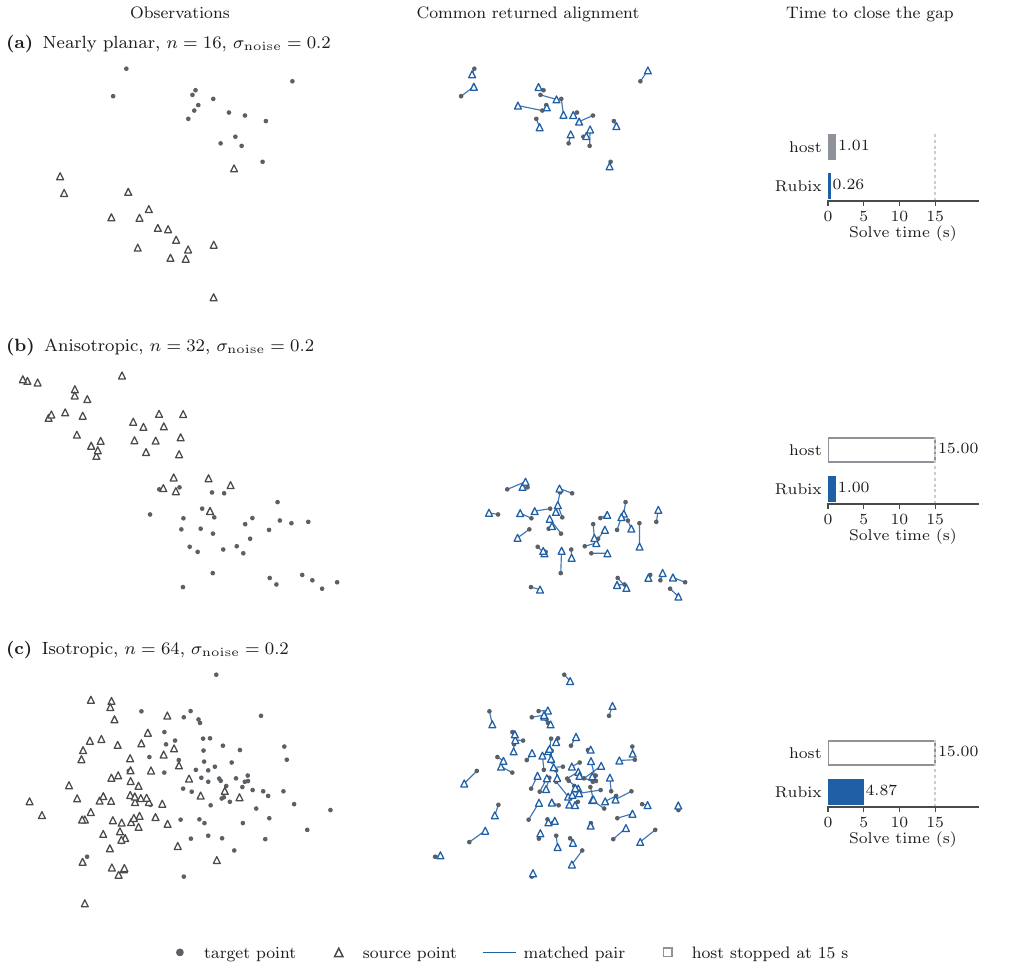}
\caption{\textbf{Less search for the same SE(3) alignment.} One favorable example per geometry, selected from all 72 scenes, including those on which the host reaches the time limit. The columns show the observations, the common fitted pose with its residuals, and the solve times; each row uses one viewpoint and scale. Hollow bars mark the limit of 15\,s. At a target of $10^{-5}$ for the numerical gap in the mean squared error, Rubix closes all three cases, whereas the host with its rotation-cap bounds alone stops above the target in (b,c). Both variants return identical poses and matchings throughout the study.}\label{fig:se3-qualitative-gallery-1}
\end{figure*}

%% file: appendix/additional_results.tex
\section[Additional and Contextual Results]{Additional and Contextual\\Results}\label{app:additional}

\subsection{Rubix in Published Methods}\label{app:native-with-without}

\paragraph{Bounds inside published solvers.} On identical hosts, Rubix gives no meaningful speedup for Cai et al.'s~\cite{cai2019} FMP+BnB or BnB: paired time ratios are $1.0004$ and $0.9275$, with unchanged consensus (\Cref{tab:native-with-without}). It lowers Quatro's~\cite{lim2022quatro} truncated yaw loss on 17/31 pairs without changing pose success (\Cref{app:native-extensions}). Native inputs and poses appear in \Cref{app:native-gravity-comparison}.

\paragraph{Including the front end.} End-to-end speedups over the independent-edge bound fall to $1.62$, $1.03$, $1.01$ after Quatro, RAP and Li et al.; front ends dominate the roughly 10\,ms matching stage (\Cref{tab:matching-with-without}). Single- and multistart AM match global solvers' success counts. Against projected native poses plus ICP, matching raises Quatro's successes from 6 to 11 and Li et al.'s from 17 to 20, leaving RAP at 32/62. Native RAP and Li et al. already succeed on 37 and 23 cases, so matching does not improve every native output.

\paragraph{Without a front end.} Sixteen equally spaced yaws give identical final outcomes for AM and Rubix in 496 cases. Yaw-dependent subset selection places optima near initialization, letting AM solve 492/496 subproblems and limiting the benefit of global optimization (\Cref{app:published-gravity}).

\input{tables/native_with_without.tex}

\subsection{Gravity-Aligned Go-ICP}\label{app:goicp}

Original Go-ICP~\cite{yang2016goicp} is restricted to yaw and translation, bisecting yaw intervals with displacement radius $2\sin(h/2)\|x_{i,xy}\|$ and retaining its translation search and stopping rule. All variants disable six-degree-of-freedom ICP. Rubix adds arc or common-translation bounds after native pruning (\Cref{lem:common-translation}). ETH uses 1,000 sources, full targets, trimming $0.3$ and normalized per-inlier tolerance $2\times10^{-6}$. Twenty Bunny cases use $10\%$ outliers, trimming $0.1$ and tolerance $10^{-5}$. Each receives 900\,s on one core.

Arc bounds reduce median rotation nodes by $1.40$ (ETH) and $1.64$ (Bunny), and translation nodes by $1.44$ and $1.30$; common-translation factors are $1.33$, $1.64$ and $1.15$, $1.38$. Objectives agree within $0.33\varepsilon_{\mathrm{GoICP}}$, where $\varepsilon_{\mathrm{GoICP}}$ is per-inlier tolerance times retained count; identical poses yield 20/20 Bunny and 3/31 ETH successes. Native/Rubix time ratios are $0.96$, $0.72$ for arc bounds and $0.66$, $0.99$ for common translation: fewer nodes do not save time. Rubix needs at least three objective evaluations per cell versus one distance-transform lookup per point. Native and arc variants finish 28/31 ETH searches; common translation finishes 25/31.

\subsection{Complete ETH Pipelines}\label{app:eth-pipelines}

Including every pipeline stage on the six pairs, Rubix gives median speedup $1.70$ over the independent-edge bound (range $1.50$--$2.15$; \Cref{tab:eth-pipeline}). All four solvers return identical poses, three within $5^\circ$ and $0.5$\,m. \Cref{fig:eth-qualitative} shows two successes; \Cref{app:eth-protocol} gives the protocol and complete-system comparisons.

\begin{figure*}[tp]
\centering
\includegraphics[width=\textwidth]{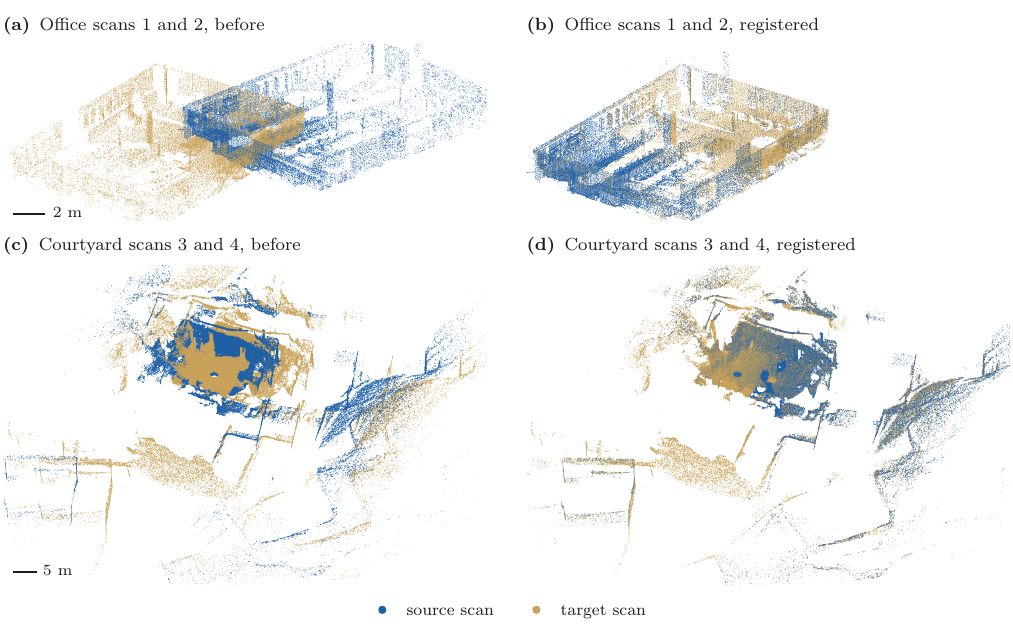}
\caption{\textbf{Gravity-aligned registration of ETH scans.} Office scans 1--2 (a,b; ceiling removed) and Courtyard scans 3--4 (c,d), before and after registration with the nominal vertical of the scanner as the gravity direction. All four solvers return the same pose. Relative to the independent-edge bound, Rubix reduces the time of the complete pipeline from 267 to 177\,ms and from 393 to 182\,ms; the rotation and translation errors are $0.546^\circ$ and $0.124$\,m for Office and $0.047^\circ$ and $0.029$\,m for Courtyard. Each row uses one viewpoint, one scale and one set of displayed points, and the projected scale bars are in meters. The two pairs are selected among the successful ones; \Cref{tab:eth-pipeline} reports all six.}\label{fig:eth-qualitative}
\end{figure*}

\subsection{Quaternion Bounds on Subsets of Real Scans}\label{app:se3-real-results}

On RAP-selected subsets, Rubix reduces cells throughout, by a median factor $1.70$ and $70.4\%$ overall, while assignments barely change (130,773 versus 132,648). It is slower: median host/Rubix time ratio $0.56$, with speedups on only 3/31 pairs. Both close all gaps with identical objectives, poses and matchings. Centroid selection similarly reduces cells throughout (factor $1.78$), yet gives time ratio $0.58$ and speedups on 4/31 pairs (\Cref{tab:se3-real}).

\input{tables/se3_real.tex}

At $5^\circ$ and $0.5$\,m, refitted RAP subsets succeed on 10/31 pairs versus native RAP's 19/31: nine shared successes, ten RAP-only and one refit-only. Refit median errors are $1.82^\circ$ and $0.90$\,m. Centroid selection succeeds on 0/31, with medians $14.28^\circ$ and $9.29$\,m. Global subset fitting therefore need not recover the full-scan pose.

\subsection{GLORES Examples and PLICP}\label{app:glores-examples}

Native PLICP~\cite{censi2008} rises from 146/212 to 185/212 successes with 16 yaws. Rubix accelerates GLORES at the same tolerance and essentially unchanged pose success, 190/212 versus 189/212 (\Cref{sec:native-planar,app:native-planar,fig:native-planar}).

\begin{figure*}[tp]
\centering
\includegraphics[width=\textwidth]{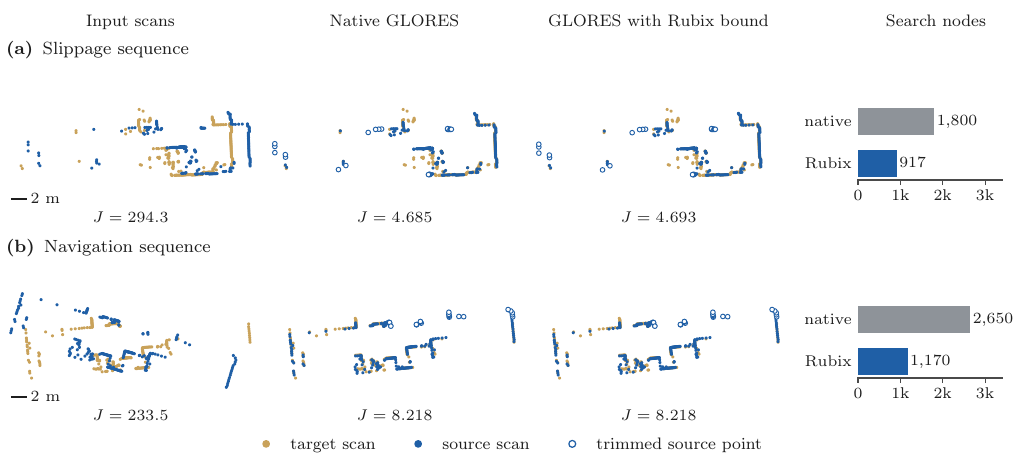}
\caption{\textbf{Less search inside native GLORES~\cite{glores2020}.} Scans from the Slippage and Navigation sequences before registration and after fitting with the native bound and with the Rubix bound. $J$ is the sum of squared residuals over the retained points, in m$^2$; hollow points are trimmed (216 of 239 and 220 of 244 points are retained). Both solvers reach the $1\%$ tolerance with equivalent poses. Rubix reduces the nodes from 1,800 to 917 and from 2,650 to 1,170, and the times from 2.85 to 1.96\,s and from 3.97 to 2.29\,s. Each row uses one display rotation and scale. For each sequence we show the case with the largest objective before registration among those with 256 beams and $\rho_{\mathrm{keep}}=0.9$ on which both poses succeed; \Cref{tab:native-planar} covers all 240 configurations.}\label{fig:native-planar}
\end{figure*}

\subsection{Sensitivity to the Gravity Direction}\label{sec:exp-gravity-bias}

To test gravity error, we tilt both supplied directions by $0$, $0.25$, $0.5$, $1$, $2$ and $5^\circ$ with independent azimuths. The 48 complete landmark scenes cross $n\in\{32,128\}$, vertical spreads $0.05$, $0.5$, noise levels $0.005$, $0.02$, and six replicates in meters. Noise and azimuths stay fixed across tilt magnitudes. All 288 joint yaw--translation fits close summed gaps at $10^{-6}$ (\Cref{fig:gravity-prior}).

\begin{figure*}[t]
\centering
\includegraphics[width=\textwidth]{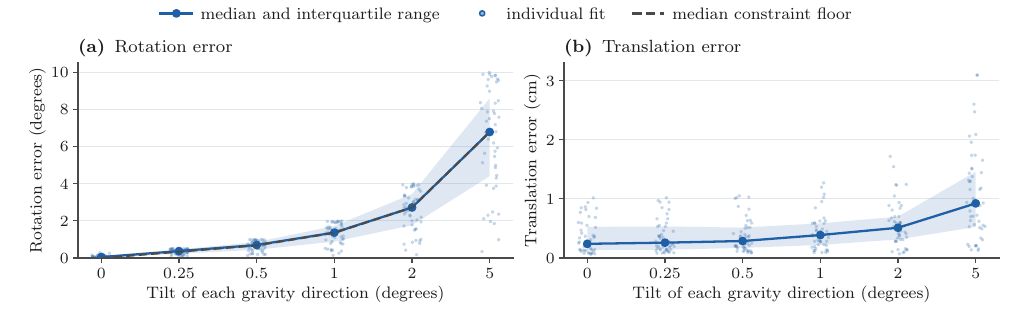}
\caption{\textbf{Sensitivity to a simulated error in the gravity direction.} Independent tilts of both supplied directions at six magnitudes give 288 fits on 48 scenes. The horizontal positions are equally spaced and labeled by the tilt angle; faint points show every error, blue lines the medians and bands the interquartile ranges. (a)~Full rotation error, and the median of the floor $\angle(R_*\widehat g_y,\widehat g_x)$ imposed by the constraint (dashed). (b)~Translation error after centering, in centimeters. Every fit reaches a gap of $10^{-6}$ in the summed squared error.}\label{fig:gravity-prior}
\end{figure*}

The median rotation error rises from $0.051^\circ$ with the exact gravity direction to $0.699^\circ$ and $1.369^\circ$ at tilts of $0.5^\circ$ and $1^\circ$. This error is imposed by the constraint: a rotation that maps $\widehat g_y$ to $\widehat g_x$ differs from the true rotation $R_*$ by an angle of at least $\angle(R_*\widehat g_y,\widehat g_x)$. At the two tilts, the medians of this lower bound are $0.679^\circ$ and $1.358^\circ$, close to the observed errors. The 95th percentile of the translation error rises from $0.86$\,cm to $1.06$\,cm at a tilt of $1^\circ$. Closing the gap of the objective thus solves the constrained problem, but it does not remove the pose error that a biased gravity direction imposes.

\subsection{Bound-Cost Policies for Full Rotation}\label{app:spatial-ablations}

\Cref{tab:spatial-budget} compares five SE(3) policies: rotation-cap bounds alone, one-assignment differences, full exterior queries, reused assignment duals and adaptive selection (\Cref{app:se3-protocol}). Full exterior queries close the most gaps in the least total time.

\begin{table}[t]
\caption{\textbf{Complete SE(3) alignment on 72 scenes} with the same host. Each run has a limit of 15\,s and a target of $10^{-5}$ for the gap in the mean squared error. The CPU time is summed over all runs, including the unresolved ones; every policy uses one core. Bold marks more closed gaps or less CPU time than the host with its rotation-cap bounds alone.}\label{tab:spatial-budget}
\centering
\begin{tabular}{lrr}
\toprule
Added bound & Closed & CPU (s)\\
\midrule
None (rotation-cap host) & 63/72 & 189.2\\
One-assignment difference & 63/72 & 206.8\\
Full exterior supports & \textbf{72/72} & \textbf{59.1}\\
Reused assignment duals & 61/72 & 237.5\\
Adaptive bound selection & \textbf{67/72} & \textbf{139.2}\\
\bottomrule
\end{tabular}
\end{table}

Full queries mainly help host timeouts. On 27 high-noise common successes, median host/full-query time ratio is $0.92$; a separate low-noise comparison gives $0.76$. At low noise, adaptation is $1.17$ times faster than full queries but closes fewer gaps.

%% file: tables/native_with_without.tex
\begin{table*}[t]
\centering
\caption{\textbf{Published solvers with and without Rubix bounds.} Values are medians; paired ratios divide baseline by Rubix runtime or Quatro yaw loss. Bold marks improvements. GLORES includes preprocessing/bounds at $1\%$ tolerance; Cai timings exclude features and use common hosts with identical consensus. Quatro's clipped yaw loss improves by more than $10^{-8}$ on 17/31 pairs, by at most $0.879821$\,m$^2$, with unchanged success (4/31 before, 6/31 after ICP); it is not timed.}\label{tab:native-with-without}
\begin{tabular}{@{}p{0.44\textwidth}rrrr@{}}
\toprule
Solver and baseline & Runs & Without & With Rubix & Paired ratio\\
\midrule
\multicolumn{5}{@{}l}{\textit{Timed host (s)}}\\
GLORES~\cite{glores2020}: native bound & 240 & 2.330 & \textbf{1.608} & $\mathbf{1.3313}\times$\\
Cai FMP~\cite{cai2019}: common host & 31 & 2.396 & 2.413 & $\mathbf{1.0004}\times$\\
Cai BnB~\cite{cai2019}: common host & 31 & 1.230 & 1.232 & $0.9275\times$\\
\midrule
\multicolumn{5}{@{}l}{\textit{Conditional yaw TLS loss (m$^2$)}}\\
Quatro~\cite{lim2022quatro}: native GNC yaw & 31 & 1.5396 & \textbf{1.4816} & $\mathbf{1.0073}\times$\\
\bottomrule
\end{tabular}
\end{table*}

\begin{table*}[t]
\centering
\caption{\textbf{Matching stages with baseline and Rubix bounds.} Arrows show baseline $\to$ Rubix under identical subsets, starts, objectives, pose selection and ICP. Edge and vertex denote the independent-edge and vertex bounds. Times are medians; ratios are paired baseline/Rubix medians. Totals add identical front-end times. Quatro and Li run once per pair; RAP runs twice. All 2,108 global subset gaps close with unchanged $5^\circ$, $0.5$\,m success counts; bold marks time improvements.}\label{tab:matching-with-without}
\begin{tabular}{@{}lrrrrr@{}}
\toprule
Native method: baseline bound & Stage (ms) & Stage ratio & Total (s) & Total ratio & Success\\
\midrule
Quatro~\cite{lim2022quatro}: edge & $468\to\mathbf{10.4}$ & $\mathbf{46.94}\times$ & $1.158\to\mathbf{0.667}$ & $\mathbf{1.617}\times$ & $11/31\to11/31$\\
Quatro~\cite{lim2022quatro}: vertex & $14.8\to\mathbf{10.4}$ & $\mathbf{1.47}\times$ & $0.669\to\mathbf{0.667}$ & $\mathbf{1.006}\times$ & $11/31\to11/31$\\
RAP~\cite{pan2026rap}: edge & $466\to\mathbf{10.4}$ & $\mathbf{46.31}\times$ & $12.302\to\mathbf{11.827}$ & $\mathbf{1.033}\times$ & $32/62\to32/62$\\
RAP~\cite{pan2026rap}: vertex & $14.8\to\mathbf{10.4}$ & $\mathbf{1.48}\times$ & $11.834\to\mathbf{11.827}$ & $1.000\times$ & $32/62\to32/62$\\
Li et al.~\cite{li2024screw}: edge & $461\to\mathbf{10.4}$ & $\mathbf{45.79}\times$ & $25.759\to\mathbf{25.332}$ & $\mathbf{1.011}\times$ & $20/31\to20/31$\\
Li et al.~\cite{li2024screw}: vertex & $14.7\to\mathbf{10.4}$ & $\mathbf{1.46}\times$ & $25.339\to\mathbf{25.332}$ & $1.000\times$ & $20/31\to20/31$\\
\bottomrule
\end{tabular}
\end{table*}

%% file: tables/se3_real.tex
\begin{table*}[t]
\centering
\caption{\textbf{SE(3) matching on real ETH subsets.} All identified pairs use 32-point subsets, 15\,s and mean-squared gap target $10^{-5}$\,m$^2$. Cells include initial regions. CPU time excludes common subset preparation and RAP; medians are elapsed solver times. Both bounds return identical poses and matchings; success requires $5^\circ$ and $0.5$\,m.}\label{tab:se3-real}
\begin{tabular}{llrrrrr}
\toprule
Subset selection & Bound & Gap closed & Cells & CPU (s) & Median (ms) & Pose success\\
\midrule
RAP full pose & Rotation-cap host & 31/31 & 28,074 & 9.73 & 203.1 & 10/31\\
RAP full pose & With Rubix & 31/31 & 8,302 & 12.10 & 300.5 & 10/31\\
\midrule
Centroid control & Rotation-cap host & 31/31 & 39,920 & 15.23 & 393.7 & 0/31\\
Centroid control & With Rubix & 31/31 & 12,232 & 18.72 & 553.0 & 0/31\\
\bottomrule
\end{tabular}
\end{table*}